\documentclass[11pt]{article}
\usepackage{amssymb, amsmath, amsthm}

\usepackage[margin=1in]{geometry}
\usepackage{amsfonts,mathtools,mathrsfs,mathtools,color}
\usepackage[colorlinks]{hyperref}
\usepackage{graphicx}
\usepackage{xcolor}
\usepackage{tocloft}
\usepackage{bigints}
\usepackage{makecell}
\usepackage{algorithm}
\usepackage{algorithmic}
\usepackage{tikz}
\usepackage{pgfplots}
\pgfplotsset{compat=1.18}
\usepgfplotslibrary{groupplots}
\usepackage{centernot}
\usepackage{bm}
\usepackage{nicefrac}
\usepackage{dsfont}
\usepackage{enumitem}
\usepackage{booktabs}
\usepackage{xspace}
\usepackage{stackengine,scalerel}

\usepackage[nameinlink,capitalise]{cleveref}

\usepackage{caption}
\usetikzlibrary{math}
\usetikzlibrary{decorations.markings, calc, arrows} 
\usetikzlibrary{arrows.meta}

\definecolor{coolblack}{rgb}{0.0, 0.0, 0.230}
\hypersetup{
    colorlinks,
    linkcolor={red!70!black},
    citecolor={blue!75!black},
    urlcolor={blue!75!black}
}

\newtheorem{definition}{Definition}
\newtheorem{remark}{Remark}
\newtheorem{proposition}{Proposition}
\newtheorem{theorem}{Theorem}
\newtheorem*{theorem*}{Theorem}
\newtheorem{lemma}{Lemma}
\newtheorem*{lemma*}{Lemma}
\newtheorem{corollary}{Corollary}

\newtheorem{assumption}{Assumption}

\makeatletter
\newenvironment{proofnopunct}[1][\proofname]{%
  \par
  \pushQED{\qed}%
  \normalfont
  \topsep6\p@\@plus6\p@\relax
  \trivlist
  \item[\hskip\labelsep\itshape #1]\ignorespaces
}{%
  \popQED\endtrivlist\@endpefalse
}
\makeatother

\newenvironment{proofof}[1]
  {\begin{proofnopunct}[Proof of #1]}
  {\end{proofnopunct}}

\usepackage{enumitem}

\newlist{reglist}{enumerate}{1}
\setlist[reglist]{
    label=\textup{(R\arabic*)},
    ref=R\arabic*,
    leftmargin=*,
    itemsep=0.8em
}

\crefformat{reglisti}{#2#1#3}
\Crefformat{reglisti}{#2#1#3}

\usepackage{romanbar}

\newcommand{\op}{\mathsf{op}}
\newcommand{\dvert}{\mathbin{\|}}

\renewcommand{\P}{\mathcal{P}}
\newcommand{\R}{\mathbb{R}} 
\newcommand{\N}{\mathcal{N}}

\newcommand{\E}{\mathbb{E}}

\newcommand{\cD}{\mathcal{D}}

\newcommand{\xstar}{x^{\star}}

\renewcommand{\part}[2]{\frac{\partial #1}{\partial #2}}

\newcommand{\bbN}{\mathbb{N}}

\newcommand{\Cov}{\mathrm{Cov}}
\newcommand{\Tr}{\mathsf{Tr}}

\newcommand{\Var}{\mathsf{Var}}
\newcommand{\Wass}{\mathsf{W}}
\newcommand{\Ent}{\mathsf{Ent}}
\newcommand{\KL}{\mathsf{KL}}

\newcommand{\TV}{\mathsf{TV}}
\newcommand{\FI}{\mathsf{FI}}
\newcommand{\F}{\mathcal{F}}
\newcommand{\G}{\mathcal{G}}

\newcommand{\sfC}{\mathsf{C}}
\newcommand{\sfR}{\mathsf{R}}

\newcommand{\sfW}{\mathsf{W}}

\newcommand{\deq}{\coloneqq}

\renewcommand{\d}{\mathrm{d}}
\newcommand{\dr}{\,\mathrm{d}r}
\newcommand{\ds}{\,\mathrm{d}s}
\newcommand{\dt}{\,\mathrm{d}t}
\newcommand{\du}{\,\mathrm{d}u}
\newcommand{\dx}{\,\mathrm{d}x}
\newcommand{\dy}{\,\mathrm{d}y}

\newcommand{\dz}{\,\mathrm{d}z}

\newcommand{\I}{\mathrm{I}}

\newcommand{\Exp}{\mathsf{Exp}}
\newcommand{\Tri}{\mathsf{Tri}}

\newcommand{\ac}{\mathrm{ac}}
\newcommand{\supp}{\mathsf{supp}}
\newcommand{\fs}{\mathrm{fs}}

\newcommand{\PtwoacfsRd}{\mathcal{P}_{2, \ac, \fs}(\R^d)}

\makeatletter
\newtheorem*{rep@theorem}{\rep@title}
\newcommand{\newreptheorem}[2]{%
\newenvironment{rep#1}[1]{%
 \def\rep@title{#2 \ref{##1}}%
 \begin{rep@theorem}}%
 {\end{rep@theorem}}}
\makeatother

\newreptheorem{theorem}{Theorem}
\newreptheorem{lemma}{Lemma}

\newcommand{\SetTri}{\textup{[\textsf{Set-Tri}]}\xspace}
\newcommand{\SetExp}{\textup{[\textsf{Set-Exp}]}\xspace}
\newcommand{\SetTildeTri}{\textup{[\textsf{Set-}\ensuremath{\widetilde{\mathsf{Tri}}}]}\xspace}

\newcommand{\RHMC}{\textsf{RHMC}\xspace}
\newcommand{\HMC}{\textsf{HMC}\xspace}
\newcommand{\ULD}{\textsf{ULD}\xspace}
\newcommand{\LD}{\textsf{LD}\xspace}

\usepackage{natbib}
\title{Accelerated Mixing Time of Randomized Hamiltonian Monte Carlo}

\allowdisplaybreaks

\date{\today}
\author{Siddharth Mitra\thanks{Department of Computer Science, Yale University. Email: \texttt{siddharth.mitra@yale.edu}.} 
\and Vishwak Srinivasan\thanks{Department of Electrical Engineering and Computer Science, MIT. Email: \texttt{vishwaks@mit.edu}.} 
\and Xiuyuan Wang\thanks{Department of Computer Science, Yale University. Email: \texttt{xiuyuan.wang@yale.edu}.} 
\and Andre Wibisono\thanks{Department of Computer Science, Yale University. Email: \texttt{andre.wibisono@yale.edu}.
This work was supported by NSF awards CCF-2403391 and CAREER CCF-2443097.
} }

\begin{document}

\maketitle

\vspace{2cm}

\begin{abstract}
    We show the Randomized Hamiltonian Monte Carlo (\textsf{RHMC})  algorithm has accelerated mixing time guarantees for sampling from log-concave probability distributions.
    \textsf{RHMC} proceeds by repeatedly simulating the continuous-time Hamiltonian dynamics for some random integration times, and resetting the velocity to be an independent Gaussian random variable between each simulation.
    We show that when the target distribution is log-concave and satisfies an $\alpha$-Talagrand inequality (for example, if the target distribution is $\alpha$-strongly log-concave), if we use a random integration time from either the triangular or the exponential distribution with mean $\Theta(\alpha^{-1/2})$, then \textsf{RHMC} converges exponentially fast in KL divergence, and the total integration time to reach error $\varepsilon$ in KL divergence scales as $O(\alpha^{-1/2} \log(\varepsilon^{-1}))$.
    We also show that when the target distribution is log-concave, if we use a sequence of random integration times from the triangular distribution with exponentially increasing means, then the total integration time to reach error $\varepsilon$ in KL divergence scales as $O(\varepsilon^{-1/2})$.
    Our analysis relies on a bound on the average KL divergence along Hamiltonian dynamics, which is inspired by an analogous result on accelerated optimization methods based on Hamiltonian dynamics.    
\end{abstract}    

\maketitle
%%%%%%%%%%

\newpage
\setcounter{tocdepth}{2}
\tableofcontents
%%%%%%%%%%

\newpage

%%%%%%%%%
\section{Introduction}
\label{Sec:Introduction}

Drawing samples from a probability distribution is an essential algorithmic task in many areas of science and engineering.
A popular class of techniques for performing sampling is by running an ergodic Markov chain whose stationary distribution is the target probability distribution; such techniques are referred to as Markov Chain Monte Carlo (MCMC) \cite{brooks2011handbook}.
The efficiency of an MCMC algorithm is dictated by its \emph{mixing time} --- this quantifies how quickly the Markov chain converges to the stationary distribution.
In practice, many sampling algorithms are based on time-discretizations of continuous-time dynamics, including
the overdamped or underdamped Langevin dynamics, which are stochastic processes, and the Hamiltonian dynamics, which is deterministic.
In this paper, we focus on the \emph{Randomized Hamiltonian Monte Carlo} (\RHMC) algorithm, which is based on simulating the Hamiltonian dynamics for times that are randomized, and show that it achieves accelerated mixing time guarantees.

To contextualize our results for \RHMC, we briefly review relevant guarantees for the overdamped and underdamped Langevin dynamics, which have been studied in more detail.
We denote the target distribution that we aim to draw samples from by \(\nu^{X}\).
The overdamped Langevin dynamics (\textsf{LD}) \cite{roberts1996exponential} is a basic stochastic process for sampling \(\nu^{X}\) by taking into account the gradient of the log density of \(\nu^{X}\).
Langevin dynamics has a natural optimization interpretation as the gradient flow dynamics (in the space of probability measures equipped with the Wasserstein metric) to minimize the relative entropy or Kullback-Leibler (\(\KL\)) divergence with respect to the target distribution~\cite{JKO98}.
Partly motivated by this optimization perspective, there have been many works that result in a rich understanding of the mixing time of \textsf{LD} and its algorithmic implementations under various structural assumptions on the target distribution.
The mixing times discussed here depend on (a) the measure of discrepancy, and (b) the error level \(\varepsilon\) to the target.
We highlight two key results that are particularly relevant to our work:
\begin{itemize}
    \item when \(\nu^{X}\) satisfies an \(\alpha\)-\emph{log-Sobolev inequality} (see \cref{Sec:Isoperimetry} for a definition),
    the \(\KL\)-mixing time for error \(\varepsilon\) of \textsf{LD} scales as $\alpha^{-1} \log (\varepsilon^{-1})$ \citep[Proof of Lemma 3]{OTTO2000361}, and
    \item when \(\nu^{X}\) is \emph{log-concave} (see \cref{subsubsec:LogSmoothnessLogConcavity} for a definition), the \(\KL\)-mixing time for error $\varepsilon$ scales as $\varepsilon^{-1}$ \citep[Corollary~2]{OV01}.
\end{itemize}
Notably, these rates match the time complexities of the Euclidean gradient flow for minimizing a function \(f\) that satisfies \(\alpha\)-gradient domination (implied by $\alpha$-strong convexity) and convexity, respectively, to within \(\varepsilon\) of the optimal value.
There are also guarantees for time discretizations of \textsf{LD}, which we do not discuss further since in this paper we focus on the continuous-time complexity.

Motivated by the algorithmic applications, as well as taking inspiration from the theory of acceleration in convex optimization, there have been several efforts made towards identifying other dynamics that mix faster.
To be more specific, we seek analogues of the following results from convex optimization for the \emph{accelerated} gradient flow \cite{su2016differential,wilson2021lyapunov} for the time complexities to within \(\varepsilon\) of than the optimal value:
\begin{itemize}
    \item for a function that is \(\alpha\)-strongly convex, the time complexity scales as \(\alpha^{-1/2} \log(\varepsilon^{-1})\), and
    \item for a function that is convex, the time complexity scales as \(\varepsilon^{-1/2}\).
\end{itemize}

A candidate dynamics is the underdamped Langevin dynamics (\textsf{ULD}) which is a stochastic process defined on the phase space of both position and velocity.
In \textsf{ULD}, the evolution of the position is deterministically governed by the velocity, and the evolution of the velocity is encoded as a stochastic process involving the gradient of the log density of \(\nu^{X}\).
The \textsf{ULD} is a natural dynamics to consider primarily owing to the structural similarities to the accelerated gradient flow referenced earlier, and
many works have studied whether \textsf{ULD} has faster convergence guarantees compared to \textsf{LD} (see for instance, \cite{cheng2018underdamped,ma2019analognesterovaccelerationmcmc}).
Indeed \textsf{ULD} has been shown to have an accelerated convergence guarantee; in particular, \cite{Cao_2023} show \textsf{ULD} has a \(\chi^{2}\)-mixing time for error \(\varepsilon\) that scales as $\alpha^{-1/2}\log(\varepsilon^{-1})$
when \(\nu^{X}\) is log-concave and satisfies an $\alpha$-Poincar\'e inequality (implied when $\nu^X$ satisfies \(\alpha\)-log Sobolev inequality).
\cite{lu2026sharphypocoerciveentropydecay} recently shows that \textsf{ULD} also has \(\KL\)-mixing time for error \(\varepsilon\) that scales as $\alpha^{-1/2}\log(\varepsilon^{-1})$ when \(\nu^{X}\) is both log-concave and satisfies an $\alpha$-log-Sobolev inequality.
This strictly improves on the \(\KL\)-mixing time scaling of \textsf{LD} (while additionally assuming log-concavity), and also matches the accelerated convergence rate we expect from optimization.
When \(\nu^{X}\) is only log-concave, \cite{altschuler2025shifted} show that \textsf{ULD} has a \(\KL\)-mixing time for error \(\varepsilon\) that scales as \(\varepsilon^{-1}\), which is the same scaling achieved by \textsf{LD}; the accelerated rate of $\varepsilon^{-1/2}$ seems unknown for \textsf{ULD}.

The Hamiltonian dynamics is a deterministic process on the phase space of position and velocity that conserves the Hamiltonian or energy function. 
The Hamiltonian Monte Carlo (\HMC) algorithm~\cite{duane1987hybrid} is a piecewise-deterministic Markov chain that proceeds iteratively by (a) simulating the Hamiltonian dynamics for some integration time, and (b) resetting the velocity to be an independent Gaussian random variable.
\HMC and its variants are some of the most widely used sampling algorithms in practice, and underlie probabilistic programming systems such as Stan~\cite{carpenter2017stan} and PyMC~\cite{patil2010pymc}.
Despite its practical importance, the theoretical guarantees of \HMC and its variants are less developed than \textsf{LD} or \textsf{ULD}.
Many existing results focus on the \textit{short} integration time, where the time to simulate the Hamiltonian flow between velocity resetting scales inversely with the smoothness of the target distribution.
With short integration time, \HMC has unaccelerated convergence guarantees similar to \textsf{LD} and with additional dependence on the smoothness.
\cite{chen2019optimal} show that when \(\nu^{X}\) is strongly log-concave (see \cref{subsubsec:LogSmoothnessLogConcavity} for a definition) and \(L\)-log smooth, the \(\Wass_{2}\)-mixing time of \HMC (i.e., iterations) for error \(\varepsilon\) scales as \(L\alpha^{-1}\log(\varepsilon^{-1})\) where each iteration simulates the Hamiltonian dynamics for integration time \(L^{-1/2}\) (the ``short'' integration time) \citep[Theorem~1.3]{chen2019optimal} and thus implies a total integration time that scales as \(L^{1/2}\alpha^{-1} \log(\varepsilon^{-1})\).
This result is also shown to be tight via a matching lower bound.
Later work by \cite{monmarche2024entropic} improve this result by generalizing to target distributions that satisfy a $\alpha$-log-Sobolev inequality and are $L$-log-smooth, and providing a \(\KL\)-mixing time of the same order.

Hence, additional ideas are required to obtain faster convergence rates for \HMC; in fact, not only are deterministic integration times unable to obtain accelerated mixing time guarantees~\cite[Theorem~1.4]{chen2019optimal}, deterministic and long integration times that scale inversely with the curvature lower bound as $\alpha^{-1/2}$ can fail to make any progress at all, as this integration time coincides with the natural oscillation frequencies of the dynamics, e.g., even for Gaussian target distributions. 
This motivates the \textit{Randomized Hamiltonian Monte Carlo (\RHMC)} \cite{bou2017randomized}, which randomizes the integration times between velocity refreshments.
\RHMC with suitably randomized integration times are conjectured to achieve accelerated convergence guarantees, see e.g.,~\cite{jiang2022dissipation}.
Toward this conjecture, \cite{Lu_2022} show that when the target distribution is log-concave and satisfies $\alpha$-Poincar\'e inequality, \RHMC reaches error $\varepsilon$ in chi-square divergence in an expected total integration time that scales as $\alpha^{-1/2} \log(\varepsilon^{-1})$.
A recent work by~\cite{monmarche2026entropicconvergencepiecewisedeterministic}, extending the technique of~\cite{lu2026sharphypocoerciveentropydecay,li2026spacetime}, show that when the target distribution is log-concave and satisfies an $\alpha$-log-Sobolev inequality, \RHMC reaches error $\varepsilon$ in KL divergence in an expected total integration time $\alpha^{-1/2}\log(\varepsilon^{-1})$, which matches the desired accelerated rate from optimization.

In this work, we establish accelerated mixing-time guarantees for \RHMC in KL divergence in two settings: (a) when the target distribution is semi-log-concave and satisfies Talagrand inequality, we obtain a result which matches~\cite{monmarche2026entropicconvergencepiecewisedeterministic}; and (b) when the target distribution is log-concave, we obtain a new result.
The accelerated mixing time guarantees for \RHMC in all of these works also overcome the dependence on the smoothness parameter which is present in works studying deterministic integration times.
We provide additional discussion on related work in~\cref{subsec:relatedworks}.

In addition to the \RHMC algorithm presented in~\cite{bou2017randomized} and studied in~\cite{Lu_2022, monmarche2026entropicconvergencepiecewisedeterministic}, which considers the integration time to be drawn from an exponential distribution, we also consider \RHMC with integration time drawn from a triangular distribution; this has the benefit of being compactly supported, and therefore guarantees on the total integration time hold deterministically instead of in expectation.
We describe these \RHMC algorithms in~\Cref{subsec:RHMC} and describe our results in more detail in~\Cref{Sec:Contribution}.

%%%%%%%%%
\subsection{Randomized Hamiltonian Monte Carlo}\label{subsec:RHMC}

We introduce the \textbf{Randomized Hamiltonian Monte Carlo (\RHMC)} algorithm studied in this paper; see~\cref{alg:RHMC} below. 
Our goal is to sample from a target distribution $\nu^X$ with full support on $\R^d$.
We assume the target distribution has density function $\nu^X \propto e^{-f}$ for some potential function $f \colon \R^d \to \R$.
The \RHMC algorithm is based on the Hamiltonian dynamics. 
Consider the phase space $\R^{2d} = \R^d \times \R^d$, which consists of the joint position and velocity variables.
We define the \textit{Hamiltonian function} $H \colon \R^{2d} \to \R$ by, for all $(x,y) \in \R^{2d}$,
\begin{align}\label{Eq:HamDef}
    H(x,y) \deq f(x) + \frac{1}{2} \|y\|^2 \,.
\end{align}
Given the Hamiltonian function~\cref{Eq:HamDef}, the \textit{Hamiltonian dynamics} or \textit{Hamiltonian flow} is the evolution of the joint variables $(X_t, Y_t) \in \R^{2d}$ following the system of ordinary differential equations:
\begin{equation}
\label{eq:HamFlow}
\tag{\textsf{HF}}
\begin{aligned}
\dot{X}_{t} &= \nabla_y H(X_t,Y_t) = Y_t \\
\dot{Y}_{t} &= -\nabla_x H(X_t,Y_t) = -\nabla f(X_t)
\end{aligned}
\end{equation}
starting from any initial configuration $(X_0, Y_0) \in \R^{2d}$ at time $t = 0$. 

The \RHMC algorithm proceeds by evolving along the dynamics~\eqref{eq:HamFlow} combined with periodic velocity refreshment from the standard Gaussian distribution $\N(0, \I_d)$.
We describe the \RHMC algorithm formally in~\cref{alg:RHMC}. 

\begin{algorithm}[H]
    \caption{Randomized Hamiltonian Monte Carlo (\RHMC) }
    \begin{algorithmic}\label{alg:RHMC}
        \REQUIRE{Number of iterations $K \in \bbN$\,; Integration time distributions $\cD_1, \cD_2, \dots, \cD_{K}$\,; Initial distribution $\rho_0^X$.}
        \STATE{Draw initial iterate $x_0 \sim \rho_0^X$.}
        \FOR{$k=1,\dots,K$}
            \STATE{Set $X_0^{(k)}=x_{k-1}$, and draw $Y_0^{(k)}\sim \N(0, \I_d)$ independently.}
            \STATE{Draw integration time $\tau_k \sim \cD_k$.}
            \STATE{Run Hamiltonian flow~\eqref{eq:HamFlow} from $(X_0^{(k)}, Y_0^{(k)})$ for time $\tau_k$ to obtain $(X_{\tau_k}^{(k)}, Y_{\tau_k}^{(k)})$.}
            \STATE{Set next iterate $x_{k} = X_{\tau_k}^{(k)}$ which is a random variable $x_{k} \sim \rho_{k}^X$.}
        \ENDFOR
        \RETURN{$x_K$ which is a random variable $x_K \sim \rho_K^X$.}
    \end{algorithmic}
\end{algorithm}

Randomized Hamiltonian Monte Carlo (\RHMC), presented in~\cref{alg:RHMC}, corresponds to a suite of algorithms, with different choices of integration time distributions $\cD_1, \dots, \cD_{K}$ corresponding to different algorithms. 
When $\cD_k = \delta_T$ for some $T>0$, or equivalently, $\tau_k = T$ is deterministic for all $k \in \{1, \dots, K\}$, the algorithm is referred to as \textit{Hamiltonian Monte Carlo (\HMC)}; see \cref{subsec:relatedworks} for a discussion of works studying \HMC. 
We study the \RHMC algorithm where the integration time distributions $\cD_1, \dots, \cD_{K}$ are not degenerate, as studied in~\cite{ Mackenzie1989HybridMonteCarlo, cances2007theoretical, neal2011mcmc, bou2017randomized}.

We study~\cref{alg:RHMC} with the following choices of integration time distributions:

\begin{itemize}[labelwidth=4.4em, leftmargin=5em, align=left]
    \item[\SetTri] In the setting \SetTri, we study~\cref{alg:RHMC} with 
    \[
    \cD_k \deq \Tri_T \text{~~with~~} T > 0 \text{~~for all~~} k \in \{1, \dots, K\} \,.
    \]
    Here $\Tri_T$ is the \textit{triangular distribution} with parameter $T>0$, which is supported on $[0,T]$ with density function at $t \in [0,T]$ given by:
    \begin{equation}\label{eq:TriangularDensityFunction}
    \Tri_T(t) \deq \frac{2}{T^2}(T-t)\,.
    \end{equation}
    
    The choice of triangular integration time distributions in \SetTri is motivated by studying the analogue of~\cref{alg:RHMC} for optimization; see \cref{app:HMCandHFopt} for a discussion.
    We also note that the choice of triangular integration time is similar to the adaptive stopping time of Hamiltonian dynamics-based No-U-Turn Sampler (NUTS) which is widely used in practice~\cite{hoffman2014no}.
    
    \item[\SetExp] In the setting \SetExp, we study~\cref{alg:RHMC} with 
    \[
    \cD_k \deq \Exp_{1/T}  \text{~~with~~} T > 0  \text{~~for all~~} k \in \{1, \dots, K\} \,.
    \]
    Here $\Exp_{1/T}$ is the exponential distribution supported on $[0,\infty)$ with parameter $\frac{1}{T}>0$, with density function given by, for all $t \in [0,\infty)$,
    \begin{equation}\label{eq:ExponentialDensityFunction}
    \Exp_{1/T}(t) \deq \frac{1}{T} \exp\Big(-\frac{t}{T}\Big) \,.
    \end{equation}
    The mean of $\Exp_{1/T}$ is $T$, i.e., $\E_{\tau \sim \Exp_{1/T}}[\tau] = T$. We will use the fact that \(\Exp_{1/T}\) can be represented as a mixture of triangular distributions; see~\cref{lem:trig_exp_gamma}. 
    The \RHMC algorithm with exponentially distributed integration times was proposed in~\cite{bou2017randomized}. 

    \item[\SetTildeTri] In the setting \SetTildeTri, we study~\cref{alg:RHMC} with 
    \[
    \cD_k \deq \widetilde \Tri_{T_k} \text{~~for some~~} T_{k} > 0  \text{~~for all~~} k \in \{1, \dots, K\}\,.
    \]
    Here for any $T>0$, $\widetilde \Tri_T$ refers to the endpoint-biased triangular distribution given by
    \[
    \widetilde \Tri_T \deq \frac{1}{2} \delta_{T} + \frac{1}{2} \Tri_T\,.
    \]
    Like \SetTri, the choice of the integration time distribution in this setting is motivated by studying the analogue of~\cref{alg:RHMC} for convex optimization; see~\cref{app:HMCandHFopt} for more details. 
\end{itemize}

%%%%%%%%%%%%%
\subsection{Our contributions}
\label{Sec:Contribution}

In this work, we prove convergence guarantees for the idealized \RHMC algorithm in~\cref{alg:RHMC} under several assumptions on the target distribution. 
Here \textit{idealized} means we assume we can simulate the Hamiltonian flow~\eqref{eq:HamFlow} exactly.
We measure the complexity of \RHMC by the total amount of integration time required to simulate the Hamiltonian flow in order to reach a prescribed error in KL divergence.

Our first result concerns target distributions \(\nu^X\) that satisfy \(\alpha\)-Talagrand inequality (see~\cref{Sec:Isoperimetry}) and are \(M\)-semi-log-concave (see~\cref{subsubsec:LogSmoothnessLogConcavity}) for some \(0\leq M<\alpha\). 
In this setting, we prove a convergence guarantee in KL divergence for \RHMC with integration times drawn from either the triangular or exponential distribution with mean \(\Theta((\alpha-M)^{-1/2})\). 
As we show in~\cref{thm:ConvRHMC_SLC}, to obtain \(\KL(\rho_K^X\dvert\nu^X)\leq \varepsilon\), it suffices to simulate Hamiltonian flow for a total integration time
$$
O\left(
    \frac{1}{\sqrt{\alpha-M}}
    \log\frac{\KL(\rho_0^X\dvert\nu^X)}{\varepsilon}
    \right) \,.
$$
When using \SetTri, this upper bound is deterministic. 
When using \SetExp, this is an upper bound on the expected total integration time. 
In particular, when \(M=0\) (i.e., when $\nu^X$ is log-concave), this gives a total integration time $O(\alpha^{-1/2} \log(\varepsilon^{-1})),$ which improves over the \(O(\alpha^{-1} \log(\varepsilon^{-1}))\) time of the overdamped Langevin dynamics in the same setting, 
and matches the result of~\cite{monmarche2026entropicconvergencepiecewisedeterministic}.

Our second result concerns the log-concave case, without assuming Talagrand inequality. 
In this setting, we use an endpoint-biased triangular distribution \(\widetilde\Tri_{T_k}=\frac{1}{2}\delta_{T_k}+\frac{1}{2}\Tri_{T_k}\), with integration-time parameter \(T_k\) increasing with iteration \(k\). 
We show in~\cref{thm:ConvRHMC_LC} a convergence guarantee in KL divergence, which implies (see~\cref{cor:ConvRHMC_LC}) that to obtain error \(\varepsilon\) in KL divergence, it suffices to simulate Hamiltonian flow for total integration time
$$
O\left(
    \sqrt{
    \frac{
    \KL(\rho_0^X\dvert\nu^X)
    +\frac13\Wass_2^2(\rho_0^X,\nu^X)}
    {\varepsilon}}
    \right) \,.
$$
This improves the \(O(\varepsilon^{-1})\) time of the overdamped Langevin dynamics in this setting, and matches what we expect from the theory of accelerated convex optimization.

Our analysis is inspired by translating the proofs from the Hamiltonian dynamics-based optimization method of~\cite{wang26} to the sampling setting, in particular to the space of probability distributions with the Wasserstein geometry; see~\cref{app:HMCandHFopt} for further discussion. 
In particular, although our end result in the Talagrand case recovers the recent result of~\cite{monmarche2026entropicconvergencepiecewisedeterministic}, our proof technique is different, and has a clear optimization analogue.

%%%%%%%%%%
\section{Background}

We briefly recall relevant definitions we use in this paper.
We provide more details in~\Cref{app:preliminaries}.

\subsection{Probability distributions and statistical divergences}
\label{Sec:DefProbStat}

Let $\P(\R^d)$ denote the space of all probability distributions on $\R^d$, and $\P_2(\R^d)$ denote the space of probability distributions on $\R^d$ with finite second moment, so $\E_{\rho}[\|X\|^2] < \infty$ for all $\rho \in \P_2(\R^d)$.
Let $\P_{2,\ac}(\R^d)$ denote the subspace of $\P_2(\R^d)$ consisting of probability distributions which are absolutely continuous with respect to the Lebesgue measure $\dx$ on $\R^d$.
We identify a probability distribution $\rho\in\P_{2,\ac}(\R^d)$ with its probability density function (or Radon-Nikodym derivative) which we also denote by $\rho \colon \R^d \to [0,\infty)$, so $\rho(x)\geq 0$ for all $x \in \R^d$ and $\int_{\R^d}\rho(x) \dx = 1$.

For $\rho \in \P(\R^d)$, let $\supp(\rho)$ denote the support of $\rho$, which is the smallest closed subset $A \subseteq \R^d$ with $\rho(A) = 1$.
Let $\P_{2,\ac,\fs}(\R^d)$ denote the subspace of $\P_{2,\ac}(\R^d)$ consisting of probability distributions with full support and strictly positive density function, so for all $\rho \in \P_{2,\ac,\fs}(\R^d)$, $\supp(\rho) = \R^d$ and the density function satisfies $\rho(x) > 0$ for all $x \in \R^d$.
For $\rho, \nu \in \P(\R^d)$, we write $\rho \ll \nu$ to denote $\rho$ is \textit{absolutely continuous} with respect to $\nu$, which means 
if $\nu(A)=0$ for some $A \subseteq \R^d$, then $\rho(A) = 0$.
In particular, if $\rho \ll \nu$, then $\supp(\rho) \subseteq \supp(\nu)$.
For $\rho, \nu \in \P(\R^d)$, let $\Pi(\rho, \nu)$ denote the set of couplings of $\rho$ and $\nu$, i.e., joint probability distributions on $\R^{2d}$ with marginal distributions $\rho$ and $\nu$. 

We denote by \(\gamma \deq \N(0,\I_d)\) the standard Gaussian distribution on \(\R^d\).

%%%%%%%%%%%%%
\subsubsection{Wasserstein distance and KL divergence} 
\label{Sec:StatDistDef}

We recall the following definitions of statistical distances and divergences.

For $\rho, \nu \in \P_2(\R^d)$, the \textit{Wasserstein--$2$ distance} between $\rho$ and $\nu$ is 
\begin{equation*}
    \Wass_2(\rho, \nu) \coloneqq \left( \inf_{\omega \in \Pi(\rho, \nu)} \E_{(x,y) \sim \omega} \Big[ \|x-y\|^2 \Big] \right)^{\frac{1}{2}}\,,
\end{equation*}
where the infimum is over all couplings of $\rho$ and $\nu$.

For $\rho, \nu \in \P_{2, \ac, \fs}(\R^d)$ with $\rho \ll \nu$, the \textit{Kullback-Leibler} (\(\KL\)) divergence of $\rho$ with respect to $\nu$ is 
\[
    \KL(\rho \dvert \nu) \deq \E_{\rho}\left[\log\frac{\rho}{\nu}\right] = \int_{\R^d}\rho(x)\log\frac{\rho(x)}{\nu(x)}\dx \,.
\]

For $\rho, \nu \in \P_{2, \ac, \fs}(\R^d)$ with $\rho \ll \nu$ and where $\rho$ and $\nu$ have differentiable density functions, the \textit{relative Fisher information} (\(\FI\)) of $\rho$ with respect to $\nu$ is 
\[
    \FI(\rho \dvert \nu) = \E_\rho\left[\left\|\nabla \log \frac{\rho}{\nu} \right\|^2 \right] \,.
\]

We recall the Wasserstein--$2$ distance, KL divergence, and relative Fisher information between any two distributions are non-negative, and they are equal to $0$ if and only if the two distributions are the same.
They are related via functional inequalities such as Talagrand or log-Sobolev inequality, see~\Cref{Sec:Isoperimetry}.

%%%%%%%%%%%%%
\subsubsection{Convexity and Smoothness}\label{subsubsec:LogSmoothnessLogConcavity}
Let $f \colon \R^{d} \to \R$ be a differentiable function.
Recall the Bregman divergence $D_f \colon \R^d \times \R^d \to \R$ is 
$$D_f(x,y) = f(x) - f(y) - \langle \nabla f(y), x-y \rangle $$
for all \(x, y \in \R^{d}\).
We say that \(f \colon \R^d \to \R\) is convex if \(D_{f}(x, y) \geq 0\) for all \(x, y \in \R^{d}\).
We say $f$ is \emph{\(\alpha\)-strongly convex} for some $\alpha > 0$ if the function \(x \mapsto f(x) - \frac{\alpha}{2}\|x\|^{2}\) is convex, or equivalently, $D_f(x,y) \ge \frac{\alpha}{2} \|x-y\|^2$ for all $x,y \in \R^d$.
We say $f$ is \emph{\(M\)-semi-convex} for some $M \ge 0$ if the function \(x \mapsto f(x) + \frac{M}{2}\|x\|^{2}\) is convex, or equivalently, $D_f(x,y) \ge -\frac{M}{2} \|x-y\|^2$ for all $x,y \in \R^d$.
We note $\alpha$-strong convexity implies convexity (the case $\alpha = 0$), and convexity implies $M$-semi-convexity (the case $M = 0$). We remark that the class of semi-convex functions is quite broad, and refer readers to \cite{drusvyatskiy2019efficiency} for a thorough review.
We say \(f\) is \textit{\(L\)-smooth} for some $L \in (0,\infty)$ if 
$| D_f(x,y) | \le \frac{L}{2} \|x-y\|^2$ for all $x,y \in \R^d$. 
We say that $f$ is \textit{smooth} if $f$ is $L$-smooth for some $L \in (0,\infty)$.
When \(f\) is twice-differentiable, the definitions above have equivalent characterization in terms of the Hessian matrix $\nabla^2 f$, which we summarize in~\Cref{tab:convexity-smoothness-characterizations}. 

\begin{table}[H]
\centering
\begin{tabular}{c|l|l}
Property & Characterization in terms of \(D_f\) & Characterization in terms of \(\nabla^2 f\) \\
\hline
Convexity
& \(D_f(x,y) \geq 0\)
& \(\nabla^2 f(x) \succeq 0\)
\\
\(\alpha\)-strong convexity
& \(D_f(x,y) \geq \frac{\alpha}{2}\|x-y\|^2\)
& \(\nabla^2 f(x) \succeq \alpha \cdot \I_d\)
\\
\(M\)-semi-convexity
& \(D_f(x,y) \geq -\frac{M}{2}\|x-y\|^2\)
& \(\nabla^2 f(x) \succeq -M \cdot \I_d\)
\\
\(L\)-smoothness
& \(|D_f(x,y)| \leq \frac{L}{2}\|x-y\|^2\)
& \(L \cdot \I_d \succeq \nabla^2 f(x) \succeq -L \cdot \I_d\)
\end{tabular}
\caption{Characterizations of convexity, semi-convexity, strong convexity, and smoothness.}
\label{tab:convexity-smoothness-characterizations}
\end{table}

For a probability distribution $\nu \propto e^{-f} \in \P_{2, \ac, \fs}(\R^d)$, we say that $\nu$ is \textit{log-concave} if $f \colon \R^d \to \R$ is a convex function.
We say $\nu \propto e^{-f}$ is \textit{$\alpha$-strongly log-concave ($\alpha$-SLC)} if $f$ is an $\alpha$-strongly convex function.
We say $\nu \propto e^{-f}$ is \textit{$M$-semi-log-concave} if $f$ is an $M$-semi-convex function.
We note that $\alpha$-strong log-concavity implies log-concavity (the case $\alpha = 0$), and log-concavity implies $M$-semi-log-concavity (the case $M = 0$).
We say $\nu \propto e^{-f}$ is \textit{$L$-log-smooth} if $f$ is an $L$-smooth function.
We say $\nu \propto e^{-f}$ is \textit{log-smooth} if $f$ is a smooth function, i.e., $f$ is $L$-smooth for some $L \in (0,\infty)$.

Throughout the paper, we assume that the target distribution $\nu^X \propto \exp(-f)$ is log-smooth and $f$ is twice continuously differentiable; which from the early discussion in \cref{subsubsec:LogSmoothnessLogConcavity} means that there exists an $\infty > L>0$ such that $L \cdot \I_d \succeq \nabla^2 f(x) \succeq -L \cdot \I_d$ for all $x \in \R^d$.
We note the log-smoothness of $f$ is only used for regularity reasons to ensure the validity of some steps in the proofs (see \cref{sec:evolution_along_HF} for more details); in particular, the quantity $L$ does not appear in the main results presented in \cref{subsec:MainResults}.

%%%%%%%%%%%
\subsubsection{Functional inequalities}
\label{Sec:Isoperimetry}

We state the key functional inequalities for distributions that we use in this work.

We say $\nu \in \P_{2, \ac, \fs}(\R^d)$ satisfies \textit{Talagrand inequality} with constant $\alpha > 0$ if for any $\rho \in \P_{2, \ac, \fs}(\R^d)$, we have
\[
\frac{\alpha}{2}\,\sfW_2^2(\rho, \nu) \leq \KL(\rho \dvert \nu)\,.
\]

We say $\nu \in \P_{2, \ac, \fs}(\R^d)$ satisfies an \textit{log-Sobolev inequality} with constant $\alpha > 0$ ($\alpha$-LSI) if for any $\rho \in \P_{2, \ac, \fs}(\R^d)$, we have
\[
    \KL(\rho \dvert \nu) \le \frac{1}{2\alpha} \, \FI(\rho \dvert \nu) \,.
\]

Consider \(\rho \mapsto \mathcal{F}(\rho) = \KL(\rho \dvert \nu)\).
The Talagrand and the log-Sobolev inequalities stated above can be viewed as quadratic growth and gradient domination conditions on \(\mathcal{F}\) respectively, in the space of probability measures equipped with the \(2\)-Wasserstein metric; see~\cite{OTTO2000361,wibisono2018sampling, VW23} for an extended discussion.
In particular, we recall the following relationships, which we recall from~\cite{OTTO2000361}:
\begin{itemize} 
% [leftmargin=*]
    \item If $\nu$ is \(\alpha\)-SLC, then $\nu$ satisfies \(\alpha\)-LSI.
    \item If \(\nu\) satisfies \(\alpha\)-LSI, then \(\nu\) satisfies \(\alpha\)-Talagrand inequality.
    \item If \(\nu\) satisfies \(\alpha\)-Talagrand inequality and is \(M\)-semi-log-concave for \(\alpha \geq M \geq 0\), then \(\nu\) satisfies \(\beta\)-LSI with \(\beta = \max\left\{ \frac{\alpha}{4} \left(1-\frac{M}{\alpha}\right)^2, -M \right\}\).
\end{itemize}

Our first main result in~\Cref{thm:ConvRHMC_SLC} holds when the target distribution $\nu^X$ satisfies $\alpha$-Talagrand inequality and is $M$-semi-log-concave for some $\alpha > M \ge 0$, so by the preceding discussion, in fact $\nu^X$ also satisfies an log-Sobolev inequality.
This assumption (semi-log-concavity and Talagrand/log-Sobolev inequality) is consistent with the prior results in the accelerated convergence rates of the underdamped Langevin dynamics in~\cite{lu2026sharphypocoerciveentropydecay,li2026spacetime}
and \RHMC in~\cite{monmarche2026entropicconvergencepiecewisedeterministic}.

%%%%%%%%%%%%%%%%
\section{Main results}\label{subsec:MainResults}

We present the convergence guarantees for \RHMC (\cref{alg:RHMC}) in the settings \SetTri, \SetExp, and \SetTildeTri, as defined in~\cref{subsec:RHMC}, as well as the corollaries on their iteration complexities.

%%%%%%%%%%%%%%%%
\subsection{Convergence result under Talagrand inequality and semi log-concavity}
\label{subsec:MainResultsTI}

We analyze~\cref{alg:RHMC} in settings \SetTri and \SetExp, assuming the target distribution $\nu^X$ satisfies $\alpha$-Talagrand inequality (\cref{Sec:Isoperimetry}) and is $M$-semi-log-concave (\cref{subsubsec:LogSmoothnessLogConcavity}) for some $0 \le M < \alpha < \infty$. 
In particular, this includes the log-concave setting when $M = 0$.
We provide the proof of~\cref{thm:ConvRHMC_SLC} in~\cref{Sec:ConvRHMC_SLCProof}.

\begin{theorem}
\label{thm:ConvRHMC_SLC}
    Suppose $\nu^X \in \P_{2, \ac, \fs}$ is a \emph{log-smooth} distribution that satisfies \emph{$\alpha$-Talagrand inequality} and is \emph{$M$-semi-log-concave} for $0\leq M < \alpha <\infty$. 
    Let the initial distribution $\rho_0^X \in \PtwoacfsRd$ be such that $\KL(\rho_0^X \dvert \nu^X) < \infty$.
    For \(K \in \bbN\) and \(T > 0\), define the sequence \(\{\cD_{k}\}_{k \in [K]}\) in a setting-specific manner as: (a) for \SetTri, set \(\cD_{k} = \Tri_{T}\), and (b) for \SetExp, set \(\cD_{k} = \Exp_{1/T}\) for all \(k \in \{1,\dots,K\}\).
    Then, in both settings, the distribution \(\rho_{K}^{X}\) of \(x_{K}\) output by \cref{alg:RHMC} satisfies
    \begin{equation*}
    \KL\left( \rho_K^X \dvert \nu^X \right) \le \sfC^K \cdot \KL\left( \rho_0^X \dvert \nu^X \right) \,, ~~\text{where}~~ \sfC \deq \left(\frac{2}{3-M/\alpha}\right)\left(1+\frac{1}{\alpha T^2}\right) \,.
    \end{equation*}
\end{theorem}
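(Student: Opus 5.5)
We reduce the theorem to a one-step contraction. We show that for any $\rho_0^X$ with finite KL, a single iteration of \cref{alg:RHMC} with $\cD=\Tri_T$ satisfies $\KL(\rho_1^X\dvert\nu^X)\le \sfC\,\KL(\rho_0^X\dvert\nu^X)$. Iterating this bound gives the claim for \SetTri. For \SetExp we use \cref{lem:trig_exp_gamma}, which writes $\Exp_{1/T}$ as a mixture $\int \Tri_S\,\mu(\d S)$ of triangular laws. By convexity of KL in its first argument, the one-step output is then bounded by $\int \sfC(S)\,\mu(\d S)\,\KL(\rho_0^X\dvert\nu^X)$. Since $\sfC(S)$ is affine in $S^{-2}$, it remains to check that $\E_\mu[S^{-2}]=T^{-2}$, or at most $T^{-2}$. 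For example, if $\mu$ is a $\mathrm{Gamma}(3,1/T)$ law, i.e. density $\propto S^2 e^{-S/T}$, then $\E[S^{-2}] = \Gamma(1)T^{-2}/\Gamma(3)\cdot$ a normalisation, and we adjust the constants accordingly.

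The key step for the triangular law is an "average KL along Hamiltonian flow" bound, the sampling analogue of the optimization estimate in \cite{wang26}. Let $\rho_t$ be the law of $(X_t,Y_t)$ under \eqref{eq:HamFlow} started at $\rho_0^X\otimes\gamma$, and let $\nu=\nu^X\otimes\gamma$. The flow preserves $\nu$ and is volume preserving, so $\KL(\rho_t\dvert\nu)$ is constant in $t$. The lift therefore does not change the joint KL, and the gain must come from the velocity marginal being reset. We track the position marginal through the chain rule
\[
\KL(\rho_t^X\dvert\nu^X)=\KL(\rho_0^X\dvert\nu^X)-\E\,\KL(\rho_t^{Y|X}\dvert\gamma),
\]
and then average over $t\sim\Tri_T$. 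To control $\E_{t\sim\Tri_T}\KL(\rho_t^X\dvert\nu^X)$ we introduce a Lyapunov-type functional in Wasserstein space. Take an optimal coupling of $\rho_0^X$ with $\nu^X$, run the flow from both sides, and consider the energy
\[
\mathcal E_t=t\,[\text{potential part}]+\tfrac12\,\E\|X_t-X^\ast+tY_t\|^2 .
\]
This mirrors the proof that Hamiltonian descent with triangular time-averaging yields $\E_{\Tri_T}[f(x_t)-f^\ast]\lesssim \|x_0-x^\ast\|^2/T^2$.

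Differentiating this functional along the flow uses geodesic semi-convexity of $\rho\mapsto\KL(\rho\dvert\nu^X)$, which is $(-M)$-convex under $M$-semi-log-concavity (Wasserstein Bregman inequality). This yields
\[
\int_0^T (T-t)\,\KL(\rho_t^X\dvert\nu^X)\,\dt\le \tfrac{T^2}{2}\,[\cdots],
\]
with the bracket containing $\KL(\rho_0^X\dvert\nu^X)$, a term $\frac{1}{2}\Wass_2^2(\rho_0^X,\nu^X)$ divided by $T^2$, and $M$-dependent corrections. Dividing by $T^2/2$ gives
\[
\E_{t\sim\Tri_T}\KL(\rho_t^X\dvert\nu^X)\le \frac{2}{3-M/\alpha}\Big(\KL_0+\frac{\alpha}{2T^2}\Wass_2^2\Big)\cdot(\ldots).
\]
Applying Talagrand, $\frac{\alpha}{2}\Wass_2^2\le\KL_0$, turns this into $\sfC\,\KL_0$. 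The factor $2/(3-M/\alpha)$ should come from combining the $t$-weighted term with the semi-convexity loss $\frac{M}{2}\Wass_2^2\le \frac{M}{\alpha}\KL_0$. Finally, $\rho_1^X=\E_{\tau}\rho_\tau^X$, so joint convexity of KL gives $\KL(\rho_1^X\dvert\nu^X)\le\E_\tau\KL(\rho_\tau^X\dvert\nu^X)$.

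The main obstacle is making the Wasserstein Lyapunov computation rigorous. The "gradient" of KL along the flow involves the score $\nabla\log\rho_t$, and the coupling to $\nu^X$ is not itself preserved by the flow. Our first attempt would be to work with the pushforward of the initial optimal coupling under the flow. We would use the identity $\frac{\d}{\dt}\KL(\rho_t^X\dvert\nu^X)=\E\langle\nabla\log(\rho^X_t/\nu^X)(X_t),Y_t\rangle$ together with the above-tangent inequality for the semi-convex functional, with log-smoothness (\cref{sec:evolution_along_HF}) justifying the differentiations.
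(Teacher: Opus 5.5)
Your outer reductions are correct and match the paper: a one-step contraction for $\Tri_T$ iterated $K$ times, convexity of KL to pass from $\rho_1^X=\E_\tau\rho_\tau^X$ to the $\Tri_T$-average of $\KL(\rho_t^X\dvert\nu^X)$, and the $\Gamma(3,1/T)$ mixture for $\Exp_{1/T}$. For the latter, $\E[S^{-2}]=\frac{1}{2T^2}\le T^{-2}$ exactly, with no extra normalisation. The final application of Talagrand to $\Wass_2^2(\rho_0^X,\nu^X)$ also matches.

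The gap is the core estimate
\begin{multline*}
\tfrac12\Wass_2^2(\rho_T^X,\nu^X)+3\int_0^T(T-t)\,\KL(\rho_t^X\dvert\nu^X)\,\d t-\tfrac{M}{2}\int_0^T(T-t)\,\Wass_2^2(\rho_t^X,\nu^X)\,\d t\\
\le\tfrac12\Wass_2^2(\rho_0^X,\nu^X)+T^2\,\KL(\rho_0^X\dvert\nu^X).
\end{multline*}
You assert it with an unspecified bracket. The constants $3$ and $T^2$ are exactly what produce the factor $2/3$ in $\sfC$, and nothing in your sketch generates them.

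Moreover, the route you describe cannot work. With the initial optimal coupling pushed forward (a fixed partner $X^\ast$), the particle computation is
\[
\frac{\d^2}{\d t^2}\tfrac12\E\|X_t-X^\ast\|^2=\E\|Y_t\|^2-\E\langle X_t-X^\ast,\nabla f(X_t)\rangle .
\]
This carries the full kinetic energy $\E\|Y_t\|^2=d+2\E[f(X_0)-f(X_t)]$, whose thermal part $d$ and entropy contributions never cancel. Concretely, take $\rho_0^X=\nu^X$ and $X^\ast=X_0$. By stationarity every KL and $\Wass_2$ quantity vanishes for all $t$, yet $\tfrac12\E\|X_t-X^\ast+tY_t\|^2\approx 2t^2d>0$ for small $t>0$. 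So no Lyapunov inequality of your form can hold.

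Geodesic semi-convexity of KL does not rescue this. It pairs the Wasserstein gradient $\nabla\log(\rho_t^X/\nu^X)$ with the time-$t$ optimal displacement, whereas the particles are driven only by $-\nabla f$. The score $\nabla\log\rho_t^X$ never appears in the particle dynamics.

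The missing idea is a hydrodynamic description of the $X$-marginal. The paper bounds the second difference of $\tfrac12\Wass_2^2(\rho_t^X,\nu^X)$ using the time-$t$ Brenier map $R_t$:
\[
\text{second difference}\le\int\bigl(\|u_t\|^2+\langle v_t,a_t\rangle\bigr)\rho_t^X .
\]
Here $u_t=\E[Y_t\mid X_t=\cdot]$, $v_t=x-R_t(x)$, and
\[
a_t=-\nabla f-(\rho_t^X)^{-1}\nabla\cdot(\rho_t^X\Sigma_t),
\]
with $\Sigma_t$ the conditional velocity covariance. It is this pressure term that supplies the missing score. The paper integrates it by parts against the distributional Hessian of the Brenier potential; the singular part has a favourable sign. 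It then combines three pointwise bounds:
\begin{enumerate}
\item Gaussian maximum entropy gives $\|u_t\|^2\le 2\KL(\rho_t^{Y\mid X=x}\dvert\gamma)-\Tr\Sigma_t+\log\det\Sigma_t+d$.
\item The inequality $\langle\Sigma_t,G_t\rangle_{\mathsf F}\ge\log\det\Sigma_t+\log\det G_t+d$, together with Monge--Amp\`ere $\det G_t=\rho_t^X/(\nu^X\circ R_t)$.
\item $M$-semi-convexity handles $-\langle\nabla f,v_t\rangle$.
\end{enumerate}
All $\Sigma_t$ and $f\circ R_t$ terms cancel. The chain rule plus conservation of joint KL then gives
\[
\E\,\KL(\rho_t^{Y\mid X}\dvert\gamma)=\KL(\rho_0^X\dvert\nu^X)-\KL(\rho_t^X\dvert\nu^X).
\]
Together these yield
\[
\tfrac12\tfrac{\d^2}{\d t^2}\Wass_2^2\le 2\KL(\rho_0^X\dvert\nu^X)-3\KL(\rho_t^X\dvert\nu^X)+\tfrac{M}{2}\Wass_2^2(\rho_t^X,\nu^X).
\]
This is integrated twice using $u_0=0$, and an approximation by warm, smooth initial laws handles regularity.

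Separately, the factor $2/(3-M/\alpha)$ requires applying Talagrand at each time, $\frac{M}{2}\Wass_2^2(\rho_t^X,\nu^X)\le\frac{M}{\alpha}\KL(\rho_t^X\dvert\nu^X)$, and absorbing this into the left side. Bounding by $\frac{M}{\alpha}\KL(\rho_0^X\dvert\nu^X)$, as you write, gives $\frac23\bigl(1+\frac{M}{2\alpha}+\frac{1}{\alpha T^2}\bigr)$. This is strictly larger than $\sfC$ whenever $M>0$ and $\frac{1}{\alpha T^2}<\frac12\bigl(1-\frac{M}{\alpha}\bigr)$, for example at $T=2/\sqrt{\alpha-M}$. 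Also, after dividing by $T^2/2$ the Wasserstein term should be $\frac{1}{2T^2}\Wass_2^2$, not $\frac{\alpha}{2T^2}\Wass_2^2$.
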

%%%
From the above theorem, we can immediately infer that the coefficient \(\sfC\) is strictly less than \(1\) provided \(T > T_{\min} \deq \frac{2}{\sqrt{\alpha - M}}\).
This implies that for a sufficiently large integration time, the sequence of distributions \(\{\rho_{k}^{X}\}\) of random variables \(\{x_{k}\}\) produced within \cref{alg:RHMC} converge to the target distribution \(\nu^{X}\) exponentially quickly.
Using this result, we obtain the following corollary on the continuous-time complexity of RHMC, which is the total integration time $\sum_{k=1}^K \tau_k$ of~\cref{alg:RHMC} to output a sample $X_K \sim \rho_K^X$ that has $\varepsilon$ error in KL divergence, i.e., $\KL(\rho_K^X \dvert \nu^X) \leq \varepsilon$.
We provide the proof of~\Cref{cor:ConvRHMC_SLC} in~\Cref{Sec:ConvRHMC_SLCCorProof}.

%%%%%%
\begin{corollary}\label{cor:ConvRHMC_SLC}
    Consider the same assumptions on $\nu^X$ and $\rho_0^X$ as in~\Cref{thm:ConvRHMC_SLC}. 
    For any $\varepsilon >0$, to output a sample $X_{K} \sim \rho_{K}^X$ with guarantee $\KL(\rho_{K}^X \dvert \nu^X) \leq \varepsilon$, it suffices to run~\Cref{alg:RHMC} with either \SetTri or \SetExp with $T = \frac{2}{\sqrt{\alpha-M}}$, for the number of iterations 
    \[
    K =  \left\lceil\frac{1}{\log\left(1+\frac{\alpha-M}{5\alpha-M}\right)}\, \log\frac{\KL(\rho_0^X\dvert\nu^X)}{\varepsilon}\right\rceil \,.
    \]
    The total integration time is 
    $\displaystyle\sum_{k=1}^{K} \tau_k \leq \frac{2K}{\sqrt{\alpha-M}}$ in \SetTri, and $\displaystyle\E\left[\sum_{k=1}^{K} \tau_k \right] = \frac{2K}{\sqrt{\alpha-M}}$ in \SetExp.
\end{corollary}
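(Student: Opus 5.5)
The plan is to apply \cref{thm:ConvRHMC_SLC} directly with the stated choice $T = \frac{2}{\sqrt{\alpha-M}}$. The work is to compute the contraction factor $\sfC$, convert the geometric decay into an iteration count, and then bound the total integration time.

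\textbf{Computing the contraction factor.} With $T^2 = \frac{4}{\alpha-M}$ we get $\frac{1}{\alpha T^2} = \frac{\alpha-M}{4\alpha}$. Hence
\[
1+\frac{1}{\alpha T^2} = \frac{5\alpha - M}{4\alpha}, \qquad \frac{2}{3-M/\alpha} = \frac{2\alpha}{3\alpha - M},
\]
so
\[
\sfC = \frac{5\alpha-M}{2(3\alpha-M)} = \frac{5\alpha - M}{6\alpha - 2M}.
\]
Taking reciprocals,
\[
\frac{1}{\sfC} = \frac{6\alpha-2M}{5\alpha-M} = 1 + \frac{\alpha-M}{5\alpha-M} > 1,
\]
which uses $M<\alpha$. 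Therefore $\log(1/\sfC) = \log\big(1+\frac{\alpha-M}{5\alpha-M}\big) > 0$. This exact identity is the only step requiring care, since it must match the constant appearing in the statement.

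\textbf{Iteration count.} \cref{thm:ConvRHMC_SLC} gives $\KL(\rho_K^X\dvert\nu^X) \le \sfC^K \KL(\rho_0^X\dvert\nu^X)$. The right-hand side is at most $\varepsilon$ as soon as
\[
K \log(1/\sfC) \ge \log\frac{\KL(\rho_0^X\dvert\nu^X)}{\varepsilon},
\]
and the stated ceiling achieves this. If $\KL(\rho_0^X\dvert\nu^X)\le\varepsilon$, the logarithm is nonpositive and the claim is trivial, since KL is nonincreasing along the iterates ($\sfC<1$).

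\textbf{Total integration time.} In \SetTri, each $\tau_k$ is supported on $[0,T]$, so deterministically $\sum_{k=1}^K \tau_k \le KT = \frac{2K}{\sqrt{\alpha-M}}$. In \SetExp, each $\tau_k \sim \Exp_{1/T}$ has mean $T$, so by linearity of expectation $\E\big[\sum_{k=1}^K \tau_k\big] = KT = \frac{2K}{\sqrt{\alpha-M}}$.

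There is no real obstacle: the argument is a direct consequence of the theorem, and the only point needing attention is the algebraic simplification of $1/\sfC$.
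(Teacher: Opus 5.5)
Your proposal is correct and follows the paper's proof essentially verbatim. You plug $T = \frac{2}{\sqrt{\alpha-M}}$ into \cref{thm:ConvRHMC_SLC}, simplify to $\sfC^{-1} = 1+\frac{\alpha-M}{5\alpha-M}$, choose $K$ by the ceiling so that $\sfC^K \KL(\rho_0^X\dvert\nu^X)\le\varepsilon$, and bound the total integration time using $\tau_k\le T$ in \SetTri and $\E[\tau_k]=T$ in \SetExp. Your remark on the degenerate case $\KL(\rho_0^X\dvert\nu^X)\le\varepsilon$ is a small addition that the paper leaves implicit.
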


In \SetExp above, the expectation is over the randomness of the integration times $\tau_k$.
Although the bounds on the total integration time are the same in both \SetTri and \SetExp, we note the difference between them.
In \SetTri, when the integration time $\tau_k$ is drawn from the triangular distribution, the total integration time is always bounded by $\frac{2K}{\sqrt{\alpha-M}}$.
In \SetExp, when $\tau_k$ is drawn from the exponential distribution, only the \textit{expected} total integration time is bounded by $\frac{2K}{\sqrt{\alpha-M}}$, but the total integration time itself is a random variable which can be arbitrarily large.

We also remark on the implication of the result above for log-concave sampling, i.e., the case $M = 0$.
When the target distribution $\nu^X$ is log-concave and satisfies $\alpha$-Talagrand inequality,~\Cref{cor:ConvRHMC_SLC} states that the total integration time of~\Cref{alg:RHMC} scales as  $\tilde O(\alpha^{-1/2} \log(\varepsilon^{-1}))$.
This is the \textit{accelerated} rate in continuous time, compared to the $\tilde O(\alpha^{-1} \log(\varepsilon^{-1}))$ complexity of the continuous-time Langevin dynamics in the same setting, and also matches what we expect from the theory of acceleration in convex optimization.
Thus, our result above shows that \RHMC indeed achieves an accelerated mixing time guarantee for log-concave sampling under Talagrand inequality.
This is also consistent with the results of~\cite{lu2026sharphypocoerciveentropydecay, li2026spacetime} for underdamped Langevin dynamics and of~\cite{monmarche2026entropicconvergencepiecewisedeterministic} for RHMC.

%%%%%%%%%%%%%%%%
\subsection{Convergence result under log-concavity}
\label{subsec:MainResultLC}

We now analyze \Cref{alg:RHMC} in \SetTildeTri assuming log-concavity and log-smoothness of the target distribution. 
We prove \Cref{thm:ConvRHMC_LC} in \Cref{Sec:ConvRHMC_LCProof}.

%%%%
\begin{theorem}\label{thm:ConvRHMC_LC}
Assume \(\nu^X\propto e^{-f}\in\PtwoacfsRd\) is \emph{log-concave} and \emph{log-smooth}. 
Let $\rho_0^X \in \PtwoacfsRd$ with $\KL(\rho_0^X \dvert \nu^X) < \infty$.
For $K\in \mathbb{N}$ and in \SetTildeTri, set $\cD_k = \widetilde\Tri_{T_k}$ where $T_k = \left(\frac{9}{8}\right)^{k/2}$ for all $k \in \{1, \dots, K\}$. 
Then the distribution $\rho_K^X$ of $x_k$ output by \cref{alg:RHMC} in \SetTildeTri satisfies:
\begin{align}
\label{eq:CleanConvergenceWLC}
    \KL\left(\rho_K^X \dvert \nu^X \right) + \frac{1}{3} \left(\frac{8}{9} \right)^{K}\Wass_2^2\left(\rho_K^X, \nu^X \right)
    \leq
    \left(\frac{8}{9} \right)^{K} \left(\KL\left(\rho_0^X \dvert \nu^X \right) + \frac{1}{3} \Wass_2^2\left(\rho_0^X, \nu^X \right) \right)\,.
\end{align}
\end{theorem}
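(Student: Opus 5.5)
The plan is to prove a one-step contraction of a Lyapunov functional and then unroll it over the iterations. Define
\[
\mathcal{E}_k \deq \KL(\rho_k^X \dvert \nu^X) + \frac{c}{T_{k+1}^2}\,\Wass_2^2(\rho_k^X,\nu^X)
\]
with a constant $c$ to be fixed ($c=\frac{1}{3}\cdot\frac{9}{8}$ is the natural choice, so that $c/T_{k+1}^2 = \frac13(8/9)^k$). The main step is a one-iteration inequality for a general $T>0$. If $\rho^X$ is the input of an iteration and $\rho'^X$ its output under $\widetilde\Tri_T$, we want
\[
\KL(\rho'^X\dvert\nu^X) + \frac{a}{T^2}\Wass_2^2(\rho'^X,\nu^X) \le b\Big(\KL(\rho^X\dvert\nu^X) + \frac{a'}{T^2}\Wass_2^2(\rho^X,\nu^X)\Big)
\]
for suitable absolute constants. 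Once we have it, taking $T=T_k$ with $T_{k+1}^2=\frac98 T_k^2$ makes the weights telescope, and iterating $K$ times gives \eqref{eq:CleanConvergenceWLC}.

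To get the one-step bound I would work on phase space with the lifted measure $\rho^X\otimes\gamma$ and the target $\nu^X\otimes\gamma$, and run the Hamiltonian flow $\Phi_t$. Two facts drive the argument. First, $\Phi_t$ preserves $\nu^X\otimes\gamma$ and is a bijection, so $\KL(\rho_t\dvert\nu\otimes\gamma)$ is constant along the flow and equals $\KL(\rho^X\dvert\nu^X)$. Marginalizing to $X$ only decreases KL, and by convexity of KL in its first argument the output of a mixture of times is bounded by the mixture of the time-$t$ marginals. Second, I would use the ``average KL along Hamiltonian dynamics'' bound advertised in the abstract, which is the Wasserstein analogue of the energy argument of \cite{wang26}. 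Along the flow, consider the functional $\Phi(t)=t^2\,(\text{position KL gap}) + \Wass_2^2$-type term built from $X_t + tY_t$-style couplings. Using convexity of $\rho\mapsto\KL(\rho\dvert\nu^X)$ along generalized geodesics (log-concavity) together with the evolution of $\KL$ and $\E\langle X_t - x^\ast, Y_t\rangle$ under \eqref{eq:HamFlow}, one aims to show
\[
\int_0^T \frac{2(T-t)}{T^2}\KL(\rho_t^X\dvert\nu^X)\,\dt \lesssim \frac{1}{T^2}\Wass_2^2(\rho^X,\nu^X) + (\text{velocity energy term}),
\]
where the velocity term is controlled by $\KL(\rho^X\otimes\gamma\dvert\nu^X\otimes\gamma)$. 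This is exactly the triangular weight $\Tri_T$. The $\delta_T$ half of $\widetilde\Tri_T$ is included to control the new $\Wass_2$ term: in the optimization analogue, the endpoint term $\E\|X_T - x^\ast\|^2$ is what appears in the Lyapunov decrease.

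The combination step is then as follows. Using convexity of KL, the output KL is at most $\frac12\KL(\rho_T^X)+\frac12\int \Tri_T(t)\KL(\rho_t^X)\dt$. The first term is at most $\KL(\rho^X\dvert\nu^X)$ by the data-processing argument, and the second is bounded by the averaged estimate. The output $\Wass_2^2$ is bounded by $\frac12\Wass_2^2(\rho_T^X,\nu^X)+\frac12\int\Tri_T(t)\Wass_2^2(\rho_t^X,\nu^X)\dt$, using joint convexity of $\Wass_2^2$ under mixtures. Each of these is tracked by a synchronous coupling of the flow started from $\rho^X\otimes\gamma$ with the stationary flow. Balancing the constants gives $b=8/9$ with the weight $1/3$.

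The main obstacle is the averaged-KL inequality along the deterministic Hamiltonian flow. KL is not monotone in $X$ along the flow, so I would prove it via a Lyapunov functional of the form
\[
t^2\,\KL(\rho^X_t\dvert\nu^X) + \E\|X_t + tY_t - \bar X\|^2,
\]
where $\bar X\sim\nu^X$ is optimally coupled and $\bar X$ is transported by the stationary flow. Its derivative would be made nonpositive up to the velocity entropy, using the geodesic convexity ``above-tangent'' inequality $\KL(\nu)\ge\KL(\rho)+\langle\nabla_{\Wass}\KL,\cdot\rangle$. Making this rigorous is where log-smoothness is used for regularity, and it is the step requiring the most care.
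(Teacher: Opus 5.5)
Your outer skeleton agrees with the paper: the Lyapunov quantity $\KL(\rho_k^X\dvert\nu^X)+\frac{1}{3T_k^2}\Wass_2^2(\rho_k^X,\nu^X)$, the growth $T_k^2=\frac98 T_{k-1}^2$ to shrink the $\Wass_2^2$ weight, convexity of $\KL$ and $\Wass_2^2$ over the mixture $\frac12\delta_T+\frac12\Tri_T$, and KL descent along the flow for the $\delta_T$ half. However, the argument rests entirely on an averaged inequality that you state only up to $\lesssim$ and without the endpoint term, and the $\frac89$ depends on the exact constants. The paper uses precisely
\[
\frac12\Wass_2^2(\rho_T^X,\nu^X)+3\int_0^T(T-t)\,\KL(\rho_t^X\dvert\nu^X)\dt\le\frac12\Wass_2^2(\rho_0^X,\nu^X)+T^2\,\KL(\rho_0^X\dvert\nu^X).
\]
Let $\bar\rho^X$ denote the $\Tri_T$-average. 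At time $T$, Jensen turns this into $\KL(\bar\rho^X\dvert\nu^X)+\frac{1}{3T^2}\Wass_2^2(\rho_T^X,\nu^X)\le\frac23\KL(\rho_0^X\dvert\nu^X)+\frac{1}{3T^2}\Wass_2^2(\rho_0^X,\nu^X)$. Apply it at every $\tau\le T$, drop the $\KL$ integral, and average with $\E_{\tau\sim\Tri_T}[\tau^2]=T^2/6$; this gives $\frac{1}{3T^2}\Wass_2^2(\bar\rho^X,\nu^X)\le\frac{1}{3T^2}\Wass_2^2(\rho_0^X,\nu^X)+\frac19\KL(\rho_0^X\dvert\nu^X)$. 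Adding the descent bound and halving yields $\frac12(\frac23+\frac19+1)=\frac89$. So the $\Wass_2^2$ terms are also controlled by this key inequality, not by synchronous coupling with the stationary flow as you propose. That coupling only gives $L$-dependent control such as $\Wass_2(\rho_t^X,\nu^X)\le e^{(1+L)t}\Wass_2(\rho_0^X,\nu^X)$, with no uniform long-time contraction. This is useless here, since $T_k\to\infty$ and the rate must not depend on $L$.

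The real gap is the key inequality itself, and the functional you propose for it does not work. The quantity $t^2\KL(\rho_t^X\dvert\nu^X)+\E\|X_t+tY_t-\bar X\|^2$ is the template for dissipative accelerated flows. For the frictionless flow its derivative has unsigned cross terms already in the optimization analogue. With $F_t=f(X_t)-f(x^\star)$, the derivative $\frac{\d}{\d t}\bigl[t^2F_t+\|X_t+tY_t-x^\star\|^2\bigr]$ contains $4\langle X_t-x^\star,Y_t\rangle-t^2\langle\nabla f(X_t),Y_t\rangle$, and there is no friction term to cancel them.

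The sampling setting adds three further obstructions. First, a particle coupling transported by the flow sees only potential and kinetic energy, not the entropy part of $\KL$. Second, the above-tangent inequality for $\KL$ needs the optimal map from $\rho_t^X$ at time $t$, not a time-$0$ coupling pushed forward. Third, the $X$-marginal does not move along $-\nabla_{\Wass}\KL$: its velocity is $u_t=\E[Y_t\mid X_t]$ and its acceleration is $a_t=-\nabla f-(\rho_t^X)^{-1}\nabla\cdot(\rho_t^X\Sigma_t)$, where $\Sigma_t=\Cov(Y_t\mid X_t)\neq\I_d$ in general.

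The quantity $\E\langle X_t-x^\star,Y_t\rangle$ you mention is the right object. Its analogue is $\frac{\d}{\d t}\frac12\Wass_2^2(\rho_t^X,\nu^X)=\E\langle v_t,u_t\rangle$, which vanishes at $t=0$ because $u_0=0$. But it must be differentiated once more. The paper shows $\frac12\frac{\d^2}{\d t^2}\Wass_2^2(\rho_t^X,\nu^X)\le\E[\|u_t\|^2+\langle v_t,a_t\rangle]\le2\KL(\rho_0^X\dvert\nu^X)-3\KL(\rho_t^X\dvert\nu^X)$, in two steps:
\begin{itemize}
\item It integrates $\langle v_t,\nabla\cdot(\rho_t^X\Sigma_t)\rangle$ by parts against the Brenier displacement $v_t=x-\nabla\varphi_t$; the singular part of $D^2\varphi_t$ enters with a favorable sign.
\item It then combines $\langle\Sigma,G\rangle_{\mathsf F}\ge\log\det\Sigma+\log\det G+d$ with Monge--Amp\`ere, Gaussian maximal entropy, convexity of $f$, and the identity $\E_{x\sim\rho_t^X}\KL(\rho_t^{Y\mid X=x}\dvert\gamma)=\KL(\rho_0^X\dvert\nu^X)-\KL(\rho_t^X\dvert\nu^X)$. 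That identity comes from conservation of the joint $\KL$ and replaces energy conservation.
\end{itemize}
Integrating twice against $(T-t)$ gives the key inequality, and an approximation argument justifies the computation for general $\rho_0^X$.
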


The result above shows that the KL divergence to the target distribution converges exponentially fast in the number of iterations $K$, with a choice of integration time which increases exponentially in each iteration. 
We can extract the continuous-time complexity of \RHMC to obtain a sample with a desired error in KL divergence.
We provide the proof of~\cref{cor:ConvRHMC_LC} in \cref{Sec:ConvRHMC_LCCorProof}. 

\begin{corollary}\label{cor:ConvRHMC_LC}
    Consider the same assumptions on $\nu^X$ and $\rho_0^X$ as in~\Cref{thm:ConvRHMC_LC}. 
    For any $\varepsilon >0$, to output a sample $X_{K} \sim \rho_{K}^X$ with guarantee $\KL(\rho_{K}^X \dvert \nu^X) \leq \varepsilon$, it suffices to run~\Cref{alg:RHMC} with \SetTildeTri with $T_k = \left(\frac{9}{8}\right)^{k/2}$ for $k \in \{1, \dots, K\}$, for the number of iterations 
    \[
        K = \left\lceil \frac{1}{\log(9/8)} \,  \log\frac{\KL\left(\rho_0^X \dvert \nu^X \right) + \frac{1}{3} \Wass_2^2\left(\rho_0^X, \nu^X \right)}{\varepsilon} \right\rceil \,.
    \]
    The total integration time satisfies
    \[
        \sum_{k=1}^K\tau_k \leq 19\sqrt{\frac{\KL\left(\rho_0^X \dvert \nu^X \right) + \frac{1}{3} \Wass_2^2\left(\rho_0^X, \nu^X \right)}{\varepsilon}}\,.
    \]
\end{corollary}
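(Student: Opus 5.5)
The plan is to derive the corollary directly from \cref{thm:ConvRHMC_LC}. Set $\Phi_0 \deq \KL(\rho_0^X \dvert \nu^X) + \frac{1}{3}\Wass_2^2(\rho_0^X,\nu^X)$. The Wasserstein term on the left side of \eqref{eq:CleanConvergenceWLC} is nonnegative, so we can drop it and obtain
\[
\KL(\rho_K^X \dvert \nu^X) \le \left(\tfrac{8}{9}\right)^K \Phi_0 .
\]
The right-hand side is at most $\varepsilon$ as soon as $K \log(9/8) \ge \log(\Phi_0/\varepsilon)$. The stated choice of $K$ with the ceiling satisfies this. If $\Phi_0 \le \varepsilon$, then $K$ may be taken to be $0$ or $1$, and the statement is trivial in that case.

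Next we bound the integration time deterministically. In \SetTildeTri each $\tau_k$ is supported on $[0,T_k]$, since both $\delta_{T_k}$ and $\Tri_{T_k}$ are supported there. Hence $\tau_k \le T_k = q^k$ with $q \deq (9/8)^{1/2}$. Summing the geometric series gives
\[
\sum_{k=1}^K \tau_k \le \sum_{k=1}^K q^k = \frac{q(q^K-1)}{q-1} \le \frac{q}{q-1}\, q^K .
\]

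It remains to bound $q^K = (9/8)^{K/2}$. Because of the ceiling, $K \le \frac{\log(\Phi_0/\varepsilon)}{\log(9/8)} + 1$. Therefore
\[
(9/8)^{K} \le \frac{9}{8}\cdot\frac{\Phi_0}{\varepsilon}, \qquad\text{so}\qquad q^K \le q\sqrt{\Phi_0/\varepsilon}.
\]
Combining the two bounds,
\[
\sum_{k=1}^K \tau_k \le \frac{q^2}{q-1}\sqrt{\Phi_0/\varepsilon}.
\]

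The only real work is checking the constant: $q^2 = 9/8$ and $q - 1 = \frac{3}{2\sqrt 2} - 1 \approx 0.0607$, so $\frac{q^2}{q-1} \approx 18.5 \le 19$. This gives the claimed bound. The case $\Phi_0 < \varepsilon$ also needs a brief remark: the logarithm is then negative, so one takes $K$ at least $0$ (or handles the case separately), and the bound is trivially satisfied. No step is a genuine obstacle; the key points are only that the integration times in \SetTildeTri are bounded almost surely and that the ceiling costs one extra factor of $q$.
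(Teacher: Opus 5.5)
Your proof is correct and follows the paper's route: drop the nonnegative $\Wass_2^2$ term in \eqref{eq:CleanConvergenceWLC}, use $\tau_k \le T_k$ with the geometric sum, and absorb the ceiling into one extra factor of $\sqrt{9/8}$; your constant $q^2/(q-1)$ is exactly the paper's $3(3+\sqrt{8})\sqrt{9/8}\approx 18.55$. Your remark on the degenerate case $\Phi_0 \le \varepsilon$ covers something the paper leaves implicit, but there you should take $K=0$ (empty sum), not $K=1$, since $\tau_1$ may be as large as $\sqrt{9/8}$, which exceeds $19\sqrt{\Phi_0/\varepsilon}$ when $\Phi_0/\varepsilon$ is very small.
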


\Cref{cor:ConvRHMC_LC} states that in the log-concave case, \RHMC reaches error $\varepsilon$ in KL divergence in a total integration time which scales as $O(\varepsilon^{-1/2})$.
We note this improves on the $O(\varepsilon^{-1})$ complexity of the overdamped Langevin dynamics in the same log-concave setting, and also matches the improved complexity we expect from accelerated convex optimization.
Thus, our result above shows that \RHMC also achieves an accelerated mixing time guarantee in the log-concave case.

%%%%%%%%%%%%%%%%%
\subsection{Proof technique: Average KL divergence along Hamiltonian dynamics}
\label{sec:evolution_along_HF}

The proofs of the main algorithmic results crucially rely on a bound on the average KL divergence of the $X$-marginal along the trajectory of the deterministic Hamiltonian flow when the target distribution $\nu^X$ is semi-log-concave; we present this bound in \cref{Lem:Wass-KL-integratedKey} in \cref{subsec:KeyLemma}. 
We first review general properties of the KL divergence along the Hamiltonian dynamics~\eqref{eq:HamFlow} in~\cref{subsec:KLPropertiesAndHamFlow}. 

%%%%%%%%%%%
\subsubsection{Distributional properties of the Hamiltonian dynamics}\label{subsec:KLPropertiesAndHamFlow}

Recall from~\cref{Eq:HamDef} the Hamiltonian function $H \colon \R^{2d} \to \R$ is defined as $H(x,y) \deq f(x) + \frac{1}{2} \|y\|^2 $, which induces the joint probability distribution $\nu^{XY} \propto e^{-H}$ on $\R^{2d}$. Note that $\nu^{XY} = \nu^X \otimes \gamma$ where $\gamma = \N(0,\I_d)$ is the standard Gaussian distribution.

Since we assume $f$ is $L$-smooth for some $L \in (0, \infty)$, 
from any initial condition $(X_0, Y_0) \in \R^{2d}$, by Picard-Lindel\"of theorem from standard ODE theory, the Hamiltonian flow~\eqref{eq:HamFlow} is well defined and admits a unique solution $(X_t, Y_t) \in \R^{2d}$ for all $t \in \R$.
A key property of the Hamiltonian flow~\eqref{eq:HamFlow} is that it conserves the Hamiltonian function:
$$H(X_t, Y_t) = H(X_0, Y_0) \qquad \text{ for all } ~ t \in \R \,,$$
see~\Cref{lem:energy_conservation} in~\Cref{Sec:energy_conservationProof}.
Another key property is that the Hamiltonian flow conserves volume (Lebesgue measure) on the phase space $\R^{2d}$; see~\Cref{lem:volume_conservation} in~\Cref{Sec:volume_conservationProof}.

Suppose we run the deterministic Hamiltonian flow~\eqref{eq:HamFlow} from a random variable $(X_0, Y_0) \sim \rho_0^{XY}$ drawn from some initial distribution $\rho_0^{XY}$.
Then at each time $t \in \R$, we obtain another random variable $(X_t, Y_t) \sim \rho_t^{XY}$.
We observe that $\nu^{XY} \propto e^{-H}$ is a stationary distribution along the Hamiltonian flow: If $\rho_0^{XY} = \nu^{XY}$, then $\rho_t^{XY} = \nu^{XY}$ for all $t \in \R$.
Furthermore, the Hamiltonian flow conserves the KL divergence to $\nu^{XY}$.
We provide the proof of~\Cref{Lem:HamFlowConsvJointKL} in~\Cref{Sec:HamFlowConsvJointKLProof}.

\begin{lemma}\label{Lem:HamFlowConsvJointKL}
    Let $\rho_0^{XY} \in \P_{2,\ac,\fs}(\R^{2d})$ with $\KL(\rho_0^{XY} \dvert \nu^{XY}) < \infty$.
    For $t \in \R$, let $(X_t, Y_t) \sim \rho_t^{XY}$ be the solution to the Hamiltonian flow~\eqref{eq:HamFlow} from $(X_0, Y_0) \sim \rho_0^{XY}$.
    Then we have
    \begin{equation}
        \KL(\rho_t^{XY} \dvert \nu^{XY}) = \KL(\rho_0^{XY} \dvert \nu^{XY}) \,.\label{Eq:JointConsv}
    \end{equation}
\end{lemma}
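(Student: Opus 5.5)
The plan is to use the flow map $\Phi_t \colon \R^{2d} \to \R^{2d}$, $(X_0,Y_0) \mapsto (X_t,Y_t)$, which is well defined for all $t \in \R$ by Picard--Lindel\"of under the log-smoothness assumption. It is a $C^1$ diffeomorphism with inverse $\Phi_{-t}$, so $\rho_t^{XY} = (\Phi_t)_\# \rho_0^{XY}$. Two structural facts stated earlier do the work: conservation of $H$ (\cref{lem:energy_conservation}) and conservation of Lebesgue measure (\cref{lem:volume_conservation}). Volume conservation means $|\det D\Phi_t| = 1$.

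The first step computes the pushforward density. By the change-of-variables formula with unit Jacobian,
\[
\rho_t^{XY}(z) = \rho_0^{XY}(\Phi_{-t}(z)) \quad \text{for all } z \in \R^{2d}.
\]
In particular $\rho_t^{XY}$ is absolutely continuous, has a strictly positive density, and is fully supported. Since $\nu^{XY} \propto e^{-H}$ and $H \circ \Phi_{-t} = H$, the same formula gives $\nu^{XY}(\Phi_{-t}(z)) = \nu^{XY}(z)$. This is the invariance of $\nu^{XY}$ and shows that the density ratio is transported: $\frac{\rho_t^{XY}}{\nu^{XY}}(z) = \frac{\rho_0^{XY}}{\nu^{XY}}(\Phi_{-t}(z))$.

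The second step computes the divergence using $(X_t,Y_t) = \Phi_t(X_0,Y_0)$:
\[
\KL(\rho_t^{XY}\dvert\nu^{XY}) = \E\Big[\log \frac{\rho_t^{XY}}{\nu^{XY}}(\Phi_t(X_0,Y_0))\Big] = \E\Big[\log\frac{\rho_0^{XY}}{\nu^{XY}}(X_0,Y_0)\Big] = \KL(\rho_0^{XY}\dvert\nu^{XY}).
\]
The second equality uses $\Phi_{-t}\circ\Phi_t = \mathrm{id}$. This is exactly \eqref{Eq:JointConsv}.

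The main obstacle is rigor, not computation. The integrand $\log(\rho/\nu)$ is not sign-definite, so the change of variables must be justified for an extended-real integral. I would handle this by writing $\KL$ as the integral of $\phi(\rho/\nu)$ against $\nu$, with $\phi(r) = r\log r - r + 1 \geq 0$. Nonnegative integrands can be transported freely under the measure-preserving map $\Phi_t$, where $\nu^{XY}$ is invariant. This gives equality in $[0,\infty]$, and finiteness is inherited from the hypothesis $\KL(\rho_0^{XY}\dvert\nu^{XY}) < \infty$.

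A further point, if the paper's definition requires $\rho_t^{XY} \in \P_{2,\ac,\fs}$, is to check that the second moment stays finite. Log-smoothness gives linear growth of $\nabla f$, and Gr\"onwall then yields $\|\Phi_t(z)\| \le e^{c|t|}(1+\|z\|)$, which is enough.
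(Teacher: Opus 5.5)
Your proof is correct and uses the same two ingredients as the paper: the unit Jacobian of the flow map (\cref{lem:volume_conservation}) and conservation of $H$ (\cref{lem:energy_conservation}). The paper writes $\KL(\rho_t^{XY}\dvert\nu^{XY}) = -\Ent(\rho_t^{XY}) + \E_{\rho_t^{XY}}[H] + \log Z_H$ and conserves entropy and expected energy separately, whereas you transport the density ratio directly; your nonnegative integrand $\phi(r)=r\log r-r+1$ has the small advantage that $\Ent(\rho_0^{XY})$ and $\E_{\rho_0^{XY}}[H]$ need not be individually finite.
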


From~\Cref{Lem:HamFlowConsvJointKL}, we see that $\nu^{XY}$ is a stationary distribution of the Hamiltonian flow~\eqref{eq:HamFlow}.
However, $\nu^{XY}$ is not a unique stationary distribution.
In fact, for any function $\phi \colon \R \to \R$ with $\int_{\R^{2d}} e^{-\phi(H(x,y))} \dx \dy < \infty$,
the probability distribution $\tilde \nu^{XY} \propto \exp(-\phi(H))$ is also stationary along Hamiltonian flow~\eqref{eq:HamFlow}, and the KL divergence to this probability distribution is also conserved: $\KL(\rho_t^{XY} \dvert \tilde \nu^{XY}) = \KL(\rho_0^{XY} \dvert \tilde \nu^{XY})$ for all $t \in \R$.

What distinguishes the joint distribution $\nu^{XY} = \nu^X \otimes \gamma \propto e^{-H}$ in HMC is when we initialize the Hamiltonian flow~\eqref{eq:HamFlow} from $(X_0, Y_0) \sim \rho_0^{XY}$ with $\rho_0^{XY} = \rho_0^X \otimes \gamma$ for some $\rho_0^X \in \P_{2,\ac,\fs}(\R^d)$.
In this case, since the $Y$-marginal is initially chosen correctly from $\gamma = \N(0, \I_d)$, which is the same as the $Y$-marginal of $\nu^{XY}$, this identifies the scale of the Hamiltonian function and isolates $\nu^{XY}$ as a distinguished stationary distribution.
In particular, if at some time $t \in \R$ we drop the $Y$-marginal from $(X_t,Y_t)$ and only return the $X$-marginal $X_t \sim \rho_t^X$, then we obtain the following descent property in KL divergence to the target $\nu^X$.
We provide the proof of~\Cref{Lem:DescentKL} in~\Cref{Sec:DescentKLProof}.

\begin{lemma}\label{Lem:DescentKL}
    Let $\rho_0^{XY} = \rho_0^X \otimes \gamma$ for some $\rho_0^X \in \P_{2,\ac,\fs}(\R^{d})$ with $\KL(\rho_0^{X} \dvert \nu^{X}) < \infty$.
    For $t \in \R$, let $(X_t, Y_t) \sim \rho_t^{XY}$ be the solution to the Hamiltonian flow~\eqref{eq:HamFlow} from $(X_0, Y_0) \sim \rho_0^{XY}$, and let $X_t \sim \rho_t^X$ denote the $X$-marginal.
    Then we have
    $$\KL(\rho_t^{X} \dvert \nu^{X}) \le \KL(\rho_0^{X} \dvert \nu^{X}) \,.$$
\end{lemma}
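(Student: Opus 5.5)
The argument combines three facts: conservation of the joint KL divergence (\Cref{Lem:HamFlowConsvJointKL}), the product structure of the initial condition, and the chain rule for KL divergence. The target $\nu^{XY} = \nu^X \otimes \gamma$ is a product measure, so for any joint distribution $\rho^{XY}$ with $X$-marginal $\rho^X$ the chain rule gives
\[
\KL(\rho^{XY} \dvert \nu^{XY}) = \KL(\rho^X \dvert \nu^X) + \E_{x \sim \rho^X}\Big[\KL\big(\rho^{Y\mid X=x} \dvert \gamma\big)\Big] \;\ge\; \KL(\rho^X \dvert \nu^X) \,.
\]
The inequality holds because the conditional KL term is nonnegative. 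This is the data-processing inequality for the projection $(x,y)\mapsto x$.

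First, I evaluate this identity at time $0$. Since $\rho_0^{XY} = \rho_0^X \otimes \gamma$, the conditional of $Y$ given $X$ is exactly $\gamma$, so the second term vanishes and
\[
\KL(\rho_0^{XY} \dvert \nu^{XY}) = \KL(\rho_0^X \dvert \nu^X) < \infty .
\]
Before invoking \Cref{Lem:HamFlowConsvJointKL} I need $\rho_0^{XY} \in \P_{2,\ac,\fs}(\R^{2d})$. This follows because $\rho_0^X$ and $\gamma$ are both absolutely continuous, have finite second moments and strictly positive densities, and these properties pass to the product.

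Second, \Cref{Lem:HamFlowConsvJointKL} gives $\KL(\rho_t^{XY} \dvert \nu^{XY}) = \KL(\rho_0^{XY} \dvert \nu^{XY})$ for all $t \in \R$. Third, applying the chain-rule inequality at time $t$ yields
\[
\KL(\rho_t^X \dvert \nu^X) \le \KL(\rho_t^{XY} \dvert \nu^{XY}) = \KL(\rho_0^X \dvert \nu^X),
\]
which is the claim.

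The only delicate point is justifying the chain rule or marginal inequality at time $t$. Since the paper only defines $\KL$ on $\P_{2,\ac,\fs}$, I need $\rho_t^{XY}$ to stay in that class. I would get this from the fact that the Hamiltonian flow map is a volume-preserving diffeomorphism (\Cref{lem:volume_conservation}, together with smoothness of $f$). Its density is then $\rho_t^{XY} = \rho_0^{XY}\circ \Phi_t^{-1}$, which is strictly positive. The second moment stays finite by energy conservation combined with $L$-smoothness, or simply because $\Phi_t$ is Lipschitz with constant depending on $L$ and $t$.

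The marginal $\rho_t^X$ is then absolutely continuous with a positive density. To avoid any regularity issues, I would state the inequality $\KL(\rho^X\dvert\nu^X)\le \KL(\rho^{XY}\dvert\nu^{XY})$ via the variational (Donsker--Varadhan) formula: any test function of $x$ alone is admissible for the joint problem, so the marginal supremum is at most the joint one.
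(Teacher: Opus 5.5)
Your proof is correct and follows the paper's argument exactly. It uses conservation of the joint KL divergence (\Cref{Lem:HamFlowConsvJointKL}), the identity $\KL(\rho_0^{XY}\dvert\nu^{XY})=\KL(\rho_0^X\dvert\nu^X)$ from the product initialization, and the chain rule at time $t$ with the nonnegative conditional term dropped. Your additional checks are sound refinements of points the paper handles through \Cref{lem:second-moment-finiteness} and the standard chain rule: that $\rho_0^{XY}$ and $\rho_t^{XY}$ remain in $\P_{2,\ac,\fs}(\R^{2d})$, and the Donsker--Varadhan justification of the marginal inequality.
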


In~\Cref{Lem:KLContinuous} in~\Cref{Sec:KLContinuousProof}, we show the map $t \mapsto \KL(\rho_t^X \dvert \nu^X)$ is continuous.
In~\Cref{lem:second-moment-finiteness} in~\Cref{Sec:second-moment-finitenessProof}, we show that the second moment remains finite along Hamiltonian flow.

%%%%%%%%%%%%%%%%%%
\subsubsection{Bounding the average KL divergence}\label{subsec:KeyLemma}

We now present the key lemma bounding the average of the KL divergence of the $X$-marginal along the trajectory of the Hamiltonian flow when the target distribution $\nu^X$ is semi-log-concave.

\begin{lemma}\label{Lem:Wass-KL-integratedKey}
    Assume $\nu^X$ is log-smooth and $M$-semi-log-concave for some $0 \le M < \infty$.
    Assume $\rho_0^X \in \P_{2,\ac,\fs}(\R^d)$ satisfies $\KL(\rho_0^X \dvert \nu^X)<\infty$, and let $\rho_0^{XY} = \rho_0^X \otimes \gamma$.
    For $t \ge 0$, let $(X_t, Y_t) \sim \rho_t^{XY}$ be the solution to the Hamiltonian flow~\eqref{eq:HamFlow} from $(X_0, Y_0) \sim \rho_0^{XY}$, and let $X_t \sim \rho_t^X$ denote the $X$-marginal.
    Then for all $0 \le T < \infty$, the following holds:
    \begin{multline}\label{Eq:KLIntegral}
        \frac{1}{2}\Wass_2^2(\rho_T^X, \nu^X) + 3 \int_0^T (T-t) \, \KL(\rho_t^X \dvert \nu^X ) \dt - \frac{M}{2} \int_0^T (T-t) \, \Wass_2^2(\rho_t^X, \nu^X) \dt \\
        \leq \frac{1}{2} \Wass_2^2(\rho_0^X, \nu^X) + T^2 \, \KL(\rho_0^X \dvert \nu^X) \,.
    \end{multline}
\end{lemma}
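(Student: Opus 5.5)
The plan is to transplant the standard optimization argument for the Hamiltonian flow $\ddot x = -\nabla f(x)$ to Wasserstein space. In the optimization template, $\varphi(t) = \frac12\|x_t - x^\star\|^2$ has $\varphi'(t) = \langle y_t, x_t - x^\star\rangle$ and $\varphi''(t) = \|y_t\|^2 - \langle \nabla f(x_t), x_t - x^\star\rangle$. Energy conservation gives $\frac12\|y_t\|^2 = \frac12\|y_0\|^2 + f(x_0) - f(x_t)$, and convexity gives $\langle \nabla f(x_t), x_t - x^\star\rangle \ge f(x_t) - f^\star$. Hence $\varphi''(t) \le \|y_0\|^2 + 2(f(x_0)-f^\star) - 3(f(x_t)-f^\star)$. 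Integrating twice, via $\varphi(T) = \varphi(0) + T\varphi'(0) + \int_0^T (T-t)\varphi''(t)\dt$, yields exactly the shape of \eqref{Eq:KLIntegral}: the coefficient $3$, the weight $(T-t)$, and $T^2$ coming from $\int_0^T (T-t)\cdot 2\,\dt$.

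In the sampling version I set $\varphi(t) \deq \frac12\Wass_2^2(\rho_t^X,\nu^X)$ and replace the ingredients as follows.
\begin{itemize}
\item[(i)] The first variation is $\varphi'(t) = \E[\langle Y_t, X_t - S_t(X_t)\rangle]$, where $S_t$ is the optimal (Brenier) map from $\rho_t^X$ to $\nu^X$. At $t=0$ this cross term vanishes, because $Y_0 \sim \gamma$ is independent of $X_0$ and has mean zero.
\item[(ii)] Differentiating once more along the flow, with $\dot Y_t = -\nabla f(X_t)$, I aim for the one-sided bound
\[
\varphi''(t) \le \E\big[\langle Y_t, (\I_d - \nabla S_t(X_t)) Y_t\rangle\big] - \E\big[\langle \nabla f(X_t), X_t - S_t(X_t)\rangle\big],
\]
interpreted in the sense of distributions or upper second derivatives.
\item[(iii)] Write $\nabla f = \nabla\log(\rho_t^X/\nu^X) - \nabla\log\rho_t^X$. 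Displacement $(-M)$-convexity of $\KL(\cdot\dvert\nu^X)$ (the above-tangent inequality) gives $\E[\langle \nabla\log\frac{\rho_t^X}{\nu^X}, X_t - S_t(X_t)\rangle] \ge \KL(\rho_t^X\dvert\nu^X) - \frac M2\Wass_2^2(\rho_t^X,\nu^X)$. Integration by parts turns the $-\nabla\log\rho_t^X$ term into $\E[d - \Tr\nabla S_t(X_t)]$.
\item[(iv)] The role of kinetic energy is played by the conditional KL, $\E_{X_t}\KL(\rho_t^{Y|X}\dvert\gamma) = \KL(\rho_t^{XY}\dvert\nu^{XY}) - \KL(\rho_t^X\dvert\nu^X)$. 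Combining \Cref{Lem:HamFlowConsvJointKL} with $\rho_0^{XY} = \rho_0^X\otimes\gamma$, this equals $\KL(\rho_0^X\dvert\nu^X) - \KL(\rho_t^X\dvert\nu^X)$, which is the analogue of energy conservation.
\end{itemize}

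The main obstacle is the kinetic comparison, namely proving
\[
\E\big[\langle Y_t,(\I_d-A)Y_t\rangle\big] - \E\big[d - \Tr A\big] \le 2\,\E\,\KL(\rho_t^{Y|X}\dvert\gamma), \qquad A = \nabla S_t(X_t)\succeq 0 .
\]
Granting it, we get $\varphi''(t) \le 2\KL(\rho_0^X\dvert\nu^X) - 3\KL(\rho_t^X\dvert\nu^X) + \frac M2\Wass_2^2(\rho_t^X,\nu^X)$. Integrating against $(T-t)$ then gives \eqref{Eq:KLIntegral} directly.

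My first attempt at the kinetic comparison would be Donsker–Varadhan conditionally on $X_t$, using a quadratic test function $g(y) = \frac12 y^\top B y$. This gives $\E[\frac12 Y^\top BY\mid X] \le \KL(\rho^{Y|X}\dvert\gamma) - \frac12\log\det(\I_d - B)$. A naive choice $B = \I_d - A$ leaves a $\log\det$ term with the wrong sign relative to $d - \Tr A$. So I would instead try to choose $B$ to exploit the spare nonnegative term $\E[Y^\top A Y]$. An alternative is to avoid differentiating $\Wass_2^2$ altogether: fix a coupling. Take $Z\sim\nu^X$ optimally coupled with $X_0$ and independent of $Y_0$, and run a second Hamiltonian flow from $(Z, \widetilde Y_0)$ with a synchronously chosen velocity, so that the second process stays stationary. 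Then bound $\frac12\Wass_2^2(\rho_T^X,\nu^X) \le \frac12\E\|X_T - Z_T\|^2$ and do the pointwise optimization computation, where entropy enters only through time-averaging and the chain rule. I expect this step to be where the real work lies.

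Regularity comes last. Log-smoothness, \Cref{lem:second-moment-finiteness}, and \Cref{Lem:KLContinuous} justify differentiating under the expectation and integrating the differential inequality. I would handle any nondifferentiability of $\Wass_2^2$ by working with upper Dini derivatives through the "frozen map" bound $\Wass_2^2(\rho_s^X,\nu^X) \le \E\|X_s - S_t(X_t)\|^2$, which holds with equality at $s=t$.
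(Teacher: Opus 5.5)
Your steps (i) and (iv), and the final integration against $(T-t)$ using $\varphi'(0)=0$, match the paper. The argument breaks at (ii), which is false. Write $u_t(x)=\E[Y_t\mid X_t=x]$, $\Sigma_t(x)=\Cov(Y_t\mid X_t=x)$, $v_t(x)=x-S_t(x)$, and let $G_t$ be the absolutely continuous part of $A=\nabla S_t$. Then $\E\langle Y_t,(\I_d-A)Y_t\rangle=\E\langle u_t,(\I_d-A)u_t\rangle+\E\langle \I_d-A,\Sigma_t\rangle_{\mathsf F}$. The bound the paper actually proves is
\[
\limsup_{h\downarrow 0}\frac{\varphi(t+h)+\varphi(t-h)-2\varphi(t)}{h^2}\le \E\|u_t\|^2+\E\langle \I_d-G_t,\Sigma_t\rangle_{\mathsf F}-\E\langle \nabla f,v_t\rangle .
\]
So (ii) silently drops $\E\langle u_t,Au_t\rangle\ge 0$: only the conditional covariance gets the weight $\I_d-A$, not the mean velocity.

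Here is a counterexample to (ii). Take $d=1$, $f(x)=x^2/2$, $\rho_0^X=\N(0,4)$ and $t=\pi/4$. Then $\rho_t^X=\N(0,5/2)$, $A=(5/2)^{-1/2}$, $u_t(x)=-\tfrac35 x$ and $\Sigma_t=\tfrac85$. Since $\varphi(s)=\tfrac12(\sqrt{1+3\cos^2 s}-1)^2$, you get $\varphi''(\pi/4)=\tfrac94(5/2)^{-3/2}\approx 0.57$. The right side of (ii) is $(1-A)(\E Y_t^2-\E X_t^2)=0$.

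The paper obtains the correct bound in Eulerian form. It couples $\rho^X_{t\pm h}$ to $\nu^X$ through $(\Phi_{t\pm h},S_t\circ\Phi_t)$, where $\Phi$ is the flow of the mean field $u$ rather than the Hamiltonian particles. This gives $\E\|u_t\|^2+\E\langle v_t,a_t\rangle$. The momentum equation gives $\rho_t^X a_t=-\rho_t^X\nabla f-\nabla\cdot(\rho_t^X\Sigma_t)$. A BV integration by parts then produces $\langle \I_d-G_t,\Sigma_t\rangle_{\mathsf F}$, and its singular part is $\le 0$ because $\Sigma_t\succeq 0$. Your Lagrangian frozen-map bound $\Wass_2^2(\rho_s^X,\nu^X)\le\E\|X_s-S_t(X_t)\|^2$ yields only $\E\|Y_t\|^2-\E\langle\nabla f,v_t\rangle$. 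That is too weak: about $1.58$ in the example, against the target $2\KL(\rho_0^X\dvert\nu^X)-3\KL(\rho_t^X\dvert\nu^X)\approx 0.74$.

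Even with the correct second-order bound, step (iii) is too lossy to close. The above-tangent inequality together with integrating $\E\langle\nabla\log\rho_t^X,v_t\rangle$ by parts is weaker than the truth by exactly $\E[\Tr G_t-d-\log\det G_t]\ge 0$. (That integration by parts also picks up the singular part of the Brenier Hessian with the unfavorable sign.) Your resulting kinetic comparison, $\E\|u_t\|^2+\E\langle \I_d-G_t,\Sigma_t\rangle_{\mathsf F}+\E[\Tr G_t-d]\le 2\E_{x\sim\rho_t^X}\KL(\rho_t^{Y\mid X=x}\dvert\gamma)$, is false: in the same example it reads $1.12\le 1.03$.

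The missing idea is to keep the log-determinant and cancel it with Monge--Amp\`ere, $\log\det G_t(x)=\log\frac{\rho_t^X(x)}{\nu^X(x)}+f(S_t(x))-f(x)$. Pair this with pointwise semi-convexity, $-\langle\nabla f(x),v_t(x)\rangle\le f(S_t(x))-f(x)+\frac M2\|v_t(x)\|^2$. Then exactly the ``naive'' Donsker--Varadhan choice $B=\I_d-G_t$ that you discarded closes the argument, applied to the centered conditional law of $Y_t$. Equivalently, use Gaussian maximum entropy plus $\langle\Sigma,G\rangle_{\mathsf F}\ge\log\det\Sigma+\log\det G+d$. This gives the pointwise bound
\[
\|u_t\|^2+\langle \I_d-G_t,\Sigma_t\rangle_{\mathsf F}-\langle\nabla f,v_t\rangle\le 2\KL(\rho_t^{Y\mid X=x}\dvert\gamma)-\log\frac{\rho_t^X}{\nu^X}+\frac M2\|v_t\|^2 ,
\]
which by your (iv) integrates to $2\KL(\rho_0^X\dvert\nu^X)-3\KL(\rho_t^X\dvert\nu^X)+\frac M2\Wass_2^2(\rho_t^X,\nu^X)$. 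This is the paper's argument.

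Making it rigorous also needs a warm, smooth initialization plus an approximation step. The upper second difference is then integrated via a concavity criterion rather than as a true $\varphi''$.
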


We provide the proof of~\Cref{Lem:Wass-KL-integratedKey} in~\Cref{Sec:Wass-KL-integratedKeyProof}.
The proof proceeds via two steps, that we sketch here.
First, we assume the initial distribution $\rho_0^X$ satisfies a warmness and smoothness regularity condition (see~\Cref{asmp:init_regularity}).
Under this regularity condition, we show the following differential inequality along the Hamiltonian flow:
\begin{align}\label{Eq:DiffIneqW2}
    \frac{1}{2}\frac{\d^2}{\dt^2} \Wass_2^2(\rho^X_t,\nu^X) \le 2 \KL(\rho_0^X \dvert \nu^X) - 3\KL(\rho_t^X \dvert \nu^X) + \frac{M}{2} \Wass_2^2(\rho_t^X, \nu^X) \,,
\end{align}
see~\Cref{Lem:AvgKLRegularity} in~\Cref{Sec:AvgKLRegularity} for the precise statement and proof (where we replace the second time derivative of Wasserstein distance by the limit of the second-order finite difference).
Integrating this inequality in time twice results in the claimed inequality~\eqref{Eq:KLIntegral}, under the regularity assumption on $\rho_0^X$.
To remove the regularity assumption, we show that we can approximate any initial distribution $\rho_0^X$ by a sequence of regular distributions, for which the claimed inequality holds, and we can take limits to obtain the conclusion of~\Cref{Lem:Wass-KL-integratedKey}; see~\Cref{apdx:approximation-proof} for details.

We remark that the derivation of \cref{Eq:DiffIneqW2} and its use to prove~\Cref{Lem:Wass-KL-integratedKey} are motivated by our earlier development in Hamiltonian-based accelerated optimization algorithms in~\cite{wang26}, which follows exactly the same proof structure; see~\Cref{app:HMCandHFopt} for more details and comparison. 
The approximation argument to remove the regularity assumption follows a similar strategy used in~\cite{lu2026sharphypocoerciveentropydecay}.

%%%%%%%%%%%%%%%%%%%%%
\section{Proofs of the main results}\label{sec:MainProofs}

%%%%%%%%%%
\subsection{Exponential distribution as a mixture of triangular distribution}
\label{Sec:ExpTriMixture}

We show that the exponential distribution (with density function given by~\cref{eq:ExponentialDensityFunction}) can be viewed as a mixture of the triangular distribution (with density function given by~\cref{eq:TriangularDensityFunction}), with mixture proportion given by the Gamma distribution. 
This fact allows us to analyze \cref{alg:RHMC} with \SetExp using the results of \cref{alg:RHMC} with \SetTri, as can be seen in \cref{subsec:PfOfThmConvRHMC_SLC}. 

Let \(\Gamma(\theta,\lambda)\) denote the Gamma distribution supported on $[0, \infty)$ with shape parameter \(\theta>0\) and rate parameter \(\lambda>0\), with density function at $s \in [0,\infty)$ given by:
$$\frac{\lambda^\theta s^{\theta-1}}{\Gamma(\theta)}e^{-\lambda s} \,, 
$$
where the normalization constant is the Gamma function \(\Gamma(\theta) = \int_0^\infty \lambda^\theta s^{\theta-1} e^{-\lambda s} \ds = \int_0^\infty t^{\theta-1} e^{-t} \dt\).

\begin{lemma}\label{lem:trig_exp_gamma}
    Let $T \in (0,\infty)$. 
    Let $(S,\tau) \in [0,\infty) \times [0,\infty)$ be a joint random variable drawn from the following process:
    \begin{align*}
        S ~ &\sim \Gamma(3, 1/T) \\
        \tau \mid S ~&\sim \Tri_S \,.
    \end{align*}
    Then marginally, $\tau \sim \Exp_{1/T}$. 
\end{lemma}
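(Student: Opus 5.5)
We will compute the marginal density of $\tau$ directly by integrating out $S$. For $t \ge 0$, the conditional density of $\tau$ given $S = s$ is $\Tri_s(t) = \frac{2}{s^2}(s-t)\,\mathbf{1}\{0 \le t \le s\}$. The density of $S \sim \Gamma(3,1/T)$ is $\frac{s^2}{2T^3} e^{-s/T}$, using $\Gamma(3) = 2$. The factor $s^2$ in the Gamma density exactly cancels the $1/s^2$ normalization of the triangular density, which is the reason shape parameter $3$ is chosen. Hence the marginal density of $\tau$ at $t \ge 0$ is
\begin{equation*}
p_\tau(t) = \int_t^\infty \frac{2}{s^2}(s-t)\cdot \frac{s^2}{2T^3}e^{-s/T}\,\ds = \frac{1}{T^3}\int_t^\infty (s-t)\,e^{-s/T}\,\ds .
\end{equation*}

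Next we substitute $u = s - t$. This turns the integral into $\frac{1}{T^3} e^{-t/T}\int_0^\infty u\, e^{-u/T}\du$. The remaining integral equals $T^2$, being the first moment of an unnormalized exponential, or equivalently $\Gamma(2)\,T^2$. Therefore $p_\tau(t) = \frac{1}{T}e^{-t/T}$, which is exactly $\Exp_{1/T}(t)$ from~\eqref{eq:ExponentialDensityFunction}.

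For rigor, we note that the joint law of $(S,\tau)$ has the product density above, which is nonnegative and measurable, so Tonelli's theorem justifies computing the marginal by integrating out $s$. No step is a genuine obstacle. The only point needing care is keeping the indicator $t \le s$, which sets the lower limit of integration at $s = t$. As an alternative sanity check, one could instead compare Laplace transforms, but the direct density computation is shorter.
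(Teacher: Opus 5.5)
Your proof is correct and follows essentially the same route as the paper's. Like the paper, you integrate the $\Gamma(3,1/T)$ density against $\Tri_s(t)$ over $s \ge t$, use the $s^2$ cancellation, and evaluate $\int_t^\infty (s-t)e^{-s/T}\,\mathrm{d}s = T^2 e^{-t/T}$ by a shift; the paper's substitution $u=(s-t)/T$ differs from your $u=s-t$ only by a rescaling.
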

\begin{proof}
    Let $p_S$ denote the density of $S$ (from the Gamma distribution), $p_{\tau \mid S}$ denote the conditional density of $\tau$ given $S$ (from the triangular distribution), and $p_\tau$ denote the marginal density of $\tau$.
    Note for $\theta = 3$, the Gamma function is $\Gamma(3) = 2! = 2$.
    Then we can calculate the marginal density of $\tau$ at $t \in [0, \infty)$ as:
    \begin{align*}
        p_\tau(t) 
        = \int_0^\infty p_S(s) \, p_{\tau \mid S}(t \mid s) \ds 
        &= \int_{0}^{\infty} \frac{s^2}{2T^3}e^{-s/T} \cdot \frac{2(s-t)}{s^2} \, \mathbf{1}_{[0, s]}(t) \ds \\
        &= \frac{1}{T^3} \int_t^{\infty} e^{-s/T} \, (s-t) \ds \\
        &= \frac{1}{T^3} \, e^{-t/T} \cdot T^2 \, \int_0^\infty e^{-u} \, u \du \\
        &= \frac{1}{T}e^{-t/T} \,,
    \end{align*}
    where in the above we have used change of variable $u = (s-t)/T$, $\du = \ds/T$, and the last equality follows by integration by parts.
    Thus, we conclude $\tau$ has marginal distribution $\Exp_{1/T}$.
\end{proof}

%%%%%%%%%%%%%%
\subsection{Proof of \texorpdfstring{\cref{thm:ConvRHMC_SLC} and \cref{cor:ConvRHMC_SLC}}{Theorem 1 and Corollary 1}}
\label{subsec:PfOfThmConvRHMC_SLC}

%%%%%%%%%%%%%%
\subsubsection{Proof of \cref{thm:ConvRHMC_SLC}}
\label{Sec:ConvRHMC_SLCProof}

\begin{proofof}{\Cref{thm:ConvRHMC_SLC}.}
    We prove a contraction of the KL divergence in each iteration of the algorithm.
    We will show inductively that for each $k \in\{1,\dots,K\}$, we have $\rho_k^X \in \PtwoacfsRd$ and $\KL(\rho_k^X\dvert\nu^X) <\infty$, so we can apply~\Cref{Lem:Wass-KL-integratedKey} in each iteration.
    
    In iteration $k \in \{1,\dots,K\}$, we start the Hamiltonian flow from $\left(X^{(k)}_0, Y^{(k)}_0 \right) \sim \rho^X_{k-1} \otimes \gamma$.
    Let $\left(X^{(k)}_t,Y^{(k)}_t \right) = \Psi_t \left(X^{(k)}_0,Y^{(k)}_0 \right)$ denote the solution of the Hamiltonian flow at time $t \ge 0$, and let $\rho^X_{t,k}$ denote the law of $X^{(k)}_t$.
    Note that $\rho^X_{0,k} = \rho^X_{k-1}$. By \Cref{Lem:Wass-KL-integratedKey}, for all $T \in [0,\infty)$ we have:
    \begin{align*}
        &\frac{1}{2} \Wass_2^2\left(\rho_{T, k}^X, \nu^X\right) 
        + 3\int_0^T(T-t) \, \KL\left(\rho_{t, k}^X\dvert\nu^X \right)\dt
        - \frac{M}{2} \int_0^T (T-t) \, \Wass_2^2\left(\rho_{t, k}^X, \nu^X \right) \dt\\
        &\qquad \leq \frac{1}{2} \Wass_2^2\left(\rho_{0, k}^X,\nu^X \right)
        + T^2 \, \KL\left(\rho_{0, k}^X\dvert\nu^X \right) \,.
    \end{align*}
    Assuming $\nu^X$ satisfies $\alpha$-Talagrand inequality, we can bound the $\Wass_2^2$-term on the left-hand side by:
    $$
    -\frac{M}{2} \int_0^T (T-t) \, \Wass_2^2\left(\rho_{t, k}^X, \nu^X\right) \dt 
    \ge 
    - \frac{M}{\alpha} \int_0^T (T-t) \, \KL\left(\rho_{t, k}^X\dvert\nu^X \right) \dt \,.
    $$
    Therefore, the following inequality holds for all $T \in [0,\infty)$:
    \begin{equation}\label{eq:kl-w2-integrated-m-semi}
        \frac{1}{2} \Wass_2^2\left(\rho_{T, k}^X, \nu^X \right)
        + \left(3-\frac{M}{\alpha}\right) \int_0^T (T-t) \, \KL\left(\rho_{t, k}^X\dvert\nu^X \right) \dt 
        \leq 
        \frac{1}{2} \Wass_2^2\left(\rho_{0, k}^X,\nu^X \right)
        + T^2 \, \KL\left(\rho_{0, k}^X\dvert\nu^X \right) \,.
    \end{equation}
    We now analyze the settings \SetTri and \SetExp separately.

    \paragraph{1. Setting \SetTri:}
    In \SetTri, we draw the integration time $\tau_k$ from the triangular distribution $\Tri_T$ supported on $[0, T]$.
    Therefore, the output density $\rho_k^X$ is equal to 
    $$\rho_k^X = \E_{\tau_k \sim \Tri_T}\left[\rho_{\tau_k,k}^X \right] 
    = \int_0^T \frac{2(T-t)}{T^2} \rho_{t, k}^X(x)\dt \,.$$ 
    By the convexity of KL divergence and Jensen's inequality,
    \begin{align}\label{Eq:ProofThm1Calc1}
        \KL\left(\rho^X_{k} \dvert \nu^X \right) 
        = \KL\left(\int_0^T \frac{2(T-t)}{T^2} \rho_{t, k}^X(x)\dt \, \Big\|\, \nu^X \right) 
        \leq \int_0^T \frac{2(T-t)}{T^2} \KL\left(\rho_{t, k}^X \dvert \nu^X \right) \dt \,.
    \end{align}
    Plugging this into \cref{eq:kl-w2-integrated-m-semi} and simplifying, we get:
    \begin{align*}
        \left(3-\frac{M}{\alpha}\right)\frac{T^2}{2} \KL\left(\rho^X_{k} \dvert \nu^X \right)
        &\le \frac{1}{2} \Wass_2^2\left(\rho_{T, k}^X, \nu^X \right) 
        + \left(3-\frac{M}{\alpha}\right)\frac{T^2}{2} \KL\left(\rho^X_{k} \dvert \nu^X \right) \\
        &\stackrel{\eqref{Eq:ProofThm1Calc1}}{\le}  \frac{1}{2} \Wass_2^2\left(\rho_{T, k}^X, \nu^X \right)
        + \left(3-\frac{M}{\alpha}\right) \int_0^T (T-t) \, \KL\left(\rho_{t, k}^X\dvert\nu^X \right) \dt \\
        &\stackrel{\eqref{eq:kl-w2-integrated-m-semi}}{\le} \frac{1}{2} \Wass_2^2\left(\rho_{0, k}^X,\nu^X \right)
        + T^2 \, \KL\left(\rho_{0, k}^X\dvert\nu^X \right) \\
        &\le \left(\frac{1}{\alpha} + T^2\right) \KL\left(\rho_{0,k}^X \dvert\nu^X \right) \,,
    \end{align*}
    where the last inequality follows from applying $\alpha$-Talagrand inequality.
    Recalling $\rho_{0,k}^X = \rho_{k-1}^X$ and dividing both sides above by $\left(3-\frac{M}{\alpha}\right)\frac{T^2}{2} > 0$, we conclude that
    \begin{align}\label{eq:KL-slc-trig-bound}
        \KL\left(\rho^X_{k} \dvert \nu^X \right) \le \left(\frac{2}{3-M/\alpha}\right) \left(1+\frac{1}{\alpha T^2}\right) \KL\left(\rho_{k-1}^X \dvert \nu^X \right) \,.
    \end{align}
    Iterating the bound above shows the claim in~\Cref{thm:ConvRHMC_SLC} for the setting \SetTri.

    We now check that $\rho_k^X \in \PtwoacfsRd$ with $\KL\left(\rho_k^X \,\|\, \nu^X \right) < \infty$.
    Assume inductively that $\rho_{k-1}^X = \rho_{0,k}^X \in \PtwoacfsRd$ with $\KL\left(\rho_{k-1}^X \,\|\, \nu^X \right) < \infty$.
    By~\Cref{lem:second-moment-finiteness}, we know $\rho_{t,k}^X \in \PtwoacfsRd$ for all $t \in [0,T]$. 
    Hence,
    $\rho_k^X = \E_{\tau_k \sim \Tri_T}\left[\rho_{\tau_k,k}^X \right] \in \PtwoacfsRd$, since $\E_{\rho_k^X}[\|X\|^2] = \E_{\tau_k \sim \Tri_T}\left[\E_{\rho_{\tau_k,k}^X}[\|X\|^2] \right] \le \max_{0 \le t \le T} \E_{\rho_{t,k}^X}[\|X\|^2] < \infty$, and $\rho_k^X$ has full support and positive density.
    By~\Cref{Lem:DescentKL}, we know $\KL\left(\rho_{t,k}^X \,\|\, \nu^X \right) \le \KL\left(\rho_{0,k}^X \,\|\, \nu^X \right) < \infty$.
    Hence, by~\cref{Eq:ProofThm1Calc1}, $\KL\left(\rho^X_{k} \dvert \nu^X \right) \le \KL\left(\rho_{0,k}^X \,\|\, \nu^X \right) < \infty$.

    %%%%%
    \paragraph{2. Setting \SetExp:}
    In \SetExp, we draw the integration time $\tau_k$ from the exponential distribution $\Exp_{1/T}$ with rate parameter $1/T$. 
    Recall from \Cref{lem:trig_exp_gamma} that we can write the density of the exponential distribution as a mixture of the triangular and Gamma distributions:
    For each $t \in [0,\infty)$,
    $$
    \frac{1}{T} e^{-t/T} = \int_{0}^{\infty} \frac{s^2}{2T^3} e^{-s/T} \, \frac{2(s-t)}{s^2} \mathbf{1}_{[0, s]}(t)  \ds \,.
    $$
    For \(s > 0\), define
    $\displaystyle \bar\rho_{s,k}^X \deq \int_0^s \frac{2(s-t)}{s^2} \, \rho_{t,k}^X \dt$.
    Then the density of $\rho_k^X$ in this setting becomes
    \begin{align*}
        \rho_k^X
        = \E_{\tau_k \sim \Exp_{1/T}}\left[\rho_{t,k}^X \right] 
        &= \int_0^\infty \rho_{t,k}^X \, \frac{1}{T} e^{-t/T} \dt  \\
        &= \int_0^\infty \int_0^\infty  \rho_{t,k}^X \; \frac{s^2}{2T^3} e^{-s/T} \, \frac{2(s-t)}{s^2} \mathbf{1}_{[0, s]}(t)  \ds \dt  \\
        &= \int_0^\infty \left(\int_0^\infty  \rho_{t,k}^X \, \frac{2(s-t)}{s^2} \mathbf{1}_{[0, s]}(t) \dt \right) \frac{s^2}{2T^3} e^{-s/T} \ds  \\
        &= \int_0^\infty \bar\rho_{s,k}^X \; \frac{s^2}{2T^3} e^{-s/T} \ds \\
        &= \E_{S \sim \Gamma(3,1/T)}\left[\bar\rho_{S,k}^X \right]
    \end{align*}
    where in the third line above we have exchanged the order of integration.
    Next, by the convexity of KL divergence and Jensen's inequality,
    \begin{align}\label{Eq:ProofThm1Calc2}
        \KL\left(\rho_k^X\dvert\nu^X \right)
        = \KL\left(\int_0^{\infty} \bar\rho_{s,k}^X \; \frac{s^2}{2T^3} e^{-s/T} \ds \,\Big\|\,  \nu^X \right)
        \leq \int_0^\infty \KL\left(\bar\rho_{s,k}^X \dvert \nu^X \right) \, \frac{s^2}{2T^3} e^{-s/T} \ds \,.
    \end{align}
    Note that $\bar\rho_{s,k}^X$ is the result of applying one iteration of the algorithm with the setting \SetTri, so the bound~\cref{eq:KL-slc-trig-bound} holds for $\bar\rho_{s,k}^X$ (with $T$ in~\cref{eq:KL-slc-trig-bound} replaced by $s$).
    Then continuing from the above, we obtain:
    \begin{align*}
        \KL\left(\rho_k^X \dvert \nu^X \right)
        &\stackrel{\eqref{Eq:ProofThm1Calc2}}{\leq} \int_0^\infty \KL\left(\bar\rho_{s,k}^X \dvert \nu^X \right) \, \frac{s^2}{2T^3} e^{-s/T} \ds \\
        &\stackrel{\eqref{eq:KL-slc-trig-bound}}{\leq}  \int_0^\infty \left(\frac{2}{3-M/\alpha} \right) \left(1+\frac{1}{\alpha s^2}\right) \KL\left(\rho_{k-1}^X\dvert\nu^X \right) \frac{s^2}{2T^3}e^{-s/T} \ds \\
        &=\left(\frac{2}{3-M/\alpha}\right) \frac{1}{2T^2} \, \KL\left(\rho_{k-1}^X\dvert\nu^X \right)
        \int_0^\infty \left(\frac{1}{\alpha} + s^2\right) \, \frac{1}{T} e^{-s/T} \ds \\
        &= \left(\frac{2}{3-M/\alpha}\right) \left(1+\frac{1}{2\alpha T^2}\right) \, \KL\left(\rho_{k-1}^X\dvert\nu^X \right) \\
        &\leq \left(\frac{2}{3-M/\alpha}\right) \left(1 + \frac{1}{\alpha T^2}\right) \KL\left(\rho_{k-1}^X \dvert \nu^X \right) \,.
    \end{align*}
    In the last equality above, we have used the fact that the exponential distribution $\Exp_{1/T}$ with density $\frac{1}{T} e^{-s/T}$ has second moment equal to $2T^2$. 
    Iterating the bound above shows the claim in~\Cref{thm:ConvRHMC_SLC} for the setting \SetExp.

    We now check that $\rho_k^X \in \PtwoacfsRd$ with $\KL\left(\rho_k^X \,\|\, \nu^X \right) < \infty$.
    Assume inductively that $\rho_{k-1}^X = \rho_{0,k}^X \in \PtwoacfsRd$ with $\KL\left(\rho_{k-1}^X \,\|\, \nu^X \right) < \infty$.
    As in the setting \SetTri, we know $\bar \rho_{s,k}^X$ satisfies $\KL\left(\bar\rho_{s,k}^X \,\|\, \nu^X \right) \le \KL\left(\rho_{0,k}^X \,\|\, \nu^X \right)  < \infty$ for any $s \in (0,\infty)$. 
    By~\eqref{Eq:ProofThm1Calc2}, we also have $\KL\left(\rho_k^X \,\|\, \nu^X \right) \le \KL\left(\rho_{0,k}^X \,\|\, \nu^X \right) < \infty$.
    Next, recall from \cref{lem:second-moment-finiteness} the second moment of $\rho_t^{X}$ grows at most quadratically in $t$, so it is integrable against any exponentially decreasing function. 
    Therefore, $\rho_k^X = \E_{\tau_k \sim \Exp_{1/T}}\left[\rho_{t,k}^X \right]$ has finite second moment, and $\rho_k^X$ has full support and positive density, so $\rho_{k}^X \in \PtwoacfsRd$.

    \paragraph{Conclusion:}
    In both settings \SetTri and \SetExp, iterating the bound gives the claimed convergence rate on $\KL\left(\rho_K^X \dvert \nu^X \right)$. 
    Plugging in the choice $T \geq T_{\min} = \frac{2}{\sqrt{\alpha-M}}$ gives the contraction constant as
    \begin{align}
    \label{eq:KL-contraction-ti-explicit}
         \sfC \leq \left(\frac{2}{3-M/\alpha}\right) \left(1+\frac{\alpha-M}{4\alpha}\right)
         = \left(\frac{5\alpha -M}{6\alpha-2M}\right)
         = \left(1+\frac{\alpha-M}{5\alpha-M}\right)^{-1} < 1 \,,
    \end{align}
    where the last inequality holds since we assume $M < \alpha$. 
\end{proofof}

%%%%%%%%%%%%%
\subsubsection{Proof of \cref{cor:ConvRHMC_SLC}}
\label{Sec:ConvRHMC_SLCCorProof}

\begin{proofof}{\Cref{cor:ConvRHMC_SLC}.}
Take $T = T_{\min} = \frac{2}{\sqrt{\alpha-M}}$, and plug this in to the formula of the contraction ratio $\sfC$ from \cref{eq:KL-contraction-ti-explicit} into \Cref{thm:ConvRHMC_SLC} to obtain:
\begin{align*}
    \KL\left( \rho_{K}^X \dvert \nu^X \right) 
    &\le \left(1+\frac{\alpha-M}{5\alpha-M}\right)^{-K} \cdot\KL\left( \rho_0^X \dvert \nu^X \right)\\
    &= \exp\left(-\left\lceil\frac{1}{\log\left(1+\frac{\alpha-M}{5\alpha-M}\right)} \cdot \log\frac{\KL(\rho_0^X\dvert\nu^X)}{\varepsilon}\right\rceil\cdot \log\left(1+\frac{\alpha-M}{5\alpha-M}\right)\right)\cdot\KL\left( \rho_0^X \dvert \nu^X \right)\\
    &\leq \exp\left(-\log\frac{\KL(\rho_0^X\dvert\nu^X)}{\varepsilon}\right)\cdot\KL\left( \rho_0^X \dvert \nu^X \right)\\
    &=\varepsilon \,.
\end{align*}

In \SetTri, since $\tau_k \sim \Tri_{T_{\min}}$ which is supported on $[0, T_{\min}]$, we have $\tau_k \leq T_{\min}$. 
Therefore, the total integration time satisfies $$\sum_{k=1}^{K}\tau_k \leq K \cdot T_{\min} = \frac{2K}{\sqrt{\alpha-M}} \,.$$
In \SetExp, since $\tau_k \sim \Exp_{1/T_{\min}}$, we have $\E[\tau_k]= T_{\min}$. Therefore, the expected total integration time is 
$$\E\left[\sum_{k=1}^{K}\tau_k\right] = K \cdot T_{\min}=\frac{2K}{\sqrt{\alpha-M}} \,.$$
\end{proofof}

%%%%%%%%%%%%%%%%%%
\subsection{Proof of \texorpdfstring{\cref{thm:ConvRHMC_LC} and \cref{cor:ConvRHMC_LC}}{Theorem 2 and Corollary 2}}\label{subsec:PfOfThmConvRHMC_LC}

%%%%%%%%%%%%%%%%%%
\subsubsection{Proof of \cref{thm:ConvRHMC_LC}}
\label{Sec:ConvRHMC_LCProof}

\begin{proofof}{\Cref{thm:ConvRHMC_LC}.}
    We prove the following more general result under log-concavity and log-smoothness of the target distribution: 
    For \Cref{alg:RHMC} under \SetTildeTri, if we set $\cD_k = \widetilde\Tri_{T_k}$ for any $T_k > 0$ for all $k \in \{1, \dots, K\}$ (with $T_0 = 1$), then the output $\rho_K^X$ of \cref{alg:RHMC} under \SetTildeTri satisfies
    \begin{align}\label{eq:GeneralConvergenceWLC}
    &\KL\left(\rho_K^X \dvert \nu^X \right) + \frac{1}{3T_K^2} \Wass_2^2\left(\rho_K^X, \nu^X \right) \leq
    \left[ \prod_{k=1}^{K} \max \left\{ \frac{8}{9}, \frac{T_{k-1}^2}{T_k^2} \right\} \right]
    \left(\KL\left(\rho_0^X \dvert \nu^X \right) + \frac{1}{3} \Wass_2^2\left(\rho_0^X, \nu^X \right) \right)\,.
    \end{align}
    Setting $T_k = \left(\frac{9}{8}\right)^{k/2}$ yields the claimed bound in~\Cref{thm:ConvRHMC_LC}.
    
    To show~\cref{eq:GeneralConvergenceWLC}, we will prove a contraction in each iteration. 
    We show inductively below that $\rho_k^X \in \PtwoacfsRd$ and $\KL(\rho_k^X\dvert\nu^X) < \infty$ for all $k \in \{1,\dots,K\}$, so we can apply~\Cref{Lem:Wass-KL-integratedKey} in each iteration.
    
    In iteration $k \in \{1,\dots,K\}$, we start the Hamiltonian flow from $\left(X^{(k)}_0, Y^{(k)}_0 \right) \sim \rho^X_{k-1} \otimes \gamma.$
    Let $\left(X^{(k)}_t,Y^{(k)}_t \right) = \Psi_t \left(X^{(k)}_0,Y^{(k)}_0 \right)$ denote the solution of the Hamiltonian flow at time $t \ge 0$, and let $\rho^X_{t,k}$ denote the law of $X^{(k)}_t$.
    Note that $\rho^X_{0,k} = \rho^X_{k-1}$. 
    By \Cref{Lem:Wass-KL-integratedKey} (with $M = 0$ since we assume $\nu^X$ is log-concave), for all $T \in [0,\infty)$ we have:
    \begin{align}\label{Eq:ProofThm2Calc1}
        \frac{1}{2} \Wass_2^2\left(\rho_{T, k}^X, \nu^X\right) 
        + 3\int_0^T(T-t) \, \KL\left(\rho_{t, k}^X\dvert\nu^X \right)\dt
        \leq \frac{1}{2} \Wass_2^2\left(\rho_{0, k}^X,\nu^X \right)
        + T^2 \, \KL\left(\rho_{0, k}^X\dvert\nu^X \right) \,.
    \end{align}

    Since we draw the integration time from $\widetilde{\Tri}_{T_k}=\frac{1}{2}\Tri_{T_k}+\frac{1}{2}\delta_{T_k}$, the output distribution $\rho_k^X$ at iteration $k$ is
    $$\rho_k^X
    = \frac{1}{2} \bar\rho_{T_k,k}^X
    + \frac{1}{2} \rho_{T_k,k}^X $$
    where
    $$\bar\rho_{T_k,k}^X
    \deq
    \E_{\tau_k \sim \Tri_{T_k}}\left[\rho_{\tau_k,k}^X \right]
    = \int_0^{T_k}\frac{2(T_k-t)}{T_k^2} \, \rho_{t,k}^X\dt \,.$$
    
    We now derive several inequalities that we will combine to obtain the result.

    \paragraph{First inequality:}
    By the convexity of KL divergence and Jensen's inequality, and by applying~\cref{Eq:ProofThm2Calc1} at $T = T_k$, we obtain 
    \begin{align*}
        \frac{3 T_k^2}{2} \KL\left(\bar\rho_{T_k,k}^X\dvert\nu^X \right)
        + \frac{1}{2} \Wass_2^2\left(\rho_{T_k,k}^X,\nu^X \right)
        &= \frac{3 T_k^2}{2} \, \KL\left(\int_0^{T_k}\frac{2(T_k-t)}{T_k^2} \, \rho_{t,k}^X\dt \,\Big\|\, \nu^X \right)
        + \frac{1}{2} \Wass_2^2\left(\rho_{T_k,k}^X,\nu^X \right) \\
        &\le 3 \int_0^{T_k} (T_k-t) \, \KL\left( \rho_{t,k}^X \dvert\nu^X \right) \dt
        + \frac{1}{2} \Wass_2^2\left(\rho_{T_k,k}^X,\nu^X \right) \\
        &\stackrel{\eqref{Eq:ProofThm2Calc1}}{\le} T_k^2 \, \KL\left(\rho_{0, k}^X\dvert\nu^X \right) + \frac{1}{2} \Wass_2^2\left(\rho_{0, k}^X,\nu^X \right) \,.
    \end{align*}
    Multiplying both sides by $2/(3T_k^2)$ and recalling $\rho_{0, k}^X = \rho_{k-1}^X$ yield:   
    \begin{align} \label{eq:wc_KLW2_1}  
        \KL\left(\bar\rho_{T_k,k}^X\dvert\nu^X \right)
        + \frac{1}{3T_k^2} \Wass_2^2\left(\rho_{T_k,k}^X,\nu^X \right)
        &\leq
        \frac{2}{3} \KL\left(\rho_{k-1}^X\dvert\nu^X \right)
        + \frac{1}{3T_k^2} \, W_2^2\left(\rho_{k-1}^X,\nu^X \right) \,.
    \end{align}

    \paragraph{Second inequality:}    
    By the convexity of squared Wasserstein distance, we can bound:
    \begin{align*}
        \frac{1}{2} \Wass_2^2\left(\bar\rho_{T_k,k}^X, \nu^X \right)
        &= \frac{1}{2} \Wass_2^2\left(\E_{\tau_k \sim \Tri_{T_k}}\left[\rho_{\tau_k,k}^X \right], \, \nu^X \right) \\
        &\le \E_{\tau_k \sim \Tri_{T_k}}\left[\frac{1}{2} \Wass_2^2\left(\rho_{\tau_k,k}^X, \, \nu^X \right) \right] \\
        &\le \E_{\tau_k \sim \Tri_{T_k}}\left[\frac{1}{2} \Wass_2^2\left(\rho_{\tau_k,k}^X, \, \nu^X \right) 
        + 3\int_0^{\tau_k} (\tau_k-t) \, \KL\left(\rho_{t, k}^X\dvert\nu^X \right)\dt
        \right] \\
        &\stackrel{\eqref{Eq:ProofThm2Calc1}}{\le} \E_{\tau_k \sim \Tri_{T_k}}\left[ \frac{1}{2} \Wass_2^2\left(\rho_{0, k}^X,\nu^X \right)
        + \tau_k^2 \, \KL\left(\rho_{0, k}^X\dvert\nu^X \right) \right] \\
        &= \frac{1}{2} \Wass_2^2\left(\rho_{0, k}^X,\nu^X \right)
        + \frac{T_k^2}{6} \, \KL\left(\rho_{0, k}^X\dvert\nu^X \right) \,.
    \end{align*}
    In the above, we have applied the bound from~\eqref{Eq:ProofThm2Calc1} at each $T = \tau_k \in [0,T]$, and we have used the fact that $\E_{\tau \sim \Tri_T}[\tau^2] = T^2/6$.
    Multiplying both sides by $2/(3T_k^2)$ and recalling $\rho_{0, k}^X = \rho_{k-1}^X$ yield:
    \begin{align}
    \label{eq:wc_KLW2_2}
        \frac{1}{3T_k^2} \Wass_2^2\left(\bar\rho_{T_k,k}^X, \nu^X \right)
        \le \frac{1}{3T_k^2} \, \Wass_2^2\left(\rho_{k-1}^X,\nu^X \right)
        + \frac{1}{9} \, \KL\left(\rho_{k-1}^X\dvert\nu^X \right) \,.
    \end{align}    

    \paragraph{Third inequality:}
    By the descent property in KL divergence along one step of HMC (\Cref{Lem:DescentKL}), we have:
    \begin{align}
    \label{eq:wc_KLW2_3}
        \KL\left(\rho_{T_k,k}^X \dvert \nu^X \right) 
        \le \KL\left(\rho_{0,k}^X \dvert \nu^X \right)
        = \KL\left(\rho_{k-1}^X \dvert \nu^X \right) \,.
    \end{align}

    \paragraph{Combining:}
    Summing \cref{eq:wc_KLW2_1}, \cref{eq:wc_KLW2_2}, and \cref{eq:wc_KLW2_3} gives:
    \begin{align}\label{eq:lc_kl_w2_sum}
        &\KL\left(\bar\rho_{T_k,k}^X \dvert \nu^X \right)
        +
        \KL\left(\rho_{T_k,k}^X \dvert \nu^X \right)
        +
        \frac{1}{3T_k^2}
        \left(
        W_2^2\left(\bar\rho_{T_k,k}^X, \, \nu^X \right)
        + W_2^2\left(\rho_{T_k,k}^X, \, \nu^X \right)
        \right) \notag \\
        &\qquad\le
        \frac{16}{9}
        \KL(\rho_{k-1}^X\dvert\nu^X)
        +
        \frac{2}{3T_k^2}
        W_2^2(\rho_{k-1}^X,\nu^X) \,,
    \end{align}
    where in the above, $\frac{16}{9} = 1+\frac{2}{3}+\frac{1}{9}$.
    
    Since $\rho_k^X
    = \frac{1}{2} \bar\rho_{T_k,k}^X
    + \frac{1}{2}\rho_{T_k,k}^X$, by
    the convexity of KL divergence and the convexity of the squared Wasserstein distance, we can bound:
    \begin{align*}
        &\KL\left(\rho_k^X \dvert \nu^X \right) + \frac{1}{3 T_k^2} \, \Wass_2^2\left(\rho_k^X, \nu^X \right) \\
        &\qquad = \KL\left(\frac{1}{2} \bar\rho_{T_k,k}^X
        + \frac{1}{2}\rho_{T_k,k}^X \,\Big\|\, \nu^X \right) + \frac{1}{3 T_k^2} \, \Wass_2^2\left(\frac{1}{2} \bar\rho_{T_k,k}^X
        + \frac{1}{2}\rho_{T_k,k}^X, \, \nu^X \right) \\
        &\qquad \le \frac{1}{2} \left(\KL\left(\bar\rho_{T_k,k}^X \dvert \nu^X \right)
        +
        \KL\left(\rho_{T_k,k}^X \dvert \nu^X \right) 
        +
        \frac{1}{3T_k^2}
        \left(
        W_2^2\left(\bar\rho_{T_k,k}^X, \, \nu^X \right)
        + W_2^2\left(\rho_{T_k,k}^X, \, \nu^X \right)
        \right) \right) \\
        &\qquad \stackrel{\eqref{eq:lc_kl_w2_sum}}{\le} \frac{8}{9}
        \KL\left( \rho_{k-1}^X\dvert\nu^X \right)
        +
        \frac{1}{3T_k^2}
        W_2^2\left(\rho_{k-1}^X,\nu^X \right) \,.
    \end{align*}
    Finally, bounding $\frac{8}{9} \le \max \left\{ \frac{8}{9}, \frac{T_{k-1}^2}{T_k^2} \right\}$ and $\frac{1}{3T_k^2} \le \frac{1}{3T_{k-1}^2} \cdot \max \left\{ \frac{8}{9}, \frac{T_{k-1}^2}{T_k^2} \right\}$ yields:
    $$\KL\left(\rho_k^X \dvert \nu^X \right) + \frac{1}{3 T_k^2} \, \Wass_2^2\left(\rho_k^X, \nu^X \right) \le \max \left\{ \frac{8}{9}, \frac{T_{k-1}^2}{T_k^2} \right\} \left(\KL\left( \rho_{k-1}^X\dvert\nu^X \right)
    + \frac{1}{3T_{k-1}^2} W_2^2\left(\rho_{k-1}^X,\nu^X \right)\right) \,. $$
    Telescoping this bound over \(k \in \{1,\ldots,K \}\) and recalling we define $T_0 = 1$ prove~\cref{eq:GeneralConvergenceWLC}.

     We now check that $\rho_k^X \in \PtwoacfsRd$ with $\KL\left(\rho_k^X \,\|\, \nu^X \right) < \infty$.
    Assume inductively that $\rho_{k-1}^X = \rho_{0,k}^X \in \PtwoacfsRd$ with $\KL\left(\rho_{k-1}^X \,\|\, \nu^X \right) < \infty$.
    By~\Cref{lem:second-moment-finiteness} we know  that $\rho_{t,k}^X \in \PtwoacfsRd$ for all $t \in [0,T_k]$, and hence
    $\rho_k^X = \frac{1}{2} \E_{\tau_k \sim \Tri_{T_k}}\left[\rho_{\tau_k,k}^X \right] + \frac{1}{2} \rho_{T_k,k}^X \in \PtwoacfsRd$, since 
    $$\E_{\rho_k^X}[\|X\|^2] = \frac{1}{2} \E_{\tau_k \sim \Tri_{T_k}}\left[\E_{\rho_{\tau_k,k}^X}[\|X\|^2] \right] + \frac{1}{2} \E_{\rho_{T_k,k}^X}[\|X\|^2] 
    \le \max_{0 \le t \le T_k} \E_{\rho_{t,k}^X}[\|X\|^2]
    < \infty \,,$$
    and $\rho_k^X$ has full support and positive density.
    By~\Cref{Lem:DescentKL}, we know $\KL\left(\rho_{t,k}^X \,\big\|\, \nu^X \right) < \infty$ for all $t > 0$, and hence by the convexity of KL divergence, we also have 
    $$\KL\left(\rho^X_{k} \dvert \nu^X \right) \le \frac{1}{2} \E_{\tau_k \sim \Tri_{T_k}}\left[\KL(\rho_{\tau_k,k}^X \dvert \nu^X ) \right] + \frac{1}{2} \KL(\rho_{T_k,k}^X \dvert \nu^X) \le \max_{0 \le t \le T_k} \KL(\rho_{t,k}^X \dvert \nu^X) < \infty \,.$$
    This completes the proof.
\end{proofof}

%%%%%%%%%%%%%%%%%%
\subsubsection{Proof of \cref{cor:ConvRHMC_LC}}
\label{Sec:ConvRHMC_LCCorProof}

\begin{proofof}{\Cref{cor:ConvRHMC_LC}.}
Since $T_k = \left(\frac{9}{8}\right)^{k/2}$, by \Cref{thm:ConvRHMC_LC}, after $K$ iterations,
\begin{align*}
\KL(\rho_K^X\dvert\nu^X) &\le 
\left(\frac{8}{9}\right)^{K}\left[\KL(\rho_0^X\dvert\nu^X)+\frac{1}{3}\Wass_2^2(\rho_0^X,\nu^X)\right] \,.
\end{align*}
Plugging in the choice 
$\displaystyle K =\left\lceil \frac{1}{\log(\tfrac{9}{8})}\cdot \log\frac{\KL\left(\rho_0^X \dvert \nu^X \right) + \frac{1}{3} \Wass_2^2\left(\rho_0^X, \nu^X \right)}{\varepsilon} \right\rceil$ gives \(\KL(\rho_K^X\dvert\nu^X)\le\varepsilon\). 

It remains to calculate the total integration time. For this choice of $K$, we first calculate that
\begin{align*}
\left(\frac{9}{8}\right)^{K/2}&\leq\exp\left(\frac{1}2{}\log\left(\frac{9}{8}\right)\cdot \left( \frac{1}{\log(\tfrac{9}{8})}\cdot \log\frac{\KL\left(\rho_0^X \dvert \nu^X \right) + \frac{1}{3} \Wass_2^2\left(\rho_0^X, \nu^X \right)}{\varepsilon}+1 \right)\right)\\
&=\sqrt{\frac{9}{8}}\cdot\sqrt{\frac{\KL(\rho_0^X\dvert\nu^X)+ \frac{1}{3} \Wass_2^2\left(\rho_0^X, \nu^X \right)}{\varepsilon}}.
\end{align*}
In iteration $k$, the integration time is drawn from $\displaystyle \widetilde \Tri_{T_k}=\frac12\Tri_{T_k}+\frac12\delta_{T_k}$, and hence $\tau_k\leq T_k = \left(\frac{9}{8}\right)^{k/2}$. 
Therefore,
\begin{align*}
    \sum_{k=1}^K\tau_k
    &\leq \sum_{k=1}^{K} \left(\frac{9}{8}\right)^{k/2} 
    = \frac{3}{3 - \sqrt{8}} \cdot \left(\left(\frac{9}{8}\right)^{K/2} - 1\right) \\
    &\leq \frac{3}{3 - \sqrt{8}} \cdot \left(\sqrt{\frac{9}{8}}\cdot\sqrt{\frac{\KL\left(\rho_0^X \dvert \nu^X \right) + \frac{1}{3} \Wass_2^2\left(\rho_0^X, \nu^X \right)}{\varepsilon}} - 1\right)\\
    &\leq3(3+\sqrt{8})\cdot \sqrt{\frac{9}{8}}\cdot\sqrt{\frac{\KL\left(\rho_0^X \dvert \nu^X \right) + \frac{1}{3} \Wass_2^2\left(\rho_0^X, \nu^X \right)}{\varepsilon}} \\
    &\leq 19\sqrt{\frac{\KL\left(\rho_0^X \dvert \nu^X \right) + \frac{1}{3} \Wass_2^2\left(\rho_0^X, \nu^X \right)}{\varepsilon}} \,.
\end{align*}
\end{proofof}

%%%%%%%%%%%%%%%%%%
\section{Discussion}\label{sec:discussion}
In this work, we prove accelerated mixing time guarantees for the idealized Randomized Hamiltonian Monte Carlo (\RHMC) algorithm in two settings: when the target distribution is semi-log-concave and satisfies Talagrand inequality, and when the target distribution is log-concave. 
The resulting continuous-time complexity guarantees of \RHMC improve on the guarantees for the overdamped Langevin dynamics under the same settings, and match the rates that we expect from the theory of accelerated convex optimization.
The key technical ingredient underlying our result is \Cref{Lem:Wass-KL-integratedKey}, which shows that the average KL divergence of the position marginal along Hamiltonian flow is decreasing with an explicit contraction factor.
To apply this result, it is important that the integration time in \RHMC be randomized, with a sufficiently long expected value.
Our analysis is motivated by our prior work on Hamiltonian dynamics-based optimization~\cite{wang26}.
Our results complement the recent results on the accelerated mixing time guarantees for underdamped Langevin dynamics~\cite{lu2026sharphypocoerciveentropydecay,li2026spacetime} and RHMC~\cite{monmarche2026entropicconvergencepiecewisedeterministic}, which proceed via different analysis techniques.

Our results in this paper are for the idealized \RHMC, where we assume we can simulate the Hamiltonian flow exactly. 
An important future direction is to study how to extend these guarantees to discrete-time implementations of \RHMC, where Hamiltonian flow is implemented using a numerical integrator, and whether we can obtain discrete-time iteration complexity guarantees that match what we can obtain from accelerated convex optimization. 
It would also be interesting to investigate whether similar guarantees can be established for the No-U-Turn Sampler (NUTS)~\cite{hoffman2014no}, an adaptive variant of~\textsf{HMC} that is widely used in practice, and whose theory is still being developed~\cite{bou2024mixing,oberdorster2025accelerated,bou2026within,gruffaz2026theoretical}.

%%%%%%%%
\newpage 

%%%%%%%%%%%
\appendix
\section{Additional related work}
\label{subsec:relatedworks}

Guarantees for Hamiltonian dynamics-based sampling algorithms such as \RHMC (\cref{alg:RHMC}) are an active area of research. As described in \cref{subsec:RHMC}, when the integration time in \RHMC is taken to be deterministic, the algorithm is usually referred to as \HMC.
Guarantees for \HMC correspond to Rows~$7$-$9$ in \cref{table:RelatedWork} and are all unaccelerated; as shown in~\cite[Theorem~1.4]{chen2019optimal}, this is unavoidable whenever the integration time is deterministic.
\cref{table:RelatedWork} also includes a summary of prior works studying the overdamped Langevin dynamics (\LD) and the underdamped Langevin dynamics (\ULD), and in particular, includes all of the works discussed in \cref{Sec:Introduction}.

These results presented in \cref{Sec:Introduction} are part of a broader question in sampling which is to obtain a diffusive-to-ballistic speedup:
when the target distribution is \(\alpha\)-strongly log-concave, the goal is to improve the total simulation time required to obtain $\varepsilon$-accurate samples from the diffusive scale or unaccelerated rate of $O(\alpha^{-1} \log(\varepsilon^{-1}))$, achieved by the Langevin dynamics (\cref{table:RelatedWork},~Row~1), to the ballistic scale or accelerated rate of $O(\alpha^{-1/2} \log(\varepsilon^{-1}))$. 
One line of work is based on hypocoercivity and space-time Poincar\'e inequalities. The space-time Poincar\'e approach was developed in~\cite{Albritton_2024}. Building on this framework, \cite{Cao_2023} obtain accelerated convergence rates for underdamped Langevin dynamics in $\chi^2$ divergence (\cref{table:RelatedWork},~Row~3).
This area of research has been particularly active recently, with multiple works adapting these techniques to obtain guarantees in relative entropy. 
For instance, \cite{lu2026sharphypocoerciveentropydecay} generalize this line of results and obtain accelerated guarantees for underdamped Langevin dynamics in KL divergence (\cref{table:RelatedWork},~Row~5), while \cite{li2026spacetime} further extend it to R\'enyi divergence (\cref{table:RelatedWork},~Row~6). We remark that moving from \(\chi^2\) divergence to KL divergence is not purely of theoretical interest, but is also often desirable in high-dimensional settings due to the milder dependence on initialization: for typical initial distribution \(\rho_0^X\in \P(\R^d)\) and reference distribution $\nu^X\in\P(\R^d)$, the initial KL divergence \(\KL(\rho_0^X \dvert \nu^X)\) scales linearly in the dimension $d$, whereas \(\chi^2(\rho_0^X \dvert \nu^X)\) can scale exponentially in $d$. A further perspective on acceleration is provided by the non-reversible lifting framework of~\cite{eberle2026nonreversible}. They formalize the underdamped Langevin dynamics and \RHMC as second-order non-reversible lifts of overdamped Langevin dynamics, and established accelerated convergence of \RHMC in $\chi^2$ divergence when the target distribution satisfies a Poincar\'e inequality (\cref{table:RelatedWork},~Row~11). 
There are other attempts at translating the accelerated gradient flow dynamics from optimization to the space of probability distributions for sampling~\cite{wang2022acceleratedinformationgradientflow,chen25accelerating}, resulting in mean-field dynamics which have accelerated convergence guarantees, but may be more challenging to implement algorithmically than \textsf{ULD}.

When the target distribution is Gaussian, improved guarantees for \HMC can be obtained by carefully leveraging the structural properties of Gaussian distributions. \cite{wang2022chebyshev} construct a deterministic, time-varying integration-time schedule from the roots of a Chebyshev polynomial constructed from the covariance matrix of the Gaussian distribution; they show that using this integration time schedule, \HMC reaches \(\varepsilon\) error in Wasserstein-2 distance using total integration time $O\!\left(\alpha^{-1/2}\log(\varepsilon^{-1})\right).$ %See equation 22 therein.
\cite{jiang2022dissipation} show the same continuous-time complexity can be achieved using \RHMC for Gaussian distributions either with exponentially distributed integration times or with partial velocity refreshment. \cite{apers2022gaussian} analyze a Metropolis-adjusted \HMC implementation with long randomized integration times and obtain a gradient-query complexity of $\widetilde O\!\left(L^{1/2}\alpha^{-1/2}\,d^{1/4}\log(\varepsilon^{-1})\right)$ in total variation distance.

\clearpage

\begin{table}[t!]
\centering
\renewcommand{\arraystretch}{1.3}
\begin{tabular}{cccccc}
  & \textbf{{Reference}} & \textbf{Algorithm/Dynamics} & \textbf{Divergence} & \textbf{Total Time} & \textbf{{Accelerated}}\\
\cmidrule(lr){2-6}
1 & \cite{OTTO2000361} & \LD & $\KL$ & $O(\alpha^{-1} \log(\varepsilon^{-1}))$ & No\\
2 & \cite{altschuler2025shifted} & \ULD & $\KL, \sfR_q$ & $O(L^{1/2} \, \alpha^{-1} \log(\varepsilon^{-1}))$ & No \\
3 & \cite{Cao_2023} & \ULD & $\chi^2$ & $O(\alpha^{-1/2}\log(\varepsilon^{-1}))$ & Yes\\
4 & \cite{fan2026sharp} & \ULD & $\chi^2$ & $O(\alpha^{-1/2}\log(\varepsilon^{-1}))$ & Yes\\
5 & \cite{lu2026sharphypocoerciveentropydecay} & \ULD & $\KL$ & $O(\alpha^{-1/2}\log(\varepsilon^{-1}))$ & Yes\\
6 & \cite{li2026spacetime} & \ULD & $\sfR_q$ & $O(\alpha^{-1/2}\log(\varepsilon^{-1}))$ & Yes\\
7 & \cite{mangoubi2021mixing} & \HMC; $T = O(\alpha^{1/2}\, L^{-1})$ & $\sfW_2$ & $O(L\, \alpha^{-3/2} \log(\varepsilon^{-1}))$ & No\\
8 & \cite{chen2019optimal} & \HMC; $T = O(L^{-1/2})$ & $\sfW_2$ & $O(L^{1/2} \, \alpha^{-1} \log(\varepsilon^{-1}))$ & No\\
9 & \cite{monmarche2024entropic} & \HMC; $T = O(L^{-1/2})$ & $\KL$ & $O(L^{1/2} \, \alpha^{-1} \log(\varepsilon^{-1}))$ & No\\
10 & \cite{Lu_2022} & \RHMC; \SetExp & $\chi^2$ & $O(\alpha^{-1/2}\log(\varepsilon^{-1}))$ & Yes\\
11 & \cite{eberle2026nonreversible} & \RHMC; \SetExp & $\chi^2$ & $O(\alpha^{-1/2}\log(\varepsilon^{-1}))$ & Yes \\
12 & \cite{monmarche2026entropicconvergencepiecewisedeterministic} & \RHMC; \SetExp & $\KL$ & $O(\alpha^{-1/2}\log(\varepsilon^{-1}))$ & Yes\\
13 & \cref{thm:ConvRHMC_SLC} & \RHMC; \SetTri, \SetExp & $\KL$ & $O(\alpha^{-1/2}\log(\varepsilon^{-1}))$ & Yes\\
\cmidrule(lr){2-6}
14 & \cite{OV01} & \LD & $\KL$ & $O(\varepsilon^{-1})$ & No \\
15 & \cite{altschuler2025shifted} & \ULD & $\KL, \sfR_q$ & $O(L^{1/2} \, \varepsilon^{-1})$ & No \\
16 & \cref{thm:ConvRHMC_LC} & \RHMC; \SetTildeTri & $\KL$ & $O(\varepsilon^{-1/2})$ & Yes

\end{tabular}
\caption{Summary of total time complexity of prior works to output a sample within error $\varepsilon >0$ (Column~$4$) in various distance or divergences (Column~$3$). In Column~$3$, $\sfW_2$ and $\KL$ denote the Wasserstein--$2$ distance and KL divergence respectively (\cref{Sec:StatDistDef}); and $\chi^2$ and $\sfR_q$ denote the chi-squared divergence and Rényi--$q$ divergence respectively. Rows $1$-$13$ correspond to works where $\alpha$-strong log-concavity (\cref{subsubsec:LogSmoothnessLogConcavity}) or isoperimetry such as $\alpha$-LSI (\cref{Sec:Isoperimetry}) is assumed, and Rows $14$-$16$ correspond to guarantees under log-concavity. The dependence on the $L$-smoothness is presented as and when it arises. The algorithms \LD and \ULD correspond to the overdamped and underdamped Langevin dynamics respectively, as discussed in \cref{Sec:Introduction}; algorithms \RHMC and \HMC are defined in \cref{subsec:RHMC}.}
\label{table:RelatedWork}
\end{table}

As discussed in \cref{Sec:Introduction},~\cite[Theorem~1.4]{chen2019optimal} implies that obtaining accelerated or ballistic guarantees for \HMC for general $\alpha$-strongly convex and $L$-smooth target distributions is not feasible due to the deterministic integration time. 
To address this concern,~\cite{cances2007theoretical, bou2017randomized} consider \emph{randomized} integration times, i.e., \RHMC (\cref{alg:RHMC}) with non-degenerate integration time distributions. ~\cite{Lu_2022} provides rigorous mixing time guarantees for \RHMC and they obtain accelerated rates of convergence for \RHMC in $\chi^2$ divergence under a space-time Poincaré inequality (\cref{table:RelatedWork},~Row~10).
Very recently, \cite{monmarche2026entropicconvergencepiecewisedeterministic} establish diffusive-to-ballistic acceleration in KL divergence for continuous-time \RHMC with exponentially distributed integration times (\cref{table:RelatedWork},~Row~12). Their proof adapts the approach of~\cite{lu2026sharphypocoerciveentropydecay} (\cref{table:RelatedWork},~Row~5) by considering the \RHMC semigroup, introducing a similar Lyapunov functional, and studying the evolution of this functional under the semigroup of RHMC.

A related approach to avoiding a poorly chosen deterministic integration time is the No-U-Turn Sampler (NUTS)~\cite{hoffman2014no}, which locally adapts the length of a leapfrog trajectory using a geometric U-turn criterion and then selects the next state from the resulting orbit. For the standard Gaussian target, \cite{bou2024mixing} establish the first quantitative mixing-time guarantee for NUTS, and there are follow-up works including~\cite{oberdorster2025accelerated,bou2026within,gruffaz2026theoretical}.

Another line of work, including~\cite{teel2019first,wang2025frictionlesshamiltoniandescentcoordinate, fu2025hamiltoniandescentalgorithmsoptimization,wang26}, studies Hamiltonian dynamics as an algorithmic primitive for optimization. The present work is specifically inspired by our recent contribution~\cite{wang26} to this line of research. 
In this optimization setting, Hamiltonian flow conserves energy in the phase space, and therefore its trajectories can oscillate and may not converge. 
The key observation in~\cite{wang26} is that a suitable time average of the trajectory of Hamiltonian flow satisfies an accelerated convergence guarantee. The same principle drives the current work. Hamiltonian flow preserves the KL divergence to the joint stationary distribution in the phase space, but the time-averaged position marginal along the Hamiltonian trajectory admits a descent guarantee in KL divergence. We provide a more detailed discussion in \Cref{app:HMCandHFopt}.

The preceding discussion has all been for continuous-time sampling methods. There is a rich body of work focusing on discrete-time algorithmic implementations of these ideal continuous-time methods. We briefly mention that discrete-time implementations of \HMC largely focus on short integration times of $T = O(L^{-1/2})$, as this corresponds to the regime when discretization schemes such as velocity Verlet are stable~\cite{bou2018geometric}. Discretizing the Hamiltonian dynamics introduces a bias in the Markov chain, which, when not adjusted using a Metropolis-Hastings filter, leads to unadjusted Hamiltonian Monte Carlo; some works studying this are~\cite{bou2020coupling, bou2023mixing, bou2023convergence, camrud2023second, gouraud2025hmc, bou2026tail}. Adding the Metropolis-Hastings filter restores the target distribution as the stationary distribution and some works studying adjusted Hamiltonian Monte Carlo include~\cite{bou2020coupling, chen2020fast, chen2026does}.
It is interesting to study if the accelerated mixing time guarantees we present here for the continuous-time \RHMC can be extended to discrete-time algorithms.
We conclude by mentioning that a partial progress toward discrete-time acceleration was made in~\cite{altschuler2025shifted} for an algorithm based on \ULD, albeit still for a low-accuracy guarantee.

%%%%%%%%%%%%%%%%%
\section{Connections to Hamiltonian Flow for optimization}\label{app:HMCandHFopt}

In this section, we present a more detailed discussion about how the main results of this work relate to the recent results of \cite{wang26} who focus on developing a Hamiltonian flow based algorithm for accelerated convex optimization.

We preface the discussion by highlighting a classical connection between optimization and sampling that we alluded to in \Cref{Sec:Introduction}.
In optimization, given a function \(f \colon \R^{d} \to \R\), the Euclidean \emph{gradient flow} is a continuous-time dynamics that converges to a stationary point of \(f\).
Consequently, when \(f\) is convex, running gradient flow leads to a minimizer of \(f\).
Analogously, one can treat the algorithmic task of sampling from \(\nu^{X}\) as finding a distribution that minimizes a discrepancy to \(\nu^{X}\).
When the space of probability measures is endowed with the Wasserstein metric, and the discrepancy is chosen to be the KL divergence, the gradient flow of \(\rho \mapsto \KL(\rho \dvert \nu^{X})\) precisely coincides with the overdamped Langevin dynamics (\textsf{LD}), and structural assumptions on $\nu^X$ such as log-concavity and isoperimetry have natural geometric interpretations that allow efficient optimization of KL divergence. 
This connection was originally discovered in the seminal work of \cite{JKO98}, and has recently been popular for many applications, see e.g.~\cite{wibisono2018sampling} for further discussion.
In contrast, the present work is using Hamiltonian dynamics, which has a conservation property, rather than gradient flow which is dissipative.

We begin the discussion with a key conceptual similarity between using \ref{eq:HamFlow} for sampling and \ref{eq:HamFlow} for optimization.
Recall \cref{Lem:DescentKL} states that the law of the position \(X_{t}\) obtained by running \ref{eq:HamFlow} from \((X_{0}, Y_{0}) \sim \rho_{0}^{XY} = \rho_{0}^{X} \otimes \gamma\) satisfies \(\KL(\rho_{t}^{X} \dvert \nu^{X}) \leq \KL(\rho_{0}^{X} \dvert \nu^{X})\).
Intriguingly, a similar implication of \ref{eq:HamFlow} can be shown for optimization with a different initialization for \(Y_{0}\); more precisely, when initializing from \(Y_{0} = \bm{0}\).
While \Cref{Lem:DescentKL} results from the volume preserving structure of the Hamiltonian flow, the property for optimization results from the conservation of the Hamiltonian \(H(x, y)\) by \ref{eq:HamFlow}, which implies that for any time \(t\):
\begin{equation*}
    f(X_{t}) - f(x^{\star}) + \frac{1}{2}\|Y_{t}\|^{2} = f(X_{0}) - f(x^{\star}) + \frac{1}{2}\|Y_{0}\|^{2}~.
\end{equation*}
Using the non-negativity of the squared norms and the initialization \(Y_{0} = \bm{0}\) leads to the property: \(f(X_{t}) - f(x^{\star}) \leq f(X_{0}) - f(x^{\star})\).
Both of these implications can be viewed as establishing an \emph{unconditional descent}; for sampling, this is descent in the KL divergence of the $X$-marginal, while for optimization, this is descent in the optimality gap of the $X$-iterate.
The implication for optimization was originally discovered in \cite{teel2019first}, and leads to an iterative algorithm for optimization involving: (1) simulating \ref{eq:HamFlow} for a certain amount of integration time, and (2) resetting the velocity to \(\bm{0}\).
Note the sole difference to \HMC is in how the velocity is reset: in \HMC we reset the velocity to be a fresh Gaussian.

Later work by \cite{wang2025frictionlesshamiltoniandescentcoordinate} develop this insight of \cite{teel2019first} and show that for specific choices of integration times, the scheme for optimization described above can yield accelerated convergence for minimizing convex quadratic functions.
The result of \cite{wang2025frictionlesshamiltoniandescentcoordinate} was based on a previous result by \cite{wang2022chebyshev}, where similar integration times led to accelerated convergence of \HMC for sampling from multivariate Gaussian distributions.
However, it remained unclear if algorithms based on \ref{eq:HamFlow} can lead to accelerated algorithms for general differentiable convex objectives.
More recent work by \cite{fu2025hamiltoniandescentalgorithmsoptimization} demonstrated that by choosing randomized integration times from an exponential distribution --- borrowing inspiration from \RHMC \cite{bou2017randomized} --- one can minimize differentiable convex objectives in an accelerated manner, albeit in expectation over the randomness of the integration times.
This leads to the work by \cite{wang26} who develop new insights about \ref{eq:HamFlow}, and propose an accelerated algorithm for differentiable convex objectives by averaging trajectories as shown in the algorithm stated below.

\begin{algorithm}[H]
    \caption{Hamiltonian flow for optimization with averaging (\textsf{HFA-opt})~\cite{wang26} }
    \begin{algorithmic}\label{alg:HFopt}
        \REQUIRE{Initialization: $X_0\in \R^d$; \,  number of iterations: $K \in \mathbb{N}$; \, parameter: $\lambda \geq 0$; \, integration time sequence: $\{T_1, \dots, T_K\}$.}
        \FOR{$k=1,\dots,K$}
            \STATE{Set $X_0^{(k)}=x_{k-1}$, and set $Y_0^{(k)}=\bm{0}$.}
            \STATE{Solve \ref{eq:HamFlow} from \((X_{0}^{(k)}, Y_{0}^{(k)}) = (X_{k - 1}, \bm{0})\) for time \(T_{k}\) to obtain trajectory \(((X_{t}^{(k)}, Y_{t}^{(k)}))_{t \in [0, T_{k}]}\).}
            \STATE{Compute \(\displaystyle X^{\mathrm{avg}}(X_{k - 1}; T_{k}) = \frac{2}{T_{k}^2} \int_0^{T_{k}} (T_{k}-t) X_{t}^{(k)}\dt\)\,.}
            \STATE{Set next iterate $\displaystyle X_{k} = \frac{1}{\lambda+1}X^{\mathrm{avg}}(X_{k - 1}; T_{k}) + \frac{\lambda}{\lambda+1}X_{T_{k}}^{(k)}$\,.}
        \ENDFOR
        \RETURN{$x_K$.}
    \end{algorithmic}
\end{algorithm}

We note that while the outline of the above algorithm is similar to \RHMC (\cref{alg:RHMC}), a key difference is that \RHMC (\cref{alg:RHMC}) chooses the integration time randomly according to \(\Tri_{T}\) or \(\widetilde{\Tri}_T\), whereas in \textsf{HFA-opt} (\cref{alg:HFopt}), the trajectory is averaged according to \(\Tri_{T}\) (when $\lambda=0$) or \(\widetilde{\Tri}_T\) (when $\lambda=1$).
However, the random selection of integration times in \RHMC indeed leads to averaging the distributions in the space of measures. For example, in $\SetTri$ we have:
$$\rho_k^X = \E_{\tau_k \sim \Tri_T}\left[\rho_{\tau_k,k}^X \right] 
    = \int_0^T \frac{2(T-t)}{T^2} \rho_{t, k}^X(x)\dt \,,$$
see \cref{Sec:ConvRHMC_SLCProof} and \cref{Sec:ConvRHMC_LCProof} for details. 
As a result, \cref{alg:RHMC} can be interpreted as the sampling analogue of \cref{alg:HFopt}. 

The connection to \cite{wang26} does not end at the algorithmic level, but also appears in the proof techniques used to obtain guarantees for \cref{alg:RHMC} from this work and \cref{alg:HFopt} from \cite{wang26}.
As discussed in \Cref{sec:evolution_along_HF}, \Cref{Lem:Wass-KL-integratedKey} is the key lemma that paves the road to deriving \Cref{thm:ConvRHMC_SLC,thm:ConvRHMC_LC}.
The statement of \cref{Lem:Wass-KL-integratedKey} results from integrating the differential inequality stated in \cref{Eq:DiffIneqW2}.
A similar differential inequality is derived in \cite[Lemma~2]{wang26}, which states that when \(f\) is a convex function, for any \(X_{0} \in \R^{d}\), the trajectory \((X_{\tau}, Y_{\tau})_{\tau \in [0, t]}\) obtained by solving \ref{eq:HamFlow} with initial conditions \((X_{0}, \bm{0})\) satisfies
\begin{equation*}
    \frac{1}{2} \frac{\d^{2}}{\dt^{2}}\|X_{t} - x^{\star}\|^{2} \leq 2(f(X_{0}) - f(x^{\star})) - 3(f(X_{t}) - f(x^{\star}))~.
\end{equation*}
Integrating the above inequality leads to the optimization version of the statement of \cref{Lem:Wass-KL-integratedKey} stated as \cite[Remark 1]{wang26}. In summary, both \cite{wang26} and this work derive an inequality of the following form
\begin{equation}\label{eq:unified_2/3}
    \frac{1}{2}\mathsf{dist}^{2}(q_{T}, q^{\star}) + 3\int_{0}^{T} (T - t) \mathcal{F}(q_{t})\dt \leq \frac{1}{2}\mathsf{dist}^{2}(q_{0}, q^{\star}) + T^{2}\mathcal{F}(q_{0})~,
\end{equation}
for convex \(f\) and log-concave \(\nu^{X}\), respectively.
The quantities \(\mathcal{F}\), \(\mathsf{dist}(\cdot, \cdot)\), \((q_{t})_{t \in [0, T]}\), \(q^{\star}\) in each result are highlighted below.
\begin{table}[H]
\centering
\renewcommand{\arraystretch}{1.5}
\begin{tabular}{c|ccccc}
& Domain & \(\mathcal{F}\) & \(\mathsf{dist}\) & \((q_{t})_{t \in [0, T]}\) & \(q^{\star}\) \\
\hline
This work {\footnotesize (Lemma 3)} & \(\PtwoacfsRd{}\) & \(\rho \mapsto \KL(\rho \dvert \nu^{X})\) & \(\Wass_{2}\) & \((\rho_{t}^{X})_{t \in [0, T]}\) & \(\nu^{X}\) \\
\cite{wang26} {\footnotesize (Remark 1)} & \(\R^{d}\) & \(x \mapsto f(x) - f(x^{\star})\) & Euclidean & \((X_{t})_{t \in [0, T]}\) & \(x^{\star}\)
\end{tabular}
\end{table}

Both \cite{wang26} and this work rely on \eqref{eq:unified_2/3} to derive convergence guarantee for \cref{alg:HFopt} and \cref{alg:RHMC}, respectively. 
In \cite{wang26} the authors leveraged the convexity of $\mathcal{F}$ when $f$ is convex, and in this work we leveraged the intrinsic convexity of KL divergence (see \cref{eq:kl-convexity}). Lower bounding the left-side of \cref{eq:unified_2/3} using convexity and dividing both sides by $\frac{3}{2}T^2$ gives:
\begin{align}
    % \frac{1}{2}\mathsf{dist}^{2}(q_{T}, q^{\star}) + \frac{3}{2}T^2 \mathcal{F}\left(\int_0^T \frac{2(T-t)}{T^2}q_{t}\dt\right)&\leq \frac{1}{2}\mathsf{dist}^{2}(q_{0}, q^{\star}) + T^{2}\mathcal{F}(q_{0})\nonumber \\
   \frac{1}{3T^2}\mathsf{dist}^{2}(q_{T}, q^{\star})+ \mathcal{F}\left(\int_0^T \frac{2(T-t)}{T^2}q_{t}\dt\right)&\leq \frac{1}{3T^2}\mathsf{dist}^{2}(q_{0}, q^{\star}) + \frac{2}{3}\mathcal{F}(q_{0})~.\label{eq:unified_2/3_averaged}
\end{align}
As previously discussed, the weighted average $\displaystyle \int_0^T \frac{2(T-t)}{T^2}q_{t}\dt$ corresponds to taking weighted time-average over trajectory of Hamiltonian flow in \cref{alg:HFopt} and randomizing the integration time in \cref{alg:RHMC}, respectively. Both \cite{wang26} and this work involve averages \(q^{\mathrm{avg}}(q_{0}; T)\) and \(q^{\mathrm{mix}}(q_{0}; T)\) in the notation of the table above defined as
\begin{align*}
    q^{\mathrm{avg}}(q_{0}; T) &\deq \E_{t \sim \Tri_{T}}[q_{t}]=\int_0^T\frac{2(T-t)}{T^2}q_t\dt~, \\ 
    q^{\mathrm{mix}}(q_{0}; T) &\deq \E_{t \sim \widetilde{\Tri}_{T}}[q_{t}]=\frac{1}{2}q_T+\frac{1}{2}\int_0^T\frac{2(T-t)}{T^2}q_t\dt~~.
\end{align*}
In both settings, \cref{eq:unified_2/3_averaged} yields convergence guarantees that can be succinctly summarized in the following theorems, in the notation of the table above. We first summarize \Cref{thm:ConvRHMC_SLC} (for setting \SetTri) and \cite[Theorem 1]{wang26}.

\begin{theorem*}
Let \(K \in \mathbb{N}\) and \(T > 0\).
If \(\mathcal{F}\) is convex and satisfies the quadratic-growth condition:
\begin{equation*}
    \mathcal{F}(q) \geq \frac{\alpha}{2} \cdot \mathsf{dist}(q, q^{\star})^{2} \qquad \forall ~ q \in \mathrm{dom}(\mathcal{F})~,
\end{equation*}
then
\begin{equation*}
    \mathcal{F}(q^{(K)}) \leq \left(\frac{2}{3} + \frac{2}{3\alpha T^{2}} \right)^{K} \mathcal{F}(q_{0})
\end{equation*}
where \(q^{(k)} = q^{\mathrm{avg}}(q^{(k - 1)}; T)\) for \(k \geq 1\) and \(q^{(0)} = q_{0}\).
\end{theorem*}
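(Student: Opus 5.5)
The plan is a one-step contraction followed by iteration. The argument is the abstract version of the proof of \Cref{thm:ConvRHMC_SLC} in \SetTri with $M=0$, written in the unified notation of \eqref{eq:unified_2/3}.

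I take as given that, for each starting point $q^{(k-1)}$, the trajectory $(q_t)_{t\in[0,T]}$ started at $q_0 = q^{(k-1)}$ satisfies the integrated inequality \eqref{eq:unified_2/3}. This is \Cref{Lem:Wass-KL-integratedKey} with $M=0$ in the sampling case, and \cite[Remark 1]{wang26} in the optimization case. Convexity of $\mathcal{F}$ is also used, along the linear interpolation for which $q^{\mathrm{avg}}$ is a weighted average. This is Jensen for $f$ in the optimization case, and convexity of KL under mixtures in the sampling case.

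\textbf{Step 1 (reduce to one iteration).} Since $\frac{2(T-t)}{T^2}$ is a probability density on $[0,T]$, convexity gives
\[
\frac{3T^2}{2}\,\mathcal{F}\big(q^{\mathrm{avg}}(q_0;T)\big) \le 3\int_0^T (T-t)\,\mathcal{F}(q_t)\dt .
\]
Substituting this into \eqref{eq:unified_2/3} yields \eqref{eq:unified_2/3_averaged}.

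\textbf{Step 2 (absorb the distance terms).} On the left of \eqref{eq:unified_2/3_averaged}, I drop the nonnegative term $\frac{1}{3T^2}\mathsf{dist}^2(q_T,q^\star)$. On the right, the quadratic-growth hypothesis gives $\mathsf{dist}^2(q_0,q^\star)\le \frac{2}{\alpha}\mathcal{F}(q_0)$. Together these give
\[
\mathcal{F}\big(q^{\mathrm{avg}}(q_0;T)\big)\le \Big(\frac{2}{3}+\frac{2}{3\alpha T^2}\Big)\mathcal{F}(q_0).
\]
Applying this with $q_0=q^{(k-1)}$ and iterating $K$ times proves the theorem.

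\textbf{Main obstacle: admissibility of the iterates.} Each iterate must stay in $\mathrm{dom}(\mathcal{F})$ so that \eqref{eq:unified_2/3} applies again. In the sampling case this means $q^{(k)}\in\PtwoacfsRd$ with finite KL. I handle it exactly as in the proof of \Cref{thm:ConvRHMC_SLC}: \Cref{lem:second-moment-finiteness} gives finite second moments along the flow, \Cref{Lem:DescentKL} gives finite KL along the flow, and convexity then carries both properties to the average.

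In the optimization case the domain is all of $\R^d$, so there is nothing to check there.
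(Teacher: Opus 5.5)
Your proposal is correct and follows the paper's own route: \eqref{eq:unified_2/3} (\Cref{Lem:Wass-KL-integratedKey} with $M=0$, or \cite[Remark 1]{wang26}) combined with mixture convexity gives \eqref{eq:unified_2/3_averaged}; then the final distance term is dropped, the initial one is bounded by quadratic growth (Talagrand), and the one-step bound is iterated, exactly as in the proof of \Cref{thm:ConvRHMC_SLC} in \SetTri. Your admissibility check via \Cref{lem:second-moment-finiteness} and \Cref{Lem:DescentKL} is the same induction the paper uses.
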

We remark that the quadratic growth condition above is referred to by the same name for \(\mathcal{F} \leftarrow f(\cdot) - f(x^{\star})\) and \(\mathsf{dist}\) being the Euclidean distance.
On the other hand, when \(\mathcal{F} \leftarrow \KL(\cdot \dvert \nu^{X})\) and \(\mathsf{dist}\) is the Wasserstein distance, this is equivalent to \(\nu^{X}\) satisfying the \(\alpha\)-Talagrand inequality.

Similarly, we can summarize the results of \cref{thm:ConvRHMC_LC} and \cite[Theorem 2]{wang26} as follows.
\begin{theorem*}
Let \(K \in \mathbb{N}\) and \(\{T_{k}\}_{k \geq 0}\) be a sequence such that \(T_{k} \geq \frac{3}{2\sqrt{2}}T_{k - 1}\) with \(T_{0} = 1\).
If \(\mathcal{F}\) is convex, then
\begin{equation*}
    \mathcal{F}(q^{(K)}) \leq \left(\frac{8}{9}\right)^{K}\left(\mathcal{F}(q_{0}) + \frac{1}{3}\mathsf{dist}(q_{0}, q^{\star})^{2}\right)
\end{equation*}
where \(q^{(k)} = q^{\mathrm{mix}}(q^{(k - 1)}; T_{k})\) for \(k \geq 1\) and \(q^{(0)} = q_{0}\).
\end{theorem*}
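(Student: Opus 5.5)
The plan is to obtain this abstract theorem from the unified inequality \eqref{eq:unified_2/3_averaged}, run one iteration at a time, and then telescope. This mirrors the proof of \Cref{thm:ConvRHMC_LC}, with $\mathcal{F}$, $\mathsf{dist}$ and $q^\star$ in place of KL, $\Wass_2$ and $\nu^X$. Throughout I assume, as in the table, that \eqref{eq:unified_2/3} holds along the trajectory $(q_t)$ started from any $q_0$, for every horizon $T\ge 0$. I also assume $\mathcal{F}$ is convex along the mixtures used, $\mathsf{dist}^2(\cdot,q^\star)$ is convex under these mixtures, and $\mathcal{F}(q_t)\le\mathcal{F}(q_0)$ (the unconditional descent property, the analogue of \Cref{Lem:DescentKL}).

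Fix iteration $k$ and write $q_0=q^{(k-1)}$, $T=T_k$. I would derive three inequalities.
\begin{enumerate}
\item Apply \eqref{eq:unified_2/3_averaged} at horizon $T$:
\[
\mathcal{F}(q^{\mathrm{avg}})+\tfrac{1}{3T^2}\mathsf{dist}^2(q_T,q^\star)\le \tfrac23\mathcal{F}(q_0)+\tfrac{1}{3T^2}\mathsf{dist}^2(q_0,q^\star).
\]
\item Use convexity of $\mathsf{dist}^2(\cdot,q^\star)$ and \eqref{eq:unified_2/3} at each horizon $\tau\in[0,T]$, dropping the nonnegative integral term, and average over $\tau\sim\Tri_T$ using $\E[\tau^2]=T^2/6$. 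This gives
\[
\tfrac{1}{3T^2}\mathsf{dist}^2(q^{\mathrm{avg}},q^\star)\le \tfrac{1}{3T^2}\mathsf{dist}^2(q_0,q^\star)+\tfrac19\mathcal{F}(q_0).
\]
\item Descent gives $\mathcal{F}(q_T)\le\mathcal{F}(q_0)$.
\end{enumerate}
Summing the three and applying convexity of both $\mathcal{F}$ and $\mathsf{dist}^2$ to $q^{\mathrm{mix}}=\tfrac12 q^{\mathrm{avg}}+\tfrac12 q_T$ yields
\[
\Phi_k:=\mathcal{F}(q^{(k)})+\tfrac{1}{3T_k^2}\mathsf{dist}^2(q^{(k)},q^\star)\le \tfrac89\mathcal{F}(q^{(k-1)})+\tfrac{1}{3T_k^2}\mathsf{dist}^2(q^{(k-1)},q^\star).
\]

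The growth condition $T_k^2\ge \tfrac98T_{k-1}^2$ gives $\tfrac{1}{3T_k^2}\le\tfrac89\cdot\tfrac{1}{3T_{k-1}^2}$, so the last display becomes $\Phi_k\le\tfrac89\Phi_{k-1}$. Telescoping with $T_0=1$ then gives
\[
\Phi_K\le (8/9)^K\Bigl(\mathcal{F}(q_0)+\tfrac13\mathsf{dist}^2(q_0,q^\star)\Bigr),
\]
and dropping the nonnegative distance term in $\Phi_K$ yields the claim.

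The main obstacle is the second inequality. It requires applying \eqref{eq:unified_2/3} at every intermediate horizon $\tau$ along the same trajectory, and it requires that averaging a squared distance over the mixture not increase it. In the Euclidean optimization case this is Jensen applied to $\|\cdot-x^\star\|^2$. In the Wasserstein case it is joint convexity of $\Wass_2^2(\cdot,\nu)$ under mixtures, which follows by mixing optimal couplings. I would verify these two ingredients first, and check the constants $\tfrac19=\tfrac{2}{3T^2}\cdot\tfrac{T^2}{6}$ and $1+\tfrac23+\tfrac19=\tfrac{16}{9}$; halving $\tfrac{16}{9}$ gives the $\tfrac89$.
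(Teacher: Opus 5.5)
Your proposal is correct and matches the paper's proof of \Cref{thm:ConvRHMC_LC}, which this abstract theorem summarizes, step for step. You use the same three inequalities: the averaged bound at horizon $T_k$; convexity of $\mathsf{dist}^2$ combined with \eqref{eq:unified_2/3} at each $\tau$ and $\E[\tau^2]=T_k^2/6$; and unconditional descent. You then sum them to get $\tfrac{16}{9}$, halve via convexity at $q^{\mathrm{mix}}$, and telescope using $T_{k-1}^2/T_k^2\le\tfrac89$. The hypotheses you make explicit (descent, convexity of $\mathsf{dist}^2$ under mixtures, and $\mathcal{F}\ge 0$ to drop the integral term) are exactly what the paper relies on through \Cref{Lem:DescentKL}, convexity of $\Wass_2^2$, and nonnegativity of KL.
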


In summary, the analysis and results we present in this paper for \RHMC follow by translating the continuous-time analysis and results from~\cite{wang26} to the space of probability measures equipped with the Wasserstein distance.
The work~\cite{wang26} was also able to show accelerated rates in discrete time for optimization, and it would be interesting to study how to translate their analysis to a time-discretization of \RHMC, which we leave for future work.

%%%%%%%%%
\section{Additional preliminaries}\label{app:preliminaries}

%%%%%%
\subsection{General notations}\label{subapp:notations}

We work on the Euclidean state space $\R^d$ of dimension $d \in \bbN$, or the phase space $\R^{2d}$.
Let $[d] \deq \{1,\dots,d\}$.
For vectors \(u, v \in \mathbb{R}^{d}\) with $u = (u_1,\dots,u_d)^\top$ and $v = (v_1,\dots,v_d)^\top$, we denote their $\ell_2$-inner product by \(\langle u, v\rangle = u^{\top}v = \sum_{i=1}^d u_i v_i\).
We denote the $\ell_2$-norm of \(u\) by \(\|u\| = \sqrt{\langle u, u\rangle} = \sqrt{\sum_{i=1}^d u_i^2}\). 

Let $\I_{d} \in \mathbb{R}^{d \times d}$ denote the identity matrix. 
We say a matrix \(A \in \R^{d \times d}\) is \textit{positive semi-definite}, denoted by $A \succeq 0$, if $A$ is symmetric and  $u^\top A u \ge 0$ for all $u \in \R^d$.
For a symmetric matrix $A \in \mathbb{R}^{d \times d}$ with entries $A = (A_{ij})_{i,j=1}^d$ and eigenvalues $\lambda_1,\dots,\lambda_d \in \R$, the trace of $A$ is $\Tr(A) = \sum_{i=1}^d A_{ii} = \sum_{i=1}^d \lambda_i$, and the determinant of $A$ is $\det(A) = \prod_{i=1}^d \lambda_i$. 
For a matrix $A\in \R^{d\times d}$, the operator norm of \(A\) is $\|A\|_{\op} = \sup_{\|u\|=1}\|Au\|$. In particular, if $A$ is positive semi-definite with eigenvalues $\lambda_1,\dots,\lambda_d \ge 0$, then $\|A\|_\op = \max\{\lambda_i \colon i \in [d]\}$. For \(A, B \in \R^{d \times d}\), we write \(A \succeq B\) to denote $A - B \succeq 0$, and denote their Frobenius inner product by $\langle A, B\rangle_{\mathsf{F}}\deq \Tr(A^\top B)$.

For a twice-differentiable function $f \colon \mathbb{R}^d \to \mathbb{R}$,
we denote the gradient and Hessian map as \(\nabla f \colon \mathbb{R}^{d} \to \mathbb{R}^{d}\) and \(\nabla^{2}f \colon \mathbb{R}^{d} \to \mathbb{R}^{d \times d}\) respectively.
The Laplacian of \(f\) is \(\Delta f(x) = \Tr(\nabla^{2}f(x)) \in \R\).
For a differentiable vector field \(v \colon \mathbb{R}^{d} \to \mathbb{R}^{d}\), the Jacobian of \(v\) at \(x\) is denoted by \(\nabla v(x) \in \R^{d \times d}\), and the divergence is defined as \((\nabla \cdot v)(x) = \Tr(\nabla v(x)) \in \R\).
For a map \((x, y) \mapsto f(x, y)\), we use \(\nabla_{x}f(x, y)\) and \(\nabla_{y}f(x, y)\) to denote the partial derivative with respect to \(x\) and \(y\) respectively while keeping the other fixed.
For a time-dependent vector field $v_t \colon \R^d \to \R^d$, let $\partial_t  {v}_t(x)$ denote the time derivative vector at a fixed $x \in \mathbb{R}^d$: $(\partial_t v_t(x))_i=\frac{\partial (v_t(x))_i}{\partial t}$. 
We also write $\partial_t {v}_t(x) = \dot{v}_t(x)$.

For $r \in \bbN$, we use \(C^r\) to denote functions with continuous derivatives up to order \(r\), and \(C^\infty\) to denote smooth (infinitely-differentiable) functions. 
The subscript \(c\) means compact support; for example, \(C_c^\infty\) is the class of smooth and compactly supported functions. The subscript \(b\) means boundedness, for example, $C_b^1$ is the class of continuously differentiable and uniformly bounded functions.

%%%%%%%%%%%%
\subsection{Further discussion of probability distributions and statistical distances}\label{subapp:statisticaldistances}

We review additional definitions and facts related to probability distributions that are not introduced in the main text.

Let $\P_2(\R^d)$ denote the space of probability distributions on $\R^d$ with finite second moment, so $\E_{\rho}[\|X\|^2] < \infty$ for all $\rho \in \P_2(\R^d)$.
For $\rho \in \P_2(\R^d)$, let $\Cov_\rho(X) = \E_\rho[(X-\mu)(X-\mu)^\top] \in \R^{d \times d}$ denote its covariance matrix, where $\mu = \E_\rho[X] \in \R^d$ is its mean vector.
The variance of $\rho$ is $\Var_\rho(X) = \Tr(\Cov_\rho(X)) = \E_\rho[\|X-\mu\|^2] \in \R$, and note $\Var_\rho(X) \le \E_\rho[\|X\|^2] < \infty$.

For a measurable map \(S:\R^d\to\R^d\) and a probability distribution \(\rho\), the pushforward distribution \(S_\#\rho\) is defined by \[ (S_\#\rho)(A)=\rho(S^{-1}(A)) \] for any measurable set \(A\subseteq\R^d\). If \(S\) is a diffeomorphism, i.e., both \(S\) and \(S^{-1}\) are continuously differentiable and $\rho$ has a density, then by the change-of-variable formula, the density of the pushforward distribution $S_\# \rho$ is given by $(S_{\#}\rho)(x)=\rho(S^{-1}(x))\left|\det \nabla S^{-1}(x)\right|.$ 

\paragraph{Entropy.}
For $\rho\in\P_{2,\ac}(\R^d)$, the \emph{(differential) entropy} of $\rho$ is defined as the following, with the convention that $0 \log 0 \deq 0$:
$$\Ent(\rho) \deq -\E_{\rho}[\log \rho] = -\int_{\R^d}\rho(x)\log \rho(x) \dx \,.$$

We recall the property of the Gaussian distribution as the maximum entropy distribution for a given covariance matrix~\cite[Theorem 8.6.5]{cover2006elements}. For any $\rho \in \P_{2,\ac}(\R^d)$, we have 
$$\Ent(\rho) \le \Ent\left( \N(0, \Cov_\rho(X)) \right) = \frac{d}{2} \log (2\pi e) + \frac{1}{2} \log \det \Cov_\rho(X) \le \frac{d}{2} \log \left(\frac{2\pi e \Var_\rho(X)}{d}\right) \,.
$$
In particular, if $\rho \in \P_{2,\ac}(\R^d)$, then $\Ent(\rho) < \infty$. 

\paragraph{KL divergence.}
For $\rho,\nu\in\P_{2,\ac,\fs}(\R^d)$ with $\rho\ll\nu$, recall the \emph{Kullback--Leibler (KL) divergence} or \emph{relative entropy} of $\rho$ with respect to $\nu$ is defined as
$$\KL(\rho \dvert \nu) \deq \E_{\rho}\left[\log\frac{\rho}{\nu}\right] = \int_{\R^d}\rho(x)\log\frac{\rho(x)}{\nu(x)}\dx \,,$$
and we define $\KL(\rho \dvert \nu) \deq \infty$ if $\rho \not\ll \nu$.
We recall the following properties of KL divergence that we use in this work.

First, we recall the \textit{chain rule} for KL divergence, see \cite[Theorem 2.15]{polyanskiywu2024information} for a review. 
Given joint probability distributions $\rho^{XY}, \nu^{XY} \in \P_{2,\ac,\fs}(\R^{2d})$, let $\rho^X$ and $\nu^X$ denote their $X$-marginals, and $\rho^{Y \mid X = x}$ and $\nu^{Y \mid X=x}$ denote their conditional distributions of $Y$ given $X = x$, so we can factorize $\rho^{XY}(x,y) = \rho^X(x) \cdot \rho^{Y \mid X=x}(y)$ and $\nu^{XY}(x,y) = \nu^X(x) \cdot \nu^{Y \mid X=x}(y)$.
Then we have the following decomposition (chain rule):
\begin{equation}
    \KL\left(\rho^{XY} \dvert \nu^{XY} \right) = \KL\left(\rho^X \dvert \nu^X \right) + \int_{\R^d} \KL\left(\rho^{Y \mid X=x} \, \|\, \nu^{Y \mid X=x} \right)\rho^X(x)\dx \,.\label{eq:kl-chain}
\end{equation}

Second, we recall the KL divergence is jointly convex in both argument, 
see \cite[Section 5.1]{polyanskiywu2024information} for a review.
In particular, it implies the following.
Given $\rho^{XY} \in \P_{2,\ac,\fs}(\R^{2d})$ with a factorization $\rho^{XY}(x,y) = \rho^X(x) \cdot \rho^{Y \mid X=x}(y)$ as above, with $Y$-marginal $\rho^Y(y) = \int_{\R^d} \rho^{XY}(x,y) \, \dx = \int_{\R^d} \rho^{Y \mid X=x}(y) \rho^X(x) \dx = \E_{x \sim \rho^X}[\rho^{Y \mid X=x}(y)]$,
and for any $\nu \in \P_{2,\ac,\fs}(\R^d)$, by Jensen's inequality we have:
\begin{equation}
    \KL\left(\rho^Y \dvert \nu\right)
\le \E_{x \sim \rho^X}\left[\KL\left(\rho^{Y \mid X=x} \dvert \nu \right) \right] \,.\label{eq:kl-convexity}
\end{equation}

Third, we recall the KL divergence is lower semicontinuous in weak convergence~\cite[Theorem 4.9]{polyanskiywu2024information}. We will review the definition of weak convergence in \cref{subapp:convergenceofdistributions}.
In particular, let \((P_n)_{n\in\bbN}\) and \((Q_n)_{n\in\bbN}\) be sequences of probability measures such that
\(\KL(P_n \dvert Q_n)<\infty\) for every \(n\). If \(P_n\) and \(Q_n\) converge weakly to probability measures \(P\) and \(Q\), respectively, then
\begin{equation}
    \KL(P \dvert Q) \le \liminf_{n\to\infty} \KL(P_n \dvert Q_n) \,.\label{eq:kl-lsc}
\end{equation}

%%%%%%%%%%%%%%%
\paragraph{Relative Fisher information.}
For probability distributions $\rho, \nu \in \P_{2,\ac,\fs}(\R^d)$ with $\rho \ll \nu$ and differentiable density functions, we recall the \emph{relative Fisher information} of $\rho$ with respect to $\nu$ is
$$
\FI(\rho \dvert \nu) \deq \E_{\rho}\left[\left\|\nabla \log \frac{\rho}{\nu}\right\|^2\right] = \int_{\R^d}\rho(x)\left\|\nabla \log \frac{\rho(x)}{\nu(x)}\right\|^2\dx \,.
$$

\paragraph{Wasserstein-2 distance.}
Recall the \emph{Wasserstein-2 distance} between probability distributions $\rho,\nu\in\P_{2}(\R^d)$ is defined by:
$$
\Wass_2(\rho,\nu) =\inf_{\omega\in\Pi(\rho,\nu)} \mathbb E\left[\|X-Y\|^2\right]^{1/2} \,,
$$
where the infimum is taken over all couplings between $\rho$ and $\nu$, i.e., joint distributions of $(X,Y)\sim\omega$ with the correct marginal distributions $X\sim\rho$ and $Y\sim\nu$.

If $\rho\in\P_{2,\ac}(\R^d)$, then Brenier's theorem guarantees the existence of a unique optimal transport map $T\colon\R^d\to\R^d$ pushing $\rho$ forward to $\nu$, i.e., $T_\# \rho = \nu$, so that $\Wass_2^2(\rho,\nu) = \int_{\R^d}\|x-T(x)\|^2\,\rho(x) \, \dx$; 
moreover, for $\rho$-a.e.\ $x$, $T(x)=\nabla\varphi(x)$ for some convex function $\varphi\colon\R^d\to\R$. 
If $\rho, \nu \in\P_{2,\ac}(\R^d)$, then they satisfy the change-of-variable formula (Monge--Ampere equation) for $\rho$-a.e.\ $x$~\cite[Example 11.2]{villani2009optimal}
$$\rho(x)=\nu(T(x))\det(\nabla T(x)) \,.$$

\paragraph{Total variation distance.} The \emph{total variation distance} between probability distributions $\rho, \nu \in \mathcal{P}_{2,\ac,\fs}(\R^d)$ is defined by:
$$
\TV(\rho, \nu) = \sup_{A \subseteq \R^d} |\rho(A) - \nu(A)| = \frac{1}{2}\int_{\R^d} |\rho(x)-\nu(x)| \dx \,.
$$

\subsection{Convergence of probability distributions}\label{subapp:convergenceofdistributions}
Throughout this subsection, let \((\rho_n)_{n\in\bbN}\subset \mathcal P_{2, \ac}(\R^d)\) and \(\rho\in \mathcal P_{2, \ac}(\R^d)\). 

We say that \(\rho_n\) \emph{converges weakly} to \(\rho\), if for every bounded continuous function \(\varphi \colon \R^d\to\R\),
\begin{equation}\label{eq:weakconvergence}
\lim_{n \to \infty} \int_{\R^d} \varphi(x)\rho_n(x) \dx = \int_{\R^d} \varphi(x)\rho(x) \dx \,.
\end{equation}

We say that $\rho_n$ \emph{converges in total variation} to $\rho$ if
\begin{equation}\label{eq:tvconvergence}
\lim_{n \to \infty} \TV(\rho_n, \rho) = 0 \,.
\end{equation}

We say that $\rho_n$ \emph{converges in Wasserstein-2 distance} to $\rho$ if
\begin{equation}\label{eq:Wassconvergence}
\lim_{n \to \infty} \Wass_2(\rho_n, \rho) = 0 \,.
\end{equation}

We recall that $\rho_n$ converges in total variation to $\rho$ implies $\rho_n$ convergence weakly to $\rho$.

We recall from \cite[Theorem~6.9]{villani2009optimal} that $\rho_n$ converges in Wasserstein-2 distance to $\rho$ if and only if $\rho_n$ converges to $\rho$ weakly and in second moment: 
$$\int_{\R^d}\|x\|^2\,\rho_n(x)\dx \to \int_{\R^d}\|x\|^2\,\rho(x)\dx \,.
$$

%%%%%%%%%%
\section{Details on properties of the Hamiltonian flow}
\label{apdx:HF_review}

In this appendix we review the properties of the Hamiltonian flow. Recall from \cref{Eq:HamDef} the Hamiltonian of interest:
$$H(x,y)=f(x)+\frac12\|y\|^2 \,,$$ 
for $(x,y)\in\R^d\times\R^d$. We refer to \(\R^{2d}=\R^d\times\R^d\) as the phase space, and write \(z=(x,y)\), \(z(0)=(X_0,Y_0)\), and \(z(t)=(X_t,Y_t)\) interchangeably. 
We introduce the skew-symmetric symplectic matrix 
$$\Omega \deq 
    \begin{bmatrix}
        \bm{0} & I\\
        -I & \bm{0}
    \end{bmatrix}.$$
Recall from \eqref{eq:HamFlow} that the Hamiltonian dynamics can be written equivalently as
\begin{align*}
    \dot X_t &= Y_t,\\
    \dot Y_t &= -\nabla f(X_t),
\end{align*}
or, in phase-space form,
\[
    \dot z(t)=\Omega\nabla H(z(t)).
\]
It is convenient to denote solutions of \eqref{eq:HamFlow} by the corresponding flow map, and we use this notation throughout the appendix.

%%%%%%%
\begin{definition}
\label{defn:Ham_flow_map}
Whenever \eqref{eq:HamFlow} admits a continuously differentiable solution for all
\(t\in\R\) and \((X_0,Y_0)\in\R^{2d}\), we denote the corresponding Hamiltonian flow map by
$$\Psi\colon\R\times\R^{2d}\to\R^{2d} \,,
\qquad(t,X_0,Y_0)\mapsto \Psi(t,X_0,Y_0) \equiv \Psi_t(X_0,Y_0) \,.
$$
That is, \(\Psi_t(X_0,Y_0)=(X_t,Y_t)\), where \((X_t,Y_t)\) is the solution of
\eqref{eq:HamFlow} at time \(t\) with initial condition \((X_0,Y_0)\). 
We also define the marginal variables
$$\Psi_t^X(X_0,Y_0)=\Pi_X(\Psi_t(X_0,Y_0)) \,,\qquad\Psi_t^Y(X_0,Y_0)=\Pi_Y(\Psi_t(X_0,Y_0)) \,,
$$
where \(\Pi_X \colon \R^{2d} \to \R^d\) and \(\Pi_Y  \colon \R^{2d} \to \R^d \) denote projections onto the first and second variables, respectively: $\Pi_X(x,y) = x$ and $\Pi_Y(x,y) = y$.
\end{definition}

\begin{proposition}\label{prop:Hamiltonian_smoothness}
Let \(f\) be a \(L\)-smooth function.
Then, the vector field \(b(x, y) \deq (y, -\nabla f(x))\) is \((1 + L)\)-Lipschitz.
Additionally, by the Picard--Lindel\"{o}f theorem, there exists a unique solution at any time \(t\) for \ref{eq:HamFlow} from any initial condition.
\end{proposition}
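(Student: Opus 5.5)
The statement has two parts: a Lipschitz bound on the vector field $b(x,y) = (y, -\nabla f(x))$, and global existence and uniqueness via Picard--Lindel\"of. I would prove them in that order, since the second follows from the first.

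\textbf{Lipschitz bound.} Fix $z=(x,y)$ and $z'=(x',y')$ in $\R^{2d}$. Then
\[
\|b(z)-b(z')\| \le \|y-y'\| + \|\nabla f(x)-\nabla f(x')\| .
\]
Since $f$ is $C^2$ with $L\cdot \I_d \succeq \nabla^2 f \succeq -L\cdot \I_d$, we have $\|\nabla^2 f(\xi)\|_{\op}\le L$ at every point $\xi$. Writing
\[
\nabla f(x)-\nabla f(x') = \int_0^1 \nabla^2 f\bigl(x'+s(x-x')\bigr)(x-x')\ds
\]
gives $\|\nabla f(x)-\nabla f(x')\|\le L\|x-x'\|$. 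Both $\|x-x'\|$ and $\|y-y'\|$ are at most $\|z-z'\|$, so
\[
\|b(z)-b(z')\| \le (1+L)\|z-z'\| .
\]
This is a crude estimate; the sharper constant $\max\{1,L\}$ is also available but is not needed.

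\textbf{Global existence and uniqueness.} The system \eqref{eq:HamFlow} is the autonomous ODE $\dot z = b(z)$, and $b$ is globally Lipschitz with constant $1+L$.
\begin{itemize}
\item Local step: Picard--Lindel\"of gives a unique local solution from any initial condition. The interval of existence depends only on the Lipschitz constant, since $b$ is globally Lipschitz and so the contraction argument can be run on a uniform time step $h<1/(1+L)$ from any starting point.
\item Extension: restarting the flow at time $h$ and concatenating the pieces extends the solution to all $t\ge 0$, and likewise backward in time. Uniqueness of each piece makes the concatenated solution unique.
\item Alternative: one can instead note that Gr\"onwall gives $\|z(t)\|\le (\|z(0)\| + \|b(0)\|\,|t|)e^{(1+L)|t|}$. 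This rules out finite-time blow-up, so the maximal solution is global.
\end{itemize}

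The only step needing any care is the passage from the Hessian bound to the gradient Lipschitz bound, which is the standard mean-value integral above. The rest is routine ODE theory.
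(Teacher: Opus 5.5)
Your proof is correct and is the paper's argument: the paper bounds $\|b(z)-b(\tilde z)\| \le \|y-\tilde y\| + L\|x-\tilde x\| \le (1+L)\|z-\tilde z\|$ using $L$-Lipschitzness of $\nabla f$, then simply invokes Picard--Lindel\"of. You also derive the gradient Lipschitz bound from the Hessian bound and spell out the global extension (uniform time step or Gr\"onwall), which fills in details the paper leaves implicit.
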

\begin{proof}
    We show that $b(x, y)$ is $(1+L)$-Lipschitz. For any $z=(x,y)$ and $\tilde z=(\tilde x,\tilde y)$ we calculate:
    \begin{align}\label{eq:1+L_Lip}
    \|b(z)-b(\tilde z)\|
    =
    \|(y-\tilde y,-\nabla f(x)+\nabla f(\tilde x))\| 
    \leq \|y-\tilde y\|+L\|x-\tilde x\| 
    \leq (L+1)\|z-\tilde z\| \,.
    \end{align}
\end{proof}

As a consequence of the above proposition, the Hamiltonian flow satisfies the flow property
$$
\Psi_0=\operatorname{Id} \,,\qquad
\Psi_{t+s}=\Psi_t\circ\Psi_s \,,\qquad
\Psi_t^{-1}=\Psi_{-t} \,.
$$

%%%%%%%%%%%%%%%%%%

\paragraph{Roadmap.}
We briefly describe the organization of this appendix. In \cref{subapp:HF-deterministic-properties}, we review the properties of the trajectory of the Hamiltonian flow \((X_t,Y_t)\) in the phase space. 
In \cref{subapp:HF-distributional-properties}, we randomize the initial condition by taking \((X_0,Y_0)\sim \rho_0^{XY}\), let \((X_t,Y_t)\sim \rho_t^{XY}\) evolve according to the Hamiltonian flow, and review the resulting distributional properties of \(\rho_t^{XY}\).  Finally, in \cref{Sec:MomentsHamFlow}, we derive the differential equations governing the evolution of moments of $\rho_t^{XY}$ along the Hamiltonian flow.

%%%%%%%%%%%%%%
\subsection{Deterministic properties of the Hamiltonian flow}\label{subapp:HF-deterministic-properties}

%%%%%%%%%%%%%
\subsubsection{Conservation of the Hamiltonian function}
\label{Sec:energy_conservationProof} 

\begin{lemma}
\label{lem:energy_conservation} 
Let $(X_t, Y_t)_{t\geq 0}$ evolve along \eqref{eq:HamFlow} with initial condition $(X_0, Y_0)$. For any \(t\in\R\) and \((X_0, Y_0)\in\R^{2d}\), 
$$H(X_t, Y_t)=H(X_0, Y_0) \,.$$
\end{lemma}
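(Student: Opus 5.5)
The plan is to differentiate the Hamiltonian along the trajectory and show the derivative vanishes identically. By \cref{prop:Hamiltonian_smoothness}, for any initial condition $(X_0,Y_0)\in\R^{2d}$ there is a unique solution $(X_t,Y_t)$ defined for all $t\in\R$, and it is continuously differentiable in $t$. Since $f$ is $C^2$ (in particular $C^1$) by the standing assumption, the map $t\mapsto H(X_t,Y_t)=f(X_t)+\frac12\|Y_t\|^2$ is a composition of $C^1$ maps, hence $C^1$ on $\R$.

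Next, I will compute its derivative by the chain rule:
\[
\frac{\d}{\dt}H(X_t,Y_t)=\langle \nabla_x H(X_t,Y_t),\dot X_t\rangle+\langle \nabla_y H(X_t,Y_t),\dot Y_t\rangle=\langle\nabla f(X_t),Y_t\rangle+\langle Y_t,-\nabla f(X_t)\rangle=0 .
\]
Here I substitute \eqref{eq:HamFlow}. Equivalently, in phase-space form, $\frac{\d}{\dt}H(z(t))=\langle\nabla H(z(t)),\Omega\nabla H(z(t))\rangle=0$ by skew-symmetry of $\Omega$.

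Since the derivative vanishes on all of $\R$, the mean value theorem gives that $H(X_t,Y_t)$ is constant in $t$, so $H(X_t,Y_t)=H(X_0,Y_0)$ for every $t\in\R$, including negative times. There is no real obstacle here. The only point needing care is justifying global existence and differentiability of the trajectory so that the chain rule applies for all $t$, and this is exactly what \cref{prop:Hamiltonian_smoothness} supplies.
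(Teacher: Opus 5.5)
Your proposal is correct and matches the paper's proof: both differentiate $t \mapsto H(X_t,Y_t)$ by the chain rule and obtain $\nabla H^\top \Omega \nabla H = 0$ from the skew-symmetry of $\Omega$. You also spell out the component form and the appeal to \cref{prop:Hamiltonian_smoothness} for global existence and differentiability; the paper leaves both implicit.
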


\begin{proof}
Let \(z(t)=\Psi_t(z_0)\). By the chain rule,
\[\frac{\d}{\dt}H(z(t))=\nabla H(z(t))^\top \dot z(t)=\nabla H(z(t))^\top \Omega\nabla H(z(t))=0 \,,\]
where the last equality follows from the anti-symmetry of $\Omega$. 
\end{proof}

%%%%%%%%%%%%%
\subsubsection{Conservation of volume}
\label{Sec:volume_conservationProof}

Recall $\Psi_t \colon \R^{2d} \to \R^{2d}$ is the Hamiltonian flow map that sends the initial state $(X_0, Y_0) \in \R^{2d}$ at time $0$ to the solution $(X_t, Y_t) \in \R^{2d}$ of the Hamiltonian flow~\eqref{eq:HamFlow} at time $t \in \R$.

\begin{lemma}
\label{lem:volume_conservation}
For any \(t \in \R\) and \((x,y) \in \R^{2d}\), we have 
$$\det \left(\nabla\Psi_t(x,y)\right)=1 \,.$$ 
Consequently, for any measurable set \(A\subseteq\R^{2d}\), $\operatorname{vol}(\Psi_t(A))=\operatorname{vol}(A)$.
\end{lemma}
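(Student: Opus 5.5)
The plan is the classical Liouville argument: differentiate the Jacobian determinant in time and show that its derivative equals the divergence of the Hamiltonian vector field, which vanishes. Write $b(z) = \Omega \nabla H(z) = (y, -\nabla f(x))$, so that $\Psi_t$ solves $\partial_t \Psi_t(z) = b(\Psi_t(z))$ with $\Psi_0 = \mathrm{Id}$.

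Since $f$ is $C^2$ and $L$-smooth, $b$ is $C^1$ and globally Lipschitz by \cref{prop:Hamiltonian_smoothness}. Standard ODE theory on differentiable dependence on initial conditions then gives that $\Psi_t$ is $C^1$ in $z$. It also gives that the Jacobian $J_t(z) \deq \nabla \Psi_t(z)$ solves the variational equation
$$\partial_t J_t(z) = \nabla b(\Psi_t(z)) \, J_t(z) \,, \qquad J_0(z) = \I_{2d} \,.$$

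Next, apply Jacobi's formula to $\det J_t$, which gives
$$\partial_t \det J_t = \det J_t \cdot \Tr\bigl(\nabla b(\Psi_t(z))\bigr) = \det J_t \cdot (\nabla\cdot b)(\Psi_t(z)) \,.$$
Here $\nabla b(x,y) = \begin{bmatrix} \bm{0} & \I_d \\ -\nabla^2 f(x) & \bm{0}\end{bmatrix}$. Its diagonal blocks vanish, so $\nabla\cdot b \equiv 0$; equivalently, $\Tr(\Omega \nabla^2 H) = 0$ by the skew-symmetry of $\Omega$ together with the symmetry of $\nabla^2 H$.

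The determinant therefore satisfies the scalar linear ODE $\partial_t \det J_t = 0$ with $\det J_0 = 1$, which yields $\det J_t \equiv 1$ for all $t \in \R$. Negative times follow the same way, or from $\Psi_t^{-1} = \Psi_{-t}$.

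For the volume statement, $\Psi_t$ is a $C^1$ bijection with $C^1$ inverse $\Psi_{-t}$. The change-of-variables formula then gives
$$\operatorname{vol}(\Psi_t(A)) = \int_A |\det \nabla\Psi_t| \, \mathrm{d}z = \operatorname{vol}(A) \,.$$

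The only delicate step is justifying the variational equation, i.e.\ differentiability of the flow in $z$ and interchanging $\partial_t$ with $\nabla_z$. This requires $b \in C^1$, which holds because $f \in C^2$. I would simply cite the standard theorem on differentiable dependence of ODE solutions on initial data. If one wanted to avoid that citation, an alternative is to use $\Psi_{t+s} = \Psi_s \circ \Psi_t$ and the expansion $\det(\I + s\nabla b + o(s)) = 1 + s\,\Tr(\nabla b) + o(s)$.
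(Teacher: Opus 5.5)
Your proof is correct, but it takes a different route from the paper. You use the Liouville argument. Jacobi's formula turns the variational equation $\partial_t J_t = \nabla b(\Psi_t)\, J_t$ into the scalar ODE $\partial_t \det J_t = \det J_t \cdot (\nabla\cdot b)(\Psi_t)$, and $\nabla\cdot b = \Tr(\Omega\nabla^2 H) = 0$ then gives $\det J_t \equiv 1$ directly.

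The paper instead proves the stronger symplectic identity $J_t^\top \Omega J_t = \Omega$. It shows $\frac{\mathrm{d}}{\mathrm{d}t}\bigl(J_t^\top \Omega J_t\bigr) = 0$, using the symmetry of $\nabla^2 H$, $\Omega^\top = -\Omega$ and $\Omega^2 = -\I_{2d}$. Taking determinants gives $(\det J_t)^2 = 1$, and a continuity argument from $\det J_0 = 1$ then fixes the sign.

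Your route is shorter and gets the determinant, sign included, without the continuity step. It also works for any divergence-free $C^1$ vector field, not only Hamiltonian ones. The paper's route uses the Hamiltonian structure more fully and yields symplecticity, which is strictly stronger than volume preservation when $d \ge 2$, although the paper only ever uses the volume consequence.

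Both arguments rest on the same variational equation for $J_t$. As you note, justifying it requires $f \in C^2$, which the paper assumes.
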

This is standard property of Hamiltonian flow, but we include a proof for the readers' convenience.
\begin{proof}
Fix \(z = (x,y) \in\R^{2d}\), and write
\[
    J(t)\deq \nabla \Psi_t(z)
\]
with $J(0) = \I_{2d}$.
The Hamiltonian flow~\eqref{eq:HamFlow} dynamics is
\[
    \dot\Psi_t(z)=\Omega\nabla H(\Psi_t(z)) \,.
\]
Taking gradient of this relation with respect to \(z\) and using the chain rule gives
\[
    \dot J(t)
    = \nabla \left(\dot\Psi_t(z) \right)
    =
    \Omega\nabla^2 H(\Psi_t(z))J(t) \,.
\]
Define 
$$M(t) \deq J(t)^\top \,\Omega\, J(t) \,.$$
By the product rule,
\begin{align*}
    \dot M(t)
    &=
    \dot J(t)^\top\Omega J(t)
    +
    J(t)^\top\Omega \dot J(t)\\
    &=
    J(t)^\top
    \Bigl[
        \bigl(\Omega\nabla^2 H(\Psi_t(z))\bigr)^\top\Omega
        +
        \Omega\bigl(\Omega\nabla^2 H(\Psi_t(z))\bigr)
    \Bigr]
    J(t) \\
    &= J(t)^\top
    \Bigl[
        -\nabla^2 H(\Psi_t(z)) \, \Omega^2
        +
        \Omega^2 \, \nabla^2 H(\Psi_t(z))
    \Bigr]
    J(t) \\
    &= J(t)^\top
    \Bigl[
        \nabla^2 H(\Psi_t(z)) -
        \nabla^2 H(\Psi_t(z))
    \Bigr]
    J(t) \\
    &= 0
\end{align*}
where in the computation above we have used the fact that \(\nabla^2 H(\Psi_t(z))\) is symmetric, \(\Omega^\top=-\Omega\), and
\(\Omega^2=-\I_{2d}\).
Therefore \(\dot M(t)=0\). Since \(M(0)=\I_{2d}^\top \, \Omega \, \I_{2d} = \Omega\), we get
\[
    J(t)^\top\Omega J(t)=\Omega
    \qquad \forall ~ t\in\R \,.
\]
Taking determinants on both sides gives 
\[
    \det(J(t))^2 \cdot \det(\Omega) = \det(J(t)^\top) \, \det(\Omega) \, \det(J(t))=\det(\Omega) \,,
\]
and since $\det(\Omega) \neq 0$, this implies $\bigl(\det J(t)\bigr)^2=1$. Since \(t\mapsto\det J(t)\) is continuous and \(\det J(0)=1\), we conclude $\det J(t)=1$ for all $t\in\R$. Finally, the change-of-variables formula yields
\[
    \operatorname{vol}(\Psi_t(A))
    =
    \int_A
        \left|\det(\nabla \Psi_t(z))\right|
    \dz
    =
    \int_A 1 \dz
    =
    \operatorname{vol}(A) \,.
\]
\end{proof}

%%%%%%%%%%%%%
\subsubsection{Bi-Lipschitzness of Hamiltonian flow map}

If $f$ is smooth, then the Hamiltonian flow map is a bi-Lipschitz map.

\begin{lemma}
\label{lem:biLip_flow_map}
If \(f\) is \(L\)-smooth, then for all \(t\in\R\), \(z=(x,y) \in\R^{2d}\) and $\tilde{z}=(\tilde{x}, \tilde{y})\in\R^{2d}$,
$$
e^{-(1+L)|t|}\|z-\tilde{z}\|\leq\|\Psi_t(z)-\Psi_t(\tilde{z})\|\leq e^{(1+L)|t|}\|z-\tilde{z}\| \,.
$$
Furthermore,
$$\|\nabla\Psi_t(z) \|_{\op} \leq e^{(1+L)|t|} \,.$$
\end{lemma}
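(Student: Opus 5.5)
The plan is a Grönwall argument on the variational (linearized) equation, combined with the flow property $\Psi_t^{-1}=\Psi_{-t}$ to get the lower bound.

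For the upper bound, fix $z,\tilde z\in\R^{2d}$ and set $\delta(t)\deq\Psi_t(z)-\Psi_t(\tilde z)$. Since both trajectories solve $\dot z=b(z)$ with $b(x,y)=(y,-\nabla f(x))$, we have $\dot\delta(t)=b(\Psi_t(z))-b(\Psi_t(\tilde z))$. By \cref{prop:Hamiltonian_smoothness}, $b$ is $(1+L)$-Lipschitz, so
\[
\|\dot\delta(t)\|\le (1+L)\|\delta(t)\| \,.
\]
It is safest to avoid differentiating $\|\delta(t)\|$ at zeros of $\delta$. Instead, I will work with the integral form
\[
\|\delta(t)\|\le\|\delta(0)\|+(1+L)\int_0^t\|\delta(s)\|\ds \qquad (t\ge 0) \,.
\]
The integral form of Grönwall's inequality then gives $\|\delta(t)\|\le e^{(1+L)t}\|z-\tilde z\|$. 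For $t<0$, the same argument applies to the time-reversed flow, since $s\mapsto\Psi_{-s}$ solves $\dot w=-b(w)$ and $-b$ is also $(1+L)$-Lipschitz. Together these give the upper bound with $|t|$.

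For the lower bound, apply the upper bound at time $-t$ to the points $\Psi_t(z)$ and $\Psi_t(\tilde z)$. Using $\Psi_{-t}\circ\Psi_t=\operatorname{Id}$,
\[
\|z-\tilde z\|=\|\Psi_{-t}(\Psi_t z)-\Psi_{-t}(\Psi_t\tilde z)\|\le e^{(1+L)|t|}\|\Psi_t(z)-\Psi_t(\tilde z)\| \,,
\]
and rearranging gives the lower bound.

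For the Jacobian bound, $\Psi_t$ is differentiable because $f\in C^2$, so $b\in C^1$ and the flow is differentiable in the initial data. Take $\tilde z=z+\epsilon u$ with $\|u\|=1$, divide the upper bound by $\epsilon$, and let $\epsilon\to0$. This gives $\|\nabla\Psi_t(z)u\|\le e^{(1+L)|t|}$, and taking the supremum over unit $u$ yields $\|\nabla\Psi_t(z)\|_\op\le e^{(1+L)|t|}$.

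As an alternative for the Jacobian, one could use $\dot J=\Omega\nabla^2H\,J$ from the proof of \cref{lem:volume_conservation} together with $\|\Omega\nabla^2 H\|_\op\le\max(1,L)\le 1+L$, and apply Grönwall to $\|J(t)\|_\op$.

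The only delicate point is justifying Grönwall cleanly; using the integral form sidesteps any non-differentiability of the norm, so I expect no real obstacle.
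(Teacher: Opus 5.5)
Your proposal is correct and follows essentially the same route as the paper. Both arguments use the integral-form Gr\"onwall bound from the $(1+L)$-Lipschitz field $b$, handle negative times via the backward flow generated by $-b$, get the lower bound by applying the upper bound to $\Psi_t(z),\Psi_t(\tilde z)$ with $\Psi_{-t}\circ\Psi_t=\operatorname{Id}$, and get the Jacobian bound as the limit of difference quotients (your alternative variational-equation argument is also valid and even gives the sharper rate $e^{\max(1,L)|t|}$).
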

\begin{proof}
We show the estimate for $t\geq 0$; the case $t\leq 0$ follows by applying the same argument to the backward flow. 
Let $b(x,y)\deq \Omega \nabla H(x, y)= (y,-\nabla f(x))$. Since $f$ is $L$-smooth, $\nabla f$ is $L$-Lipschitz, and recall from \cref{eq:1+L_Lip} that $b$ is therefore $(1+L)$-Lipschitz. 
Fix $z_0,\tilde z_0\in\mathbb R^{2d}$. By the integral form of the Hamiltonian flow:
\begin{align*}
    \Psi_t(z_0)-\Psi_t(\tilde z_0)
    &=z_0-\tilde z_0
    +\int_0^t\left(b(\Psi_s(z_0))-b(\Psi_s(\tilde z_0))\right) \ds \,.
\end{align*}
Therefore,
\begin{align*}
    \|\Psi_t(z_0)-\Psi_t(\tilde z_0)\|&\leq \|z_0-\tilde{z}_0\|+\int_0^t \|b(\Psi_s(z_0))-b(\Psi_s(\tilde z_0))\| \ds\\
    &\leq \|z_0-\tilde{z}_0\|+\int_0^t (1+L)\|\Psi_s(z_0)-\Psi_s(\tilde z_0)\|\, \ds \,.
\end{align*}
By Gr\"onwall's inequality,
$$
\|\Psi_t(z_0)-\Psi_t(\tilde z_0)\|
\leq
e^{(1+L)t}\|z_0-\tilde z_0\| \,,
\qquad t\geq 0 \,.
$$
On the other hand, note that the backward flow \(s \mapsto \Psi_{-s}\) is generated by the vector field \(-b\), which is also \((1+L)\)-Lipschitz. Hence, the same Grönwall argument gives, for all \(u,v\in\R^{2d}\) and \(t\geq 0\),
$$\|\Psi_{-t}(u)-\Psi_{-t}(v)\|\leq e^{(1+L)t}\|u-v\| \,.$$
Applying this inequality with \(u=\Psi_t(z_0)\) and \(v=\Psi_t(\widetilde z_0)\), and using \(\Psi_{-t}=(\Psi_t)^{-1}\), yields
$$\|z_0-\widetilde z_0\|=\|\Psi_{-t}(\Psi_t(z_0))-\Psi_{-t}(\Psi_t(\tilde{z}_0))\|
\leq e^{(1+L)t}\|\Psi_t(z_0)-\Psi_t(\tilde{z}_0)\| \,.$$
Therefore,
$$
\|\Psi_t(z_0)-\Psi_t(\widetilde z_0)\|
    \geq e^{-(1+L)t}\|z_0-\widetilde z_0\| \,.
$$

The flow map $z\to \Psi_t(z)$ is continuously differentiable with respect to $z$ under $L$-smoothness of $f$. Therefore for any $z\in\mathbb R^{2d}$ and
$v\in\mathbb R^{2d}$,
\begin{align*}
\|\nabla \Psi_t(z)v\|=
\lim_{\varepsilon\to 0}
\left\|
\frac{\Psi_t(z+\varepsilon v)-\Psi_t(z)}{\varepsilon}
\right\| \leq
e^{(L+1)t}\|v\| \,.
\end{align*}
Taking supremum over $\|v\|\leq 1$ yields $\|\nabla \Psi_t(z)\|_{\op}
\leq
e^{(L+1)t}$. 
\end{proof}

%%%%%%%%%%%%%%%%
\subsection{Distributional properties of the Hamiltonian flow}\label{subapp:HF-distributional-properties}

In this section, suppose we run the Hamiltonian flow~\eqref{eq:HamFlow} from an initial joint random variable $(X_0, Y_0) \sim \rho_0^{XY}$ for some $\rho_0^{XY} \in \P_{2,\ac,\fs}(\R^{2d})$, to obtain $(X_t, Y_t) \sim \rho_t^{XY}$ for all $t \in \R$.
Recall $\Psi_t \colon \R^{2d} \to \R^{2d}$ is the flow map that sends $(X_0, Y_0) \in \R^{2d}$ to the solution $(X_t, Y_t) \in \R^{2d}$ of the Hamiltonian flow~\eqref{eq:HamFlow} at time $t \in \R$, so we have $\rho_t^{XY} = (\Psi_t)_\# \rho_0^{XY}$.

%%%%%%%%%%%%%%%
\subsubsection{Conservation of entropy}

We first note the Hamiltonian flow conserves entropy.
This is the distributional analog of the fact that the Hamiltonian flow conserves volume (\Cref{lem:volume_conservation}).

\begin{lemma}\label{Lem:HamFlowConsvEntropy}
    For all $t \in \R$,
    $$\Ent\left(\rho_t^{XY} \right) = \Ent\left(\rho_0^{XY} \right) \,.$$
\end{lemma}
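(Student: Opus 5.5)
The plan is to use the change-of-variables formula for the pushforward $\rho_t^{XY} = (\Psi_t)_\# \rho_0^{XY}$, combined with volume conservation from \Cref{lem:volume_conservation}. By \Cref{prop:Hamiltonian_smoothness} and the flow property, $\Psi_t$ is a $C^1$ diffeomorphism of $\R^{2d}$ with inverse $\Psi_{-t}$. Applying \Cref{lem:volume_conservation} at time $-t$ gives $\det \nabla \Psi_{-t} \equiv 1$. The pushforward density formula recalled in \Cref{subapp:statisticaldistances} then yields, for all $z \in \R^{2d}$,
\[
\rho_t^{XY}(z) = \rho_0^{XY}(\Psi_{-t}(z)) \, \bigl|\det \nabla \Psi_{-t}(z)\bigr| = \rho_0^{XY}(\Psi_{-t}(z)) \,.
\]
In particular, $\rho_t^{XY}$ is absolutely continuous with strictly positive density.

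Next I would compute the entropy directly:
\[
\Ent(\rho_t^{XY}) = -\int_{\R^{2d}} \rho_0^{XY}(\Psi_{-t}(z)) \log \rho_0^{XY}(\Psi_{-t}(z)) \dz .
\]
I then substitute $z = \Psi_t(w)$. The Jacobian of this substitution is $\det \nabla \Psi_t(w) = 1$, again by \Cref{lem:volume_conservation}. This turns the integral into $-\int \rho_0^{XY}(w)\log \rho_0^{XY}(w)\,\mathrm{d}w = \Ent(\rho_0^{XY})$. Equivalently, I can write $\Ent(\rho_t^{XY}) = -\E[\log \rho_t^{XY}(Z_t)]$ with $Z_t = \Psi_t(Z_0)$, note that $\log\rho_t^{XY}(Z_t) = \log \rho_0^{XY}(Z_0)$ pointwise, and take expectations.

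The main point of care is well-definedness: the integral must be meaningful and not of the form $\infty - \infty$. I would handle this by applying the change of variables separately to the positive and negative parts of $\rho\log\rho$; each is a nonnegative measurable function, so the substitution holds for each in $[0,\infty]$ by Tonelli. As a result, the two entropies are defined or undefined together and coincide. Finiteness from above also follows from the Gaussian maximum-entropy bound, since $\rho_t^{XY}$ has finite second moment by \Cref{lem:biLip_flow_map}. Beyond this measure-theoretic remark, I do not expect any substantive obstacle.
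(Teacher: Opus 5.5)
Your proposal is correct and follows essentially the same route as the paper: volume conservation gives $\rho_t^{XY} = \rho_0^{XY}\circ\Psi_{-t}$, and the entropy integral is then transported back to time $0$. The paper phrases that last step as the expectation identity $\Psi_{-t}(X_t,Y_t) = (X_0,Y_0)$, which is your second formulation, and your splitting of $\rho\log\rho$ into positive and negative parts is a sound extra bit of rigor that the paper leaves implicit.
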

\begin{proof}
    Since $(\Psi_t)^{-1} = \Psi_{-t}$, by the change-of-variable formula for $\rho_t^{XY} = (\Psi_t)_\# \rho_0^{XY}$, we have for all $(x,y) \in \R^{2d}$,
    $$\rho_t^{XY}(x,y)
    = \rho_0^{XY}\left(\Psi_{-t}(x,y) \right) \cdot
    \left|\det\nabla\Psi_{-t}(x,y)\right|
    =
    \rho_0^{XY}(\Psi_{-t}(x,y)) \,,$$
    note that $\left|\det\nabla\Psi_{-t}(x,y)\right|=1$ follows from  \Cref{lem:volume_conservation}. 
    Therefore,
    $$-\log \rho_t^{XY}(x,y) = -\log \rho_0^{XY}(\Psi_{-t}(x,y)) \,.$$
    Now let $(X_t,Y_t) \sim \rho_t^{XY}$, so $\Psi_{-t}(X_t,Y_t) = (X_0,Y_0) \sim \rho_0^{XY}$.
    Taking expectation on both sides gives
    \begin{align*}
        \Ent\left(\rho_t^{XY} \right)
        &= \E_{(X_t,Y_t) \sim \rho_t^{XY}} \left[-\log \rho_t^{XY}(X_t,Y_t) \right] \\
        &= \E_{(X_t,Y_t) \sim \rho_t^{XY}} \left[-\log \rho_0^{XY}(\Psi_{-t}(X_t,Y_t)) \right] \\
        &= \E_{(X_0,Y_0) \sim \rho_0^{XY}} \left[-\log \rho_0^{XY}(X_0,Y_0) \right] \\
        &= \Ent\left(\rho_0^{XY} \right) \,.
    \end{align*}
\end{proof}

%%%%%%%%%%%%%%%
\subsubsection{Conservation of KL divergence to the joint distribution}
\label{Sec:HamFlowConsvJointKLProof}

Since the Hamiltonian flow conserves both the Hamiltonian function and entropy, it also conserves the KL divergence to the joint distribution $\nu^{XY} \propto \exp(-H)$.

\begin{proof}[Proof of~\Cref{Lem:HamFlowConsvJointKL}]
    Let $Z_H \deq \int_{\R^{2d}} \exp(-H(x,y)) \dx \dy$, so $\nu^{XY}(x,y) = \exp(-H(x,y)) / Z_H$. \Cref{Lem:HamFlowConsvEntropy} gives $\Ent\left(\rho_t^{XY} \right) = \Ent\left(\rho_0^{XY} \right)$.
    Furthermore, by~\Cref{lem:energy_conservation}, $H(X_t, Y_t) = H(X_0, Y_0)$. 
    So for $(X_0, Y_0) \sim \rho_0^{XY}$, $(X_t, Y_t) \sim \rho_t^{XY}$, we also have
    $\E_{\rho_t^{XY}}[H] = \E_{\rho_0^{XY}}[H].$
    Therefore,
    \begin{align*}
        \KL\left(\rho_t^{XY} \dvert \nu^{XY} \right)
        &= -\Ent\left(\rho_t^{XY} \right) + \E_{\rho_t^{XY}}\left[H \right] + \log Z_H \\
        &= -\Ent\left(\rho_0^{XY} \right) + \E_{\rho_0^{XY}}\left[H \right] + \log Z_H \\
        &= \KL\left(\rho_0^{XY} \dvert \nu^{XY} \right) \,.
    \end{align*}
\end{proof}

%%%%%%%%%%%%%%%
\subsubsection{Descent property of KL divergence in the $X$-marginal}
\label{Sec:DescentKLProof}

\begin{proof}[Proof of~\Cref{Lem:DescentKL}]
    For $t \in \R$ and the solution $(X_t, Y_t) \sim \rho_t^{XY}$ along the Hamiltonian flow, let $\rho_t^X$ denote the marginal distribution of $X_t$, and let $\rho_t^{Y \mid X = x}$ denote the conditional distribution of $Y_t$ given $X_t = x$.
    By~\Cref{Lem:HamFlowConsvJointKL}, we know that
    $$\KL\left(\rho_t^{XY} \dvert \nu^{XY} \right) = \KL\left(\rho_0^{XY} \dvert \nu^{XY} \right) \,.$$
    Since $\rho_0^{XY} = \rho_0^X \otimes \gamma$ and $\nu^{XY} = \nu^X \otimes \gamma$, we have
    $$\KL\left(\rho_0^{XY} \dvert \nu^{XY} \right) = \KL\left(\rho_0^{X} \dvert \nu^{X} \right) \,.$$
    Furthermore, by the chain rule for KL divergence, we have 
    \begin{align}
        \KL\left(\rho_t^{XY} \dvert \nu^{XY} \right) \notag
        &= \KL\left(\rho_t^{X} \dvert \nu^{X} \right) + \E_{x \sim \rho_t^X}\left[ \KL\left(\rho_t^{Y \mid X = x} \dvert \gamma \right) \right] \notag \\
        &=  \KL\left(\rho_t^{X} \dvert \nu^{X} \right) + \KL\left(\rho_{t}^{XY} \dvert \rho_{t}^{X} \otimes \gamma\right) \label{Eq:KLContCalc1} \\
        &\ge \KL\left(\rho_t^{X} \dvert \nu^{X} \right) \notag~,
    \end{align}
    where the last inequality follows by dropping the KL divergence of the conditional distributions.
    Combining the three calculations above, we obtain the desired result:
    \begin{align*}
        \KL\left(\rho_0^{X} \dvert \nu^{X} \right)
        \,=\, \KL\left(\rho_0^{XY} \dvert \nu^{XY} \right) 
        \,=\, \KL\left(\rho_t^{XY} \dvert \nu^{XY} \right) 
        \,\ge\, \KL\left(\rho_t^{X} \dvert \nu^{X} \right) \,.
    \end{align*}
\end{proof}

%%%%%%%%%%%%%%%
\subsubsection{Continuity of KL divergence}
\label{Sec:KLContinuousProof}

\begin{lemma}\label{Lem:KLContinuous}
    Assume $\nu^X \propto \exp(-f)$ is log-smooth.
    Let $\rho_0^{XY} \in \P_{2,\ac,\fs}(\R^{2d})$ with $\KL(\rho_0^{XY} \dvert \nu^{XY}) < \infty$.
    For $t \in \R$, let $(X_t, Y_t) \sim \rho_t^{XY}$ be the solution to the Hamiltonian flow~\eqref{eq:HamFlow} from $(X_0, Y_0) \sim \rho_0^{XY}$, and let $X_t \sim \rho_t^X$ denote the $X$-marginal.
    Then $t \mapsto \KL(\rho_t^X \dvert \nu^X)$ is continuous.
\end{lemma}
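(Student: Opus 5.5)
The plan is to get continuity from lower semicontinuity alone, exploiting the conservation law of \Cref{Lem:HamFlowConsvJointKL}. As in the proof of \Cref{Lem:DescentKL}, the chain rule \eqref{eq:kl-chain} gives, for every $t \in \R$,
\begin{equation*}
    C \deq \KL\left(\rho_0^{XY} \dvert \nu^{XY}\right) = \KL\left(\rho_t^{XY} \dvert \nu^{XY}\right) = \KL\left(\rho_t^X \dvert \nu^X\right) + \KL\left(\rho_t^{XY} \dvert \rho_t^X \otimes \gamma\right).
\end{equation*}
Write $F(t) \deq \KL(\rho_t^X \dvert \nu^X)$ and $G(t) \deq \KL(\rho_t^{XY} \dvert \rho_t^X \otimes \gamma)$. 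Both are nonnegative and $F + G = C < \infty$, so both are finite for all $t$. It therefore suffices to show that $F$ and $G$ are each lower semicontinuous in $t$. Indeed, if $s \to t$, then
\begin{equation*}
    \limsup_{s\to t} F(s) = C - \liminf_{s \to t} G(s) \le C - G(t) = F(t),
\end{equation*}
which is upper semicontinuity of $F$. Combined with lower semicontinuity of $F$, this gives continuity.

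The first step is weak continuity of $t \mapsto \rho_t^{XY}$. By \Cref{prop:Hamiltonian_smoothness}, the flow $(t,z) \mapsto \Psi_t(z)$ is well defined and continuous in $t$ for each fixed $z$. So for any bounded continuous $\varphi$ we have $\varphi(\Psi_s(z)) \to \varphi(\Psi_t(z))$ pointwise as $s \to t$. Dominated convergence, with dominating function $\|\varphi\|_\infty$, then gives
\begin{equation*}
    \E_{\rho_0^{XY}}\left[\varphi(\Psi_s(Z_0))\right] \to \E_{\rho_0^{XY}}\left[\varphi(\Psi_t(Z_0))\right],
\end{equation*}
that is, $\rho_s^{XY} \to \rho_t^{XY}$ weakly. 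Two consequences follow.
\begin{itemize}
    \item Taking test functions depending only on $x$ shows $\rho_s^X \to \rho_t^X$ weakly.
    \item Convergence of characteristic functions of product measures then gives $\rho_s^X \otimes \gamma \to \rho_t^X \otimes \gamma$ weakly.
\end{itemize}

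The second step applies the lower semicontinuity \eqref{eq:kl-lsc} of KL divergence under joint weak convergence of both arguments. With $P_n = \rho_{s_n}^X$ and $Q_n = \nu^X$ held fixed, this gives lower semicontinuity of $F$. With $P_n = \rho_{s_n}^{XY}$ and $Q_n = \rho_{s_n}^X \otimes \gamma$, it gives lower semicontinuity of $G$. The hypothesis in \eqref{eq:kl-lsc} that $\KL(P_n \dvert Q_n) < \infty$ holds by the finiteness observation above. Combining with the sandwich argument completes the proof.

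I expect the only delicate point to be justifying the chain-rule identity for $G(t)$ at every $t$. This requires checking that $\rho_t^{XY}$ has a density, which is true because it is a pushforward under the volume-preserving diffeomorphism $\Psi_t$ (\Cref{lem:volume_conservation}), and that the conditional decomposition is legitimate. I would handle this exactly as in \eqref{Eq:KLContCalc1}, using $\KL(\rho^{XY} \dvert \rho^X \otimes \gamma) = \E_{\rho^X}[\KL(\rho^{Y\mid X} \dvert \gamma)]$. Everything else is soft.
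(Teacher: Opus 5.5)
Your proposal is correct and follows the paper's proof essentially step for step. Both arguments obtain weak continuity of $t \mapsto \rho_t^{XY}$ (and hence of $\rho_t^X$ and $\rho_t^X \otimes \gamma$) from continuity of the flow map. Both apply lower semicontinuity of KL divergence to $\KL(\rho_t^X \dvert \nu^X)$ and to $\KL(\rho_t^{XY} \dvert \rho_t^X \otimes \gamma)$. Both then use the conservation identity of \Cref{Lem:HamFlowConsvJointKL} together with the chain rule, as in \eqref{Eq:KLContCalc1}, to turn lower semicontinuity of the conditional term into upper semicontinuity of the marginal term.
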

\begin{proof}
    Since \(\nu^{X}\) is log-smooth, there exists \(0 < L < \infty\) such that \(f\) is \(L\)-smooth.
    Consequently, for any $(X_0, Y_0)\in \R^{2d}$, the solution $(X_t, Y_t)\in \R^{2d}$ of \ref{eq:HamFlow} starting from \(X_{0}, Y_{0}\) is uniquely defined for all \(t \in \R\) and the map \(t \mapsto (X_{t}, Y_{t})\) is continuous with respect to $t$.
    Equivalently, we know that the flow map $\Psi_t(x, y)$ is continuous in $t$ for all $(x, y)\in \R^{2d}$. Therefore, for all sequences such that $\{t_n\} \to t$ and for any continuous and bounded test function $\varphi \colon \R^{2d } \to \R$, we have
    \begin{align*}
        \lim_{n \to \infty}\int_{\R^{2d}} \varphi(x, y)\rho^{XY}_{t_n}(x, y)\dx \dy&=\lim_{n \to \infty} \int_{\R^{2d}} \varphi(\Psi_{t_n}^{XY}(x, y))\rho^{XY}_{0}(x, y)\dx \dy\\
        &= \int_{\R^{2d}} \varphi(\Psi_{t}^{XY}(x, y))\rho^{XY}_{0}(x, y)\dx \dy\\
        &=\int \varphi(x, y)\rho_t^{XY}(x, y) \dx \dy \,.
    \end{align*}
    Therefore, for all $t_n \to t$, $\rho_{t_n}^{XY}$ converges weakly to $\rho_t^{XY}$ (see \cref{subapp:convergenceofdistributions}), or equivalently we may say that $\rho_t^{XY}$ is \emph{weakly continuous} in $t$. Taking marginal gives that $t\mapsto \rho_t^{X}$ is also weakly continuous in $t$.
    The remainder of this proof aims to show that
    \begin{equation*}
        \lim_{n \to \infty} \KL(\rho_{t_{n}}^{X} \dvert \nu^{X}) = \KL(\rho_{t}^{X} \dvert \nu^{X})
    \end{equation*}
    for any arbitrary sequence \(\{t_{n}\}\) that converges to \(t\).
    Since $\rho_{t_n}^X$ converges weakly to $\rho_t^X$, by the lower semicontinuity of KL divergence (see \cref{eq:kl-lsc}),
    \begin{align}\label{Eq:KLContCalc2}
    \KL(\rho_t^X \dvert \nu^X)\leq  \liminf_{n\to\infty} \KL\left(\rho_{t_n}^X\dvert \nu^X \right) \,.
    \end{align}
    
    We also know $\rho_{t_n}^{XY}$ converges weakly to $\rho_t^{XY}$, and $\rho_{t_n}^{X}\otimes\gamma$ converges weakly to $\rho_t^{X}\otimes \gamma$, so by the lower semicontinuity of KL divergence,
    $\KL\left(\rho_t^{XY} \dvert \rho_t^{X}\otimes \gamma \right) \le   \liminf_{n\to\infty} \KL\left(\rho_{t_n}^{XY} \dvert \rho_{t_n}^{X}\otimes\gamma \right).$
    Equivalently,
    \begin{align}\label{Eq:KLContCalc0}
    -\KL\left(\rho_t^{XY} \dvert \rho_t^{X}\otimes \gamma \right) \ge - \liminf_{n\to\infty} \KL\left(\rho_{t_n}^{XY}\dvert \rho_{t_n}^X\otimes \gamma \right) = \limsup_{n\to\infty} -\KL\left(\rho_{t_n}^{XY} \dvert \rho_{t_n}^{X}\otimes\gamma \right) \,.        
    \end{align}
    From \cref{Eq:KLContCalc1}, we have
    \begin{align*}
        \KL\left(\rho_t^X \dvert \nu^X \right) 
        &= \KL\left(\rho_0^{XY} \dvert \nu^{XY} \right) - \KL\left(\rho_t^{XY} \dvert \rho_t^{X}\otimes \gamma \right)  \\
        &\stackrel{\eqref{Eq:KLContCalc0}}{\ge} \KL\left(\rho_0^{XY} \dvert \nu^{XY} \right) + \limsup_{n\to\infty} -\KL\left(\rho_{t_n}^{XY} \dvert \rho_{t_n}^{X}\otimes\gamma \right) \\
        &=\limsup_{n\to\infty} \left( \KL\left(\rho_0^{XY} \dvert \nu^{XY} \right) - \KL\left(\rho_{t_n}^{XY} \dvert \rho_{t_n}^{X}\otimes\gamma \right) \right)\\
        &= \limsup_{n\to\infty} \KL\left(\rho_{t_n}^X \dvert \nu^X \right)~.
    \end{align*}
    where the last step follows from applying the identity~\cref{Eq:KLContCalc1} at time $t_n$.
    By combining this with \cref{Eq:KLContCalc2}, we conclude $\lim_{n\to\infty} \KL\left(\rho_{t_n}^X \dvert \nu^X \right) = \KL\left(\rho_t^X \dvert \nu^X \right)$.
    Since $(t_n)_{n \in \bbN}$ is arbitrary, this shows that $t \mapsto \KL\left(\rho_t^X \dvert \nu^X \right)$ is continuous.
\end{proof}

%%%%%%%%%%%%
\subsubsection{Finiteness of second moment}
\label{Sec:second-moment-finitenessProof}

We first show that the normalizability of $\nu^X\propto e^{-f}$, together with the $L$-smoothness of $f$, implies that $f$ is bounded below. 
\begin{lemma}\label{lem:lower_bound_f}
Assume $\nu^X \propto \exp(-f)$ is $L$-log-smooth for some $L\in(0,\infty)$, and assume $Z_f \deq \int_{\R^d}\exp(-f(u)) \du<\infty.$
Then, for all $x\in\R^d$,
\[f(x)\ge\frac{d}{2}\log\left(\frac{2\pi}{L}\right)-\log Z_f \,.\]
In particular, there exists $\xstar \in \R^d$ such that $f(\xstar)=\inf_{x\in \R^d}f(x)>-\infty$.
\end{lemma}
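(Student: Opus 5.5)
The plan is to use $L$-smoothness to give an upper quadratic bound on $f$ around any base point, and then compare $Z_f$ to a Gaussian integral. Fix an arbitrary $x \in \R^d$. Since $f$ is $L$-smooth, the Bregman divergence satisfies $D_f(u,x) \le \frac{L}{2}\|u-x\|^2$, so for all $u \in \R^d$,
\[
f(u) \le f(x) + \langle \nabla f(x), u - x\rangle + \frac{L}{2}\|u-x\|^2 \,.
\]
Exponentiating the negative and integrating over $u$ gives
\[
Z_f = \int_{\R^d} e^{-f(u)} \du \ge e^{-f(x)} \int_{\R^d} \exp\Big(-\langle \nabla f(x), u-x\rangle - \frac{L}{2}\|u-x\|^2\Big) \du \,.
\]

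The next step evaluates the Gaussian integral on the right by completing the square. Writing $g = \nabla f(x)$ and $v = u - x$, we have $-\langle g, v\rangle - \frac{L}{2}\|v\|^2 = -\frac{L}{2}\|v + g/L\|^2 + \frac{\|g\|^2}{2L}$. Hence the integral equals $(2\pi/L)^{d/2} e^{\|g\|^2/(2L)} \ge (2\pi/L)^{d/2}$. Therefore $Z_f \ge e^{-f(x)}(2\pi/L)^{d/2}$, and taking logarithms and rearranging yields
\[
f(x) \ge \frac{d}{2}\log\frac{2\pi}{L} - \log Z_f \,.
\]
Since $x$ was arbitrary, this is the claimed uniform lower bound. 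Note that $Z_f>0$ because $e^{-f}>0$, so the logarithm is finite.

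For the existence of a minimizer $\xstar$, the lower bound gives $\inf f > -\infty$, but attainment requires a separate argument; I expect this to be the only real obstacle. My approach is to show that $f$ is coercive, so that its sublevel sets are compact and, being continuous, $f$ attains its infimum. Without convexity, integrability of $e^{-f}$ alone does not obviously force $f\to\infty$. Smoothness helps here. If there were points $x_n$ with $\|x_n\|\to\infty$ and $f(x_n) \le c$, the same quadratic upper bound applied at each $x_n$ would give $\int e^{-f} \ge e^{-c}(2\pi/L)^{d/2}$ restricted to the region where the quadratic is controlled. Concretely, on the ball $B(x_n - \nabla f(x_n)/L, r)$ the bound $f \le c + \frac{L}{2}r^2$ holds. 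This makes $e^{-f}$ uniformly bounded below on a sequence of balls of fixed radius. Their centers $x_n - \nabla f(x_n)/L$ must also escape to infinity, which I would try to deduce using the elementary fact $f(x_n - \nabla f(x_n)/L) \le f(x_n) - \|\nabla f(x_n)\|^2/(2L)$ combined with the lower bound on $f$, giving $\|\nabla f(x_n)\|^2 \le 2L(c - \inf f)$. After passing to a subsequence whose balls are disjoint, the integral $Z_f$ would be infinite, a contradiction. Hence $f$ is coercive, and the infimum is attained at some $\xstar$.
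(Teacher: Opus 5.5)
Your proof is correct. The first half is exactly the paper's argument: the quadratic upper bound from $L$-smoothness, completing the square, and comparing $Z_f$ with a Gaussian integral.

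For attainment of the infimum you take a different route.

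\textbf{The paper's route.} It keeps the factor $e^{\|\nabla f(x)\|^2/(2L)}$ that you discard, which gives $\nu^X(x)\le (L/2\pi)^{d/2}\exp(-\|\nabla f(x)\|^2/(2L))$. From this it deduces that $\|\nabla \nu^X\| = \nu^X\|\nabla f\|$ is uniformly bounded, so $\nu^X$ is globally Lipschitz. It then asserts $\nu^X(x)\to 0$ as $\|x\|\to\infty$, hence $f\to\infty$, and a minimizer exists by compactness.

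\textbf{Your route.} You work directly with $f$. Along a sequence with bounded $f$-values you get a bounded gradient from the descent inequality; this also follows at once from the paper's sharper bound $f(x)\ge \frac d2\log\frac{2\pi}{L}-\log Z_f+\frac{\|\nabla f(x)\|^2}{2L}$. Combining the gradient bound with the quadratic upper bound gives a uniform lower bound on $e^{-f}$ on infinitely many disjoint balls of fixed radius, contradicting $Z_f<\infty$.

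\textbf{Comparison.} The two arguments rest on the same principle. The paper leaves implicit the step ``Lipschitz and integrable implies vanishing at infinity,'' and attributes it to the finite second moment when only integrability is needed. That step is itself proved by exactly your disjoint-balls argument. So your version is more self-contained. The paper's version packages the local control into a single global Lipschitz bound on the density.
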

\begin{proof}
    By the $L$-smoothness of $f$, for all $x, u\in \R^d$ we have:
    $$
    f(u)\leq f(x)+\langle \nabla f(x), u-x\rangle+\frac{L}{2}\|u-x\|^2 \,.
    $$
    Therefore, for any $x\in \R^d$,
    \begin{align*}
        Z_f &= \int_{\R^d}\exp(-f(u)) \du\\
        &\geq \int_{\R^d}\exp\left(-f(x)-\langle \nabla f(x), u-x\rangle-\frac{L}{2}\|u-x\|^2\right) \du\\
        &=\exp\left(-f(x)+\frac{\|\nabla f(x)\|^2}{2L}\right)\cdot \int_{\R^d} \exp\left(-\frac{L}{2}\left\|u-x+\frac{\nabla f(x)}{L}\right\|^2\right) \du\\
        &= \exp\left(-f(x)+\frac{\|\nabla f(x)\|^2}{2L}\right)\cdot\left(\frac{2\pi}{L}\right)^{d/2} \,,
    \end{align*}
    where the last step holds by evaluating the Gaussian integral.
    Taking logarithm on both sides gives:
    $$
    f(x)\geq \frac{d}{2}\log\left(\frac{2\pi}{L}\right)-\log Z_f+\frac{\|\nabla f(x)\|^2}{2L}>-\infty \,.
    $$
    Therefore, $f$ is uniformly lower bounded. 
    By rearranging the inequality above, we get:
    $$
    \nu^X(x)=\frac{1}{Z_f}\exp(-f(x))\leq \left(\frac{L}{2\pi}\right)^{d/2}\exp\left(-\frac{\|\nabla f(x)\|^2}{2L}\right) \,.
    $$
    Therefore,
    $$
    \|\nabla \nu^X(x)\|=\nu^X(x)\|\nabla f(x)\|\leq\left(\frac{L}{2\pi}\right)^{d/2}\exp\left(-\frac{\|\nabla f(x)\|^2}{2L}\right)\|\nabla f(x)\|\leq \left(\frac{L}{2\pi}\right)^{d/2}\sqrt{\frac{L}{e}} \,,
    $$
    because $\sup_{r\in \R}r\exp(-\frac{r^2}{2L})=\sqrt{\frac{L}{e}}$. 
    Hence, $\nu^X(x)$ is globally Lipschitz, and the finiteness of its second moment implies that $\lim_{\|x\|\to \infty}\nu^X(x)=0$. Since $\nu^X(x)=\frac{1}{Z_f}e^{-f(x)}$, this implies
    $$
    \lim_{\|x\|\to \infty} f(x)=-\log(Z_f)-\log(\nu^X(x)) = \infty \,.
    $$
    Then there exists $R \in (0,\infty)$ such that $f(x)>f(\bm{0})$ whenever $\|x\|>R$. 
    Since $f$ is continuous, it attains its minimum on the compact set $B_R(\bm{0})$ at some $\xstar\in\R^d$. 
    Because $\bm{0} \in B_R(\bm{0})$, this also implies $f(\xstar)\le f(\bm{0})<f(x)$ for every $\|x\|>R$.
    This shows $\xstar$ is a global minimizer of $f$.
\end{proof}

Next, we show the solution of the Hamiltonian flow remains in $\PtwoacfsRd$, and the second moment grows at most quadratically in time.

\begin{lemma}
\label{lem:second-moment-finiteness}
Assume $\nu^X \propto \exp(-f)$ is $L$-log-smooth for some $L \in (0,\infty)$. 
For any finite $t \in \R$, let $(X_t, Y_t) \sim \rho_t^{XY}$ be the solution to the Hamiltonian flow~\eqref{eq:HamFlow} from $(X_0, Y_0) \sim \rho_0^{XY}$.
If $\rho_0^{XY} \in \P_{2,\ac,\fs}(\R^{2d})$, then $\rho_t^{XY}\in \P_{2,\ac,\fs}(\R^{2d})$ and  $\rho_t^{X}\in \P_{2,\ac,\fs}(\R^{d})$. In particular, 
\begin{align*}
    \E_{(X_t, Y_t) \sim \rho_t^{XY}}\left[\left\|(X_t, Y_t)\right\|^2 \right]&\leq(2t^2+1)(2L+2)\E_{(X_0,Y_0) \sim \rho_0^{XY}}\left[\|(X_0, Y_0)\|^2\right]+(4t^2+2)L\|\xstar\|^2 \,,
\end{align*}
where $\xstar \in \R^d$ satisfies $f(\xstar)=\inf_{x\in \R^d}f(x)$.
\end{lemma}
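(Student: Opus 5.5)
The statement has two parts: a membership claim ($\rho_t^{XY}$ and $\rho_t^X$ lie in $\P_{2,\ac,\fs}$) and a quantitative second-moment bound. I would prove the bound by a pathwise energy argument, then take expectations, and handle membership separately via the change-of-variables formula.

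\textbf{Energy step.} By \Cref{lem:lower_bound_f}, $f$ attains its minimum at some $\xstar$, so $\nabla f(\xstar)=0$. By $L$-smoothness, for every $x$,
\begin{equation*}
0 \le f(x)-f(\xstar) \le \tfrac{L}{2}\|x-\xstar\|^2 .
\end{equation*}
Conservation of the Hamiltonian (\Cref{lem:energy_conservation}) then gives, along each trajectory,
\begin{equation*}
\tfrac12\|Y_t\|^2 \le f(X_0)-f(\xstar)+\tfrac12\|Y_0\|^2 \le \tfrac{L}{2}\|X_0-\xstar\|^2+\tfrac12\|Y_0\|^2 .
\end{equation*}
Using $\|X_0-\xstar\|^2\le 2\|X_0\|^2+2\|\xstar\|^2$, this becomes
\begin{equation*}
\|Y_t\|^2 \le 2L\|X_0\|^2+2L\|\xstar\|^2+\|Y_0\|^2 .
\end{equation*}
Hence the velocity stays bounded uniformly in $t$ by a quadratic function of the initial data.

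\textbf{Position step.} Since $X_t = X_0+\int_0^t Y_s\ds$, Cauchy--Schwarz gives
\begin{equation*}
\|X_t\|^2 \le 2\|X_0\|^2 + 2t\int_0^t \|Y_s\|^2\ds \le 2\|X_0\|^2 + 2t^2 \sup_s\|Y_s\|^2 .
\end{equation*}
Combining this with the velocity bound and taking expectations over $\rho_0^{XY}$ gives an estimate of the form
\begin{equation*}
\E\|(X_t,Y_t)\|^2 \le (2t^2+1)\,(\text{const})\,\E\|(X_0,Y_0)\|^2 + (\text{const})\,t^2 L\|\xstar\|^2 .
\end{equation*}
I would then collect the constants crudely, for example bounding $2\le 2L+2$ and $1\le 2L+2$, to match the stated form $(2t^2+1)(2L+2)$ and $(4t^2+2)L\|\xstar\|^2$. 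The exact constants take some bookkeeping, but the structure is clear. For $t<0$ the same argument applies with $|t|$.

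\textbf{Membership step.} Since $\Psi_t$ is a diffeomorphism with unit Jacobian (\Cref{lem:volume_conservation}), the pushforward has density
\begin{equation*}
\rho_t^{XY}(z)=\rho_0^{XY}(\Psi_{-t}(z)),
\end{equation*}
as in the proof of \Cref{Lem:HamFlowConsvEntropy}. This density is absolutely continuous and strictly positive everywhere, because $\rho_0^{XY}>0$ and $\Psi_{-t}$ is a bijection of $\R^{2d}$. Its $X$-marginal $\rho_t^X(x)=\int \rho_t^{XY}(x,y)\dy$ is then also strictly positive and absolutely continuous. Finite second moments follow from the bound above, and the marginal moment is dominated by the joint moment.

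The main obstacle is the constant bookkeeping in the position step; the rest is routine. One subtlety is that positivity of $\rho_0^{XY}$ is part of the $\fs$ hypothesis, which is why the support claim holds without further assumptions.
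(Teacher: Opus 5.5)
Your proposal is correct and follows essentially the same route as the paper. You bound $\|Y_s\|^2$ uniformly via energy conservation and $f(X_0)-f(\xstar)\le \frac{L}{2}\|X_0-\xstar\|^2$, bound $\|X_t\|^2$ by integrating $\dot X_s=Y_s$, and obtain membership from the volume-preserving diffeomorphism $\Psi_t$; your bookkeeping closes exactly, since the sum is $(2+2L(2t^2+1))\|X_0\|^2+(2t^2+1)\|Y_0\|^2+(4t^2+2)L\|\xstar\|^2$, which is dominated by the stated right-hand side.
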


\begin{proof}
    Since the Hamiltonian flow map \(\Psi_t\) is a diffeomorphism and \(\rho_0^{XY}
    \in \P_{2,\ac,\fs}(\R^{2d})\), we have $\rho_t^{XY}=(\Psi_t)_{\#}\rho_0^{XY}$ is also absolutely continuous with respect to Lebesgue measure on $\R^{2d}$ and has full support with positive density. 
    Therefore, its \(X\)-marginal \(\rho_t^X\) is also absolutely continuous with respect to Lebesgue measure on $\R^{2d}$ and has full support with positive density. 
    
    We next verify the finiteness of the second moment for the case $t\geq0$; the other case follows similarly. For any $(X_0, Y_0) \in \R^{2d}$, let $(X_s, Y_s)$ be the solution of the \ref{eq:HamFlow} at time \(s\) with initial condition $(X_0, Y_0)$. 
    Recall \cref{lem:lower_bound_f} guarantees $f(x^*) = \inf_{x\in \R^d}f(x)>-\infty$. 
    By the conservation of the Hamiltonian function, the following holds for any $s\in \R$:
    \begin{align*}
    \|Y_s\|^2=
    2\left(H(X_s,Y_s)-f(X_s)\right) =
    2\left(H(X_0,Y_0)-f(X_s)\right) \le
    2\left(H(X_0,Y_0)-f(\xstar)\right) \,.
    \end{align*}
    For any $0\leq t<\infty$, by \ref{eq:HamFlow}, we know that \(X_{t} - X_{0} = \int_{0}^{t} Y_{s} \ds\).
    This gives
    \begin{align*}
        \|X_t\|^2&=\left\|X_0+\int_0^t Y_s \ds\right\|^2
        \leq2\|X_0\|^2+2t^2\max_{s\in [0, t]}\|Y_s\|^2     
        \leq2\|X_0\|^2+4t^2\left(H(X_0,Y_0)-f(\xstar)\right) \,.
    \end{align*}
    Combining the two estimates and use $L$-smoothness of $f$ gives:
    \begin{align*}
        \|(X_t, Y_t)\|^2&\leq 2\|X_0\|^2+(4t^2+2)\left(H(X_0,Y_0)-f(\xstar)\right)\\
        &\leq 2\|X_0\|^2+(4t^2+2)\left(\frac{L}{2}\|X_0-\xstar\|^2+\frac{1}{2}\|Y_0\|^2\right)\\
        &\leq (4t^2L+2L+2)\|X_0\|^2+(4t^2+2)L\|\xstar\|^2+(2t^2+1)\|Y_0\|^2\\
        &\leq (2t^2+1)(2L+2)\|(X_0, Y_0)\|^2+(4t^2+2)L\|\xstar\|^2 \,.
    \end{align*}
    Taking expectation on both sides:
    \begin{align*}
        \E_{(X_t, Y_t) \sim \rho_t^{XY}}\left[\left\|(X_t, Y_t)\right\|^2 \right]&\leq(2t^2+1)(2L+2)\E_{(X_0,Y_0) \sim \rho_0^{XY}}\left[\|(X_0, Y_0)\|^2\right]+(4t^2+2)L\|\xstar\|^2 \,.
    \end{align*}
    Since $\rho_0^{XY}$ has finite second moment, $\E_{(X_0, Y_0)\sim \rho_0^{XY}}\left[\left\|(X_0, Y_0)\right\|^2 \right]<\infty$, and therefore 
    $$\E_{(X_t, Y_t)\sim \rho_t^{XY}}\left[\left\|(X_t, Y_t)\right\|^2 \right]<\infty \,.$$
\end{proof}

%%%%%%%%%%%%%%%%%%%%

\subsection{Moments along Hamiltonian flow}\label{Sec:MomentsHamFlow}

In this section, suppose we run the Hamiltonian flow~\eqref{eq:HamFlow} from $(X_0, Y_0) \sim \rho_0^{XY}$ for some $\rho_0^{XY} \in \P_{2,\ac,\fs}(\R^{2d})$, to obtain $(X_t, Y_t) \sim \rho_t^{XY}$ for all $t \in \R$, so $\rho_t^{XY} = (\Psi_t)_\# \rho_0^{XY}$.
Let $X_t \sim \rho_t^X$ denote the $X$-marginal at time $t$.
We introduce some notations for the moments along the Hamiltonian flow.

At each $t \in \R$, we define the \textbf{conditional mean} of the velocity variable $Y_t$ given $X_t$ to be $u_t \colon \R^d \to \R^d$ given by
\begin{align}\label{Eq:DefCondMean}
    u_t(x) \deq \E[Y_t\mid X_t=x] = \frac{1}{\rho_t^X(x)} \int_{\R^d} y\rho_t^{XY}(x,y) \dy \,.
\end{align}
We define the \textbf{conditional second moment} of $Y_t$ given $X_t$ to be $M_t \colon \R^d \to \R^{d \times d}$ given by
\begin{align}\label{Eq:DefCondSecMom}
    M_t(x) \deq \E[Y_t Y_t^\top \mid X_t=x] = \frac{1}{\rho_t^X(x)} \int_{\R^d} y y^\top \rho_t^{XY}(x,y) \dy \,.
\end{align}
We define the \textbf{conditional covariance} of $Y_t$ given $X_t$ to be $\Sigma_t \colon \R^d \to \R^{d \times d}$ given by
\begin{align}\label{Eq:DefCondCov}
    \Sigma_t(x) \deq \Cov(Y_t\mid X_t=x) = M_t(x) - u_t(x)u_t(x)^\top \,.
\end{align}
Whenever the derivatives below are justified, we define the \textbf{acceleration field} $a_t \colon \R^d \to \R^d$ by
\begin{align}\label{Eq:DefAccField}
    a_t(x) \deq  \partial_t u_t(x) + \nabla u_t(x) \, u_t(x) \,.
\end{align}
We refer to this as the ``acceleration field'' for the following reason.
If $\Phi_t(x)$ solves the  ordinary differential equation $\dot{\Phi}_t(x)=u_t(\Phi_t(x)),$ then
$\ddot{\Phi}_t(x)=\nabla u_t(\Phi_t(x))u_t(\Phi_t(x))+\partial_t u_t(\Phi_t(x)) = a_t(\Phi_t(x))$. 

Since $\rho_0^{XY} \in \P_{2,\ac,\fs}(\R^{2d})$, we have (\Cref{lem:second-moment-finiteness}) that $\rho_t^{XY}\in \P_{2,\ac,\fs}(\R^{2d})$ and $\rho_t^{X}\in \P_{2,\ac,\fs}(\R^{d})$ for all $t \in \R$.
We define the \textbf{optimal transport map} $R_t \colon \R^d \to \R^d$ that pushes forward $\rho_t^X$ to $\nu^X = (R_t)_\# \rho_t^X$ and satisfies
$$\Wass_2^2(\rho_t^X, \nu^X) = \E_{X_t \sim \rho_t^X}\left[\left\|X_t - R_t(X_t) \right\|^2\right] \,.$$
We also define the \textbf{displacement map} $v_t \colon \R^d \to \R^d$ by
\begin{align}\label{Eq:DefDisplacement}
    v_t(x) \deq x-R_t(x) \,.
\end{align}
We note the definitions of the optimal transport map $R_t$ and the displacement map $v_t$ depend on the initial distribution $\rho_0^{XY}$ of the Hamiltonian flow~\eqref{eq:HamFlow}.

%%%%%%%%%%%%%%%%
\subsubsection{Continuity equation}

We have the following continuity equation for the quantities defined above. 
\begin{lemma}
\label{lem:continuity-momentum-equation}
Assume $\nu^X$ is log-smooth.
Along the Hamiltonian flow~\eqref{eq:HamFlow} $(X_t,Y_t) \sim \rho_t^{XY}$ from $(X_0,Y_0) \sim \rho_0^{XY}$ with $\rho_0^{XY} \in \P_{2,\ac,\fs}(\R^{2d})$, we have the following continuity equations (in the sense of distribution) for all $t\geq 0$:
\begin{align*}
    \partial_t \rho_t^X + \nabla \cdot (\rho_t^X u_t) &= 0 \,, \\
    \partial_t(\rho_t^X u_t) + \nabla \cdot \bigl(\rho_t^X M_t \bigr)+\rho_t^X \nabla f &= 0 \,.    
\end{align*}
\end{lemma}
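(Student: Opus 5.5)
The plan is to test the two equations against smooth compactly supported functions and to express everything through the pushforward $\rho_t^{XY} = (\Psi_t)_\#\rho_0^{XY}$. For the first equation, fix $\varphi \in C_c^\infty(\R^d)$ and write
\[
\int_{\R^d} \varphi \, \rho_t^X \dx = \E\left[\varphi(X_t)\right] = \int_{\R^{2d}} \varphi\left(\Psi_t^X(x,y)\right)\rho_0^{XY}(x,y)\dx\dy \,.
\]
The flow map is $C^1$ in $t$ (\Cref{prop:Hamiltonian_smoothness}), with $\partial_t \Psi_t^X = \Psi_t^Y$. So I will differentiate under the integral sign, which gives $\frac{\d}{\dt}\E[\varphi(X_t)] = \E[\langle \nabla\varphi(X_t), Y_t\rangle]$. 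Conditioning on $X_t$ and using the definition \eqref{Eq:DefCondMean} of $u_t$, this equals $\int \langle\nabla\varphi, u_t\rangle \rho_t^X \dx$, which is exactly the weak form of $\partial_t\rho_t^X + \nabla\cdot(\rho_t^X u_t)=0$.

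For the second equation I take a vector test function $\phi\in C_c^\infty(\R^d;\R^d)$ and consider $\E[\langle \phi(X_t), Y_t\rangle] = \int \langle\phi, \rho_t^X u_t\rangle\dx$. Differentiating along the flow with $\dot X_t = Y_t$ and $\dot Y_t = -\nabla f(X_t)$ gives
\[
\frac{\d}{\dt}\E\left[\langle \phi(X_t), Y_t\rangle\right] = \E\left[\langle \nabla\phi(X_t) Y_t, Y_t\rangle\right] - \E\left[\langle\phi(X_t),\nabla f(X_t)\rangle\right] \,.
\]
Conditioning on $X_t$, the first term becomes $\int \langle \nabla\phi, M_t\rangle_{\mathsf F}\,\rho_t^X\dx$ by \eqref{Eq:DefCondSecMom}, and the second becomes $\int\langle\phi,\nabla f\rangle\rho_t^X\dx$. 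These are the weak forms of $-\nabla\cdot(\rho_t^X M_t)$ and $-\rho_t^X\nabla f$ tested against $\phi$, which gives the momentum equation. Since the equations are only claimed in the sense of distributions, I will state the conclusion in integrated form: for $t\ge0$, the tested quantity at time $t$ minus its value at time $0$ equals the time integral of the right-hand side.

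The main obstacle is justifying differentiation (or integration) under the expectation, since the integrands involve $Y_t$ and $Y_tY_t^\top$, which are unbounded. The test functions are bounded with bounded derivatives, so the time derivatives of the integrands are dominated by $C(1+\|Y_t\|^2 + \|\nabla f(X_t)\|)$. I will control this using the second-moment bound of \Cref{lem:second-moment-finiteness}, which is uniform for $t$ in compact intervals, together with $\|\nabla f(x)\|\le \|\nabla f(\xstar)\| + L\|x-\xstar\| = L\|x - \xstar\|$ from $L$-smoothness. Dominated convergence, applied to the integrated-in-time form via Fubini, then justifies the computation. A second technical point is that $u_t$ and $M_t$ must be well defined $\rho_t^X$-a.e. and locally integrable against $\rho_t^X$. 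This follows from $\rho_t^X>0$ and the finite second moment, since by Jensen's inequality $\int \|\rho_t^X u_t\| \le \E\|Y_t\|$ and $\int\|\rho_t^X M_t\| \le \E\|Y_t\|^2$.
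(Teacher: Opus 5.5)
Your proposal is correct and follows essentially the same route as the paper. Both arguments write the tested quantities through the pushforward $\rho_t^{XY}=(\Psi_t)_\#\rho_0^{XY}$, apply the chain rule along trajectories, justify exchanging expectation and time integration by Fubini using the uniform-in-time second-moment bound of \Cref{lem:second-moment-finiteness} together with the linear growth of $\nabla f$, and then condition on $X_t$ to produce $u_t$ and $M_t$.

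The only difference is cosmetic. The paper uses space-time test functions $\phi\in C_c^\infty(\R^d\times(0,T))$, whose vanishing at $t=0$ and $t=T$ removes the boundary terms. You use space-only test functions in integrated-in-time form, which is equivalent: your computation applies verbatim to $\phi(x,t)$ once you add the $\partial_t\phi$ term.
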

\begin{remark}
\label{rmk:distributional-meaning}
We recall the notion of an identity to hold in the sense of distributions from \cite[Section 8.1]{ambrosio2005gradient}. 
Fix $T>0$. 
We write $C_c^\infty(\R^d\times(0,T))$ for the class of all compactly supported smooth functions $\phi \colon \R^d\times(0,T)\to\R$. 
In general, for a function $\F$ that is integrable on every compact subset of $\R^d\times(0,T)$, we say the equation $\mathcal F=0$ holds \emph{in the sense of distributions} on $\R^d\times(0,T)$ if
$$
\int_{\R^d\times(0,T)} \F(x,t)\phi(x,t)\dx\dt=0 \,,
$$
for all $\phi\in C_c^\infty(\R^d\times(0,T))$. 
Note that the compact-support property of $\phi$ makes integration by parts convenient, because no boundary terms appear:
$$
    \langle \partial_t \mathcal F,\phi\rangle
    \deq
    -\langle \mathcal F,\partial_t\phi\rangle \,,
    \qquad
    \langle \partial_{x_i}\mathcal F,\phi\rangle
    \deq
    -\langle \mathcal F,\partial_{x_i}\phi\rangle \,.
$$
We extend these definitions to vector-valued functions componentwise, with scalar multiplication replaced by the Euclidean inner product.
\end{remark}
\begin{proof}
    Fix \(T<\infty\). Draw $z_0=(x_0, y_0) \sim \rho_0^{XY}$, and write $\Psi_t^{XY}(z_0) = (\Psi_t^X(z_0), \Psi_t^Y(z_0))$. 
    We prove the first equation. Let \(\phi\in C_c^\infty(\R^d\times(0,T))\).
    Since $\dot{\Psi}_t^X(z_0)=\Psi_t^Y(z_0)$, by chain rule:
    $$
    \frac{\d}{\d t}\phi(\Psi_t^X(z_0), t)=\partial_t\phi(\Psi_t^X(z_0), t)+\langle \nabla_x\phi(\Psi_t^X(z_0), t), \Psi_t^Y(z_0)\rangle \,.
    $$
    Since \(\phi\) is compactly supported on a compact subset of $\R^d \times (0, T)$, we know that $0=\phi(\Psi_0^X(z_0), 0)=\phi(\Psi_T^X(z_0), T)$. Therefore:
    \begin{align*}
        0=\phi(\Psi_T^X(z_0), T)-\phi(\Psi_0^X(z_0), 0) &=\int_0^T \frac{\d}{\d t}\phi(\Psi_t^X(z_0), t) \dt\\
        &=\int_0^T\left[\partial_t\phi(\Psi_t^X(z_0), t)+\langle \nabla_x\phi(\Psi_t^X(z_0), t), \Psi_t^Y(z_0)\rangle\right]\dt \,.
    \end{align*}
    Since $z_0=(x_0, y_0)\sim \rho_0^{XY}$, taking expectation on both sides gives:
    $$
    0=\int_{\R^{2d}}\left(\int_0^T\left[\partial_t\phi(\Psi_t^X(z_0), t)+\langle \nabla_x\phi(\Psi_t^X(z_0), t), \Psi_t^Y(z_0)\rangle\right]\dt\right)\rho_0^{XY}(z_0)\dz_0 \,.
    $$
    Since $\phi$ is compactly supported on a compact subset of $\R^d \times (0, T)$ and is continuously differentiable, there exists a constant $C_\phi$ dependent only on $\phi$ such that
    $$
    \left|\partial_t\phi(\Psi_t^X(z_0), t)+\langle \nabla_x\phi(\Psi_t^X(z_0), t), \Psi_t^Y(z_0)\rangle\right|\leq C_{\phi}(1+\|\Psi_t^Y(z_0)\|) \,.
    $$
    Therefore, the integrand is bounded by \( C_{\phi}(1+\|\Psi_t^Y(z_0)\|)\), and by \Cref{lem:second-moment-finiteness},
    $$
    \sup_{t\in[0,T]}\mathbb \int_{\R^{2d}}\|\Psi_t^Y(z_0)\|^2\rho_0^{XY}(z_0)\, \d z_0<\infty \,.
    $$
    Hence the integrand is integrable with respect to \(\rho_0^{XY}(z_0)\, \d z_0\dt\) on \(\R^{2d}\times[0,T]\). Fubini's theorem allows us to switch the order of integrations:
    \begin{align*}
        0 &= \int_{\R^{2d}}\left(\int_0^T\left[\partial_t\phi(\Psi_t^X(z_0), t)+\langle \nabla_x\phi(\Psi_t^X(z_0), t), \Psi_t^Y(z_0)\rangle\right]\d t\right)\rho_0^{XY}(z_0)\dz_0\\
        &=\int_0^T\int_{\R^{2d}} \left[\partial_t\phi(\Psi_t^X(z_0), t)+\langle \nabla_x\phi(\Psi_t^X(z_0), t), \Psi_t^Y(z_0)\rangle\right]\rho_0^{XY}(z_0)\dz_0 \dt \\
        &\stackrel{(a)}{=} \int_0^T\int_{\R^{2d}}\left[\partial_t\phi(x, t)+\langle \nabla_x\phi(x, t), y\rangle\right]\rho_t^{XY}(x, y) \dx \dy \dt \\
        &=\int_0^T\int_{\R^{d}}\left[\partial_t\phi(x, t)+\langle \nabla_x\phi(x, t), u_t(x)\rangle\right]\rho_t^{X}(x) \dx \dt \,,
    \end{align*}
    where step $(a)$ follows because $\rho_t^{XY}=(\Psi_t^{XY})_{\#}\rho_0^{XY}$. Since this holds for every \(\phi\in C_c^\infty(\mathbb R^d\times(0,T))\), we obtain
    $$\partial_t\rho_t^X+\nabla_x\cdot(\rho_t^Xu_t)=0
    $$
    in the sense of distributions.

    We now prove the second equation. Let $\psi\in C_{c}^{\infty}(\R^{d}\times (0, T); \R^d)$, we consider $\langle \psi(\Psi_t^X(z_0), t), \Psi_t^Y(z_0)\rangle$.     Since $\dot{\Psi}_t^X(z_0)=\Psi_t^Y(z_0)$ and $\dot{\Psi}_t^Y(z_0)=-\nabla f(\Psi_t^X(z_0))$, by chain rule:
    \begin{align*}
        \frac{\d}{\d t} \left\langle \psi(\Psi_t^X(z_0),t), \Psi_t^Y(z_0) \right\rangle
        &= \left\langle \partial_t\psi(\Psi_t^X(z_0),t), \Psi_t^Y(z_0) \right\rangle + \left\langle \nabla_x\psi(\Psi_t^X(z_0),t)\Psi_t^Y(z_0), \Psi_t^Y(z_0) \right\rangle \\
        &\quad- \left\langle \psi(\Psi_t^X(z_0),t), \nabla f(\Psi_t^X(z_0)) \right\rangle \,.
    \end{align*}  

    Since \(\psi\) is supported on a compact subset of $\R^{d} \times (0, T)$, we know that $0=\psi(\Psi_0^{X}(z_0), 0)=\psi(\Psi_T^{X}(z_0), T)$. Therefore:
    \begin{align*}
        0
        &=\left\langle \psi(\Psi_T^X(z_0),T), \Psi_T^Y(z_0) \right\rangle
        -\left\langle \psi(\Psi_0^X(z_0),0), \Psi_0^Y(z_0) \right\rangle\\
        &=\int_0^T \frac{\d}{\d t}
        \left\langle \psi(\Psi_t^X(z_0),t), \Psi_t^Y(z_0) \right\rangle \d t\\
        &=\int_0^T
        \Big[
            \left\langle \partial_t\psi(\Psi_t^X(z_0),t), \Psi_t^Y(z_0) \right\rangle
            +\left\langle \nabla_x\psi(\Psi_t^X(z_0),t)\Psi_t^Y(z_0), \Psi_t^Y(z_0) \right\rangle
            -\left\langle \psi(\Psi_t^X(z_0),t), \nabla f(\Psi_t^X(z_0)) \right\rangle
        \Big]\dt \,.
    \end{align*}
    Since $z_0=(x_0, y_0)\sim \rho_0^{XY}$, taking expectation on both sides gives:
    \begin{align*}
        0
        &=
        \int_{\R^{2d}}\int_0^T
        \Big[
            \left\langle \partial_t\psi(\Psi_t^X(z_0),t), \Psi_t^Y(z_0) \right\rangle
            +\left\langle \nabla_x\psi(\Psi_t^X(z_0),t)\Psi_t^Y(z_0), \Psi_t^Y(z_0) \right\rangle\\
        &\hspace{5.4cm}
            -\left\langle \psi(\Psi_t^X(z_0),t), \nabla f(\Psi_t^X(z_0)) \right\rangle
        \Big]\d t\,\rho_0^{XY}(z_0)\dz_0 \, .
    \end{align*}
    Since $\psi$ is supported on a compact subset of $\R^{d} \times (0, T)$ and is continuously differentiable, there exists a constant $C_\psi$ dependent only on $\psi$ such that
    \begin{align*}
        &\Big|\left\langle \partial_t\psi(\Psi_t^X(z_0),t), \Psi_t^Y(z_0) \right\rangle
            +\left\langle \nabla_x\psi(\Psi_t^X(z_0),t)\Psi_t^Y(z_0), \Psi_t^Y(z_0) \right\rangle
            -\left\langle \psi(\Psi_t^X(z_0),t), \nabla f(\Psi_t^X(z_0)) \right\rangle\Big|\\
        &\leq C_\psi\left(\|\nabla f(\Psi_t^X(z_0))\|+\|\Psi_t^Y(z_0)\|+\|\Psi_t^Y(z_0)\|^2\right) \,.
    \end{align*}
    By $L$-smoothness of $f$ or equivalently $L$-Lipschitzness of $\nabla f$, $\|\nabla f(\Psi_t^X(z_0))\|\leq L\|\Psi_t^X(z_0)\|+\|\nabla f(\bm{0})\|$. Therefore, the integrand is bounded by \( C_\psi\left(1+\|\nabla f(\bm{0})\|+\|\Psi_t^X(z_0)\|+\|\Psi_t^Y(z_0)\|+\|\Psi_t^Y(z_0)\|^2\right)\). By \Cref{lem:second-moment-finiteness},
    $$
    \sup_{t\in[0,T]}\mathbb \int_{\R^{2d}}(\|\Psi_t^X(z_0)\|^2+\|\Psi_t^Y(z_0)\|^2)\rho_0^{XY}(z_0)\d z_0<\infty \,.
    $$
    Hence the integrand is integrable with respect to \(\rho_0^{XY}(\d z_0)\dt\) on \(\R^{2d}\times[0,T]\).  Fubini's theorem allows switching the order of integration:
    \begin{align*}
        0
        &=
        \int_{\R^{2d}}\int_0^T
        \Big[
            \left\langle \partial_t\psi(\Psi_t^X(z_0),t), \Psi_t^Y(z_0) \right\rangle
            +\left\langle \nabla_x\psi(\Psi_t^X(z_0),t)\Psi_t^Y(z_0), \Psi_t^Y(z_0) \right\rangle\\
        &\hspace{5.4cm}
            -\left\langle \psi(\Psi_t^X(z_0),t), \nabla f(\Psi_t^X(z_0)) \right\rangle
        \Big]\d t\,\rho_0^{XY}(z_0)\d z_0 \\
        &=\int_0^T\int_{\R^{2d}}
        \Big[
            \left\langle \partial_t\psi(\Psi_t^X(z_0),t), \Psi_t^Y(z_0) \right\rangle
            +\left\langle \nabla_x\psi(\Psi_t^X(z_0),t)\Psi_t^Y(z_0), \Psi_t^Y(z_0) \right\rangle\\
        &\hspace{5.4cm}
            -\left\langle \psi(\Psi_t^X(z_0),t), \nabla f(\Psi_t^X(z_0)) \right\rangle
        \Big]\rho_0^{XY}(z_0)\d z_0 \, \d t\\
        &\stackrel{(a)}{=}\int_0^T\int_{\mathbb R^{2d}}
        \Big[
            \left\langle \partial_t\psi(x,t),y\right\rangle
            +
            \left\langle \nabla_x\psi(x,t)y,y\right\rangle
            -
            \left\langle \psi(x,t),\nabla f(x)\right\rangle
        \Big]\rho_t^{XY}(x,y)\dx\dy\dt  \\
        &\stackrel{(b)}{=}\int_0^T\int_{\mathbb R^d}
        \Big[
            \left\langle \partial_t\psi(x,t),u_t(x)\right\rangle
            +
            \left\langle \nabla_x\psi(x,t),M_t(x)\right\rangle_F
            -
            \left\langle \psi(x,t),\nabla f(x)\right\rangle
        \Big]\rho_t^X(x)\dx\dt \,.
    \end{align*}
    In the above, step \((a)\) follows from \(\rho_t^{XY} = (\Psi_t)_\#\rho_0^{XY}\), and in step \((b)\), we disintegrate $\rho_t^{XY}$ and use the fact that $\left\langle \nabla_x\psi(x,t)y,y\right\rangle=\left\langle \nabla_x\psi(x,t),yy^\top\right\rangle_F$ and $M_t(x)=\mathbb E[Y_tY_t^\top\mid X_t=x]$.
    Since this holds for every \(\psi\in C_c^\infty(\mathbb R^d\times(0,T);\mathbb R^d)\), we obtain
    $$\partial_t(\rho_t^Xu_t)+\nabla_x\cdot(\rho_t^XM_t)+\rho_t^X\nabla f=0
    $$
    in the sense of distributions.
\end{proof}

%%%%%%%%%%%%%%%%%%%%%%%%%%%%%%

\section{Properties of the Hamiltonian flow under regular initialization}
\label{apdx:primary_to_secondary}

We derive properties of the Hamiltonian flow under the assumption that the initial distribution is ``regular'', which means it is warm and smooth relative to the target distribution; see~\Cref{asmp:init_regularity}. We remark that the design of~\Cref{asmp:init_regularity} is inspired by ~\cite[Definition ~2.5]{lu2026sharphypocoerciveentropydecay}. 
Throughout, we assume $\nu^X \in \P_{2,\ac,\fs}(\R^d)$ is $L$-log-smooth.

\begin{assumption}
\label{asmp:init_regularity} 
Let $\rho_0^X \in \P_{2,\ac,\fs}(\R^d)$ be the initial $X$-distribution, and define $q_0 \colon \R^d \to \R$ by $q_0(x)\deq \rho_0^X(x)/\nu^X(x)$. 
We assume $q_0$ is continuously differentiable with bounded first derivative, and there exist constants $0 < \zeta \leq \xi < \infty$ such that for all $x \in \R^d$, 
$$\zeta \leq q_0(x) \leq \xi \,.$$
\end{assumption}

%%%%%%%%%%
\subsection{Propagation of warmness}

Recall one step of \HMC starts from $(X_0,Y_0) \sim \rho_0^{XY} = \rho_0^X \otimes \gamma$, evolves via the Hamiltonian flow~\eqref{eq:HamFlow} for some time $t \ge 0$ to reach $(X_t,Y_t) \sim \rho_t^{XY}$, and returns $X_t \sim \rho_t^X$ as the next iterate.
Since the target distribution $\nu^X$ is conserved under this operation (if $\rho_0^X = \nu^X$, then $\rho_t^X = \nu^X$), the warmness of the $X$-iterate is propagated.

%%%%%%%%%
\begin{lemma}
\label{lem:warmness-propagation}
Assume $\rho_0^X$ satisfies~\Cref{asmp:init_regularity} for some $0 < \zeta \le \xi < \infty$.
Let $(X_t,Y_t) \sim \rho_t^{XY}$ be the solution of the Hamiltonian flow~\eqref{eq:HamFlow} from $(X_0,Y_0) \sim \rho_0^{XY} = \rho_0^X \otimes \gamma$, and let $X_t \sim \rho_t^X$ be the $X$-iterate.
Define 
$$
q_t(x)\deq \frac{\rho_t^X(x)}{\nu^X(x)}, \qquad g_t(x, y)\deq \frac{\rho_t^{XY}(x, y)}{\nu^{XY}(x, y)}\,.
$$
Then for all $t \ge 0$ and $(x,y) \in \R^{2d}$, we have
\begin{align*}
    \zeta \le q_t(x) \le \xi, \qquad 
    \zeta \le g_t(x, y) \le \xi \,.
\end{align*}
\end{lemma}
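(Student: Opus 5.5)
The plan is to first prove the pointwise bounds for the joint density ratio $g_t$, and then obtain the bounds for $q_t$ by integrating out the velocity.

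\textbf{Step 1: the joint ratio at time $0$.} Since $\rho_0^{XY} = \rho_0^X \otimes \gamma$ and $\nu^{XY} = \nu^X \otimes \gamma$, the $\gamma$ factors cancel. Hence $g_0(x,y) = \rho_0^X(x)/\nu^X(x) = q_0(x)$, and \Cref{asmp:init_regularity} gives $\zeta \le g_0 \le \xi$ everywhere.

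\textbf{Step 2: transport of the joint ratio along the flow.} Using $\rho_t^{XY} = (\Psi_t)_\#\rho_0^{XY}$, volume preservation (\Cref{lem:volume_conservation}) and the change-of-variables formula, exactly as in the proof of \Cref{Lem:HamFlowConsvEntropy}, we get
$$\rho_t^{XY}(x,y) = \rho_0^{XY}(\Psi_{-t}(x,y)).$$
The same computation applies to $\nu^{XY} \propto e^{-H}$. Combined with energy conservation (\Cref{lem:energy_conservation}), $H(\Psi_{-t}(x,y)) = H(x,y)$, it gives
$$\nu^{XY}(\Psi_{-t}(x,y)) = \nu^{XY}(x,y).$$
Dividing the two identities yields
$$g_t(x,y) = g_0(\Psi_{-t}(x,y)) = q_0\big(\Psi^X_{-t}(x,y)\big).$$
Therefore $\zeta \le g_t \le \xi$ pointwise for every $t$, and in fact for all $t \in \R$.

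\textbf{Step 3: passing to the $X$-marginal.} Write $\rho_t^X(x) = \int g_t(x,y)\,\nu^X(x)\gamma(y)\dy$. This gives
$$q_t(x) = \int_{\R^d} g_t(x,y)\,\gamma(y)\dy,$$
which is an average of $g_t(x,\cdot)$ against the probability measure $\gamma$. It therefore lies in $[\zeta,\xi]$.

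\textbf{Expected obstacle.} The only point needing care is Step 2, namely the pointwise (not merely almost-everywhere) identity for the densities. The flow map $\Psi_t$ is a $C^1$ diffeomorphism with inverse $\Psi_{-t}$ and unit Jacobian, by \Cref{prop:Hamiltonian_smoothness} and \Cref{lem:biLip_flow_map}. Since $\rho_0^{XY}$ is given as a density, the formula for $\rho_t^{XY}$ can be taken as the definition of its density. No further regularity is needed.
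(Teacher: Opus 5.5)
Your proposal is correct and matches the paper's proof step for step. The paper also writes $\rho_t^{XY} = \rho_0^{XY}\circ\Psi_{-t}$ using volume preservation, then uses $\nu^{XY}\circ\Psi_{-t} = \nu^{XY}$ to get $g_t = q_0\circ\Psi_{-t}^X \in [\zeta,\xi]$, and then averages over $y$ against $\gamma$ to bound $q_t$. Your justification of $\nu^{XY}\circ\Psi_{-t} = \nu^{XY}$ through energy conservation is slightly more explicit than the paper's brief appeal to stationarity.
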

%%%%%%%
\begin{proof}
Recall $\Psi_t$ is the Hamiltonian flow map at time $t$, so $\rho_t^{XY} = (\Psi_t)_\# \rho_0^{XY}$.
Then by the change-of-variable formula, for all $(x,y) \in \R^{2d}$:
$$\rho_t^{XY}(x, y) = \left((\Psi_t)_{\#}\rho_0^{XY}\right)(x, y) = |\det \nabla \Psi_{-t}(x, y)| \cdot \rho_0^{XY}(\Psi_{-t}(x, y)) = \rho_0^{XY}(\Psi_{-t}(x, y)) \,,
$$
since $|\det \nabla \Psi_{-t}(x, y)| = 1$ along Hamiltonian flow (\Cref{lem:volume_conservation}). 
On the other hand, since $\nu^{XY}$ is stationary along the Hamiltonian flow, we have
$$\nu^{XY}(x, y)=\nu^{XY}(\Psi_{-t}(x, y)) \,.$$
Since $\rho_0^{XY}=\rho_0^X \otimes \gamma$ and $\nu^{XY}=\nu^X \otimes \gamma$, we have by~\Cref{asmp:init_regularity},
\begin{align*}
    g_t(x, y)=\frac{\rho_t^{XY}(x, y)}{\nu^{XY}(x, y)} 
    = \frac{\rho_0^{XY}(\Psi_{-t}(x, y))}{\nu^{XY}(\Psi_{-t}(x, y))} 
    =\frac{\rho_0^X(\Psi_{-t}^X(x, y))}{\nu^X(\Psi_{-t}^X(x, y))} 
    \in [\zeta, \xi] \,.
\end{align*}
Integrating over $y \in \R^d$ gives
\begin{align*}
    q_t(x)=\frac{\rho_t^{X}(x)}{\nu^{X}(x)} 
    =\int_{\R^d} \frac{\rho_t^{XY}(x, y)}{\nu^{X}(x)} \dy 
    = \int_{\R^d}\frac{\rho_t^{XY}(x, y)}{\nu^{XY}(x, y)} \, \gamma(y) \dy=\int_{\R^d}g_t(x, y)\gamma(y)\dy
    \in [\zeta, \xi] \,.
\end{align*}
\end{proof}

%%%%%%%%%%%%%%
\subsection{Implications of regular initialization}

From a regular initialization (\Cref{asmp:init_regularity}), we derive the following regularity properties of the iterates along the Hamiltonian flow~\eqref{eq:HamFlow}.
We recall the definitions of the conditional mean $u_t$~\eqref{Eq:DefCondMean}, conditional second moment $M_t$~\eqref{Eq:DefCondSecMom}, conditional covariance $\Sigma_t$~\eqref{Eq:DefCondCov}, acceleration field $a_t$~\eqref{Eq:DefAccField}, and displacement field $v_t$~\eqref{Eq:DefDisplacement}.

%%%%%
\begin{lemma}
\label{lem:secondary_regularity}
\label{asmp:secondary_regularity}
\label{thm:primary_to_secondary}
Assume $\nu^X$ is log-smooth, and $\rho_0^X$ satisfies~\Cref{asmp:init_regularity} for some $0 < \zeta \le \xi < \infty$.
Let $(X_t,Y_t) \sim \rho_t^{XY}$ be the solution of the Hamiltonian flow~\eqref{eq:HamFlow} from $(X_0,Y_0) \sim \rho_0^{XY} = \rho_0^X \otimes \gamma$, and let $X_t \sim \rho_t^X$ be the $X$-iterate.
Then for each $T \in (0,\infty)$, the following regularity conditions hold for all $t \in (0,T)$:
\begin{reglist}
    \item\label{reg:u-C1}
    \(\rho_t^X(x)\), \(u_t(x)\), and \(M_t(x)\) are continuously differentiable with respect to
    \(t\) and \(x\).
    %%%%%
    \item\label{reg:integrability}
    The following tail integrability condition holds:
    \begin{align}\label{Eq:Integrability}
        \lim_{R\to\infty}
        \sup_{t\in[0,T]}
        \int_{\|x\|\ge R}
        \left(
            1+\|v_t(x)\|^2+\|u_t(x)\|^2+\|a_t(x)\|^2+\|\Sigma_t(x)\|_{\mathsf F}^2
        \right)
        \rho_t^X(x) \dx
        =
        0 \,.
    \end{align}
    \item\label{reg:continuity-eqn}
    The continuity equation
    \[
        \partial_t\rho_t^X+\nabla\cdot(u_t\rho_t^X)=0
    \]
    admits a classical characteristic representation on \([0,T]\). Namely, there exists a flow map
    \(\Phi_t \colon \R^d \to \R^d\) such that for all \(t\in[0,T]\),
    \[
        \Phi_0(x)=x \,,
        \qquad
        \frac{\d}{\dt}\Phi_t(x)=u_t(\Phi_t(x)) \,,
        \qquad
        (\Phi_t)_{\#}\rho_0^X=\rho_t^X \,.
    \]
    Moreover, for all \(t\in(0,T)\),
    \begin{align*}
        \lim_{h\to0}
        \int_{\R^d}
        \left\|
            \frac{\Phi_{t+h}(x)-\Phi_t(x)}{h}
            -
            u_t(\Phi_t(x))
        \right\|^2
        \rho_0^X(x)\dx
        &=0 \,, \\
        \lim_{h\to0}
        \int_{\R^d}
        \left\|
            \frac{\Phi_{t+h}(x)+\Phi_{t-h}(x)-2\Phi_t(x)}{h^2}
            -
            a_t(\Phi_t(x))
        \right\|^2
        \rho_0^X(x)\dx
        &=0 \,.
    \end{align*}
\end{reglist}
\end{lemma}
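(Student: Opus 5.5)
The plan is to obtain all three regularity properties from one explicit representation of $\rho_t^{XY}$ via the flow map. By \Cref{lem:warmness-propagation} and its proof,
\[
\rho_t^{XY}(x,y)=\nu^X(x)\gamma(y)\,q_0\bigl(\Psi^X_{-t}(x,y)\bigr), \qquad \zeta\le q_0\le\xi .
\]
Hence
\[
\rho_t^X(x)=\nu^X(x)\int q_0(\Psi^X_{-t}(x,y))\gamma(y)\dy ,\qquad
\rho_t^X u_t(x)=\nu^X(x)\int y\, q_0(\Psi^X_{-t}(x,y))\gamma(y)\dy ,
\]
and similarly for $\rho_t^X M_t$ with $yy^\top$ in the integrand.

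For \eqref{reg:u-C1}, I differentiate under the integral sign. Since $f\in C^2$ is $L$-smooth, $\Psi$ is $C^1$ in $(t,x,y)$, and \Cref{lem:biLip_flow_map} gives $\|\nabla\Psi_{-t}\|_{\op}\le e^{(1+L)T}$. Also $\partial_t\Psi_{-t}=-\Omega\nabla H(\Psi_{-t})$ grows at most linearly in $\|(x,y)\|$, uniformly for $t\in[0,T]$. Because $\nabla q_0$ is bounded, the derivatives of the integrands are dominated by polynomials in $\|y\|$ times $\gamma(y)$, locally uniformly in $x$. Dominated convergence therefore gives continuity of the derivatives. Dividing by $\rho_t^X\ge\zeta\nu^X>0$ gives $u_t,M_t\in C^1$.

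For \eqref{reg:integrability}, I bound each term separately. The sandwich $\zeta\nu^X\le\rho_t^X\le\xi\nu^X$ reduces every tail integral to one against $\nu^X$. Conditional Jensen gives $\|u_t\|^2\le \Tr M_t$ and $\|\Sigma_t\|_{\mathsf F}\le\Tr M_t$. The ratio bound $g_t\le\xi$ shows that the conditional law of $Y_t$ given $X_t=x$ has density at most $(\xi/\zeta)\gamma$. So $\E[\|Y_t\|^4\mid X_t=x]$ is bounded by a constant independent of $(t,x)$, and $u_t,\Sigma_t$ are uniformly bounded. For $v_t$, I use $\|v_t(x)\|^2\le2\|x\|^2+2\|R_t(x)\|^2$. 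The map $R_t$ pushes $\rho_t^X$ to $\nu^X$, so $\int_{\|x\|\ge R}\|R_t\|^2\rho_t^X$ is the integral of $\|z\|^2$ over a set of $\nu^X$-mass at most $\rho_t^X(\|x\|\ge R)\le\xi\nu^X(\|x\|\ge R)$. Uniform integrability of $\|z\|^2$ under $\nu^X$ then handles this term.

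The acceleration field $a_t$ is the step I expect to be hardest. I would first obtain a formula for it by expanding the second continuity equation of \Cref{lem:continuity-momentum-equation} using the first:
\[
\rho_t^X a_t=-\nabla\cdot(\rho_t^X\Sigma_t)-\rho_t^X\nabla f .
\]
The $\nabla f$ term has linear growth, so its tail integral is controlled by the second moments of $\nu^X$. The term $\nabla\cdot(\rho_t^X\Sigma_t)/\rho_t^X$ requires bounds on $\nabla_x$ of the conditional moments. Differentiating the integral representations gives integrands involving $\nabla q_0\cdot\nabla_x\Psi^X_{-t}$, which are bounded by the bi-Lipschitz estimate, together with terms with $\nabla\nu^X/\nu^X=-\nabla f$, which have linear growth. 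The result is $\|a_t(x)\|\le C(1+\|x\|)$ uniformly on $[0,T]$, and hence uniform tail control.

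For \eqref{reg:continuity-eqn}, $u_t$ is $C^1$ by \eqref{reg:u-C1}. I will also need a global Lipschitz or linear-growth bound on $u_t$, which follows from the gradient estimates above. With it, the characteristic ODE $\dot\Phi_t=u_t(\Phi_t)$ has a global unique $C^1$ flow. The pushforward identity $(\Phi_t)_\#\rho_0^X=\rho_t^X$ follows from uniqueness for the continuity equation with a Lipschitz field (Ambrosio--Gigli--Savar\'e, Prop.~8.1.8), since both sides solve it from $\rho_0^X$. For the $L^2(\rho_0^X)$ finite-difference limits, I write
\[
\frac{\Phi_{t+h}-\Phi_t}{h}=\frac1h\int_t^{t+h}u_s(\Phi_s)\ds ,
\]
and the second difference as an average of $\ddot\Phi_s=a_s(\Phi_s)$ against the triangle kernel on $[t-h,t+h]$. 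Pointwise convergence follows from continuity of $u$ and $a$. The $L^2$ limits then follow by Vitali's theorem, using the uniform tail bounds \eqref{Eq:Integrability} transported through $(\Phi_s)_\#\rho_0^X=\rho_s^X$.
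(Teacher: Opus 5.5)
Your proposal is correct and follows the paper's proof in all essentials. You use the same ingredients: the representation $g_t = q_0\circ\Psi^X_{-t}$ with differentiation under the Gaussian integral for (R1); the warmness sandwich together with the pushforward trick for $R_t$ in (R2); and characteristics with an FTC/Taylor-remainder argument for (R3), where your Vitali/domination step replaces the paper's compact-plus-tail split to the same effect.

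The one real variation is how you bound $a_t$. You go through the momentum identity $\rho_t^X a_t = -\nabla\cdot(\rho_t^X\Sigma_t)-\rho_t^X\nabla f$, so only spatial derivatives of the conditional moments are needed. The paper proves that identity only later, in \Cref{lem:at_representation}, and you must first upgrade it from distributional to pointwise using (R1). The paper instead bounds $\partial_t u_t$ and $\nabla u_t$ directly, via the quotient rule and the linear growth of $\partial_t g_t$. Both routes give $\|a_t(x)\|\le C_T(1+\|x\|)$.
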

%%%%
\begin{proof}
    We prove property~\Cref{reg:u-C1} in~\Cref{sub_apdx:R1_proof}.
    We prove property~\Cref{reg:integrability} in~\Cref{sub_apdx:R2_proof}.
    We prove property~\Cref{reg:continuity-eqn} in~\Cref{sub_apdx:R4_proof}.
\end{proof}

%%%%%%%%%%%%
\subsubsection{Proof of \texorpdfstring{\Cref{reg:u-C1}}{continuous differentiability}}\label{sub_apdx:R1_proof}

\begin{proof}[Proof of~\Cref{lem:secondary_regularity} (property~\Cref{reg:u-C1})] 

By \Cref{asmp:init_regularity}, $q_0 = \frac{\rho_0^X}{\nu^X}$ is continuously differentiable with $\|\nabla q_0\|_\infty \deq \sup_{x \in \R^d} \|\nabla q_0(x)\| < \infty$ and $\zeta \leq q_0(x) \leq \xi$ for all $x \in \R^d$. 
Recall from \Cref{lem:warmness-propagation} that 
$$
q_t(x)=\frac{\rho_t^X(x)}{\nu^X(x)}\in [\zeta, \xi] \, , \qquad
g_t(x, y)=\frac{\rho_t^{XY}(x, y)}{\nu^{XY}(x, y)}\in [\zeta, \xi] \, , \qquad \forall (x, y)\in \R^{2d}, \; t\in [0, T] \, .$$ 
We define the notations:
\begin{align*}
    j_t(x) &\deq \int_{\R^d} y \, g_t(x,y) \gamma(y) \dy \,,\\
    K_t(x) & \deq \int_{\R^d} yy^\top \, g_t(x,y) \gamma(y) \dy \,.
\end{align*}
Using these, we may rewrite 
$u_t(x) = \E[Y_t \mid X_t = x]$ and $M_t(x) = \E[Y_t Y_t^\top \mid X_t = x]$ as:
\begin{subequations}\label{eq:utMt_rewrite}
\begin{align}
    u_t(x)
    &=\frac{1}{\rho_t^X(x)}\int_{\R^d}y\rho_t^{XY}(x,y)\dy
    =\frac{\nu^X(x)}{\rho_t^X(x)}
    \int_{\R^d} y \, g_t(x,y) \, \gamma(y) \, \dy
    = \frac{j_t(x)}{q_t(x)} \,, \\
    M_t(x)
    &=\frac{1}{\rho_t^X(x)} \int_{\R^d} yy^\top \, \rho_t^{XY}(x,y) \dy
    = \frac{\nu^X(x)}{\rho_t^X(x)}
    \int_{\R^d} yy^\top \, g_t(x,y) \, \gamma(y) \dy
    =\frac{K_t(x)}{q_t(x)} \,.
\end{align}
\end{subequations}
Since $\nu^X \in C^1(\R^d)$, in order to show that $\rho_t^X(x) =q_t(x)\nu^X(x)\in C^1( \R^d \times [0, \infty))$, it suffices to show that $q_t(x)\in C^1( \R^d \times [0, \infty))$. For $u_t$ and $M_t$, since $0<\zeta\leq q_t(x)$, it suffices to show $j_t(x), K_t(x)\in C^1([0, \infty)\times\R^d])$.

We first consider the differentiability of $g_t(x, y)$, since $q_t(x)$, $j_t(x)$ and $K_t(x)$ are defined using $g_t(x, y)$. Recall from the proof of \Cref{lem:warmness-propagation} that $g_t(x, y)=q_0(\Psi_{-t}^X(x,y))$. By chain rule,
\begin{align*}
\nabla_x g_t(x,y) &= \bigl(\nabla_x\Psi_{-t}^X(x,y)\bigr)^\top \, \nabla q_0(\Psi_{-t}^X(x,y)) \,, \\
\partial_t g_t(x,y) &= \langle \nabla q_0(\Psi_{-t}^X(x,y)), \, \partial_t\Psi_{-t}^X(x,y) \rangle 
= -\langle \nabla q_0(\Psi_{-t}^X(x,y)), \, \Psi_{-t}^Y(x,y) \rangle \,.
\end{align*}
Thus, $g_t(x, y)$ is differentiable with respect to $t$ and $x$, by continuous differentiability of $\nabla \Psi_{t}^X(x, y)$ and $q_0(x)$ with respect to $t$ and $x$. In particular, since $\|\nabla q_0\|_\infty = \sup_{x \in \R^d} \|\nabla q_0(x)\| < \infty$,
\begin{align*}
    \|\nabla_x g_t(x,y) \| &\le \left\|\nabla_x\Psi_{-t}^X(x,y)\right\|_\op \cdot \|\nabla q_0(\Psi_{-t}^X(x,y))\|
    \le C_T \, \|\nabla q_0\|_\infty \,,  \\
    |\partial_t g_t(x,y)| &\le \| \Psi_{-t}^Y(x,y) \| \cdot \|\nabla q_0(\Psi_{-t}^X(x,y))\| 
    \le  C_T(1+\|x\|+\|y\|)\|\nabla q_0\|_\infty \,.
\end{align*}
Here, $C_T$ is the constant given by \Cref{lem:biLip_flow_map}, which satisfies
$$\sup_{|t|\le T}\|\nabla \Psi_t(x,y)\|_{\op} \le C_T \,,\qquad \|\Psi_t(x,y)\|\le C_T(1+\|x\|+\|y\|) \,.
$$

We now show that $q_t(x)$, $j_t(x)$ and $K_t(x)$ are continuously differentiable with respect to $t$ and $x$. Chain rule gives:
\begin{align*}
    \nabla_x q_t(x) &= \int_{\R^d}\nabla_x g_t(x,y) \, \gamma(y) \dy \,,
    &\partial_t q_t(x) & =\int_{\R^d} \partial_t g_t(x,y) \, \gamma(y) \dy \,,\\
    \nabla_x j_t(x) &= \int_{\R^d} y \, \nabla_x g_t(x,y) \, \gamma(y) \dy \,,
    &\partial_t j_t(x) &= \int_{\R^d} y\,\partial_t g_t(x,y) \, \gamma(y) \dy \,,\\
    \nabla_x K_t(x) &= \int_{\R^d}yy^\top \nabla_x g_t(x,y) \, \gamma(y) \dy \,,
    &\partial_t K_t(x) &= \int_{\R^d} yy^\top \, \partial_t g_t(x,y) \, \gamma(y) \dy \,.
\end{align*}
Note that differentiation under the integration sign is valid, since $\|\nabla_x g_t(x, y)\|$ and $\|\partial_t g_t(x, y)\|$ are upper bounded by polynomials of $x$ and $y$ that are integrable against the Gaussian distribution $\gamma(y)$. Moreover, since \(\nabla_x g_t(x,y)\) and \(\partial_t g_t(x,y)\) are continuous in \((t,x)\) for each fixed \(y\), dominated convergence theorem shows \(q_t(x)\), \(j_t(x)\) and \(K_t(x)\) are continuously differentiable in \(t\) and \(x\).
\end{proof}

\begin{remark}
\label{rmk:qjK_upperbound}
    By \Cref{lem:warmness-propagation}, $g_t(x, y) \le \xi$. Therefore, there exists a constant $C_0$ dependent only on $\xi$ and dimension $d$, such that
    \begin{gather}\label{eq:qjK_upperbound}
    |q_t(x)|+\|j_t(x)\|+\|K_t(x)\|_{\mathsf{F}}
    \leq C_0 \,, \qquad \forall t\in [0, T], \; x\in \R^d \,.
    \end{gather}
    The bounds are uniform in \(t\) and \(x\) because \(q_t,j_t,K_t\) are the zeroth, first, and second Gaussian moments weighted by the uniformly bounded factor \(g_t(x,\cdot)\), and the standard Gaussian distribution has finite moments of all orders:
    \begin{align*}
        q_t(x)= \int_{\R^d} g_t(x, y)\gamma (y) \dy &\leq \xi \int_{\R^d} \gamma(y) \dy<\infty \,, \\
        \|j_t(x)\| \leq \int_{\R^d} g_t(x, y)\|y\| \gamma(y) \dy &\leq \xi\int_{\R^d} \|y\|\gamma(y) \dy<\infty \,,\\
        \|K_t(x)\|_{\mathsf{F}} \leq \int_{\R^d} g_t(x, y)\|y\|^2 \gamma(y) \dy &\leq \xi\int_{\R^d} \|y\|^2\gamma(y) \dy<\infty \,.
    \end{align*}     
    Since $\zeta \leq q_t(x)$ by \Cref{lem:warmness-propagation}, $\|u_t(x)\|$ and $\|M_t(x)\|_{\mathsf{F}}$ are also upper bounded:
    $$\|u_t(x)\|+\|M_t(x)\|_{\mathsf{F}} \leq \frac{C_0}{\zeta} \, , \qquad \forall t\in [0, T], \; x\in \R^d \,.
    $$
    The proof follows from the representation of $u_t$ and $M_t$ in \cref{eq:utMt_rewrite}:
    $$
    u_t(x)=\frac{j_t(x)}{q_t(x)} \,, \qquad M_t(x)=\frac{K_t(x)}{q_t(x)} \,.
    $$
\end{remark}

%%%%%%%%%%
\subsubsection{Proof of \texorpdfstring{\Cref{reg:integrability}}{tail integrability}}\label{sub_apdx:R2_proof}
\begin{proof}[Proof of~\Cref{lem:secondary_regularity} (property~\Cref{reg:integrability})] 
    We verify the limit in~\eqref{Eq:Integrability} is $0$ term by term. 
    Throughout the proof, $C_T$ denotes
    a finite constant that may change from line to line and may depend on $T$, $\zeta$, $\xi$, and $q_0$, but not on $R$, $t$, or $x$.

    \paragraph{First term of~\eqref{Eq:Integrability}.} 
    By \Cref{lem:warmness-propagation},
    we have $\rho_t^X(x)\leq \xi \nu^X(x)$ for all $t\in [0, T]$. Therefore,
    $$
    \lim_{R\to \infty}\sup_{t\in [0, T]}\int_{\|x\|\geq R} \rho_t^X(x) \dx
    \leq \xi \lim_{R\to \infty} \int_{\|x\|\geq R} \nu^X(x) \dx=0 \,.
    $$
    
    \paragraph{Second term of~\eqref{Eq:Integrability}.} 
    Recall $v_t(x)=x-R_t(x)$ is the displacement field, 
    where $R_t$ is the optimal transport map that satisfies $(R_t)_\#\rho_t^X=\nu^X$. 
    We can bound:
    \begin{align}\label{Eq:RegIntCalc1}
        \int_{\|x\|\ge R} \|v_t(x)\|^2 \, \rho_t^X(x)\dx
        &\le 2\int_{\|x\|\ge R} \|x\|^2\rho_t^X(x)\dx
        + 2\int_{\|x\|\ge R} \|R_t(x)\|^2 \rho_t^X(x) \dx \,.
    \end{align}
    The first term in~\eqref{Eq:RegIntCalc1} is controlled by warmness:
    $$
    \lim_{R\to\infty}\sup_{t\in[0,T]} \int_{\|x\|\ge R}\|x\|^2 \, \rho_t^X(x) \dx
    \le
    \xi \lim_{R\to\infty}\int_{\|x\|\ge R}\|x\|^2 \, \nu^X(x) \dx = 0 \,.
    $$
    For the second term in~\eqref{Eq:RegIntCalc1}, let $x\sim \rho_t^X$, and let $\tilde{x} \deq R_t(x) \sim \nu^X$. Given any $B>0$, the following inequality holds for any $x$:
    \begin{align*}
        \|R_t(x)\|^2 \,\mathbf{1}_{\{\|x\|\geq R\}}(x)
        &=
        \|R_t(x)\|^2 \,\mathbf{1}_{\{\|x\|\geq R\}}(x)
        \left(
            \mathbf{1}_{\{\|R_t(x)\|< B\}}(x)
            +
            \mathbf{1}_{\{\|R_t(x)\|\geq B\}}(x)
        \right)\\
        &\leq
        B^2 \,\mathbf{1}_{\{\|x\|\geq R\}}(x)
        +
        \|R_t(x)\|^2 \,\mathbf{1}_{\{\|R_t(x)\|\geq B\}}(x) \,.
    \end{align*}
    Taking expectation on both sides with respect to $x\sim \rho_t^X$ gives:
    \begin{align*}
       \int_{\|x\|\ge R} \|R_t(x)\|^2 \, \rho_t^X(x) \dx &\leq B^2 \int_{\|x\|\geq R} \rho_t^X(x) \dx+\int_{\|R_t(x)\|\geq B} \|R_t(x)\|^2\rho_t^X(x) \dx\\
       &\leq B^2 \int_{\|x\|\geq R} \rho_t^X(x) \dx + \int_{\|\tilde{x}\|\geq B} \|\tilde{x}\|^2 \, \nu^X(x) \dx\\
       &\le B^2 \xi \int_{\|x\|\geq R} \nu^X(x) \dx + \int_{\|\tilde{x}\|\geq B} \|\tilde{x}\|^2 \, \nu^X(x) \,,
    \end{align*}
    where the last step follows from warmness. 
    For fixed $B$, the first term vanishes upon taking $R\to \infty$. 
    Taking $B\to \infty$, the second term vanishes since $\nu^X\in \mathcal{P}_2(\R^d)$. 
    Therefore,
    $$
    \lim_{R\to\infty}\sup_{t\in[0,T]} \int_{\|x\|\ge R} \|R_t(x)\|^2  \, \rho_t^X(x) \dx =0 \,.
    $$

    %%%%%%%
    \paragraph{Third term of~\eqref{Eq:Integrability}.} Recall from \Cref{rmk:qjK_upperbound} that $\|u_t(x)\|$ is uniformly upper bounded:
    $$
    \|u_t(x)\|\leq \frac{C_0}{\zeta},
    $$
    here, $C_0$ is a constant dependent only on $\xi$ and dimension d.
    Therefore,
    \begin{align*}
    \lim_{R\to \infty} \sup_{t\in [0, T]}
    \int_{\|x\|\geq R} \|u_t(x)\|^2 \, \rho_t^X(x)\dx
    &\le
    \frac{C_0^2}{\zeta^2} \, \lim_{R\to \infty} \sup_{t\in [0, T]}
    \int_{\|x\|\geq R}\rho_t^X(x) \dx \\
    &\le
    \frac{C_0^2}{\zeta^2} \cdot \xi \, \lim_{R\to \infty}
    \int_{\|x\|\geq R}\nu^X(x)\dx \\
    &=0 \,.
    \end{align*}

    \paragraph{Fourth term of~\eqref{Eq:Integrability}.}
    Recall from~\eqref{Eq:DefAccField} that $a_t(x)=\partial_t u_t(x)+\nabla u_t(x)u_t(x)$. 
    Since $u_t = \frac{j_t}{q_t}$, we 
    can calculate using chain rule and Cauchy-Schwartz inequality:
    \begin{align*}
        \|\nabla u_t(x)\|
        &\le
        \frac{\|\nabla j_t(x)\|}{q_t(x)}
        +
        \frac{\|j_t(x)\| \cdot \|\nabla q_t(x)\|}{q_t(x)^2} \,,\\
        \|\partial_t u_t(x)\|
        &\le
        \frac{\|\partial_t j_t(x)\|}{q_t(x)}
        +
        \frac{\|j_t(x)\| \cdot |\partial_t q_t(x)|}{q_t(x)^2} \,.
    \end{align*}
    Recall $q_t(x) = \int_{\R^d}g_t(x,y) \, \gamma(y) \dy$ and $j_t(x) = \int_{\R^d} y \, g_t(x,y) \, \gamma(y) \dy$. Recall also from \Cref{rmk:qjK_upperbound} that for all $t\in[0,T]$, $\|\nabla_x g_t(x,y)\|\le C_T$, $|\partial_t g_t(x,y)|\le C_T(1+\|x\|+\|y\|)$, and $\|j_t(x)\|\leq C_T$. Thus, differentiating under the Gaussian integral as justified in \Cref{sub_apdx:R1_proof},
    $$
    \begin{aligned}
        \|\nabla_x q_t(x)\| + \|\nabla_x j_t(x)\|
        &\le
        C_T \int_{\R^d}(1+\|y\|)\gamma(y) \dy
        \le C_T \,, \\
        |\partial_t q_t(x)|+\|\partial_t j_t(x)\|
        &\le
        C_T\int_{\R^d}(1+\|y\|)(1+\|x\|+\|y\|)\gamma(y) \dy
        \le C_T(1+\|x\|) \,.
    \end{aligned}
    $$
    Finally, since $q_t\ge \zeta$, and $j_t$ is bounded, the quotient rule gives
    $$
    \begin{aligned}
        \|\nabla u_t(x)\|
        &\le
        \frac{\|\nabla j_t(x)\|}{q_t(x)}
        +
        \frac{\|j_t(x)\| \cdot \|\nabla q_t(x)\|}{q_t(x)^2}
        \le C_T \,,\\
        \|\partial_t u_t(x)\|
        &\le
        \frac{\|\partial_t j_t(x)\|}{q_t(x)}
        +
        \frac{\|j_t(x)\| \cdot |\partial_t q_t(x)|}{q_t(x)^2}
        \le C_T(1+\|x\|) \,.
    \end{aligned}
    $$
    Together with the boundedness of $u_t$, this implies
    $$
        \|a_t(x)\|
        \le \|\partial_tu_t(x)\|+\|\nabla u_t(x)\| \cdot \|u_t(x)\|
        \le C_T(1+\|x\|) \,.
    $$
    Therefore,
    $$
    \begin{aligned}
    \lim_{R\to \infty}\sup_{t\in [0, T]}
    \int_{\|x\|\geq R}\|a_t(x)\|^2\rho_t^X(x)\dx
    &\le
    C_T\lim_{R\to \infty}\sup_{t\in [0, T]}
    \int_{\|x\|\geq R}\left(1 + \|x\|^2 \right)\rho_t^X(x) \dx\\
    &\le
    C_T \, \xi \, \lim_{R\to \infty}
    \int_{\|x\|\geq R}\left(1 + \|x\|^2 \right)\nu^X(x)\dx=0 \,,
    \end{aligned}
    $$
    where the last equality holds because $\nu^X \in \P_2(\R^d)$.
    
    \paragraph{Fifth term of~\eqref{Eq:Integrability}.} 
    Recall that $\Sigma_t(x)=M_t(x)-u_t(x)u_t(x)^\top$. Cauchy-Schwartz gives:
    $$
    \|\Sigma_t(x)\|_{\mathsf F}
    \le \|M_t(x)\|_{\mathsf F}+\|u_t(x)u_t(x)^\top\|_{\mathsf F}.
    $$
    Recall from \Cref{rmk:qjK_upperbound} that $\|u_t(x)\|$ and $\|M_t(x)\|_{\mathsf{F}}$ are uniformly upper bounded:
    $$
    \|u_t(x)\|\leq \frac{C_0}{\zeta}, \qquad \|M_t(x)\|_{\mathsf{F}}\leq \frac{C_0}{\zeta}.
    $$
    here, $C_0$ is a constant dependent only on $\xi$ and dimension d. Then
    $$
    \|\Sigma_t(x)\|_{\mathsf F}
    \le \|M_t(x)\|_{\mathsf F}+\|u_t(x)u_t(x)^\top\|_{\mathsf F}\leq \frac{C_0}{\zeta}\left(1+\frac{C_0}{\zeta}\right).
    $$
    Therefore,
    $$
    \begin{aligned}
    \lim_{R\to \infty}\sup_{t\in [0, T]}
    \int_{\|x\|\geq R}\|\Sigma_t(x)\|_{\mathsf F}^2 \, \rho_t^X(x) \dx
    &\le
    \left(\frac{C_0}{\zeta}\right)^2\left(1+\frac{C_0}{\zeta}\right)^2 \, \lim_{R\to \infty}\sup_{t\in [0, T]}
    \int_{\|x\|\geq R}\rho_t^X(x)\dx\\
    &\le
    \left(\frac{C_0}{\zeta}\right)^2\left(1+\frac{C_0}{\zeta}\right)^2 \, \xi \, \lim_{R\to \infty}
    \int_{\|x\|\geq R}\nu^X(x)\dx=0 \,.
    \end{aligned}
    $$

    Combining the five estimates above gives the claim in~\Cref{reg:integrability}:
    $$
        \lim_{R\to \infty}\sup_{t\in [0, T]}
        \int_{\|x\|\geq R}
        \left(1+\|v_t(x)\|^{2}+
            \|u_t(x)\|^{2}
            +
            \|a_t(x)\|^{2}
            +
            \|\Sigma_t(x)\|_{\mathsf{F}}^{2}
        \right)\rho_t^X(x)\dx
        =0 \,.
    $$
\end{proof}

%%%%%%%%%%
\subsubsection{Proof of \texorpdfstring{\Cref{reg:continuity-eqn}}{continuity equation regularity}}\label{sub_apdx:R4_proof}

\begin{proof}[Proof of~\Cref{lem:secondary_regularity} (property~\Cref{reg:continuity-eqn})] 
    By \Cref{reg:u-C1}, as shown in \Cref{sub_apdx:R2_proof}, we have that $(t,x) \mapsto u_t(x)$ is continuously differentiable in $t$ and $x$. 
    Moreover by \Cref{rmk:qjK_upperbound}, $u_t(x)$ is uniformly bounded on $(t,x) \in [0,T]\times\R^d$. 
    Hence, the ordinary differential equation
    $$\frac{\d}{\dt}\Phi_t(x)=u_t(\Phi_t(x)) \, ,\qquad \Phi_0(x)=x \,,
    $$
    admits a solution $\Phi_t(x)$ for all $t \in [0,T]$. 
    By \Cref{lem:continuity-momentum-equation}, $\rho_t^X$ solves the continuity equation 
    $$\partial_t\rho_t^X + \nabla\cdot(u_t\rho_t^X) = 0 \,.$$ 
    Since the velocity field $u_t$ is bounded and continuously differentiable, the classical method of characteristics applies (see e.g.\ \cite[Theorem 8.1.8]{ambrosio2005gradient}), yielding $(\Phi_t)_\#\rho_0^X=\rho_t^X$.
    
    We now prove the differentiability results. 
    Fix $t\in(0, T)$ and take $h$ such that $t-h\in[0,T]$ and $t+h\in[0,T]$. 
    Assume without loss of generality that $h>0$.
    For the first-order differentiability, we calculate the following:
    \begin{align*}
        \int_{\R^d}\left\|\frac{\Phi_{t+h}(x)-\Phi_{t}(x)}{h}-u_{t}(\Phi_t(x))\right\|^2 \rho_0^X(x) \dx 
        &\stackrel{(1)}{=} \int_{\R^d}\left\|\frac{1}{h}\int_{0}^h u_{t+s}(\Phi_{t+s}(x)) \ds - u_{t}(\Phi_t(x))\right\|^2 \rho_0^X(x) \dx \\
        &\stackrel{(2)}{\leq}  \frac{1}{h} \int_{\R^d} \int_0^h \left\|u_{t+s}(\Phi_{t+s}(x))-u_{t}(\Phi_t(x))\right\|^2 \rho_0^X(x) \ds \dx \\
        &\stackrel{(3)}{=}
         \frac{1}{h} \int_0^h  \int_{\R^d}\left\|u_{t+s}(\Phi_{t+s}(x))-u_{t}(\Phi_t(x))\right\|^2\rho_0^X(x) \dx \ds \,,
    \end{align*}
    where $(1)$ follows from fundamental theorem of calculus, $(2)$ follows from Jensen's inequality, $(3)$ follows from switching the order of integration, which is valid by the boundedness of $u_t(x)$. 
    Therefore, it suffices to show $\lim_{s \to 0}\int_{\R^d}\|u_{t+s}(\Phi_{t+s}(x))-u_t(\Phi_t(x))\|^2\rho_0^X(x) \dx = 0$. 
    We can calculate:
    \begin{align*}
        \lim_{s\to 0}\int_{\R^d}\|u_{t+s}(\Phi_{t+s}(x))-u_t(\Phi_t(x))\|^2\rho_0^X(x) \dx 
        &\stackrel{(1)}{=} \lim_{s\to 0}\int_{\R^d}\left\|\int_{t}^{t+s}a_r(\Phi_r(x))\d r\right\|^2\rho_0^X(x) \dx \\
        &\stackrel{(2)}{\leq} \lim_{s\to 0}\int_{\R^d}\int_{t}^{t+s}\left\|a_r(\Phi_r(x))\right\|^2 \rho_0^X(x) \dr \dx\\
        &\stackrel{(3)}{=} \lim_{s\to 0}\int_{t}^{t+s}\int_{\R^d}\left\|a_r(x)\right\|^2\rho_r^X(x) \dx \dr \,,
    \end{align*}
    where $(1)$ follows from fundamental theorem of calculus, $(2)$ follows from Jensen's inequality, and $(3)$ follows from switching the order of integration and applying a change of variable $X \sim \rho_0^X$, so $\Phi_r(X) \sim \rho_r^X$. 
    By the derivations in \Cref{sub_apdx:R2_proof}, $\|a_t(x)\|$ is uniformly bounded by $C_T(1+\|x\|)$ for $(t,x) \in [0,T]\times\R^d$. 
    By the derivations in \Cref{sub_apdx:R1_proof}, the second moment of $\rho_t^X$ is uniformly bounded for $t \in [0, T]$. 
    Thus, $\lim_{s\to 0}\int_{t}^{t+s}\int_{\R^d}\left\|a_r(x)\right\|^2\rho_r^X(x) \dx \dr = 0$. 
    Therefore, the first-order differentiability holds.

    We now prove the second-order differentiability. We first calculate:
    \begin{align*}
        &\int_{\R^d}\left\|
        \frac{\Phi_{t+h}(x)-2\Phi_{t}(x)+\Phi_{t-h}(x)}{h^2}
        -a_{t}(\Phi_t(x))
        \right\|^2 \rho_0^X(x)\dx \\
        &\stackrel{(1)}{=}
        \int_{\R^d}\left\|
        \frac{1}{h^2}\int_0^h
        \left[
            u_{t+s}(\Phi_{t+s}(x))
            -
            u_{t-s}(\Phi_{t-s}(x))
        \right]\ds
        -a_t(\Phi_t(x))
        \right\|^2
        \rho_0^X(x)\dx \\
        &\stackrel{(2)}{=}
        \int_{\R^d}\left\|
        \frac{1}{h^2}\int_0^h
        (h-s)
        \left[
            a_{t+s}(\Phi_{t+s}(x))
            +
            a_{t-s}(\Phi_{t-s}(x))
        \right]\ds
        -a_t(\Phi_t(x))
        \right\|^2
        \rho_0^X(x)\dx \\
        &\stackrel{(3)}{=}
        \int_{\R^d}\left\|
        \frac{1}{h^2}\int_0^h
        (h-s)
        \left[
            a_{t+s}(\Phi_{t+s}(x))-2a_t(\Phi_t(x))
            +
            a_{t-s}(\Phi_{t-s}(x))
        \right]\ds
        \right\|^2
        \rho_0^X(x)\dx \\
        &\stackrel{(4)}{\leq}2\int_{\R^d}\left\|
        \frac{1}{h^2}\int_0^h
        (h-s)
        \left[
            a_{t-s}(\Phi_{t-s}(x))-a_t(\Phi_t(x))
        \right]\ds
        \right\|^2
        \rho_0^X(x)\dx \\
        &\qquad\qquad+2\int_{\R^d}\left\|
        \frac{1}{h^2}\int_0^h
        (h-s)
        \left[
            a_{t+s}(\Phi_{t+s}(x))-a_t(\Phi_t(x))
        \right]\ds
        \right\|^2
        \rho_0^X(x)\dx \\
        &\stackrel{(5)}{\leq}
        \frac{1}{2}\int_0^h
        \frac{2(h-s)}{h^2}
        \int_{\R^d}
        \left\|
            a_{t+s}(\Phi_{t+s}(x))-a_t(\Phi_t(x))
        \right\|^2
        \rho_0^X(x)\dx \ds \\
        &\qquad\qquad
        +\frac{1}{2}\int_0^h\frac{2(h-s)}{h^2}\int_{\R^d}
        \left\|a_{t-s}(\Phi_{t-s}(x))-a_t(\Phi_t(x))\right\|^2\rho_0^X(x)\dx \ds \,,
    \end{align*}
    where in $(1)$ we used the fact that $\dot{\Phi}_t(x)=u_t(\Phi_t(x))$; in $(2)$ we used the fact that $\dot{u}_t(\Phi_t(x)) = a_t(\Phi_t(x))$; in $(3)$ we rearranged the integrand; in $(4)$ we used Cauchy-Schwartz inequality and in $(5)$ we used Jensen's inequality. 
    Since $0<s\leq h$, to show 
    $$
    \lim_{h\downarrow 0}\int_{\R^d}\left\|
        \frac{\Phi_{t+h}(x)-2\Phi_{t}(x)+\Phi_{t-h}(x)}{h^2}
        -a_{t}(\Phi_t(x))
        \right\|^2 \rho_0^X(x)\dx = 0 \,,
    $$
    it suffices to show that
    \[\lim_{s\to0}\int_{\R^d}\|a_{t+s}(\Phi_{t+s}(x))-a_t(\Phi_t(x))\|^2\rho_0^X(x)\dx=0 \,.\] 
    We prove this by decomposing the domain of integration into a compact set and its tail integral.
    We will use the following tail estimate, which follows by \Cref{reg:integrability} and the transport identity $(\Phi_r)_\#\rho_0^X=\rho_r^X$:
    \begin{align}\label{Eq:TailEstCalc0}
        \lim_{R\to\infty} \sup_{r \in [0,T]} \int_{\{\|\Phi_r(x)\|\ge R\}}\|a_r(\Phi_r(x))\|^2 \rho_0^X(x) \dx = 0 \,.
    \end{align}
    
    Fix $R>0$. We split
    \begin{align*}
    &\int_{\R^d}\|a_{t+s}(\Phi_{t+s}(x))-a_t(\Phi_t(x))\|^2\rho_0^X(x) \dx\\
    &=\int_{\{\|\Phi_t(x)\|\le R\}}\|a_{t+s}(\Phi_{t+s}(x))-a_t(\Phi_t(x))\|^2\rho_0^X(x) \dx + 
    \int_{\{\|\Phi_t(x)\|> R\}}\|a_{t+s}(\Phi_{t+s}(x))-a_t(\Phi_t(x))\|^2 \rho_0^X(x) \dx \,.
    \end{align*}
    
    We first treat the first term (compact part) above. 
    Since $u_t(x)$ is uniformly bounded on $(t,x) \in [0,T]\times\R^d$ (see the calculation for the third term in~\Cref{sub_apdx:R2_proof}), we have
    \[\|\Phi_{t+s}(x)-\Phi_t(x)\| \le \int_t^{t+s}\|u_r(\Phi_r(x))\| \dr \le C_T|s| \,.\]
    Therefore, $\Phi_t(x)$ is continuous with respect to $t$. Recall the definition of acceleration field: $a_t(x) = \partial_t u_t(x) + \nabla u_t(x) \, u_t(x)$. We have shown in \Cref{reg:u-C1} that $u_t(x)$ is continuously differentiable with respect to $t$ and $x$, and hence $a_t(x)$ is continuous with respect to $t$ and $x$. Since $a_t(x)$ is continuous in $t$ and $x$ and $\Phi_t(x)$ is continuous in $t$, their composition $a_t(\Phi_t(x))$ is continuous in $t$, for any $x$. Therefore, 
    \[\lim_{s\to 0}\int_{\{\|\Phi_t(x)\|\le R\}}\|a_{t+s}(\Phi_{t+s}(x))-a_t(\Phi_t(x))\|^2\rho_0^X(x)\dx = 0 \,.\]
    
    We now treat the second term (tail part) above.
    We have
    \begin{align*}
    &\int_{\{\|\Phi_t(x)\|>R\}}\|a_{t+s}(\Phi_{t+s}(x))-a_t(\Phi_t(x))\|^2\rho_0^X(x) \dx\\
    &\le 2\int_{\{\|\Phi_t(x)\|>R\}}\|a_t(\Phi_t(x))\|^2\rho_0^X(x) \dx + 2\int_{\{\|\Phi_t(x)\|>R\}}\|a_{t+s}(\Phi_{t+s}(x))\|^2 \rho_0^X(x) \dx \,.
    \end{align*}
    For the first term above, we use the trivial bound $$\int_{\{\|\Phi_t(x)\|>R\}}\|a_t(\Phi_t(x))\|^2\rho_0^X(x) \dx 
    \le 
    \sup_{r \in [0,T]} \int_{\{\|\Phi_r(x)\|>R/2\}} \|a_r(\Phi_r(x))\|^2\rho_0^X(x) \dx$$ 
    which we will control below.    
    For the second
    term, using the estimate $\|\Phi_{t+s}(x)-\Phi_t(x)\|\le C_T|s|$, we have that $\|\Phi_t(x)\|>R$ implies $\|\Phi_{t+s}(x)\|>R-C_T|s|$. Thus, for $|s|$ sufficiently small such that $R-C_T|s|\ge R/2$,
    \begin{align*}
    \int_{\{\|\Phi_t(x)\|>R\}}\|a_{t+s}(\Phi_{t+s}(x))\|^2\rho_0^X(x) \dx
    &\le \int_{\{\|\Phi_{t+s}(x)\|>R/2\}} \|a_{t+s}(\Phi_{t+s}(x))\|^2\rho_0^X(x) \dx \\
    &\le \sup_{r\in[0,T]} \int_{\{\|\Phi_r(x)\|>R/2\}}\|a_r(\Phi_r(x))\|^2\rho_0^X(x) \dx \,.
    \end{align*}
    Therefore,
    \begin{align*}
        \int_{\{\|\Phi_t(x)\|> R\}}\|a_{t+s}(\Phi_{t+s}(x))-a_t(\Phi_t(x))\|^2 \rho_0^X(x) \dx
        &\le 4 \sup_{r \in [0,T]} \int_{\{\|\Phi_r(x)\|>R/2\}} \|a_r(\Phi_r(x))\|^2\rho_0^X(x) \dx \,.
    \end{align*}
    Combining the two estimates above gives
    \[\lim_{s\to0}
    \int_{\R^d}
        \|a_{t+s}(\Phi_{t+s}(x))-a_t(\Phi_t(x))\|^2
        \rho_0^X(x) \dx \le
        4\sup_{r\in[0,T]}
        \int_{\{\|\Phi_r(x)\|>R/2\}}
        \|a_r(\Phi_r(x))\|^2
        \rho_0^X(x) \dx \,.\]
    Recall $R > 0$ is arbitrary, and it does not appear on the left-hand side above.
    Sending $R\to\infty$ and using the uniform tail estimate~\eqref{Eq:TailEstCalc0} yields
    \[
        \lim_{s\to0}
        \int_{\R^d}
        \|a_{t+s}(\Phi_{t+s}(x))-a_t(\Phi_t(x))\|^2
        \rho_0^X(x) \dx
        =0 \,.
    \]
    
    Therefore, 
    \[
    \begin{aligned}
    &\lim_{h\to0}
    \int_{\R^d}
    \left\|
        \frac{\Phi_{t+h}(x)+\Phi_{t-h}(x)-2\Phi_t(x)}{h^2}
        -a_t(\Phi_t(x))
    \right\|^2
    \rho_0^X(x) \dx \\
    &\qquad\le
    \lim_{h\to0}
    \int_{-h}^h
    \frac{h-|s|}{h^2}
    \int_{\R^d}
    \|a_{t+s}(\Phi_{t+s}(x))-a_t(\Phi_t(x))\|^2
    \rho_0^X(x) \dx
    \ds \\
    &\qquad = 0 \,,
    \end{aligned}
    \]
    where the last step holds because the triangular kernel 
    $\frac{h-|s|}{h^2}\mathbf 1_{\{|s|\le h\}}$
    has total mass $1$, and the inner integral converges to $0$ as $s\to0$.
\end{proof}

%%%%%%%%%%%%%%%
\section{Bound on average KL divergence under regular initialization}
\label{apdx:Wass22_proof}

In this section, we prove in~\Cref{Lem:AvgKLRegularity} the bound on the average KL divergence along the Hamiltonian flow claimed in~\Cref{Lem:Wass-KL-integratedKey}, under the warmness and smoothness assumption (\Cref{asmp:init_regularity}).
We first provide some preliminary results that we will use in the calculation.
In~\Cref{lem:at_representation} in~\Cref{Sec:at_representationProof}, we provide a formula for the acceleration field $a_t$.
In~\Cref{lem:Wass22-FD} in~\Cref{Sec:Wass22-FDProof}, we provide a formula for the second time derivative of $\Wass_2^2$.
In~\Cref{Lem:KLBoundFormula} in~\Cref{Sec:KLBoundFormulaProof}, we provide a bound to the second time derivative of $\Wass_2^2$ in terms of the difference between KL divergence at the initial and final times along the Hamiltonian flow.
This provides a second-order differential inequality relating Wasserstein distance and KL divergence, that we can integrate twice to obtain the claimed bound on the average KL divergence along Hamiltonian flow; see~\Cref{Lem:AvgKLRegularity} in~\Cref{Sec:AvgKLRegularity}.

We recall the definitions of the conditional mean $u_t$~\eqref{Eq:DefCondMean}, conditional second moment $M_t$~\eqref{Eq:DefCondSecMom}, conditional covariance $\Sigma_t$~\eqref{Eq:DefCondCov}, acceleration field $a_t$~\eqref{Eq:DefAccField}, and displacement field $v_t$~\eqref{Eq:DefDisplacement}.

%%%%%%%%%%%%%
\subsection{Formula for the acceleration field}
\label{Sec:at_representationProof}

\begin{lemma}
\label{lem:at_representation}
Assume \Cref{asmp:init_regularity}, in addition to the set up in~\Cref{Lem:Wass-KL-integratedKey}. 
Then the following equation holds for all $x \in \R^d$:
\begin{align*}
    a_t(x)=-\nabla f(x)-\frac{1}{\rho_t^X(x) }\nabla \cdot(\rho_t^X \Sigma_t)(x) \,.
\end{align*}
\end{lemma}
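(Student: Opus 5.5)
We derive the formula from the two continuity equations of \Cref{lem:continuity-momentum-equation}. Under \Cref{asmp:init_regularity}, property~\Cref{reg:u-C1} of \Cref{lem:secondary_regularity} says that $\rho_t^X$, $u_t$ and $M_t$ are continuously differentiable in $(t,x)$. So both equations, which hold in the sense of distributions, in fact hold pointwise as classical identities. Concretely, for two $C^1$ expressions whose distributional pairings agree against every test function in $C_c^\infty$, the two continuous functions must coincide. This upgrade is the only step needing care; everything after it is algebra.

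Next, expand the momentum equation with the product rule:
\[
\partial_t(\rho_t^X u_t) = (\partial_t\rho_t^X)\,u_t + \rho_t^X\,\partial_t u_t .
\]
Replace $\partial_t\rho_t^X$ using the first equation, $\partial_t\rho_t^X = -\nabla\cdot(\rho_t^X u_t)$.

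Now decompose $M_t = \Sigma_t + u_t u_t^\top$ via \eqref{Eq:DefCondCov}. This gives
\[
\nabla\cdot(\rho_t^X M_t) = \nabla\cdot(\rho_t^X\Sigma_t) + \nabla\cdot(\rho_t^X u_t u_t^\top),
\]
where the divergence acts row-wise. For the last term, the $i$-th component is
\[
\nabla\cdot(\rho_t^X u_t\, u_{t,i}) = u_{t,i}\,\nabla\cdot(\rho_t^X u_t) + \rho_t^X\,\langle u_t, \nabla u_{t,i}\rangle,
\]
so
\[
\nabla\cdot(\rho_t^X u_t u_t^\top) = u_t\,\nabla\cdot(\rho_t^X u_t) + \rho_t^X\,(\nabla u_t)\,u_t .
\]
One point needs checking here: $\Sigma_t$ itself is $C^1$, since $\Sigma_t = M_t - u_t u_t^\top$ and both pieces are $C^1$. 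This makes the split of the divergence legitimate.

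Substituting into the momentum equation, the two terms $\mp u_t\,\nabla\cdot(\rho_t^X u_t)$ cancel, leaving
\[
\rho_t^X\bigl(\partial_t u_t + \nabla u_t\,u_t\bigr) + \nabla\cdot(\rho_t^X\Sigma_t) + \rho_t^X\nabla f = 0 .
\]
By \Cref{lem:warmness-propagation}, $\rho_t^X \ge \zeta\,\nu^X > 0$, so we may divide by $\rho_t^X$. The bracket is exactly $a_t$ by \eqref{Eq:DefAccField}, which gives the stated formula
\[
a_t = -\nabla f - \frac{1}{\rho_t^X}\,\nabla\cdot(\rho_t^X\Sigma_t).
\]

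The main obstacle is the regularity bookkeeping of the first step. The classical form of $\nabla\cdot(\rho_t^X M_t)$ requires $x$-differentiability of $M_t$, which \Cref{reg:u-C1} provides. The time derivative $\partial_t(\rho_t^X u_t)$ requires $\partial_t u_t$, which is also covered there. With both in hand, the distributional identity identifies the continuous functions everywhere.
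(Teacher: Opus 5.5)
Your proof is correct and follows essentially the same argument as the paper: you expand the momentum equation using $M_t = \Sigma_t + u_t u_t^\top$ and the product rule, cancel the terms $u_t\bigl(\partial_t\rho_t^X + \nabla\cdot(\rho_t^X u_t)\bigr)$ via the continuity equation, and divide by $\rho_t^X \ge \zeta\,\nu^X > 0$. The only difference is the order of steps: you first upgrade both distributional equations to classical pointwise identities using \Cref{reg:u-C1}, whereas the paper does the algebra at the distributional level and then upgrades the resulting identity to a pointwise one by a continuity/bump-function argument.
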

%%%
\begin{proof}
By~\Cref{reg:u-C1}, the acceleration field $a_t(x)$ is well-defined. 
We first show the equation holds in the sense of distributions, using the identities shown in \Cref{lem:continuity-momentum-equation}. 
First, we rewrite the continuity equation using $M_t(x)=u_t(x)u_t(x)^\top+\Sigma_t(x)$:
\begin{align*}
    0&=\partial_t(\rho_t^X u_t)+\nabla\cdot(\rho_t^X u_tu_t^\top)+\nabla\cdot(\rho_t^X\Sigma_t)+\rho_t^X\nabla f\\
    &\stackrel{(1)}=\rho_t^X\partial_t u_t+u_t\partial_t\rho_t^X+\rho_t^X\nabla u_t\,u_t+u_t\nabla\cdot(\rho_t^X u_t)
    +\nabla\cdot(\rho_t^X\Sigma_t)+\rho_t^X\nabla f\\
    &= \rho_t^X \bigl(\partial_t u_t + \nabla u_t\,u_t\bigr) + u_t\bigl(\partial_t\rho_t^X+\nabla\cdot(\rho_t^X u_t)\bigr) + \nabla\cdot(\rho_t^X\Sigma_t)+\rho_t^X\nabla f \\
    &\stackrel{(2)}=\rho_t^X\bigl(\partial_t u_t+\nabla u_t\,u_t\bigr)+\nabla\cdot(\rho_t^X\Sigma_t)+\rho_t^X\nabla f\\
    &\stackrel{(3)}=\rho_t^Xa_t+\nabla\cdot(\rho_t^X\Sigma_t)+\rho_t^X\nabla f,
\end{align*}
where $(1)$ holds because $u_t(x)$ is continuously differentiable by \Cref{reg:u-C1}, $(2)$ holds by the continuity equation $\partial_t\rho_t^X+\nabla\cdot(\rho_t^X u_t) = 0$, and $(3)$ holds by the definition of $a_t = \partial_t u_t+\nabla u_t\,u_t$.
Therefore, we have the distributional identity
\begin{equation}\label{eq:distributional_acceleration_identity}
0=\rho_t^Xa_t +\nabla\cdot(\rho_t^X\Sigma_t)+\rho_t^X\nabla f,
\end{equation}
which means the identity above holds when we integrate both sides against test functions in space and time.

We claim that \cref{eq:distributional_acceleration_identity} in fact holds pointwise for all $(t, x)$. 
Indeed, by \Cref{reg:u-C1}, all terms appearing in \cref{eq:distributional_acceleration_identity} are continuous in \((t,x)\). 
If the expression in \cref{eq:distributional_acceleration_identity} were nonzero at some point \((t_0,x_0)\), then one of its components would be nonzero there. 
By continuity, that component would have a fixed sign on a small space-time neighborhood of \((t_0,x_0)\).
Therefore, integrating \eqref{eq:distributional_acceleration_identity} against a nonnegative smooth bump function supported in this neighborhood would give a strictly nonzero integral, contradicting the distributional identity. Thus for every \(t\in(0,T)\) and \(x\in\mathbb R^d\),
$$
0=\rho_t^X(x)a_t(x) +\nabla\cdot(\rho_t^X\Sigma_t)(x)+\rho_t^X(x)\nabla f(x).
$$
By \Cref{lem:warmness-propagation}, $\rho_t^X(x)\geq \zeta \nu^X$ for all $x\in \R^d$. Since $\nu^X \in \P_{2, \ac, \fs}(\R^d)$, $\rho_t^X(x)>\zeta \nu^X(x)>0$ holds for all $x\in \R^d$. Dividing by $\rho_t^X(x)$ from both sides gives:
\begin{align*}
    0&=a_t(x)+\frac{1}{\rho_t^X(x)}\nabla\cdot(\rho_t^X\Sigma_t)(x)+\nabla f(x),\\
    a_t(x)&=-\nabla f(x)-\frac{1}{\rho_t^X(x)}\nabla\cdot(\rho_t^X\Sigma_t)(x).
\end{align*}
\end{proof}

%%%%%%%%%%%%%%%%
\subsection{Formula for the upper second time derivative of Wasserstein distance}
\label{Sec:Wass22-FDProof}

The following estimate is in the same spirit as \cite[Theorem 1, 2]{Carrillo_2018}, where it is formulated for a.e.\ $t\in(0,T)$. 
Our assumptions include a pointwise-in-time $L^2$ differentiability condition in~\Cref{reg:continuity-eqn}, and therefore the statement holds for each $t\in(0,T)$.

%%%%%
\begin{lemma}\label{lem:Wass22-FD}
Assume \Cref{asmp:init_regularity}, in addition to the set up in~\Cref{Lem:Wass-KL-integratedKey}. 
Then the map $t\mapsto \frac12\Wass_2^2(\rho_t^X,\nu^X)$ is differentiable for every \(t\in(0,T)\), with one-sided derivatives at the endpoints, and
$$\frac{\d}{\dt} \frac{1}{2} \Wass_2^2(\rho_t^X,\nu^X) = \int_{\R^d} \langle v_t(x),u_t(x)\rangle \rho_t^X(x)\dx \,.
$$
Moreover, for every \(t\in(0,T)\),
\[
    \limsup_{h\downarrow0}
    \frac{
        \frac{1}{2} \Wass_2^2(\rho_{t+h}^X,\nu^X)
         + 
        \frac{1}{2} \Wass_2^2(\rho_{t-h}^X,\nu^X)
         -
        \Wass_2^2(\rho_t^X,\nu^X)
    }{h^2}
    \le
    \int_{\R^d}
    \left(
        \|u_t(x)\|^2+\langle v_t(x),a_t(x)\rangle
    \right)
    \rho_t^X(x)\dx \,.
\]
\end{lemma}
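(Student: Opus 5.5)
The plan is to compare the optimal coupling at time $t$ with the trial couplings obtained by moving along the characteristic flow $\Phi$ of \Cref{reg:continuity-eqn}. Fix $t\in(0,T)$ and let $R_t$ be the Brenier map from $\rho_t^X$ to $\nu^X$. Since $(\Phi_s)_\#\rho_0^X=\rho_s^X$, for every $s$ the map $x\mapsto(\Phi_s(x),R_t(\Phi_t(x)))$ with $x\sim\rho_0^X$ is a coupling of $\rho_s^X$ and $\nu^X$. Hence
\[
\tfrac12\Wass_2^2(\rho_s^X,\nu^X)\le \tfrac12\int_{\R^d}\|\Phi_s(x)-R_t(\Phi_t(x))\|^2\rho_0^X(x)\dx \deq \tfrac12 G_t(s),
\]
with equality at $s=t$. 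This is the standard ``upper support function'' argument, in the spirit of \cite{Carrillo_2018}.

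\textbf{Second-order estimate.} Writing $\Phi_{t\pm h}=\Phi_t+\delta_\pm$ and expanding the squares gives the exact identity
\[
G_t(t+h)+G_t(t-h)-2G_t(t)=\int\Bigl(2\langle v_t(\Phi_t),\,\Phi_{t+h}+\Phi_{t-h}-2\Phi_t\rangle+\|\delta_+\|^2+\|\delta_-\|^2\Bigr)\rho_0^X\dx .
\]
We divide by $2h^2$ and use the two $L^2(\rho_0^X)$ limits in \Cref{reg:continuity-eqn}: $\delta_\pm/h\to\pm u_t(\Phi_t)$ and the second difference quotient tends to $a_t(\Phi_t)$. Cauchy--Schwarz, together with $v_t\in L^2(\rho_t^X)$ (finite second moments, \Cref{lem:second-moment-finiteness}), then lets us pass to the limit. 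The result is $\int(\langle v_t,a_t\rangle+\|u_t\|^2)\rho_t^X\dx$ after pushing forward by $\Phi_t$. Bounding $\tfrac12\Wass_2^2$ at $t\pm h$ by $\tfrac12G_t$, with equality at $t$, gives the claimed $\limsup$ inequality.

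\textbf{First derivative.} The same comparison gives
\[
\tfrac12\Wass_2^2(\rho_{t+h}^X,\nu^X)-\tfrac12\Wass_2^2(\rho_t^X,\nu^X)\le \int\langle v_t(\Phi_t),\delta_+\rangle\rho_0^X+\tfrac12\int\|\delta_+\|^2\rho_0^X .
\]
Dividing by $h>0$ and by $h<0$ yields the bounds $\limsup_{h\downarrow0}\le D_t$ and $\liminf_{h\uparrow0}\ge D_t$, where $D_t\deq\int\langle v_t,u_t\rangle\rho_t^X$.

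For the reverse bounds, we swap roles. At base point $t+h$ we use the coupling $x\mapsto(\Phi_t(x),R_{t+h}(\Phi_{t+h}(x)))$, which gives
\[
\tfrac12\Wass_2^2(\rho_{t+h}^X,\nu^X)-\tfrac12\Wass_2^2(\rho_t^X,\nu^X)\ge \int\langle v_{t+h}(\Phi_{t+h}),\delta_+\rangle\rho_0^X-\tfrac12\int\|\delta_+\|^2\rho_0^X .
\]
To pass to the limit here we need $v_{t+h}\circ\Phi_{t+h}\to v_t\circ\Phi_t$ in $L^2(\rho_0^X)$, equivalently continuity of the optimal transport maps along the curve.

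\textbf{Main obstacle.} The continuity of the Brenier maps is the step I expect to be hardest. I would first get $\Wass_2$-continuity of $s\mapsto\rho_s^X$ from $\|\Phi_{s}-\Phi_t\|\le C_T|s-t|$ (bounded $u$), and then invoke stability of optimal transport maps. Uniqueness of the Brenier map makes any limit of the plans $(\mathrm{id},R_{t+h})_\#\rho^X_{t+h}$ equal to $(\mathrm{id},R_t)_\#\rho_t^X$. Since $\Phi_{t+h}\to\Phi_t$ uniformly and second moments are uniformly integrable (by \Cref{reg:integrability} and warmness), this should upgrade to $L^2(\rho_0^X)$ convergence of $R_{t+h}\circ\Phi_{t+h}$.

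If this turns out delicate, the fallback uses the convexity in $s$ of the upper bound $G_t$ together with the already established local Lipschitz continuity. These show that $\tfrac12\Wass_2^2(\rho_s^X,\nu^X)$ is semiconcave on $(0,T)$, with the second-difference bound uniform. Semiconcave functions are differentiable wherever the superdifferential is a singleton. The upper bound at every $t$ combined with continuity of $t\mapsto D_t$ would then pin the derivative to $D_t$ everywhere. Continuity of $D_t$ again needs the same map stability, so stability is the crux in either route. The endpoint one-sided derivatives follow by the same argument with $h$ of one sign only.
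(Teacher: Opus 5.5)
Your second-order estimate is exactly the paper's argument. It uses the same trial couplings $(\Phi_{t\pm h},R_t\circ\Phi_t)_\#\rho_0^X$, the same expansion into $\|\delta_+\|^2+\|\delta_-\|^2$ plus the cross term against $v_t\circ\Phi_t$, and the same two $L^2(\rho_0^X)$ limits from \Cref{reg:continuity-eqn}.

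The genuine difference is the first derivative. The paper does not prove it: it invokes \cite[Theorem 23.9, Step 2 of the proof]{villani2009optimal} for the continuity equation with the bounded field $u_t$. You instead prove it by a two-sided comparison, with the trial coupling based at $t$ for one inequality and at $t+h$ for the other. This reduces differentiability at \emph{every} $t$ to the stability statement $R_{t+h}\circ\Phi_{t+h}\to R_t\circ\Phi_t$ in $L^2(\rho_0^X)$.

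Your outline of that stability is sound, but the ``upgrade'' needs one more step. The map $R_t$ is only measurable, so narrow convergence cannot be tested against $\|y-R_t(x)\|^2$ directly. Set $\pi_h\deq(\Phi_t,R_{t+h}\circ\Phi_{t+h})_\#\rho_0^X\in\Pi(\rho_t^X,\nu^X)$. Stability of optimal plans, Brenier uniqueness, and $\sup_x\|\Phi_{t+h}(x)-\Phi_t(x)\|\le C_T|h|$ give $\pi_h\to(\mathrm{id},R_t)_\#\rho_t^X$ narrowly. Now pick a bounded continuous $T_\varepsilon$ with $\|T_\varepsilon-R_t\|_{L^2(\rho_t^X)}\le\varepsilon$; then
\[
\int_{\R^d}\|R_{t+h}(\Phi_{t+h}(x))-R_t(\Phi_t(x))\|^2\rho_0^X(x)\dx=\int\|y-R_t(x)\|^2\,\pi_h(\d x,\d y)\le 2\int\|y-T_\varepsilon(x)\|^2\,\pi_h(\d x,\d y)+2\varepsilon^2 .
\]
The integrand $\|y-T_\varepsilon(x)\|^2$ is continuous and uniformly integrable along $\pi_h$, because the second marginal is always $\nu^X$. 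Hence the right side converges to at most $4\varepsilon^2$, which closes the stability step.

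As for what each route buys: the paper's citation is shorter, while yours is self-contained and derives every-$t$ differentiability from exactly the regularity (R1)--(R3) already established.

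Note also that the only downstream use of the first-derivative formula is $F'_+(0)=0$ in \Cref{Lem:AvgKLRegularity}. There only $\limsup_{h\downarrow0}h^{-1}\bigl(F(h)-F(0)\bigr)\le 0$ matters, since concavity gives $\frac{F(T)-F(0)}{T}\le\frac{F(h)-F(0)}{h}$ for $0<h\le T$. Your cheap one-sided coupling bound, together with $\Wass_2(\rho_s^X,\rho_t^X)\le C_T|s-t|$, already delivers this without any stability argument.

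One small slip in your fallback: semiconcavity of $s\mapsto\frac12\Wass_2^2(\rho_s^X,\nu^X)$ does not come from convexity of $G_t$. It comes from a uniform \emph{upper} bound on $\partial_s^2G_t(s)$, which is available because $\|u_s\|$ is bounded and $\|a_s(x)\|\le C_T(1+\|x\|)$.
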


\begin{proof}
    Note $\nu^X \in \P_{2, \ac,\fs}(\R^{d})$ by assumption, and by~\Cref{lem:second-moment-finiteness}, we also have $\rho_t^X \in \P_{2, \ac,\fs}(\R^{d})$.
    Therefore, $\Wass_2^2(\rho_t^X, \nu^X) < \infty$ for all $t \ge 0$.

    \paragraph{First derivative:}
    We first show $t \mapsto \Wass_2^2(\rho_t^X, \nu^X)$ is differentiable. 
    By~\Cref{reg:continuity-eqn}, \(\rho_t^X\) satisfies the continuity equation $ \partial_t\rho_t^X+\nabla\cdot(\rho_t^X u_t)=0$. Moreover, recall from \Cref{rmk:qjK_upperbound} that $\|u_t(x)\| \leq C_0$ for all \((t,x) \in [0,T]\times\R^d\), and $C_0$ is a constant dependent only on $\xi$ and dimension $d$.    
    Therefore by \cite[Theorem 23.9, Proof Step 2]{villani2009optimal}, 
    \(t\mapsto \frac{1}{2} \Wass_2^2(\rho_t^X,\nu^X)\) is differentiable for every \(t\in(0,T)\), with one-sided derivatives at the endpoints. 
    In particular, the derivative is equal to the following:
    \[\frac{\d}{\dt}\frac12\Wass_2^2(\rho_t^X,\nu^X)=\int_{\R^d}\langle v_t(x),u_t(x)\rangle \, \rho_t^X(x)\dx.\]

    \paragraph{Second derivative:}
    We now show a bound for the upper second time derivative.
    Recall $\Phi_t$ pushes $\rho_0^X$ forward to $\rho_t^X$ for all $t\in (0, T)$, so $\rho_t^X = (\Phi_t)_\# \rho_0^X$.
    Fix $t \in (0,T)$, and let $h > 0$ be small enough such that $0 < t-h < t+h < T$.    
    We construct (suboptimal) couplings between $\rho^X_{t+h}$ and $\nu^X)$, and between $\rho^X_{t-h}$ and $\nu^X)$, as follows:
    \begin{align*}
        \gamma_{t,h}^{+} &\deq (\Phi_{t+h},\,R_t\circ \Phi_t)_{\#}\rho^X_0 \,, \\
        \gamma_{t,h}^{-} &\deq (\Phi_{t-h},\,R_t\circ \Phi_t)_{\#}\rho^X_0 \,.
    \end{align*}
    These are valid couplings, since $(\Phi_{t\pm h})_{\#}\rho^X_0=\rho^X_{t\pm h}$, and $(R_t\circ\Phi_t)_{\#}\rho^X_0=(R_t)_{\#}\rho^X_t=\nu^X$. Then by the definition of the Wasserstein distance:
    $$
    \Wass_2^2(\rho^X_{t\pm h}, \nu^X)\leq \int_{\R^{2d}} \|x-y\|^2  \, \gamma_{t, h}^{\pm}(x, y) \dx \dy = \int_{\R^{d}} \|\Phi_{t\pm h}(x)-R_t(\Phi_t(x))\|^2 \rho^X_0(x) \dx \,.
    $$
    Since $(\Phi_t)_\#\rho^X_0=\rho^X_t$ and $R_t$ is the optimal transport map from $\rho^X_t$ to $\nu^X$, we know that:
    $$
    \Wass_2^2(\rho^X_t, \nu^X) = \int_{\R^d} \|y -R_t(y)\|^2\rho^X_t( y) \dy = \int_{\R^d} \|\Phi_t(x)-R_t(\Phi_t(x))\|^2 \rho^X_0(x)\dx \,.
    $$
    Then we can compute, recalling that $v_t(x) = x - R_t(x)$:
    \begin{align*}
        &\Wass_2^2(\rho^X_{t+h}, \nu^X)+\Wass_2^2(\rho^X_{t-h}, \nu^X)-2\Wass_2^2(\rho^X_{t}, \nu^X) \\
        &\leq \int_{\R^d} \Big(\|\Phi_{t+ h}(x)-R_t(\Phi_t(x))\|^2+\|\Phi_{t- h}(x)-R_t(\Phi_t(x))\|^2-2\|\Phi_t(x)-R_t(\Phi_t(x))\|^2\Big)\rho^X_0(x) \dx\\
        &= \int_{\R^d} \Big(\|\Phi_{t+ h}(x)-\Phi_t(x)+v_t(\Phi_t(x))\|^2+\|\Phi_{t- h}(x)-\Phi_t(x)+v_t(\Phi_t(x))\|^2-2\|v_t(\Phi_t(x))\|^2\Big)\rho^X_0(x)\dx\\
        &= \int_{\R^d}\Big( \underbrace{\|\Phi_{t+ h}(x)-\Phi_t(x)\|^2+\|\Phi_{t- h}(x)-\Phi_t(x)\|^2}_{\deq A_h(x)}+\underbrace{2\langle\Phi_{t+ h}(x)+\Phi_{t-h}(x)-2\Phi_t(x), v_t(\Phi_t(x))\rangle}_{\deq B_h(x)} \Big)\rho^X_0(x)\dx \,.
    \end{align*}    
    By~\Cref{reg:continuity-eqn}, we have 
    $$\lim_{h\to 0}\frac{1}{h^2}\int_{\R^d} A_h(x) \rho^X_0(x)\dx = 2\int_{\R^d} \|u_t(\Phi_t(x))\|^2\rho^X_0(x)\dx = 2\int_{\R^d} \|u_t(x)\|^2\rho^X_t(x)\dx \,.$$
    On the other hand, since $\rho_t^X, \nu^X\in \mathcal{P}_{2, \ac}(\R^d)$, $\Wass_2^2(\rho_t^X, \nu^X) < \infty$, and therefore, $v_t(\Phi_t)$ is square-integrable under $\rho_0^X$. 
    By \Cref{reg:u-C1}, $a_t(x)$ is well-defined. Combining this with \Cref{reg:continuity-eqn}, we obtain
    $$
    \lim_{h\to 0}\frac{1}{h^2}\int_{\R^d} B_h(x) \rho^X_0(x)\dx = 2\int_{\R^d} \langle a_t(\Phi_t(x)), v_t(\Phi_t(x))\rangle\rho^X_0(x)\dx = 2\int_{\R^d} \langle a_t(x), v_t(x)\rangle \rho^X_t(x)\dx \,.
    $$
    Combining the two steps above gives:
    \begin{align*}
        \limsup_{h\to 0} \frac{\Wass_2^2(\rho^X_{t+h}, \nu^X)+\Wass_2^2(\rho^X_{t-h}, \nu^X)-2\Wass_2^2(\rho^X_{t}, \nu^X)}{h^2}
        &\le \lim_{h\to 0} \frac{1}{h^2}\int_{\R^d} (A_h(x) + B_h(x)) \, \rho^X_0(x) \dx \\
        &= 2\int_{\R^d}\left(\|u_t(x)\|^2+\langle v_t(x), a_t(x) \rangle \right)\rho^X_t(x)\dx \,.
    \end{align*}
    Dividing by $2$ on both sides gives the desired inequality.
\end{proof}

%%%%%%%%%%%
\subsection{A bound on the second time derivative of the Wasserstein distance}
\label{sub_apdx:proof_KL_from_mess}

In this section, we provide a bound on the second time derivative of the Wasserstein distance, i.e., the right-hand side of the identity in~\Cref{lem:Wass22-FD}: $\int_{\R^d} \left( \|u_t(x)\|^2+\langle v_t(x),a_t(x)\rangle \right) \rho_t^X(x)\dx$. Similar bounds were previously derived in~\cite[Lemma~4.1]{lu2026sharphypocoerciveentropydecay}, under slightly different regularity assumptions.
To give an intuition for how to bound this quantity, we first consider the following heuristic derivation.
First, using the representation of $a_t\rho_t^X$ from \Cref{lem:at_representation}:
\begin{align*}
    \rho_t^X(x)a_t(x) &=-\rho_t^X(x)\nabla f(x)-\nabla \cdot(\rho_t^X \Sigma_t)(x) \,.
\end{align*}
Therefore,
\begin{align*}
    \int_{\R^d}\langle v_t(x), \rho_t^X (x)a_t(x)\rangle\dx &=-\int_{\R^d}\langle v_t(x), \rho_t^X(x)\nabla f(x)+\nabla \cdot(\rho_t^X \Sigma_t) (x) \rangle\dx \,.
\end{align*}
To simplify the second term above, we want to apply integration by parts to obtain:
$$-\int_{\R^d}\langle v_t(x), \nabla \cdot(\rho_t^X \Sigma_t) (x) \rangle \dx 
= \int_{\R^d}\langle \nabla v_t(x), \Sigma_t(x)\rangle_{\mathsf{F}}\,  \rho_t^X(x) \dx 
= \int_{\R^d}\langle \I_d-\nabla R_t(x), \Sigma_t(x)\rangle_{\mathsf{F}}\, \rho_t^X(x) \dx \,.
$$
The proof then proceeds by bounding the right-hand side above

However, the heuristic computation above is not rigorous due to several issues. 
First, $v_t$ is not compactly supported, so an integration by parts on $\R^d$ may produce boundary terms at infinity. Second, by Brenier's theorem, $R_t$ is characterized $\rho_t^X$-a.e.\ as $R_t=\nabla\varphi_t$ for a convex function $\varphi_t$. In general, the Hessian of a convex function $\varphi$ is not a classical map, but may contain singularities  
(e.g., $\varphi(x)=|x|$ in one dimension has second derivative which is singular at $x=0$). 
Thus, $\nabla R_t$ is not an ordinary matrix-valued function defined everywhere, so the formal identity $\nabla v_t = \I_d-\nabla R_t$ does not hold everywhere; see e.g.~\cite[Remark 4.11]{villani2021topics} and~\cite[pp.\ 273--274]{villani2009optimal} for further discussions.

To make the proof rigorous, we follow the treatment of \cite{lu2026sharphypocoerciveentropydecay} using the notion of the \emph{distributional Hessian} of convex functions. 
We recall the standard characterization from \cite[Theorem 6.8]{evans2015measure}. 

\begin{lemma}{{\cite[Theorem 6.8]{evans2015measure}}}\label{lem:distributional_Hess_cvx}
Let $\varphi:\R^d\to\R$ be convex. 
Then there exists a signed Radon measure $\mu = (\mu^{ij})_{i,j=1}^d$ satisfying $\mu^{ij}=\mu^{ji}$ such that for every smooth compactly supported test function $\psi \colon \R^d \to \R$, 
$$\int_{\R^d} \varphi(x)\,\partial_{ij}\psi(x) \dx 
= \int_{\R^d} \psi(x)\,\mu^{ij}(\dx) \,.$$ 
The matrix-valued Radon measure $[D^2\varphi] \deq (\mu^{ij})_{i,j=1}^d$ is called the \emph{distributional Hessian} of $\varphi$, and it is positive semidefinite in sense that for every $\xi\in\R^d$, the scalar Radon measure $\sum_{i,j=1}^d \xi_i\xi_j\,\mu^{ij}$ is nonnegative.
Finally, the gradient of $\varphi$ is locally of bounded variation, and therefore its distributional derivative is the distributional Hessian $[D^2\varphi]$.
\end{lemma}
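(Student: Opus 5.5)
Since this is the classical characterization of \cite[Theorem 6.8]{evans2015measure}, we only sketch the standard argument. The plan has four steps: show that second directional derivatives of $\varphi$ are nonnegative distributions, represent each by a Radon measure via Riesz, assemble the matrix $(\mu^{ij})$ by polarization, and read off the BV statement from the fact that convex functions are locally Lipschitz.

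\textbf{Step 1 (positivity).} Fix $\xi\in\R^d$ and write $\partial_{\xi\xi}\psi=\sum_{i,j}\xi_i\xi_j\partial_{ij}\psi$. We show that
$$L_\xi(\psi)\deq\int_{\R^d}\varphi\,\partial_{\xi\xi}\psi\dx\ \ge 0 \qquad\text{for every } \psi\in C_c^\infty,\ \psi\ge0 .$$
Mollify: $\varphi_\varepsilon=\varphi*\eta_\varepsilon$ is smooth and convex, being an average of translates of $\varphi$. Hence $\partial_{\xi\xi}\varphi_\varepsilon\ge0$ pointwise. Integrating by parts twice, with no boundary terms since $\psi$ is compactly supported, gives
$$\int\varphi_\varepsilon\,\partial_{\xi\xi}\psi\dx=\int\psi\,\partial_{\xi\xi}\varphi_\varepsilon\dx\ge0 .$$
A convex function on $\R^d$ is continuous, so $\varphi_\varepsilon\to\varphi$ locally uniformly. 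Letting $\varepsilon\to0$ yields $L_\xi(\psi)\ge 0$.

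\textbf{Step 2 (Riesz representation).} This is the step needing the most care: we extend $L_\xi$ from $C_c^\infty$ to $C_c$ as a positive functional. Fix a compact $K$ and a cutoff $\chi\in C_c^\infty$ with $0\le\chi\le1$ and $\chi=1$ on $K$. For $\psi\in C_c^\infty$ supported in $K$, both $\|\psi\|_\infty\chi\pm\psi$ are nonnegative test functions, so positivity gives $|L_\xi(\psi)|\le L_\xi(\chi)\|\psi\|_\infty$. Thus $L_\xi$ is continuous in the sup norm on functions supported in $K$. Since $C_c^\infty$ is dense in $C_c$ in this norm, $L_\xi$ extends uniquely to a positive linear functional on $C_c$. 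By the Riesz representation theorem there is a nonnegative Radon measure $\mu^\xi$ with $L_\xi(\psi)=\int\psi\,\d\mu^\xi$.

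\textbf{Step 3 (polarization and the PSD property).} Define
$$\mu^{ij}\deq\tfrac14\bigl(\mu^{e_i+e_j}-\mu^{e_i-e_j}\bigr),$$
a signed Radon measure (a difference of two nonnegative ones). Polarization gives $\partial_{ij}\psi=\frac14\bigl(\partial_{(e_i+e_j)(e_i+e_j)}\psi-\partial_{(e_i-e_j)(e_i-e_j)}\psi\bigr)$, so $\int\varphi\,\partial_{ij}\psi\dx=\int\psi\,\d\mu^{ij}$. Symmetry $\mu^{ij}=\mu^{ji}$ follows from $\partial_{ij}\psi=\partial_{ji}\psi$ and uniqueness of the representing measure. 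By linearity in $\psi$, $\sum_{i,j}\xi_i\xi_j\mu^{ij}$ represents the functional $L_\xi$, so by uniqueness it equals $\mu^\xi\ge0$. This is exactly positive semidefiniteness of $[D^2\varphi]$.

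\textbf{Step 4 (BV).} A convex function is locally Lipschitz. Hence, by Rademacher, $\nabla\varphi$ exists a.e., lies in $L^\infty_{\mathrm{loc}}$, and coincides with the weak gradient. For $\psi\in C_c^\infty$,
$$\int\partial_i\varphi\,\partial_j\psi\dx=-\int\varphi\,\partial_{ij}\psi\dx=-\int\psi\,\d\mu^{ij}.$$
So the distributional derivative of each $\partial_i\varphi$ is a Radon measure, whose total variation is finite on compacts. This means $\nabla\varphi\in BV_{\mathrm{loc}}$, with distributional derivative $[D^2\varphi]$.
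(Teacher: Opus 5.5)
Your proposal is correct and is essentially the standard proof behind \cite[Theorem 6.8]{evans2015measure}, which the paper cites rather than reproves: mollification gives $\int\varphi\,\partial_{\xi\xi}\psi\dx\ge0$ for $\psi\ge0$, the Riesz representation theorem turns this nonnegative distribution into a Radon measure $\mu^\xi$, and polarization produces the $\mu^{ij}$, with $\sum_{i,j}\xi_i\xi_j\mu^{ij}=\mu^\xi\ge0$ and $\nabla\varphi\in BV_{\mathrm{loc}}$ following as you describe. One detail in Step 2 should be made explicit: mollified approximants of an $f\in C_c$ supported in $K$ are supported in a slightly larger compact set, so apply your bound $|L_\xi(\psi)|\le L_\xi(\chi)\|\psi\|_\infty$ on that larger set, and note that nonnegative $f$ have nonnegative mollifications, which is what makes the extension positive.
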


In our setting, we denote the distributional Hessian of $\varphi_t$, or equivalently the distributional derivative of the Brenier map $R_t=\nabla\varphi_t$, by $\d\nabla R_t \deq [D^2\varphi]$.
By the Lebesgue decomposition theorem, each component of this matrix-valued measure decomposes into an absolutely continuous part and a singular part with respect to Lebesgue measure. 
Thus, the matrix-valued measure itself decomposes as
\begin{align}\label{eq:measure_decomposition}
    \d\nabla R_t(x) = G_t(x)\dx + \d \nabla^s R_t(x) = G_t(x)\dx + N_t(x)\sigma_t(\dx) \,,
\end{align}
where the absolutely continuous part admits an integrable density $G_t \colon \R^d \to \R^{d\times d}$, and the singular part is $\d\nabla^s R_t(x) = N_t(x)\sigma_t(\dx)$, where $N_t(x)$ is a matrix-valued function and $\sigma_t$ is singular with respect to the Lebesgue measure. 
Since the distributional Hessian of a convex function is positive semidefinite as a matrix-valued measure, both its absolutely continuous and singular parts are positive semidefinite. 
Therefore,
$G_t(x)\succeq 0$ for Lebesgue-a.e.\ $x$, and $N_t(x)\succeq 0$ for $\sigma_t$-a.e.\ $x$. 
In particular, $G_t$ satisfies the following Monge-Ampere equation for $\rho_t^X$-a.e.\ $x$:
\begin{align}
\label{eqn:Monge-Ampere}
    \rho_t^X(x)=\nu^X(R_t(x))\det G_t(x) \,,
\end{align}
see \cite[Theorem 11.1, Example 11.2]{villani2009optimal} for a characterization.

To handle the aforementioned issue of non-vanishing boundary terms, we use a standard cutoff argument. 
Let $\eta \colon \R \to [0,\infty)$ be a smooth, non-increasing, compactly supported function satisfying:
$$0\le  \eta \le 1, \qquad
\eta(s) = 1~ \text{for } 0 \le s \le 1, \qquad 
\eta(s) = 0~ \text{for } s \ge 4 \,.
$$
See e.g.~\cite[Appendix C.5]{evans2010partial} for the construction of a cutoff satisfying the required properties above. 
For $R>0$, define 
\begin{align}\label{Eq:CutoffR}
    \chi_R(x) \deq \eta\left(\frac{\|x\|^2}{R^2}\right) \,.
\end{align} 
Then we have:
$$0 \le \chi_R \le 1, 
\qquad \chi_R(x) = 1~ \text{for } \|x\|\le R, \qquad
\chi_R(x) = 0\ \text{for } \|x\| \ge 2R \,.
$$
Moreover, by chain rule, $$\nabla\chi_R(x) = \eta'\left(\frac{\|x\|^2}{R^2}\right)\frac{2x}{R^2} \,.$$ 
Since $\eta'$ is compactly supported on $[1,4]$, $\nabla\chi_R$ is supported on the annulus 
$$A_R \deq \{x\in\R^d \colon R \le \|x\| \le 2R\} \,.$$ 
Hence, for some constant $C>0$ independent of $R$, we have $\|\nabla\chi_R(x)\|\le \frac{C}{R}$ for all $x \in \R^d$.

%%%%%%%%%%%
\subsubsection{Bound on the second derivative of the Wasserstein distance}
\label{Sec:KLBoundFormulaProof}

With the preparation above, we now provide a bound on the second derivative of the Wasserstein distance.

\begin{lemma}\label{Lem:KLBoundFormula}
Assume \Cref{asmp:init_regularity} and the set up in~\Cref{Lem:Wass-KL-integratedKey}.
In particular, assume $\nu^X \propto \exp(-f)$ is $M$-semi-log-concave. 
Then for any $T \in (0,\infty)$, the following holds for all $t\in (0, T)$:
\begin{align*}
    \int_{\R^d} \left(\|u_t(x)\|^2 + \langle v_t(x), a_t(x) \rangle \right)\rho^X_t(x) \dx
    \le 2\KL(\rho_0^X \dvert \nu^X) - 3\KL(\rho_t^X \dvert \nu^X) + \frac{M}{2}\Wass_2^2(\rho_t^X, \nu^X) \,.
\end{align*}
\end{lemma}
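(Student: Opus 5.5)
The plan is to expand the left-hand side with the acceleration formula of \Cref{lem:at_representation}, integrate by parts against the displacement field $v_t = x - R_t(x)$, and control the resulting terms by semi-convexity, energy conservation, the Monge--Amp\`ere equation~\eqref{eqn:Monge-Ampere} and entropy conservation. Everything then closes by the matrix inequality $\Tr(A) - \log\det A \ge d$ for $A \succeq 0$.

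\textbf{Step 1: integration by parts.} By \Cref{lem:at_representation}, $\rho_t^X a_t = -\rho_t^X\nabla f - \nabla\cdot(\rho_t^X\Sigma_t)$. Hence
\[
\int \langle v_t, a_t\rangle \rho_t^X \dx = -\int \langle v_t,\nabla f\rangle\rho_t^X\dx - \int \langle v_t, \nabla\cdot(\rho_t^X\Sigma_t)\rangle \dx .
\]
For the second term I multiply by the cutoff $\chi_R$ from~\eqref{Eq:CutoffR} and integrate by parts, using the distributional derivative $\d\nabla R_t = G_t\dx + N_t\sigma_t(\dx)$ from~\eqref{eq:measure_decomposition}. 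This gives $\int \chi_R\langle \I_d - G_t,\Sigma_t\rangle_{\mathsf F}\rho_t^X\dx$, minus a singular-part term, plus a boundary term $\int\langle v_t, \Sigma_t\nabla\chi_R\rangle\rho_t^X$.
\begin{itemize}
\item The singular-part term is $-\int\chi_R\rho_t^X\langle N_t,\Sigma_t\rangle_{\mathsf F}\,\d\sigma_t$, which is $\le 0$ because $N_t,\Sigma_t\succeq 0$, so it can be dropped.
\item The boundary term is at most $\frac{C}{R}\int_{A_R}\|v_t\|\,\|\Sigma_t\|_{\mathsf F}\,\rho_t^X$, which vanishes as $R\to\infty$ by the tail integrability~\Cref{reg:integrability}.
\end{itemize}
This step is the main obstacle. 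One must check that the pairing of the BV field $R_t$ with the merely continuous, not compactly supported weight $\rho_t^X\Sigma_t$ is legitimate, and that the $\chi_R$-truncated integrals converge by monotone or dominated convergence. I would follow \cite[Lemma~4.1]{lu2026sharphypocoerciveentropydecay}, first mollifying $\rho_t^X\Sigma_t$ if necessary, using the $C^1$ regularity~\Cref{reg:u-C1}.

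\textbf{Step 2: elementary bounds.}
\begin{itemize}
\item By $M$-semi-convexity, $-\langle v_t,\nabla f\rangle = \langle\nabla f(x), R_t(x)-x\rangle \le f(R_t(x)) - f(x) + \frac M2\|v_t(x)\|^2$. Integrating against $\rho_t^X$ and using $(R_t)_\#\rho_t^X=\nu^X$ gives the bound $\E_{\nu^X}f - \E_{\rho_t^X}f + \frac M2\Wass_2^2(\rho_t^X,\nu^X)$.
\item We have $\int(\|u_t\|^2+\Tr\Sigma_t)\rho_t^X = \E\|Y_t\|^2$. By energy conservation (\Cref{lem:energy_conservation}) and $Y_0\sim\gamma$, this equals $d + 2\E_{\rho_0^X}f - 2\E_{\rho_t^X}f$.
\end{itemize}
So the left-hand side is at most
\[
d + 2\E_{\rho_0^X} f - 3\E_{\rho_t^X} f + \E_{\nu^X} f - \int \langle G_t,\Sigma_t\rangle_{\mathsf F}\,\rho_t^X \dx + \frac M2 \Wass_2^2(\rho_t^X,\nu^X).
\]

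\textbf{Step 3: entropy bookkeeping.} Write $\KL(\rho\dvert\nu^X) = \E_\rho f - \Ent(\rho) + \log Z$.
\begin{itemize}
\item The Monge--Amp\`ere equation~\eqref{eqn:Monge-Ampere} gives $\Ent(\rho_t^X) = \E_{\nu^X} f + \log Z - \E_{\rho_t^X}\log\det G_t$.
\item Entropy conservation (\Cref{Lem:HamFlowConsvEntropy}) together with subadditivity and Gaussian maximum entropy for the conditional law of $Y_t$ given $X_t=x$ gives
\[
\Ent(\rho_0^X) + \frac d2\log(2\pi e) \le \Ent(\rho_t^X) + \E_{\rho_t^X}\Bigl[\tfrac12\log\det(2\pi e\,\Sigma_t)\Bigr].
\]
\end{itemize}
Substituting both into $2\KL(\rho_0^X\dvert\nu^X)-3\KL(\rho_t^X\dvert\nu^X)$ reduces the claim to the pointwise inequality
\[
d - \Tr(G_t\Sigma_t) \le -\log\det(G_t\Sigma_t),
\]
which holds $\rho_t^X$-a.e. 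This is $\Tr A - \log\det A \ge d$ applied to $A = G_t^{1/2}\Sigma_t G_t^{1/2}\succeq 0$, and it is trivially true when $\det A = 0$.

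Finally, I need to verify the integrability that lets the expectations be split. The quantities $\E_{\rho_t^X}\log\det G_t$ and $\E_{\rho_t^X}\log\det\Sigma_t$ may take the value $-\infty$. I would handle this by working with the combined nonnegative integrand $\Tr A - \log\det A - d$, using the fact that all KL and entropy terms are finite under \Cref{asmp:init_regularity}.
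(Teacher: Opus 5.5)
Your proposal is correct and follows the paper's proof. It uses the same cutoff and BV integration by parts with the singular part of $\d\nabla R_t$ discarded, the same semi-convexity, Monge--Amp\`ere and Gaussian maximum-entropy bounds, and the same matrix inequality $\langle G_t,\Sigma_t\rangle_{\mathsf F}\ge\log\det G_t+\log\det\Sigma_t+d$. The only difference is bookkeeping: you integrate first and invoke energy conservation and entropy conservation separately, whereas the paper bounds the integrand pointwise by $2\KL(\rho_t^{Y\mid X=x}\dvert\gamma)-\log\frac{\rho_t^X}{\nu^X}+\frac M2\|v_t\|^2$ and then applies the KL chain rule with joint-KL conservation, which is itself proved from energy plus entropy conservation.
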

\begin{proof}
    Let $\chi_R$ be the cutoff function defined in~\eqref{Eq:CutoffR}.
    We perform the following calculation:
    \begin{align*}
        &\int_{\R^d} \left(\|u_t(x)\|^2 + \langle v_t(x), a_t(x) \rangle \right) \rho^X_t(x) \dx \\
        &\stackrel{(1)}{=} \lim_{R\to \infty} \int_{\R^d} \chi_R(x) \left(\|u_t(x)\|^2+\langle v_t(x), a_t(x)\rangle\right) \rho_t^X(x) \dx \\
        &\stackrel{(2)}{=} \lim_{R\to \infty} \left[\int_{\R^d} \chi_R(x) \left(\|u_t(x)\|^2 - \langle v_t(x), \nabla f(x)\rangle\right) \rho_t^X(x) \dx - \int_{\R^d} \left\langle v_t(x), \nabla \cdot (\chi_R \, \rho_t^X \, \Sigma_t)(x) \right\rangle \dx \right] \\
        &\stackrel{(3)}{\leq} \lim_{R\to\infty} \int_{\R^d} \chi_R(x) \left(\|u_t(x)\|^2 + \left\langle \I_d -G_t(x), \Sigma_t(x)\right\rangle_{\mathsf F} - \left\langle v_t(x),\nabla f(x) \right\rangle\right) \rho_t^X(x)\dx \\
        &\stackrel{(4)}{\leq} \lim_{R\to \infty} \int_{\R^d} \chi_R(x) \left(2\KL\left(\rho_t^{Y \mid X=x} \dvert \gamma \right) - \log \frac{\rho_t^X(x)}{\nu^X(x)} + \frac{M}{2}\|v_t(x)\|^2 \right) \rho_t^X(x) \dx \\
        &\stackrel{(5)}{=} \int_{\R^d} \left(2 \KL\left(\rho_t^{Y \mid X=x} \dvert \gamma \right) - \log \frac{\rho_t^X(x)}{\nu^X(x)} + \frac{M}{2}\|v_t(x)\|^2\right) \rho_t^X(x) \dx \\
        &\stackrel{(6)}{=} 2\KL(\rho_0^X \dvert \nu^X) - 3\KL(\rho_t^X \dvert \nu^X) + \frac{M}{2}\Wass_2^2(\rho_t^X, \nu^X) \,.
    \end{align*}
    In the derivation above, $(1)$ and $(2)$ follow from \Cref{Lem:KLBoundFormula-one}, proved in~\Cref{Sec:KLBoundFormula-oneProof} below; 
    $(3)$ follows from \Cref{Lem:KLBoundFormula-two}, proved in~\Cref{Sec:KLBoundFormula-twoProof} below; 
    $(4)$ follows from \Cref{Lem:KLBoundFormula-three}, proved in~\Cref{Sec:KLBoundFormula-threeProof} below; 
    $(5)$ is justified below; and
    $(6)$ follows from the chain rule for KL divergence, the conservation of joint KL divergence (\Cref{Lem:HamFlowConsvJointKL}), and the initialization $\rho_0^{XY} = \rho_0^X \otimes \gamma$:
    \begin{align*}
        \int_{\R^d} \KL\left(\rho_t^{Y \mid X=x} \dvert \gamma \right) \rho_t^X(x) \dx
        &= \KL\left(\rho_t^{XY} \dvert \nu^{XY} \right) - \KL\left(\rho_t^{X} \dvert \nu^{X} \right) \\
        &= \KL\left(\rho_0^{XY} \dvert \nu^{XY} \right) - \KL\left(\rho_t^{X} \dvert \nu^{X} \right) \\
        &= \KL\left(\rho_0^{X} \dvert \nu^{X} \right) - \KL\left(\rho_t^{X} \dvert \nu^{X} \right) \,.
    \end{align*}

    We now justify the convergence in $(5)$. First, since $0\le \chi_R\le 1$ and $\lim_{R \to \infty} \chi_R(x) = 1$ for all $x \in \R^d$, by the monotone convergence theorem,
    \begin{align*}
        \lim_{R\to \infty} \int_{\R^d} \chi_R(x) \, 2\KL\left(\rho_t^{Y \mid X=x} \dvert \gamma \right) \,\rho_t^X(x) \dx
        &= \int_{\R^d} 2\KL\left(\rho_t^{Y \mid X=x} \dvert \gamma \right) \rho_t^X(x) \dx \\
        &= 2\KL(\rho_0^X \dvert \nu^X) - 2\KL(\rho_t^X \,\| \,\nu^X) \,.
    \end{align*}    
    For the marginal term, note that
    $$\int_{\R^d} \chi_R(x) \, \log \left(\frac{\rho_t^X(x)}{\nu^X(x)} \right) \rho_t^X(x) \dx
    = \int_{\R^d} \chi_R(x) \, \frac{\rho_t^X(x)}{\nu^X(x)} \log\left(\frac{\rho_t^X(x)}{\nu^X(x)}\right) \,\nu^X(x) \dx \,.
    $$
    Since $\KL(\rho_t^X \dvert \nu^X)<\infty $, and $s \log s \ge -e^{-1}$ for $s>0$, the positive and negative parts of $\frac{\rho_t^X}{\nu^X} \log \frac{\rho_t^X}{\nu^X}$ are integrable under $\nu^X$. 
    Applying the dominated convergence theorem to the positive and negative parts yields
    $$\lim_{R\to \infty} -\int_{\R^d} \chi_R(x) \, \log\frac{\rho_t^X(x)}{\nu^X(x)} \, \rho_t^X(x) \dx
    = - \int_{\R^d} \log \frac{\rho_t^X(x)}{\nu^X(x)} \, \rho_t^X(x) \dx
    = -\KL(\rho_t^X \dvert \nu^X) \,.
    $$
    Finally, recall that $v_t(x) = x - R_t(x)$ is the displacement map for the optimal coupling between $\rho_t^X$ and $\nu^X$, and $\int_{\R^d} \|v_t(x)\|^2 \rho_t^X(x) \dx = \Wass_2^2(\rho_t^X, \nu^X) < \infty$.
    The convergence of the $\|v_t(x)\|^2$ term then also follows from monotone convergence theorem, since $\chi_R(x)$ is non-decreasing with respect to $R$ for any fixed $x$.
\end{proof}

%%%%%%%%%%%%
\subsubsection{Helper result 1: Cutoff approximation}
\label{Sec:KLBoundFormula-oneProof}

\begin{lemma}
\label{Lem:KLBoundFormula-one}
Assume the setting of~\Cref{Lem:KLBoundFormula}.
We have:
\begin{align*}
    &\int_{\R^d} \left(\|u_t(x)\|^2 + \langle v_t(x), a_t(x) \rangle \right) \rho^X_t(x) \dx\\
    &\stackrel{(1)}{=} \lim_{R\to \infty} \int_{\R^d} \chi_R(x) \left(\|u_t(x)\|^2 + \langle v_t(x), a_t(x) \rangle\right) \rho_t^X(x) \dx\\
    &\stackrel{(2)}{=} \lim_{R\to \infty} \left[\int_{\R^d} \chi_R(x)\left(\|u_t(x)\|^2 - \langle v_t(x), \nabla f(x) \rangle\right) \rho_t^X(x) \dx -\int_{\R^d} \left\langle v_t(x), \nabla\cdot(\chi_R \, \rho_t^X \, \Sigma_t)(x) \right\rangle \dx \right] \,.
\end{align*}
\end{lemma}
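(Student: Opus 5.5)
Equality (1) is a dominated convergence statement and equality (2) is a pointwise substitution. For (1), note that $0 \le \chi_R \le 1$ and $\chi_R(x) \to 1$ pointwise as $R \to \infty$. It therefore suffices to show that the integrand $\left(\|u_t\|^2 + \langle v_t, a_t\rangle\right)\rho_t^X$ is integrable. By Cauchy--Schwarz, $|\langle v_t, a_t\rangle| \le \frac12\|v_t\|^2 + \frac12\|a_t\|^2$. We have $\int \|v_t\|^2 \rho_t^X \dx = \Wass_2^2(\rho_t^X,\nu^X) < \infty$. Also $\|u_t\|$ is uniformly bounded by \Cref{rmk:qjK_upperbound}. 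Finally, $\|a_t(x)\| \le C_T(1+\|x\|)$ (shown in the proof of \Cref{reg:integrability}), which is square integrable because $\rho_t^X$ has finite second moment (\Cref{lem:second-moment-finiteness}). Dominated convergence with dominating function $(\|u_t\|^2 + \frac12\|v_t\|^2 + \frac12\|a_t\|^2)\rho_t^X$ then gives (1).

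For (2), I will use \Cref{lem:at_representation}, which holds pointwise for every $x$:
\[
\rho_t^X a_t = -\rho_t^X \nabla f - \nabla\cdot(\rho_t^X \Sigma_t).
\]
Multiplying by $\chi_R$ gives
\[
\chi_R \rho_t^X a_t = -\chi_R \rho_t^X \nabla f - \chi_R \nabla\cdot(\rho_t^X\Sigma_t).
\]
The product rule then gives
\[
\chi_R \nabla\cdot(\rho_t^X\Sigma_t) = \nabla\cdot(\chi_R \rho_t^X \Sigma_t) - \rho_t^X \Sigma_t \nabla\chi_R,
\]
which is valid classically since $\rho_t^X$ and $M_t$ (hence $\Sigma_t$) are $C^1$ by \Cref{reg:u-C1}. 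Pairing with $v_t$ and integrating produces exactly the two terms in (2), plus the extra term
\[
E_R \deq \int \langle v_t, \Sigma_t\nabla\chi_R\rangle \rho_t^X \dx .
\]
Each piece is finite because every integrand is compactly supported, continuous where needed, and $v_t \in L^2(\rho_t^X)$.

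The main point is to show $E_R \to 0$. Since $\nabla\chi_R$ is supported on the annulus $A_R$ with $\|\nabla\chi_R\| \le C/R$, Cauchy--Schwarz gives
\[
|E_R| \le \frac{C}{R}\left(\int_{\|x\|\ge R}\|v_t\|^2\rho_t^X\right)^{1/2}\left(\int_{\|x\|\ge R}\|\Sigma_t\|_{\mathsf F}^2\rho_t^X\right)^{1/2}.
\]
Both factors tend to zero by the tail integrability \Cref{reg:integrability}; even boundedness alone would suffice thanks to the $1/R$ prefactor. Hence the limits in (1) and (2) coincide.

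I expect no real obstacle, since the limit in (2) exists because it equals the limit in (1). The one subtlety is that I keep the divergence term $\int\langle v_t,\nabla\cdot(\chi_R\rho_t^X\Sigma_t)\rangle$ unintegrated by parts. This avoids needing any differentiability of $v_t$, whose derivative is only a measure; that integration by parts is deferred to the next helper lemma.
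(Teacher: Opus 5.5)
Your proposal is correct and follows the paper's proof. For (1), both arguments control the $(1-\chi_R)$ remainder by $\bigl(\|u_t\|^2+\tfrac12\|v_t\|^2+\tfrac12\|a_t\|^2\bigr)\rho_t^X$; the paper cites the tail condition \Cref{reg:integrability} where you cite dominated convergence, which amounts to the same thing here. For (2), both use the pointwise identity of \Cref{lem:at_representation}, the product rule for $\chi_R\nabla\cdot(\rho_t^X\Sigma_t)$, and the $C/R$ Cauchy--Schwarz bound on the annulus term. One point is worth stating explicitly, and the paper also leaves it implicit: finiteness of the Lebesgue integral $\int\langle v_t,\nabla\cdot(\chi_R\rho_t^X\Sigma_t)\rangle\dx$ needs $\rho_t^X\ge\zeta\nu^X$ to be bounded below on compact sets, so that $v_t\in L^2(\rho_t^X)$ gives local Lebesgue integrability of $v_t$.
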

%%%%%
\begin{proof}
We first verify the convergence of the cutoff integral:
\begin{align*}
&\left| \int_{\R^d} (1-\chi_R(x)) \left(\|u_t(x)\|^2 + \langle v_t(x), a_t(x) \rangle\right) \rho_t^X(x) \dx \right| \\
&\qquad \stackrel{(1)}{\leq} \int_{\|x\|\geq R} \left|\|u_t(x)\|^2 + \langle v_t(x),a_t(x) \rangle\right| \rho_t^X(x) \dx \\
&\qquad \stackrel{(2)}{\leq} \int_{\|x\|\geq R} \left(\|u_t\|^2 + \frac{1}{2}\|v_t(x)\|^2 + \frac{1}{2}\|a_t(x)\|^2\right) \rho_t^X(x) \dx \,,
\end{align*}
where $(1)$ follows because $1-\chi_R(x) = 0$ for $\|x\|\leq R$ and $0 \le 1-\chi_R(x) \le 1$ for $\|x\|\geq R$, according to the definition of $\chi_R$; and $(2)$ follows from Young's inequality. 
According to the property~\Cref{reg:integrability}, the right-side of $(2)$ above converges to $0$ as $R\to \infty$. 
Therefore, 
\begin{align*}
    \lim_{R\to \infty} \left|\int_{\R^d} (1-\chi_R(x)) \left(\|u_t(x)\|^2 + \langle v_t(x), a_t(x) \rangle\right) \rho_t^X(x) \dx \right| = 0 \,.
\end{align*}
This proves the first step $(1)$:
$$\int_{\R^d} \left(\|u_t(x)\|^2 + \langle v_t(x), a_t(x) \rangle \right) \rho^X_t(x) \dx = \lim_{R\to \infty} \int_{\R^d} \chi_R(x) \left(\|u_t(x)\|^2 + \langle v_t(x), a_t(x) \rangle\right) \rho_t^X(x) \dx \,.$$

Next, using the representation of $a_t$ from~\Cref{lem:at_representation}, we have:
\begin{align*}
    &\int_{\R^d} \chi_R(x) \left(\|u_t(x)\|^2 + \langle v_t(x), a_t(x) \rangle\right) \rho_t^X(x) \dx \\
    &= \int_{\R^d} \chi_R(x) \left(\|u_t(x)\|^2 - \langle v_t(x), \nabla f(x) \rangle \right) \rho_t^X(x) \dx -\int_{\R^d} \chi_R(x) \left\langle v_t(x),\nabla\cdot(\rho_t^X \, \Sigma_t)(x) \right\rangle \dx \,.
\end{align*}
We handle the last term above. 
By chain rule, $$\chi_R \nabla \cdot(\rho_t^X \, \Sigma_t) = \nabla\cdot(\chi_R \, \rho_t^X \, \Sigma_t) - \rho_t^X \, \Sigma_t \nabla \chi_R \,.$$
Therefore, we have:
\begin{align*}
    &-\int_{\R^d} \chi_R(x) \left\langle v_t(x), \nabla\cdot(\rho_t^X \Sigma_t)(x) \right\rangle \dx \\
    &= -\int_{\R^d} \left\langle v_t(x), \nabla\cdot(\chi_R \, \rho_t^X \, \Sigma_t)(x) \right\rangle \dx
    + \int_{\R^d} \left\langle v_t(x), \rho_t^X(x) \Sigma_t(x) \nabla\chi_R(x) \right\rangle \dx \,.
\end{align*}
We claim the second term above is the boundary error that vanishes as $R\to\infty$. 
Indeed, since $\nabla\chi_R$ is supported on the annulus $A_R = \{x \in \R^d \colon \|x\|\in [R, 2R]\}$ and $\|\nabla\chi_R(x)\| \le C/R$, Cauchy--Schwarz inequality gives
\begin{align*}
&\int_{\R^d} \left\langle v_t(x), \rho_t^X(x) \Sigma_t(x) \nabla\chi_R(x) \right\rangle \dx \\
&\qquad \leq\frac{C}{R} \int_{A_R} \|v_t(x)\| \cdot \|\Sigma_t(x)\|_{\op}  \, \rho_t^X(x) \dx\\
&\qquad \leq \frac{C}{R} \left(\int_{A_R} \|v_t(x)\|^2 \rho_t^X(x) \dx \right)^{1/2} \cdot
\left(\int_{A_R}\|\Sigma_t(x)\|_{\op}^2 \, \rho_t^X(x) \dx \right)^{1/2}\\
&\qquad \leq\frac{C}{R}\left(\int_{\|x\|\geq R}\|v_t(x)\|^2\rho_t^X(x)\dx\right)^{1/2}
\cdot \left(\int_{\|x\|\geq R}\|\Sigma_t(x)\|_{\op}^2 \, \rho_t^X(x) \dx \right)^{1/2} \,.
\end{align*}
By property~\Cref{reg:integrability}, the right-hand side tends to $0$ as $R \to \infty$. 
Combining the arguments above, we conclude that step (2) is also valid, and thus:
\begin{align*}
    &\int_{\R^d} \left(\|u_t(x)\|^2 + \langle v_t(x), a_t(x) \rangle \right) \rho^X_t(x) \dx\\
    &\stackrel{(1)}{=} \lim_{R\to \infty}  \int_{\R^d} \chi_R(x) \left(\|u_t(x)\|^2 + \langle v_t(x), a_t(x) \rangle\right) \rho_t^X(x) \dx \\
    &\stackrel{(2)}{=} \lim_{R\to \infty} \left[\int_{\R^d} \chi_R(x) \left(\|u_t(x)\|^2-\langle v_t(x), \nabla f(x) \rangle \right)\rho_t^X(x) \dx - \int_{\R^d} \left\langle v_t(x), \nabla\cdot(\chi_R \, \rho_t^X \, \Sigma_t)(x) \right\rangle \dx \right]
\end{align*}
as desired.
\end{proof}

%%%%%%%%%%%%%
\subsubsection{Helper result 2: Integration by parts}
\label{Sec:KLBoundFormula-twoProof}

We recall the measure decomposition in~\eqref{eq:measure_decomposition}.

\begin{lemma}
\label{Lem:KLBoundFormula-two}
Assume the setting of~\Cref{Lem:KLBoundFormula}.
For every $t\in(0,T)$, we have:
\begin{align*}
    &\lim_{R\to \infty} \left[\int_{\R^d} \chi_R(x) \left(\|u_t(x)\|^2 - \langle v_t(x), \nabla f(x) \rangle\right) \rho_t^X(x) \dx - \int_{\R^d} \left\langle v_t(x), \nabla\cdot(\chi_R \, \rho_t^X \, \Sigma_t)(x) \right\rangle \dx \right] \\
    &\le \lim_{R\to\infty} \int_{\R^d} \chi_R(x) \left(\|u_t(x)\|^2 + \left\langle \I_d -G_t(x), \Sigma_t(x) \right\rangle_{\mathsf F} - \left\langle v_t(x), \nabla f(x) \right\rangle\right) \rho_t^X(x) \dx \,.
\end{align*}
\end{lemma}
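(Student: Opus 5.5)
The plan is to prove, for each fixed $R$, the integration-by-parts inequality
\[
-\int_{\R^d} \left\langle v_t(x), \nabla\cdot(\chi_R \, \rho_t^X \, \Sigma_t)(x) \right\rangle \dx \le \int_{\R^d} \chi_R(x) \left\langle \I_d - G_t(x), \Sigma_t(x)\right\rangle_{\mathsf F} \rho_t^X(x) \dx \,,
\]
and then add the common term $\int \chi_R(\|u_t\|^2 - \langle v_t,\nabla f\rangle)\rho_t^X\dx$ and pass to $R\to\infty$. The limit on the left exists by \Cref{Lem:KLBoundFormula-one}. For the limit on the right, I will interpret it as a $\limsup$, or check that it exists by the monotone/dominated convergence arguments used in step (5) of \Cref{Lem:KLBoundFormula}.

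The key observation is that $W_R \deq \chi_R \rho_t^X \Sigma_t$ is a compactly supported, continuously differentiable, symmetric matrix field. This follows from property \Cref{reg:u-C1}, since $\rho_t^X$, $u_t$, and $M_t$ are $C^1$ in $x$. Write $v_t = x - \nabla\varphi_t$, where $\varphi_t$ is the convex Brenier potential.

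I split the left-hand side into two pieces. The first piece is the identity part: $-\int \langle x, \nabla\cdot W_R\rangle\dx = \int \Tr(W_R)\dx$ by classical integration by parts. There is no boundary term because $W_R$ has compact support. This gives $\int \chi_R \langle \I_d,\Sigma_t\rangle_{\mathsf F}\rho_t^X\dx$.

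The second piece is $\int \langle \nabla\varphi_t, \nabla\cdot W_R\rangle \dx$. Componentwise, this equals $\sum_{i,j}\int \partial_i\varphi_t\, \partial_j W_R^{ij}\dx$. Using \Cref{lem:distributional_Hess_cvx}, $\nabla\varphi_t$ is locally $BV$ with distributional derivative $[D^2\varphi_t]$. Since $W_R^{ij}\in C^1_c$, the BV integration by parts formula gives
\[
\sum_{i,j}\int \partial_i\varphi_t\,\partial_j W_R^{ij}\dx = -\sum_{i,j}\int W_R^{ij}\,\d\mu^{ij} \,.
\]
This step needs care. \Cref{lem:distributional_Hess_cvx} is stated for smooth test functions against $\varphi$ itself. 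I would first mollify $W_R$, apply the identity in the form $\int\varphi\,\partial_{ij}\psi = \int\psi\,\d\mu^{ij}$, and pass to the limit using the continuity of $W_R$ and the local finiteness of $\mu$. I also need that $\partial_i\varphi_t$ is the weak gradient of the locally Lipschitz convex function $\varphi_t$. Alternatively, I can take $\varphi_t$ finite on all of $\R^d$, which holds since $\nu^X$ has full support.

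Next, I insert the Lebesgue decomposition \eqref{eq:measure_decomposition}. The second piece then becomes
\[
-\int \chi_R\langle G_t,\Sigma_t\rangle_{\mathsf F}\rho_t^X\dx - \int \chi_R\rho_t^X\langle N_t,\Sigma_t\rangle_{\mathsf F}\,\d\sigma_t \,.
\]
The singular part is nonpositive when subtracted, because $N_t\succeq 0$ $\sigma_t$-a.e., $\Sigma_t\succeq 0$ as a conditional covariance, and $\chi_R\rho_t^X\ge 0$. The Frobenius inner product of two PSD matrices is nonnegative, so dropping the singular term gives the inequality "$\le$".

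One subtlety is that $\Sigma_t$ and $\rho_t^X$ are defined pointwise and continuously, so integrating them against the singular measure $\sigma_t$ is meaningful. This is exactly why the $C^1$ regularity in \Cref{reg:u-C1} is needed.

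I expect the main obstacle to be the rigorous BV integration by parts in the second piece. The plan is to handle it with mollification as described. If that becomes delicate, I would mollify $\varphi_t$ instead: convexity keeps the mollified Hessians PSD, and their Hessian measures converge weakly-* to $[D^2\varphi_t]$ on the compact support of $W_R$.
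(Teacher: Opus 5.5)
Your proposal is correct and follows the paper's argument. You write $v_t = x - \nabla\varphi_t$ and integrate by parts against the compactly supported $C^1$ field $\chi_R\rho_t^X\Sigma_t$ using the BV structure of $\nabla\varphi_t$. You then split $[D^2\varphi_t]$ into $G_t\dx$ plus the singular part $N_t\sigma_t$ and discard the latter since $\langle N_t,\Sigma_t\rangle_{\mathsf F}\ge 0$. Your extra care fills in details the paper leaves implicit: you extend from $C^\infty_c$ to $C^1_c$ test fields, and you note that the right-hand limit exists, possibly equal to $-\infty$, by monotone convergence since $\chi_R\uparrow 1$ and $\langle G_t,\Sigma_t\rangle_{\mathsf F}\ge 0$.

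One small slip: the finiteness of $\varphi_t$ on all of $\R^d$ does not follow from the full support of $\nu^X$. It follows from $\rho_t^X$, the source measure of the Brenier map, having a positive density everywhere.
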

%%%
\begin{proof}
Since $v_t(x) = x - R_t(x) = x - \nabla \varphi_t(x)$, by~\Cref{lem:distributional_Hess_cvx} it is locally of bounded variation, with distributional derivative characterized by the following:
$$
\d\nabla v_t(x)=(\I_d-G_t(x))\dx-N_t(x)\sigma_t(\dx) \,.
$$
Since $\chi_R(x) \, \rho_t^X(x) \, \Sigma_t(x)$ is compactly supported and is continuously differentiable, we may use the integration by parts formula for functions of bounded variation locally~\cite[Theorem 5.1]{evans2015measure}:
$$
-\int_{\R^d} \left\langle v_t(x), \nabla\cdot(\chi_R \, \rho_t^X \, \Sigma_t)(x) \right\rangle \dx 
= \int_{\R^d} \left\langle \d\nabla v_t(x), \chi_R(x) \, \rho_t^X(x) \, \Sigma_t(x) \right\rangle_{\mathsf{F}}  \, .
$$
Using the measure decomposition as discussed in \eqref{eq:measure_decomposition}, this further equals to:
\begin{align*}
    &\int_{\R^d} \left\langle \d \nabla v_t(x), \chi_R(x) \, \rho_t^X(x) \, \Sigma_t(x)\right\rangle_{\mathsf{F}} \\
    &=\int_{\R^d} \chi_R(x) \rho_t^X(x) \left\langle \Sigma_t(x),\I_d - G_t(x)\right\rangle_{\mathsf F} \dx
    -\int_{\R^d} \chi_R(x) \rho_t^X(x) \left\langle\Sigma_t(x), N_t(x)\right\rangle_{\mathsf F} \,\sigma_t(\dx) \,.
\end{align*}
Since $\Sigma_t(x)$ is a conditional covariance matrix, it satisfies $\Sigma_t(x)\succeq0$. 
Since we have also shown that $N_t(x)\succeq0$, we have $\left\langle\Sigma_t(x), N_t(x)\right\rangle_{\mathsf F} \ge 0$, and hence the singular term above is nonpositive. 
Therefore,
$$
-\int_{\R^d} \left\langle v_t(x), \nabla\cdot(\chi_R \, \rho_t^X \, \Sigma_t)(x) \right\rangle \dx 
\leq \int_{\R^d} \chi_R(x) \rho_t^X(x) \left\langle \Sigma_t(x), \I_d - G_t(x)\right\rangle_{\mathsf F} \dx \,.
$$
Since this holds for every $R > 0$, we can set $R \to \infty$ to conclude:
\begin{align*}
    &\lim_{R\to \infty} \left[\int_{\R^d} \chi_R(x) \left(\|u_t(x)\|^2 - \langle v_t(x), \nabla f(x) \rangle\right) \rho_t^X(x) \dx - \int_{\R^d} \left\langle v_t(x), \nabla\cdot(\chi_R \, \rho_t^X \, \Sigma_t)(x) \right\rangle \dx \right] \\
    &\le \lim_{R\to\infty} \int_{\R^d} \chi_R(x) \left(\|u_t(x)\|^2 + \left\langle \I_d -G_t(x), \Sigma_t(x) \right\rangle_{\mathsf F} - \left\langle v_t(x), \nabla f(x) \right\rangle\right) \rho_t^X(x) \dx \,,
\end{align*}
as desired.
\end{proof}

%%%%%%%%%%%%%
\subsubsection{Helper result 3: Pointwise bound}
\label{Sec:KLBoundFormula-threeProof}

We recall the measure decomposition in~\eqref{eq:measure_decomposition}.

\begin{lemma}
\label{Lem:KLBoundFormula-three}
Assume the setting of~\Cref{Lem:KLBoundFormula}.
For all $t \in (0,T)$ and for $\rho_t^X$-a.e.\ $x$, we have:
$$
\|u_t(x)\|^2 + \langle \I_d-G_t(x), \Sigma_t(x)\rangle_{\mathsf{F}} - \langle \nabla f(x), v_t(x) \rangle 
\leq 2 \KL \left(\rho_t^{Y \mid X=x} \dvert \gamma \right) - \log \frac{\rho_t^X(x)}{\nu^X(x)} + \frac{M}{2} \|v_t(x)\|^2 \,.
$$
\end{lemma}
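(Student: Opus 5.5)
The plan is to split the left-hand side into a "velocity" part and a "transport" part and bound each pointwise.

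\textbf{Velocity part.} Write $\|u_t\|^2 + \langle \I_d,\Sigma_t\rangle_{\mathsf F} = \Tr(M_t(x)) = \E[\|Y_t\|^2\mid X_t=x]$. The conditional second moment of $Y_t$ given $X_t=x$ is controlled by the Gaussian entropy bound. Let $\mu=\rho_t^{Y\mid X=x}$. Then
\[
\KL(\mu\dvert\gamma) = -\Ent(\mu) + \tfrac12\E_\mu\|Y\|^2 + \tfrac d2\log(2\pi).
\]
The aim is to combine this with the $-\langle G_t,\Sigma_t\rangle_{\mathsf F}$ term, so that the entropy of $\mu$ is compared with $\log\det G_t$. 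For this I use the maximum-entropy bound $\Ent(\mu)\le \frac d2\log(2\pi e)+\frac12\log\det\Sigma_t$, and the matrix inequality
\[
-\langle G_t,\Sigma_t\rangle_{\mathsf F} \le -d - \log\det G_t - \log\det \Sigma_t .
\]
This inequality follows from $\Tr(A)\ge d+\log\det A$ applied to $A=\Sigma_t^{1/2}G_t\Sigma_t^{1/2}\succeq 0$. Adding these gives
\[
\|u_t\|^2+\langle \I_d - G_t,\Sigma_t\rangle_{\mathsf F} \le 2\KL(\mu\dvert\gamma) - \tfrac12\Tr M_t + \tfrac12 \Tr(M_t)\cdots
\]
Since the bookkeeping of constants must be redone carefully, the cleaner route is to apply $-\frac12\log\det$ twice. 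The target form is
\[
\Tr M_t - \langle G_t,\Sigma_t\rangle_{\mathsf F} \le 2\KL(\mu\dvert\gamma) - \log\det G_t,
\]
which holds because $2\KL(\mu\dvert\gamma)\ge \Tr M_t - d - \log\det\Sigma_t$. That last inequality is the Gaussian max-entropy bound. Combined with the matrix inequality it yields exactly the displayed target. The degenerate cases ($\Sigma_t$ singular, or $\det G_t=0$) make the right side $+\infty$ and are trivial.

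\textbf{Transport part.} The Monge--Amp\`ere equation \eqref{eqn:Monge-Ampere} gives, for $\rho_t^X$-a.e.\ $x$,
\[
-\log\det G_t(x) = \log\nu^X(R_t(x)) - \log\rho_t^X(x) = -\log\frac{\rho_t^X(x)}{\nu^X(x)} - f(R_t(x)) + f(x).
\]
Here $\det G_t>0$ a.e.\ by positivity of $\rho_t^X$. It remains to bound
\[
f(x) - f(R_t(x)) - \langle\nabla f(x), v_t(x)\rangle = -D_f(R_t(x),x).
\]
By $M$-semi-convexity this is at most $\frac M2\|R_t(x)-x\|^2=\frac M2\|v_t(x)\|^2$. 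Summing the two parts gives the claim.

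\textbf{Main obstacle.} The delicate point is the velocity inequality. One must get the factor $2$ in front of $\KL$ and the exact $-\log\det G_t$ with no leftover $d$ or $\log\det\Sigma_t$. This has to be verified precisely with the two elementary inequalities $\Tr A\ge d+\log\det A$ and the Gaussian max-entropy bound. One must also check a.e.\ well-definedness, namely that $G_t$ is the a.e.\ density appearing in the Monge--Amp\`ere equation.
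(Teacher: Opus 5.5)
Your proof is correct and uses the same ingredients as the paper: the Gaussian maximum-entropy bound on $\KL(\rho_t^{Y\mid X=x}\dvert\gamma)$, the inequality $\langle G_t,\Sigma_t\rangle_{\mathsf F}\ge d+\log\det G_t+\log\det\Sigma_t$, the Monge--Amp\`ere identity, and $M$-semi-convexity. You group the first two into a ``velocity'' bound and the last two into a ``transport'' bound, and you prove the matrix inequality directly via $\Tr A\ge d+\log\det A$ for $A=\Sigma_t^{1/2}G_t\Sigma_t^{1/2}$ instead of citing it, but the argument is essentially the paper's. In a clean write-up, delete the abandoned display ending in ``$\cdots$'' and present only the chain $\Tr M_t-\langle G_t,\Sigma_t\rangle_{\mathsf F}\le(\Tr M_t-d-\log\det\Sigma_t)-\log\det G_t\le 2\KL(\rho_t^{Y\mid X=x}\dvert\gamma)-\log\det G_t$.
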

%%%%
\begin{proof}    
We upper bound each term above. All statements in this proof hold for $\rho_t^X$-a.e.\ $x$. 

\paragraph{First term.} We will show the following:
\begin{align*}
    \|u_t(x)\|^2 \leq 2 \KL \left(\rho_t^{Y \mid X=x} \dvert \gamma \right) - \Tr(\Sigma_t(x)) + \log \det \Sigma_t(x) + d \,.
\end{align*}
Indeed, we have:
\begin{align*}
    \KL\left( \rho_t^{Y \mid X=x} \dvert \gamma \right) 
    &= \int_{\R^d} \log \left(\frac{\rho_t^{Y \mid X=x}(y)}{\gamma(y)}\right) \, \rho_t^{Y \mid X=x}(y)\dy \\
    &= - \Ent\left(\rho_t^{Y \mid X=x} \right) + \int_{\R^d} \left(\frac{d}{2}\log(2\pi) + \frac{1}{2}\|y\|^2 \right) \, \rho_t^{Y \mid X=x}(y) \dy \\
    &= -\Ent \left(\rho_t^{Y \mid X=x} \right) + \frac{1}{2} \Tr \left( M_t(x) \right) + \frac{d}{2}\log(2\pi) \\
    &\stackrel{(1)}{\geq}
    \frac{1}{2}\left( \Tr\left(M_t(x) \right) - \log\det \Sigma_t(x)-d\right) \\
    &= \frac{1}{2} \left(\|u_t(x)\|^2 + \Tr \left( \Sigma_t(x) \right) - \log\det\Sigma_t(x) - d \right) \,.
\end{align*}
In the above, $(1)$ follows from the fact that Gaussian distribution maximizes entropy given fixed covariance, so $\Ent\left(\rho_t^{Y \mid X = x} \right) \le \Ent\left(\N(\mathbf{0}, \Sigma_t(x)) \right) = \frac{d}{2} \log (2\pi e) + \frac{1}{2} \log \det \Sigma_t(x)$.
Rearranging gives the desired inequality.

\paragraph{Second term.}
We will show the following:
\begin{align*}
    \langle \I_d - G_t(x), \Sigma_t(x) \rangle_{\mathsf{F}} \leq \Tr \left(\Sigma_t(x) \right) - \log\det\Sigma_t(x) - \log \frac{\rho_t^X(x)}{\nu^X(x)} -f(R_t(x)) + f(x) - d \,.
\end{align*}
Note the identity $\langle \I_d, \Sigma_t(x) \rangle_{\mathsf{F}} = \Tr\left(\Sigma_t(x) \right)$.
Next, we consider $-\langle G_t(x), \Sigma_t(x)\rangle_{\mathsf{F}}$.
By \eqref{eqn:Monge-Ampere}, we have $\det G_t(x) = \frac{\rho_t^X(x)}{\nu^X(R_t(x))}$. 
Since $G_t \succeq 0$ by the convexity of $\varphi_t$, and since $\rho_t^X$ and $\nu^X$ have positive density, 
we have $\det G_t(x) > 0$, and hence $G_t(x) \succ 0$. 
Next, we note $\Sigma_t(x) \succ 0$ by~\Cref{lem:second-moment-finiteness}.
Indeed, for all $y\in \R^d$, we have $\rho_t^{Y \mid X=x}(y) = \frac{\rho_t^{XY}(x,y)}{\rho_t^X(x)} > 0$. 
Then for any $v \in \R^d \setminus \{\mathbf{0}\}$, we have:
\begin{align*}
    v^\top\Sigma_t(x)v
    =
    \E\left[\left(v^\top Y_t-v^\top u_t(x)\right)^2
        \,\middle|\, X_t=x
    \right] 
    =
    \int_{\R^d}
    \left(v^\top y-v^\top u_t(x)\right)^2
    \rho_t^{Y \mid X=x}(y) \dy > 0 \,,
\end{align*}
where the last inequality holds since the integrand is strictly positive outside the hyperplane $\{y \in \R^d \colon v^\top y = v^\top u_t(x)\}$, which has Lebesgue measure zero. 
Therefore,
\begin{align*}
    \langle \Sigma_t(x), G_t(x) \rangle_{\mathsf{F}}
    &\stackrel{(1)}{\geq} \log\det\Sigma_t(x) + \log\det G_t(x) + d \\
    &\stackrel{(2)}{=} \log\det\Sigma_t(x) + \log \frac{\rho_t^X(x)}{\nu^X(x)} + f(R_t(x)) - f(x) + d \,,
\end{align*}
where $(1)$ follows from the inequality $\langle A, B\rangle_{\mathsf{F}} \geq \log \det A + \log \det B + d$ which holds for $A, B \succ 0$~\cite[Proposition II.3.20]{bhatia1997matrix}; 
and $(2)$ follows from the Monge--Ampere identity for Brenier maps and the definition $\nu^X(x)\propto e^{-f(x)}$. 
Rearranging gives the desired inequality.

%%%
\paragraph{Term three.}
We will show the following:
\begin{align*}
    -\langle \nabla f(x), v_t(x)\rangle \leq f(R_t(x)) - f(x) + \frac{M}{2}\|v_t(x)\|^2 \,. 
\end{align*}
Indeed, by $M$-semi-convexity of $f$, the following holds for any $y,z \in \R^d$:
$$D_f(y,z) = f(y) - f(z) - \langle \nabla f(z), y-z \rangle \geq -\frac{M}{2}\|y-z\|^2 \,.$$
Taking $y = R_t(x)$ and $z = x$ gives:
$$
f(R_t(x)) - f(x) - \langle \nabla f(x), R_t(x)-x \rangle \geq -\frac{M}{2}\|R_t(x)-x\|^2 \,. 
$$
Rearranging and noting that $v_t(x)=x-R_t(x)$ gives the desired inequality. 

Summing the three terms above gives the result.
\end{proof}

%%%%%%%%%%%%%%%%%%%%%%%%%
\subsection{Bound on average KL divergence under regularity assumption}
\label{Sec:AvgKLRegularity}

\begin{lemma}\label{Lem:AvgKLRegularity}
    Assume the setting of~\Cref{Lem:Wass-KL-integratedKey}, and assume further that $\rho_0^X$ satisfies~\Cref{asmp:init_regularity}.
    Then for all $0 \le T < \infty$, the following holds:
    \begin{align*}
        \limsup_{h \to 0}
        & \frac{\frac{1}{2} \Wass_2^2(\rho_{t+h}^X,\nu^X)
        + \frac{1}{2} \Wass_2^2(\rho_{t-h}^X,\nu^X)
        - \Wass_2^2(\rho_t^X,\nu^X)}{h^2} \\
        &\qquad\qquad \le 2 \KL(\rho_0^X \dvert \nu^X) - 3 \KL(\rho_t^X \dvert \nu^X) + \frac{M}{2} \Wass_2^2(\rho_t^X, \nu^X) \,.
    \end{align*}
    Therefore,    
    \begin{multline*}
        \frac{1}{2} \Wass_2^2(\rho_T^X, \nu^X) + 3 \int_0^T (T-t) \, \KL(\rho_t^X \dvert \nu^X ) \dt - \frac{M}{2} \int_0^T (T-t) \, \Wass_2^2(\rho_t^X, \nu^X) \dt \\
        \leq \frac{1}{2} \Wass_2^2(\rho_0^X, \nu^X) + T^2 \, \KL(\rho_0^X \dvert \nu^X) \,.
    \end{multline*}
\end{lemma}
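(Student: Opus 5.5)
The first claim follows by chaining two earlier lemmas. \Cref{lem:Wass22-FD} bounds the upper second symmetric difference quotient of $\frac12\Wass_2^2(\rho_t^X,\nu^X)$ by $\int(\|u_t\|^2+\langle v_t,a_t\rangle)\rho_t^X\dx$ for every $t\in(0,T)$. \Cref{Lem:KLBoundFormula} then bounds that integral by
\[
2\KL(\rho_0^X\dvert\nu^X)-3\KL(\rho_t^X\dvert\nu^X)+\tfrac M2\Wass_2^2(\rho_t^X,\nu^X).
\]
Combining the two gives the differential inequality for all $t\in(0,T)$.

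For the integrated form, set $w(t)\deq\frac12\Wass_2^2(\rho_t^X,\nu^X)$ and let $g(t)$ denote the right-hand side above. The plan has three steps.

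\begin{enumerate}
\item \textbf{Regularity of $w$ and $g$.} By \Cref{lem:Wass22-FD}, $w$ is differentiable on $(0,T)$ with $w'(t)=\int\langle v_t,u_t\rangle\rho_t^X$, with one-sided derivatives at the endpoints. By \Cref{Lem:KLContinuous}, $g$ is continuous: the KL term is continuous, and $\Wass_2$ is continuous along the flow because $\rho_t^X$ moves with bounded velocity $u_t$. Also $w'(0)=0$. At $t=0$ we have $u_0(x)=\E[Y_0\mid X_0=x]=0$, since $Y_0\sim\gamma$ is independent of $X_0$. This uses continuity of $t\mapsto\int\langle v_t,u_t\rangle\rho_t^X$ at $0$, or the one-sided derivative formula directly.

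\item \textbf{Upgrade to a first-derivative inequality.} Define $\phi(t)\deq w(t)-\int_0^t(t-s)g(s)\ds$. Then $\phi$ is differentiable, and its upper second symmetric derivative satisfies $\overline{D}^2\phi\le0$ on $(0,T)$, since the integral term is $C^2$ with second derivative $g$. A continuous function with nonpositive upper Schwarz second derivative is concave. The proof is the standard one: add $\varepsilon t^2$ to get a strict inequality, then apply a maximum-principle argument against linear interpolants, and let $\varepsilon\to0$. Concavity of $\phi$ together with differentiability gives $\phi'$ nonincreasing, so $\phi'(t)\le\phi'(0)=w'(0)=0$. Hence $w'(t)\le\int_0^tg(s)\ds$.

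\item \textbf{Integrate once more.} Integrating over $[0,T]$ gives
\[
w(T)\le w(0)+\int_0^T(T-t)g(t)\dt.
\]
Expanding $g$, and using $\int_0^T(T-t)\,2\KL(\rho_0^X\dvert\nu^X)\dt=T^2\KL(\rho_0^X\dvert\nu^X)$, yields exactly the stated inequality.
\end{enumerate}

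The main obstacle is Step 2: passing from a $\limsup$ symmetric second difference inequality, valid only at interior points, to the integrated bound. The plan is the concavity lemma for continuous functions with nonpositive upper Schwarz derivative. It needs continuity of $\phi$ on the closed interval $[0,T]$, which holds because $w$ is continuous (Wasserstein continuity along the flow) and $g$ is locally bounded. If that route is delicate, the fallback is to integrate the second difference quotient against a test kernel $(T-t)$ and pass to the limit with Fatou's lemma. Fatou applies because $g$ is bounded above on $[0,T]$: the KL term is bounded via \Cref{Lem:DescentKL}, and the $\Wass_2$ term via finite second moments.
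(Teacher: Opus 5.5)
Your proposal is correct and is essentially the paper's proof. You chain \Cref{lem:Wass22-FD} and \Cref{Lem:KLBoundFormula}, form the same auxiliary function $\phi(t)=w(t)-\int_0^t(t-s)g(s)\ds$ (the paper's $F$), get concavity from the nonpositive upper symmetric second difference (the paper cites a standard criterion rather than reproving it), and use $\phi'_+(0)=0$ from $u_0\equiv 0$; the only cosmetic difference is that the paper closes with the secant bound $F(T)-F(0)\le T\,F'_+(0)=0$ instead of your detour through $w'(t)\le\int_0^t g(s)\ds$ and a second integration.

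Your Fatou fallback is unnecessary, and its stated justification is off: moving the $\limsup$ inside the time integral needs a uniform-in-$h$ integrable upper bound on the second difference quotients of $w$ themselves, not an upper bound on $g$.
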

%%%%%%%
\begin{proof}
    Fix $0 < T < \infty$.
    For \(t\in(0,T)\), combining the results of~\Cref{lem:Wass22-FD} and~\Cref{Lem:KLBoundFormula} gives the claimed differential inequality:
    $$\limsup_{h \to 0} \frac{\frac{1}{2} \Wass_2^2(\rho_{t+h}^X, \nu^X) - \Wass_2^2(\rho_{t}^X, \nu^X) + \frac{1}{2} \Wass_2^2(\rho_{t-h}^X, \nu^X)}{h^2} 
    \leq 2 \KL(\rho_0^X \dvert \nu^X) - 3\KL(\rho_t^X \dvert \nu^X) + \frac{M}{2} \Wass_2^2(\rho_t^X, \nu^X) \,. $$
    We wish to integrate this differential inequality twice in time.
    Since the left-hand side above is not a true second derivative, we proceed via concavity.
    We recall a standard characterization of concavity from~\cite[Theorem 1.4.7]{NiculescuPersson2018}: If a real-valued function \(g\) on an open interval \((a,b)\) is continuous and satisfies
    $$\liminf_{h \to 0}
        \frac{g(t+h) - 2g(t) + g(t-h)}{h^2}
        \leq 0 ~~~~ \text{ for all } ~ t\in(a,b) \,, $$
    then \(g\) is concave on \((a,b)\).
    
    Concretely, we define $F \colon [0,T] \to \R$ by:
    $$
    F(t) \deq \frac{1}{2} \Wass_2^2(\rho_t^X,\nu^X) - G(t)
    $$
    where $G \colon [0,T] \to \R$ is defined by:
    $$G(t) \deq  \int_0^t (t-s) \left( 2 \KL(\rho_0^X \dvert \nu^X) - 3 \KL(\rho_s^X \dvert \nu^X) + \frac{M}{2} \Wass_2^2(\rho_s^X, \nu^X)\right) \ds \,.$$
    By~\Cref{lem:Wass22-FD}, the map $t \mapsto \frac{1}{2} \Wass_2^2(\rho_t^X, \nu^X)$ is continuous on \([0,T]\) and is differentiable with one-sided derivatives at the endpoints. 
    By~\Cref{lem:second-moment-finiteness}, $\Wass_2^2(\rho_t^X, \nu^X)$ is uniformly bounded on the compact interval $t\in [0, T]$. 
    By~\Cref{Lem:KLContinuous}, \(t \mapsto \KL(\rho_t^X\dvert\nu^X)\) is finite and continuous on \([0,T]\), and thus integrable. 
    Hence, $F$ is continuous on $[0,T]$.
    
    Note $G(t)$ is twice-continuously differentiable on $t \in (0,T)$, with 
    $$\lim_{h \to 0} \frac{G(t+h) - 2G(t) + G(t-h)}{h^2} = G''(t) = 2 \KL(\rho_0^X \dvert \nu^X) - 3 \KL(\rho_t^X \dvert \nu^X) + \frac{M}{2} \Wass_2^2(\rho_t^X, \nu^X) \,.$$
    Combining the above, we have for $F(t) = \frac{1}{2} \Wass_2^2(\rho_t^X,\nu^X) - G(t)$:
    \begin{align*}
        \limsup_{h \to 0}\frac{F(t+h) - 2F(t) + F(t-h)}{h^2} \leq 0 \,.
    \end{align*}
    By the characterization from~\cite[Theorem 1.4.7]{NiculescuPersson2018}, this shows \(F\) is concave on \((0,T)\). 
    Since \(F\) is continuous on \([0,T]\), it extends as a concave function on the closed interval \([0,T]\).

    Next, we will show the right derivative of $F$ at $0$ is $0$, which will imply the desired inequality.
    By the one-sided version of the first-order derivative formula in~\Cref{lem:Wass22-FD},
    $$\left.\frac{\d}{\dt}\right|_{t=0+}
    \frac12\Wass_2^2(\rho_t^X,\nu^X)
    =
    \int_{\R^d}\langle v_0(x),u_0(x)\rangle \, \rho_0^X(x)\dx = 0 \,, $$
    where the last equality holds since \(\rho_0^{XY} = \rho_0^X \otimes \gamma\), so $u_0(x) = \E[Y_0 \mid X_0=x] = 0$. 
    On the other hand, by the continuity of \(t \mapsto \KL(\rho_t^X \dvert \nu^X)\) and \(t \mapsto \Wass_2^2(\rho_t^X, \nu^X)\),
    $$\lim_{h \to 0} \frac{G(h) - G(0)}{h} = \lim_{h \to 0} \frac{1}{h} \int_0^h
        (h-s) \left(2\KL(\rho_0^X \dvert \nu^X) - 3\KL(\rho_s^X \dvert \nu^X) + \frac{M}{2}\Wass_2^2(\rho_s^X,  \nu^X) \right) \ds = 0 \,. $$
    Thus, $F'_+(0) = 0$. Since \(F\) is concave on \([0,T]\), its secant slopes are nonincreasing, and therefore,
    $$\frac{F(T) - F(0)}{T} \le F'_+(0)=0 \,,$$
    which shows that \(F(T) \le F(0)\). Expanding the definition of \(F\) gives the desired inequality.
\end{proof}

%%%%%%%%%%%%%%%%%%%

\section{Approximation argument for regularity of initial distribution}
\label{apdx:approximation-proof}

In this section, we provide an approximation argument to remove the regularity \Cref{asmp:init_regularity} on the initial distribution. Our treatment is inspired by the approach in ~\cite[Section ~7]{lu2026sharphypocoerciveentropydecay}.

%%%%%%%%%%%%%%
\subsection{Approximation of the initial distribution}
\label{subapdx:warm_smooth_approximation}

In~\Cref{lem:approximation}, we show how to approximate the initial distribution by a sequence of distributions which satisfy~\Cref{asmp:init_regularity}, with convergence in Wasserstein distance and KL divergence.

\begin{lemma}
\label{lem:approximation}
    Assume $\nu^X\in \P_{2,\ac,\fs}(\R^d)$ is log-smooth, and $\rho_0^X \in \P_{2,\ac,\fs}(\R^d)$ satisfies $\KL(\rho_0^X \dvert \nu^X)<\infty$. 
    Then there exists a sequence $\{\rho^X_{0,n} \}_{n \in \bbN}$ such that each $\rho^X_{0,n}$ satisfies \Cref{asmp:init_regularity} for some $0 < \zeta_n < \xi_n < \infty$,  and the sequence $\rho^X_{0,n}$ converges to $\rho_0^X$ in the following sense:
    $$\lim_{n\to \infty} \Wass^2_2(\rho^X_{0, n}, \rho_0^X) = 0 \,, \qquad 
    \lim_{n\to \infty} \KL(\rho^X_{0, n} \dvert \nu^X) = \KL(\rho_0^X \dvert \nu^X) \,.$$
\end{lemma}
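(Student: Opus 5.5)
We will construct $\rho_{0,n}^X$ by working with the density ratio $q_0 = \rho_0^X/\nu^X$ and producing a ratio $q_{0,n}$ that is bounded above and below and is $C^1$ with bounded gradient. Then we set $\rho_{0,n}^X = q_{0,n}\nu^X$. The construction has two stages, truncation and then smoothing.

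\emph{Step 1 (truncation and mixing).} For $n\in\bbN$, let $\tilde q_n \deq \min\{q_0, n\}$ and normalize it as $\tilde q_n/Z_n$, where $Z_n = \int \tilde q_n \,\d\nu^X$. By monotone convergence, $Z_n \uparrow 1$. Then mix with the target:
$$\hat q_n \deq (1-\tfrac1n)\,\tilde q_n/Z_n + \tfrac1n .$$
This ratio is bounded below by $1/n$ and above by roughly $2n$. We then establish the two convergence statements for $\hat\rho_n = \hat q_n\nu^X$.
\begin{itemize}
\item For KL: on $\{q_0\le n\}$ we have $\hat q_n \to q_0$ pointwise. The function $s\log s$ is bounded below by $-e^{-1}$ and above by a multiple of $q_0\log q_0 + q_0 + 1$, which is $\nu^X$-integrable because $\KL(\rho_0^X\dvert\nu^X)<\infty$. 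Dominated convergence therefore gives $\int \hat q_n\log\hat q_n\,\d\nu^X \to \KL(\rho_0^X\dvert\nu^X)$.
\item For $\Wass_2$: $\hat\rho_n \to \rho_0^X$ in total variation, since $\|\hat q_n - q_0\|_{L^1(\nu^X)}\to 0$. The second moments also converge, because $\hat q_n\|x\|^2 \le C(q_0+1)\|x\|^2$ is integrable and dominated convergence applies. By the characterization recalled in \cref{subapp:convergenceofdistributions}, this gives $\Wass_2(\hat\rho_n,\rho_0^X)\to 0$.
\end{itemize}

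\emph{Step 2 (smoothing the ratio).} Next we mollify the bounded ratio: $\hat q_{n,\varepsilon} \deq \hat q_n * \phi_\varepsilon$ with a Gaussian mollifier $\phi_\varepsilon$, renormalized so that $\int \hat q_{n,\varepsilon}\,\d\nu^X = 1$. This function is smooth with $\|\nabla \hat q_{n,\varepsilon}\|_\infty \le \|\hat q_n\|_\infty \|\nabla\phi_\varepsilon\|_{L^1} <\infty$. It keeps the bounds $[1/n, 2n]$ before normalization, and the normalizing constant tends to $1$ as $\varepsilon\to0$. It is strictly positive, so \Cref{asmp:init_regularity} holds.

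As $\varepsilon\to 0$, we have $\hat q_{n,\varepsilon}\to\hat q_n$ in $L^1_{\mathrm{loc}}$ and Lebesgue-a.e. To upgrade this to $L^1(\nu^X)$ convergence, we use uniform boundedness together with the tightness of $\nu^X$. Since $s\log s$ is bounded on the bounded range, dominated convergence gives convergence of the KL divergence. Convergence of second moments follows from the uniform bound times $\|x\|^2\nu^X$.

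\emph{Step 3 (diagonal argument).} Finally, for each $n$ we choose $\varepsilon_n$ small enough that both errors are at most $1/n$. The triangle inequality for $\Wass_2$ and for the KL values then finishes the proof.

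The main obstacle is the $L^1(\nu^X)$ convergence in Step 2, where the mollification acts on the ratio but convergence is measured against $\nu^X$. My plan is to handle it by splitting into a ball $B_R$ and its complement. On $B_R$, $\nu^X$ is bounded because it is continuous, so $L^1_{\mathrm{loc}}$ convergence suffices. Off $B_R$, everything is bounded by $2n\,\nu^X(B_R^c)$. The same split handles the $\|x\|^2$ weight. Beyond this point, no log-smoothness is needed.
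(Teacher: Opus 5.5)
Your proposal is correct. It follows the same truncate, smooth, floor and diagonalize template as the paper, but the regularization step is different in a way worth comparing.

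\textbf{The paper's route.} The paper truncates the ratio both in value and in space, $\bar q_n=\min\{q,n\}\mathbf{1}_{\{\|x\|\le n\}}$. It then mollifies the \emph{density} $\hat\rho_n^X$ with a compactly supported kernel. It obtains a $C^1_c$ ratio only by dividing by the positive $C^1$ density $\nu^X$, and it adds the floor $(1-\zeta)\tilde q_n+\zeta$ last. Because the mollified density vanishes outside $B_{n+\varepsilon}$, its KL step needs Jensen's inequality for the upper bound and lower semicontinuity against the uniform law on $B_{n+1}$ for the lower bound. The floor step then needs a second convexity-plus-lower-semicontinuity argument.

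\textbf{Your route.} You impose the floor $1/n$ first and mollify the \emph{ratio} with a Gaussian. This has three consequences:
\begin{itemize}
\item The gradient bound $\|\hat q_n\|_\infty\|\nabla\phi_\varepsilon\|_{L^1}$ comes from the kernel alone.
\item No spatial cutoff is needed.
\item The normalized ratio stays in a fixed compact subinterval of $(0,\infty)$, independent of $\varepsilon$.
\end{itemize}
As a result, every limit in the smoothing step (in $L^1(\nu^X)$, in KL, and in second moment) is a single dominated-convergence argument. It uses Lebesgue-a.e.\ convergence of the mollification together with $\nu^X\ll\mathrm{Leb}$. So even your ball/complement split, and with it the continuity of $\nu^X$, can be dropped.

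\textbf{What each buys.} Your construction uses no smoothness of $\nu^X$ beyond positivity and a finite second moment. The paper's construction needs $\nu^X\in C^1$ precisely to make $\hat\rho_{n,\varepsilon}^X/\nu^X$ differentiable. In exchange, the paper's compact supports make each boundedness check immediate with the textbook mollifier.

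\textbf{Two cosmetic slips.} The upper bound on $\hat q_n$ is $n/Z_n+1/n$, which is only eventually below $2n$. Also, $\hat q_n\to q_0$ pointwise everywhere, not just on $\{q_0\le n\}$. Neither affects the argument.
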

%%%%%%%%%%%%%
\begin{proof}
For brevity, in this proof we omit the time index $0$; that is, write $\rho^X \deq \rho^X_0$ and $\rho_n^X \deq \rho_{0, n}^X$.
We follow a standard regularization procedure of truncating the relative density, mollifying, and adding a positive floor.

\paragraph{Step 1: Truncation.}
Define $q \deq \frac{\rho^X}{\nu^X}$.
We first truncate $q(x)$ in the argument and in the value. 
For $n \ge 1$, define 
\begin{align*}
    \bar{q}_n(x) &\deq \min\left\{ q(x),  n \right\} \mathbf{1}_{\{\|x\|\le n\}} \,, \\
    m_n &\deq \int_{\R^d} \bar{q}_n(x) \, \nu^X(x) \dx \,, \\
    \hat{q}_n(x) &\deq \frac{\bar{q}_n(x)}{m_n} \,, \\
    \hat{\rho}^X_n(x) &\deq \hat{q}_n(x) \, \nu^X(x) \,.
\end{align*}
Note that $\hat{\rho}^X_n$ is a probability density function: $\int_{\R^d} \hat{\rho}^X_{n}(x) \dx = \int_{\R^d} \hat q_{n}(x) \, \nu^X(x) \dx = 1$. 
By construction, $0 \leq \bar{q}_n(x) \leq q(x)$, and $\lim_{n \to \infty} \bar{q}_n(x) = q(x)$ for all $x \in \R^d$. 
Since $n \mapsto q_n(x)$ is non-decreasing for each $x \in \R^d$, by the monotone convergence theorem we have $\lim_{n \to \infty} m_n = 1$.
Furthermore, by the construction of $\bar{q}_n$, we can bound:
\begin{align*}
    \left(1+\|x\|^2 \right) \left|\bar q_n(x)-q(x) \right|
    &\leq \left(1+\|x\|^2 \right) \left(\bar{q}_n(x) + q(x) \right)  \\
    &\leq 2\left(1+\|x\|^2 \right) q(x)
    = 2\left(1+\|x\|^2 \right) \frac{\rho^X(x)}{\nu^X(x)} \,.
\end{align*}
Since $\rho^X \in \P_{2,\ac,\fs}(\R^d)$, the right-hand side above is integrable with respect to $\nu^X(x)$. 
Thus, by the dominated convergence theorem,
\begin{align}\label{Eq:AppArgCalc1}
    \lim_{n \to \infty} \int_{\R^d} \left(1 + \|x\|^2 \right) \left|\bar{q}_n(x) - q(x) \right| \nu^X(x) \dx = 0 \,.
\end{align}

We now show the convergence of $\hat{\rho}^X_n$ to $\rho^X$.
We can bound:
\begin{align*}
    &\int_{\R^d} \left(1 + \|x\|^2 \right) \left|\hat{\rho}^X_n(x) - \rho^X(x) \right| \dx\\
    &\stackrel{(1)}{=} \int_{\R^d} \left(1 + \|x\|^2 \right) \left|\hat{q}_n(x) - q(x) \right| \, \nu^X(x) \dx \\
    &\stackrel{(2)}{=} \int_{\R^d} \left(1 + \|x\|^2 \right) \left|\frac{\bar{q}_n(x)}{m_n} - q(x) \right| \, \nu^X(x)\dx\\
    &\stackrel{(3)}{\le} \int_{\R^d} \left(1 + \|x\|^2 \right) \left|\frac{\bar{q}_n(x)}{m_n} -\bar{q}_n(x) \right| \, \nu^X(x) \dx + \int_{\R^d} \left(1 + \|x\|^2 \right) \left|\bar{q}_n(x) - q(x) \right| \, \nu^X(x) \dx \\
    &=\left|\frac{1}{m_{n}} - 1\right| \int_{\R^d} \left(1 + \|x\|^2 \right) \bar q_{n}(x) \, \nu^X(x) \dx
    + \int_{\R^d} \left(1 + \|x\|^2 \right) \left| \bar q_{n}(x)-q(x) \right| \, \nu^X(x) \dx\\
    &\stackrel{(4)}{\le} \left|\frac{1}{m_{n}} - 1\right| \int_{\R^d} \left(1 + \|x\|^2 \right) q(x) \, \nu^X(x) \dx
    + \int_{\R^d} \left(1 + \|x\|^2 \right) \left|\bar q_{n}(x) - q(x) \right| \, \nu^X(x) \dx \,,
\end{align*} 
where $(1)$ and $(2)$ follow from definitions, $(3)$ follows from triangle inequality, and $(4)$ follows from the bound $\bar{q}_n(x) \leq q(x)$. 

Since $m_n \to 1$ and $\int_{\R^d}\left( 1 + \|x\|^2 \right) q(x) \, \nu^X(x) \dx = \int_{\R^d}\left( 1 + \|x\|^2 \right) \rho^X(x) \dx < \infty$, the first term above converges to $0$ as $n \to \infty$. 
By~\eqref{Eq:AppArgCalc1}, the second term above also converges to $0$ as $n \to \infty$.
Therefore, $\lim_{n \to \infty} \int_{\R^d} \left(1 + \|x\|^2 \right) \left|\hat \rho^X_{n}(x) - \rho^X(x) \right| \dx = 0$.
This implies:
\begin{align*}
    \lim_{n \to \infty} \TV(\hat{\rho}_n, \rho)
    &= \lim_{n \to \infty} \frac{1}{2} \int_{\R^d} \left|\hat \rho^X_{n}(x) - \rho^X(x) \right| \dx = 0 \,, \\
    \lim_{n \to \infty} \int_{\R^d} \|x\|^2 \hat{\rho}^X_n(x) \dx &= \int_{\R^d} \|x\|^2 \rho^X(x) \dx \,.
\end{align*}
Since convergence in $\Wass_2$ distance is equivalent to weak convergence (which is implied by convergence in total variation) and convergence of second moment (see \cref{subapp:convergenceofdistributions} for a review), we conclude that
$$\lim_{n \to \infty} \Wass_2\left(\hat\rho^X_{n}, \rho^X \right) = 0 \,.$$

We now show that $\KL\left(\hat\rho_{n}^X \dvert \nu^X\right) \to \KL\left(\rho^X \dvert \nu^X \right)$. 
Let $\psi(z) \deq z\log z$, with $\psi(0) \deq 0$. 
Then
\begin{align*}
    \KL(\hat \rho_{n}^X \dvert \nu^X)
    = \KL\left(\hat q_{n}\nu^X \dvert \nu^X \right) 
    &= \int_{\R^d} \psi(\hat q_{n}(x)) \, \nu^X(x)\dx\\
    &= \int_{\R^d} \frac{\bar q_{n}(x)}{m_{n}}\log\left(\frac{\bar q_{n}(x)}{m_{n}}\right) \, \nu^X(x)\dx \\
    &= \frac{1}{m_{n}} \int_{\R^d} \psi(\bar q_{n}(x)) \, \nu^X(x) \dx - \log m_n \,.
\end{align*}
We bound the two terms separately.
Define $\psi(z)_+ = \max\{\psi(z), 0\}$ and
$\psi(z)_- = \min\{\psi(z), 0\}$, so $\psi(z) = \psi(z)_+ + \psi(z)_-$. 
We apply the dominated convergence theorem to both parts separately. 
For $\psi_+$, note that $z \log z$ is increasing whenever $z \log z > 0$.
Moreover, $0\leq \bar{q}_n(x) \leq q(x)$ and $\lim_{n \to \infty} \bar{q}_n(x) = q(x)$.
Therefore,  $\lim_{n \to \infty} \psi(\bar{q}_n(x))_+ = \psi(q(x))_+$. 
Since $\KL(\rho^X \dvert \nu^X)<\infty$, $\psi(q)_+$ is integrable under $\nu^X$. Therefore, by the dominated convergence theorem:
$$\lim_{n \to \infty} \int_{\R^d} \psi(\bar{q}_n(x))_+ \, \nu^X(x) \dx = \int_{\R^d} \psi(q(x))_+ \, \nu^X(x) \dx \,.$$
For $\psi_-$, note that $-e^{-1} \leq \psi(q(x))_- \leq 0$. 
By the dominated convergence theorem:
$$\lim_{n \to \infty} \int_{\R^d} \psi(\bar{q}_n(x))_- \, \nu^X(x) \dx 
= \int_{\R^d} \psi(q(x))_- \, \nu^X(x) \dx \,.$$
Combining both parts gives
$\lim_{n \to \infty} \int_{\R^d}\psi(\bar{q}_n(x)) \, \nu^X(x) \dx = \int_{\R^d} \psi(q(x)) \, \nu^X(x) \dx.$
Finally, since $\lim_{n \to \infty} m_n = 1$, we obtain
\begin{align*}
    \lim_{n \to \infty} \KL\left(\hat \rho_{n}^X \dvert \nu^X \right)
    &= \lim_{n \to \infty}  \left(\frac{1}{m_n} \int_{\R^d} \psi(\bar q_{n}(x)) \, \nu^X(x) \dx - \log m_{n} \right) \\
    &= \int_{\R^d} \psi(q(x)) \, \nu^X(x) \dx - 0 \\
    &= \KL\left(\rho^X \dvert \nu^X \right) \,.
\end{align*}
This shows we can truncate the density and maintain convergence in $\Wass_2$ and KL divergence.

%%%%%
\paragraph{Step 2: Mollification.}
We now smooth the density to guarantee differentiability. By the construction in Step 1, we know that $\supp(\hat \rho_n) = B_n \deq \{x \in \R^d \colon \|x\|\leq n\}$.
Let $\eta \colon \R^d \to \R$ denote the following standard mollifier:
\begin{align*}
    \eta(x) \deq 
    \begin{cases}
        C\exp\left(\frac{1}{\|x\|^2-1}\right) ~~ & \text{ if }\|x\|<1 \,,\\
        0 & \text{ if } \|x\|\geq 1 \,,
    \end{cases}
\end{align*}
where $C \in (0,\infty)$ is a constant such that $\int_{\R^d} \eta(x) \dx = 1$. 
By construction, $\eta$ is smooth (infinitely differentiable) and compactly supported, with $\eta(x) \geq 0$ for all $x \in \R^d$, and $\supp(\eta) \subseteq B_1$. 

For $\varepsilon>0$, define $\eta_\varepsilon \colon \R^d \to \R$ by
$$\eta_\varepsilon(x) \deq 
\varepsilon^{-d} \, \eta\left(\frac{x}{\varepsilon} \right) \,,$$ 
and define $\hat \rho_{n,\varepsilon}^X$ as the convolution of $\eta_\varepsilon$ and $\hat \rho_{n}^X$:
$$\hat \rho_{n,\varepsilon}^X \deq \eta_\varepsilon \ast \hat \rho_{n}^X \,.$$
Then by construction, we have the following properties (see \cite[Theorem~7, Appendix C.5]{evans2010partial}):
the probability density function $\hat\rho_{n,\varepsilon}^X(x) \in C_{c}^{\infty}(\R^d)$
\footnote{This notation means that $\hat\rho_{n,\varepsilon}^X(x)$ is smooth (infinitely differentiable and compactly supported), see \cref{subapp:notations} for a review of the notations.}, $\supp(\hat\rho_{n,\varepsilon}^X) \subseteq B_{n+\varepsilon}$, and satisfies
\begin{align}\label{Eq:AppArgCalc2}
    \lim_{\varepsilon \to 0} \int_{\R^d} \left| \hat\rho_{n,\varepsilon}^X(x) - \hat\rho_n^X(x) \right|\, \dx = 0\,.
\end{align}

We define the corresponding relative density $q_{n,\varepsilon}$ with respect to $\nu^X$ by
$$q_{n,\varepsilon}(x) \deq \frac{\hat\rho_{n,\varepsilon}^X(x)}{\nu^X(x)} \,.$$
Then $q_{n,\varepsilon}$ is supported on $B_{n+\varepsilon}$, and 
since $\nu^X \in \P_{2,\ac,\fs}(\R^d)$ and its density $\nu^X(x)\in C^1(\R^d)$, $q_{n,\varepsilon}\in C^1_{c}(\R^d)$. Since $q_{n,\varepsilon}$ is compactly supported, there exists $\xi_{n,\varepsilon}\in (0, \infty)$ such that for all $x\in\R^d$:
$$0 \le q_{n,\varepsilon}(x) \le \xi_{n,\varepsilon} \,.$$

We will show the convergence in $\Wass_2$ and KL divergence.
For $\Wass_2$, we use the property~\eqref{Eq:AppArgCalc2} above.
For $0 < \varepsilon < 1$, since both $\hat\rho_{n,\varepsilon}^X$ and $\hat\rho_n^X$ are supported in $B_{n+1}$, we have:
$$\lim_{\varepsilon \to 0} \int_{\R^d} \left(1 + \|x\|^2 \right) \left|\hat\rho_{n,\varepsilon}^X(x) - \hat\rho_n^X(x) \right| \dx
\le \lim_{\varepsilon \to 0}  \left(1+(n+1)^2\right) \int_{\R^d} \left|\hat\rho_{n,\varepsilon}^X(x) - \hat\rho_n^X(x)\right| \dx 
= 0 \,.$$
As in Step~1, this implies convergence in total variation distance and in second moment:
\begin{align*}
    \lim_{\varepsilon \to 0} \TV\left(\hat\rho_{n,\varepsilon}^X, \hat\rho_n^X \right) &= 0 \,, \\
    \qquad
    \lim_{\varepsilon \to 0} \int_{\R^d} \|x\|^2 \hat\rho_{n,\varepsilon}^X(x) \dx &= \int_{\R^d}\|x\|^2 \hat\rho_n^X(x) \dx \,.    
\end{align*}
Therefore, this implies convergence in Wasserstein distance:
$$\lim_{\varepsilon \to 0} \Wass_2(\hat \rho_{n, \varepsilon}^X, \hat\rho_{n}^X) = 0 \,.$$

We now show convergence in the KL divergence. 
We can split:
\begin{align}\label{Eq:AppArgCalc3}
    \KL\left(\hat \rho_{n, \varepsilon}^X \dvert \nu^X \right)
    &= \int_{\R^d} \hat \rho_{n,\varepsilon}^X(x) \, \log \hat \rho^X_{n,\varepsilon}(x) \dx 
    - \int_{\R^d} \hat \rho^X_{n,\varepsilon}(x) \, \log \nu^X(x) \dx \,.
\end{align}
We will show both terms converge. 
For the first term in~\eqref{Eq:AppArgCalc3}, recall $\psi(z) = z\log z$. 
We can bound:
\begin{align*}
    \int_{\R^d} \psi(\hat\rho_{n,\varepsilon}^X(x)) \dx 
    &= \int_{\R^d} \psi\left((\eta_{\varepsilon} \ast \hat{\rho}_n^X)(x) \right) \dx
    \leq \int_{\R^d}\left(\eta_{\varepsilon} \ast \psi(\hat{\rho}_n^X) \right)(x) \dx 
    =\int_{\R^d} \psi(\hat\rho_n^X(x)) \, \dx \,,
\end{align*}
where the inequality follows from applying Jensen's inequality pointwise to the integrand since $\psi$ is convex and $\eta_{\varepsilon}$ is a probability distribution, and 
the last equality follows from expanding the convolution and again using the fact that $\eta_{\varepsilon}$ is a probability distribution. 
Therefore, expanding the definition of $\psi$ and sending $\varepsilon \to 0$, we obtain:
$$\limsup_{\varepsilon \to 0} \int_{\R^d} \hat \rho^X_{n,\varepsilon}(x) \, \log \hat \rho^X_{n,\varepsilon}(x) \dx 
\le \int_{\R^d} \hat \rho^X_{n}(x) \, \log \hat \rho^X_{n}(x) \dx \,.
$$
We now show the reverse inequality. 
Set $K \deq B_{n+1}$, and let $|K| = \mathsf{Vol}(K)$. 
Then for $0 < \varepsilon < 1$, both $\hat\rho_{n,\varepsilon}^X$ and $\hat\rho_n^X$ are supported in $K$. 
Define a probability distribution $\mu_K$ on $\R^d$ with density $\mu_K(x) \deq \frac{1}{|K|}\mathbf{1}_K(x)$. 
Since $\hat\rho_{n,\varepsilon}^X$ converges weakly to $\hat\rho_n^X$ as $\varepsilon \to 0$, the lower semicontinuity of KL divergence gives
\begin{align*}
\int_{\R^d} \hat\rho_n^X(x) \log\hat\rho_n^X(x) \dx + \log|K|
&= \KL\left(\hat\rho_n^X \dvert \mu_K \right) \\
&\le \liminf_{\varepsilon \to 0} \KL\left(\hat\rho_{n,\varepsilon}^X \dvert \mu_K \right) \\
&= \liminf_{\varepsilon \to 0}
\left[ \int_{\R^d}
\hat\rho_{n,\varepsilon}^X(x) \, \log\hat\rho_{n,\varepsilon}^X(x) \dx
 + \log|K| \right] \,.
\end{align*}
Canceling $\log |K|$ on both sides yields:
$$\int_{\R^d} \hat \rho^X_{n}(x) \, \log \hat \rho^X_{n}(x) \dx 
\le \liminf_{\varepsilon \to 0}
\int_{\R^d} \hat \rho^X_{n,\varepsilon}(x) \, \log \hat \rho^X_{n,\varepsilon}(x) \dx \,.$$
Therefore, we conclude that
$$\lim_{\varepsilon \to 0}
\int_{\R^d} \hat \rho^X_{n,\varepsilon}(x) \, \log \hat \rho^X_{n,\varepsilon}(x) \dx 
= \int_{\R^d} \hat \rho^X_{n}(x) \, \log \hat \rho^X_{n}(x) \dx \,.$$
For the second term in~\eqref{Eq:AppArgCalc3}, since $\hat \rho^X_{n,\varepsilon}$ is supported in $B_{n+1}$ and $\log\nu^X$ is bounded on this compact set, the weak convergence of $\hat \rho^X_{n,\varepsilon}$ to $\hat \rho^X_{n}$ implies
$$\lim_{\varepsilon \to 0} \int_{\R^d} \hat \rho^X_{n,\varepsilon}(x) \, \log\nu^X(x) \dx = \int_{\R^d} \hat \rho^X_{n}(x) \, \log\nu^X(x)\dx \,.$$
Combining both terms, we conclude that
$$\lim_{\varepsilon \to 0} \KL\left(\hat \rho_{n, \varepsilon}^X \dvert \nu^X \right) = \KL\left(\hat \rho_{n}^X \dvert \nu^X \right) \,.$$

The convergence above holds for each fixed $n \ge 1$, as $\varepsilon \to 0$. 
Now for each $n \ge 1$, we may choose $\varepsilon_n \in (0,1)$ sufficiently small such that
\begin{align}\label{Eq:AppArgCalc4}
    \Wass_2\left(\hat\rho_{n,\varepsilon_n}^X, \hat\rho_n^X \right)\le \frac{1}{n} \,, 
    \qquad \text{ and } \qquad \left|\KL\left(\hat\rho_{n,\varepsilon_n}^X \dvert \nu^X \right) - \KL\left(\hat\rho_n^X \dvert \nu^X \right) \right| \le \frac{1}{n} \,.
\end{align}
Then define
$$\tilde{\rho}_{n}^X(x) \deq \hat\rho_{n,\varepsilon_n}^X(x) \,,
\qquad \text{ and } \qquad 
\tilde{q}_{n}(x) \deq \frac{\hat\rho_{n,\varepsilon_n}^X(x)}{\nu^X(x)} \,.$$
By the preceding argument, we know that $\hat\rho_{n,\varepsilon}^X(x) \in C_{c}^{\infty}(\R^d)$ and $\nu^X(x)\in C^1(\R^d)$. Therefore, $\tilde{q}_{n}\in C_c^1(\R^d)$. Since $\tilde{q}_{n}$ is continuous and has compact support, it is uniformly bounded, and $0\leq \tilde q_{n}(x) \leq \xi_n < \infty$ for all $x \in \R^d$, for some $\xi_n \in (0,\infty)$.
Furthermore, by~\eqref{Eq:AppArgCalc4}, $\tilde{\rho}^X_n$ satisfies:
$$\lim_{n \to \infty} \Wass_2\left(\tilde\rho^X_{n}, \rho^X \right) = 0 \,,
\qquad \text{ and } \qquad 
\lim_{n \to \infty} \KL\left(\tilde\rho^X_{n} \dvert \nu^X \right) = \KL\left(\rho^X \dvert \nu^X \right) \,.$$

%%%%%
\paragraph{Step 3: Adding a positive floor.} 
Finally, we add the lower bound to the relative density to make the distribution bounded below. 
For $\zeta \in (0,1)$, define
\begin{align*}
    q_{n, \zeta}(x) &\deq (1-\zeta) \tilde{q}_n(x) + \zeta \,, \\
    \rho_{n, \zeta}^X(x) &\deq q_{n, \zeta}(x) \, \nu^X(x) \,.
\end{align*}
Note that $\rho^X_{n, \zeta}$ is still a probability density function: $\int_{\R^d} \rho^X_{n, \zeta}(x) \dx = 1$, and $q_{n, \zeta}$ satisfies $0 < \zeta \leq q_{n, \zeta}(x) \leq  (1-\zeta) \xi_n + \zeta < \infty$, so $q_{n, \zeta}\in C_b^1(\R^d)$.

We next show the convergence in $\Wass_2$ and KL divergence as $\zeta \to 0$, for each fixed $n$.
Note $\rho_{n, \zeta}^X$ is a mixture distribution: $\rho_{n, \zeta}^X = (1-\zeta)\tilde{\rho}_n^X + \zeta \nu^X$. Then the joint convexity of $\Wass_2^2$ gives:
\begin{align*}
    \Wass_2^2\left(\rho^X_{n, \zeta}, \tilde{\rho}_n^X \right)
    &= \Wass_2^2\left((1-\zeta)\tilde{\rho}_n^X + \zeta \nu^X, \tilde{\rho}_n^X\right) \\    
    &\le (1-\zeta) \Wass_2^2 \left(\tilde{\rho}_n^X,\tilde{\rho}_n^X \right) + \zeta \Wass_2^2\left(\tilde{\rho}_n^X, \nu^X \right) \\    
    &= \zeta \Wass_2^2\left(\tilde{\rho}_n^X, \nu^X \right) \,. 
\end{align*}
Since $\tilde{\rho}_n^X$ has finite second moment, $\Wass_2^2(\tilde{\rho}_n^X, \nu^X) < \infty$. 
Therefore, $\lim_{\zeta \to 0} \Wass_2\left(\rho^X_{n, \zeta}, \tilde{\rho}_n^X \right) = 0$.

For the convergence in KL divergence, recall $\psi(z) = z\log z$.
By the convexity of $\psi$ and since $\psi(1) = 0$, we have $\psi\left(q_{n,\zeta}(x) \right) = \psi\left((1-\zeta)\tilde{q}_{n}(x) + \zeta\right) \le (1-\zeta) \psi(\tilde{q}_{n}(x)) + \zeta \psi(1) = (1-\zeta) \psi(\tilde{q}_{n}(x))$. 
Integrating against $\nu^X$ gives
$$\KL\left(\rho^X_{n, \zeta} \dvert \nu^X \right)
= \int_{\R^d} \psi\left(q_{n, \zeta}(x) \right) \, \nu^X(x) \dx
\le (1-\zeta) \int_{\R^d} \psi\left(\tilde{q}_{n}(x)\right) \nu^X(x) \dx
= (1-\zeta) \, \KL\left(\tilde{\rho}^X_n \dvert \nu^X \right) \,.$$
Thus, 
$$\limsup_{\zeta\to 0} \KL\left(\rho^X_{n, \zeta} \dvert \nu^X \right) \le \KL\left(\tilde{\rho}^X_n \dvert \nu^X \right) \,.$$ 
On the other hand, since we have already shown that $\rho^X_{n,\zeta}$ converges to $\tilde{\rho}^X_n$ in $\Wass_2$, and hence weakly, by the lower semicontinuity of KL divergence, we have
$$\KL\left(\tilde{\rho}^X_n \dvert \nu^X \right) \le \liminf_{\zeta \to 0} \KL\left(\rho^X_{n, \zeta} \dvert \nu^X \right) \,.$$
Combining the previous two inequalities shows that for each $n \ge 1$,
$$\lim_{\zeta \to 0} \KL\left(\rho^X_{n,\zeta} \dvert \nu^X \right) = \KL\left(\tilde{\rho}_n^X \dvert \nu^X \right) \,.$$

Now for each fixed $n \ge 1$, we may choose $\zeta_n \in (0,1/n)$ sufficiently small such that
$$\Wass_2\left(\rho^X_{n, \zeta}, \tilde{\rho}_n^X \right) \le \frac{1}{n} \,, 
\qquad \text{ and } \qquad
\left|\KL\left(\rho^X_{n, \zeta} \dvert \nu^X \right) - \KL\left(\tilde{\rho}_n^X \dvert \nu^X \right) \right| \le \frac{1}{n} \,.$$
Finally, we define
$$q_{n} \deq q_{n, \zeta_n} = (1-\zeta_n)\tilde{q}_n + \zeta_n \,,
\qquad \text{ and } \qquad
\rho_{n}^X \deq q_{n} \, \nu^X \,.$$
Since $q_{n, \zeta_n}\in C_b^1(\R^d)$, $q_n\in C_b^1(\R^d)$ as well, and $0 < \zeta_n \le q_n(x) \le (1-\zeta_n) \xi_n + \zeta_n < \infty$ for all $x \in \R^d$.
Furthermore, by the triangle inequality,
\begin{align*}
\lim_{n \to \infty}
\Wass_2\left(\rho_{n}^X, \rho^X \right)
= \lim_{n \to \infty} \Wass_2\left(\rho_{n,\zeta_n}^X, \rho^X \right) 
&\le \lim_{n \to \infty} \left(    \Wass_2\left(\rho_{n,\zeta_n}^X, \tilde\rho_n^X \right) + \Wass_2\left(\tilde\rho_n^X, \rho^X \right) \right) \\
&\le \lim_{n \to \infty} \left(\frac{1}{n} + \Wass_2\left(\tilde\rho_n^X,\rho^X \right) \right) = 0 \,.
\end{align*}
Similarly,
\begin{align*}
    \lim_{n \to \infty} &\left|\KL\left(\rho_n^X \dvert \nu^X \right) - \KL\left(\rho^X \dvert \nu^X \right)\right| \\
    &= \lim_{n \to \infty} \left|\KL\left(\rho_{n,\zeta_n}^X \dvert \nu^X \right) - \KL \left(\rho^X \dvert \nu^X \right) \right| \\
    &\le \lim_{n \to \infty} \left(\left|\KL\left(\rho_{n,\zeta_n}^X \dvert \nu^X \right) - \KL\left(\tilde\rho_n^X \dvert \nu^X \right )\right|
    + \left|\KL\left(\tilde\rho_n^X \dvert \nu^X \right) - \KL\left(\rho^X \dvert \nu^X \right) \right| \right) \\
    &\le\lim_{n \to \infty} \left(\frac{1}{n} + \left|\KL\left(\tilde\rho_n^X \dvert \nu^X \right) - \KL\left(\rho^X \dvert \nu^X \right) \right| \right) \\
    &= 0 \,.
\end{align*}
Thus, we have constructed an approximating sequence of continuously differentiable distributions $\rho_n^X$ with bounded relative density $q_n(x)=\rho_n^X(x)/\nu^X(x)$ satisfying the desired convergence in $\Wass_2$ distance and KL divergence.
\end{proof}

%%%%%%%%%%%%
\subsection{Finite-time Wasserstein stability of the Hamiltonian flow}

We recall that $\Psi_t$ is the solution of the Hamiltonian flow at time $t$ and $\Psi_t^X$ is its $X$-marginal, i.e., if we flow via the Hamiltonian flow from $(X_0,Y_0)$ to reach $(X_t,Y_t)$, then $\Psi_t(X_0,Y_0) = (X_t, Y_t)$ and $\Psi_t^X(X_0,Y_0) = X_t$.

\begin{lemma}
\label{lem:W2-stability}
    Assume $\nu^X$ is log-smooth and $M$-semi-log-concave for some $0 \le M < \infty$.
    Given $\rho_0^X \in \P_{2,\ac,\fs}(\R^d)$ with $\KL(\rho_0^X \dvert \nu^X) < \infty$, let $\{\rho_{0,n}^X\}_{n \in \bbN}$ be the approximating sequence of regular distributions constructed in~\Cref{lem:approximation}. 
    For $T \in (0,\infty)$ and $t\in[0,T]$, define 
    $$\rho_{t,n}^X \deq (\Psi_t^X)_\#(\rho_{0,n}^X \otimes \gamma) \,, 
    \qquad\text{ and }\qquad 
    \rho_t^X \deq (\Psi_t^X)_\#(\rho_0^X\otimes\gamma) \,.$$ 
    Then 
    $$\lim_{n\to\infty} \sup_{t\in[0,T]} \Wass_2^2(\rho_{t,n}^X, \rho_t^X) = 0 \,.$$
\end{lemma}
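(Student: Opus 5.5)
The plan is a synchronous coupling argument combined with the Lipschitz bound on the flow map from \Cref{lem:biLip_flow_map}. Fix $n$ and let $\pi_n \in \Pi(\rho_{0,n}^X, \rho_0^X)$ be an optimal coupling for $\Wass_2$; by \Cref{lem:approximation}, its cost $\Wass_2^2(\rho_{0,n}^X,\rho_0^X)$ tends to $0$.

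\textbf{Step 1 (the coupling).} Draw $(X_0^n, X_0) \sim \pi_n$ and, independently, a single $Y_0 \sim \gamma$. Since the same velocity is used for both chains, $(X_0^n, Y_0) \sim \rho_{0,n}^X \otimes \gamma$ and $(X_0, Y_0) \sim \rho_0^X \otimes \gamma$. Pushing both initial points through the same flow map $\Psi_t$ gives a valid coupling of $\rho_{t,n}^X$ and $\rho_t^X$:
\[
\bigl(\Psi_t^X(X_0^n, Y_0),\, \Psi_t^X(X_0, Y_0)\bigr).
\]

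\textbf{Step 2 (the bound).} By \Cref{lem:biLip_flow_map}, for $t\in[0,T]$,
\[
\|\Psi_t^X(X_0^n,Y_0) - \Psi_t^X(X_0,Y_0)\| \le \|\Psi_t(X_0^n,Y_0)-\Psi_t(X_0,Y_0)\| \le e^{(1+L)T}\,\|X_0^n - X_0\|,
\]
because the velocity components of the two initial points coincide. Squaring and taking expectations yields
\[
\sup_{t\in[0,T]} \Wass_2^2(\rho_{t,n}^X, \rho_t^X) \le e^{2(1+L)T}\, \Wass_2^2(\rho_{0,n}^X, \rho_0^X),
\]
and the right-hand side tends to $0$ as $n \to \infty$.

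\textbf{Remaining checks.} The Wasserstein distances involved are finite by \Cref{lem:second-moment-finiteness}, and the optimal coupling exists because both initial measures lie in $\P_2(\R^d)$. Semi-log-concavity is not needed here; only the $L$-smoothness of $f$ enters, through the global Lipschitz constant of the Hamiltonian vector field. There is no real obstacle. The only point needing care is to share the velocity $Y_0$ across the two chains, so that the initial phase-space distance reduces to $\|X_0^n - X_0\|$ rather than including an uncontrolled velocity term.
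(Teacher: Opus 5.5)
Your proposal is correct and follows the paper's proof essentially verbatim: an optimal coupling of the initial positions with a shared independent velocity $Y_0\sim\gamma$, then the bi-Lipschitz bound of \Cref{lem:biLip_flow_map} yields $\sup_{t\in[0,T]}\Wass_2^2(\rho_{t,n}^X,\rho_t^X)\le e^{2(1+L)T}\,\Wass_2^2(\rho_{0,n}^X,\rho_0^X)\to 0$ by \Cref{lem:approximation}. You are also right that semi-log-concavity plays no role in this step; only the $L$-smoothness of $f$ is used.
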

\begin{proof}
Let $(X_{0,n}, X_0)$ be an optimal coupling of $\rho_{0,n}^X$ and $\rho_0^X$, so that $\E\left[\|X_{0,n} - X_0\|^2\right] = \Wass_2^2(\rho_{0,n}^X, \rho_0^X)$. 
Let $Y_0 \sim \gamma$ be independent of $(X_{0,n}, X_0)$.
We use the same velocity $Y_0$ for both initial positions. 
Define
$$(X_{t,n},Y_{t,n}) \deq \Psi_t(X_{0,n}, Y_0) \,,
\qquad \text{ and } \qquad
(X_t,Y_t) \deq \Psi_t(X_0, Y_0) \,.$$
Then by definition, $X_{t,n} \sim \rho_{t,n}^X$ and $X_t\sim \rho_t^X$. 

By assumption, $\nu^X$ is $L$-log-smooth for some $L \in (0,\infty)$.
Then by~\Cref{lem:biLip_flow_map}, for every $t \in [0,T]$,
$$\|X_{t,n}-X_t\|\leq \|\Psi_t(X_{0,n},Y_0) - \Psi_t(X_0, Y_0)\| 
\le e^{(1+L)t} \|(X_{0,n}, Y_0) - (X_0, Y_0)\| = e^{(1+L)t} \|X_{0,n} - X_0\| \,.$$
Therefore, for every $t \in [0,T]$,
\begin{align*}
    \Wass_2^2\left(\rho_{t,n}^X,\rho_t^X \right) \le \E\left[\|X_{t,n}-X_t\|^2 \right] 
    \le e^{2(1+L)T} \, \E\left[\|X_{0,n} - X_0 \|^2 \right] 
    = e^{2(1+L)T} \, \Wass_2^2\left(\rho_{0,n}^X, \rho_0^X \right) \,.
\end{align*}
Taking supremum over $t\in[0,T]$, we get
$\sup_{t\in[0,T]}
\Wass_2^2\left(\rho_{t,n}^X,\rho_t^X \right) \le e^{2(1+L)T} \, \Wass_2^2\left(\rho_{0,n}^X,\rho_0^X \right)$.

Therefore, by~\Cref{lem:approximation},
$$\lim_{n \to \infty} \sup_{t\in[0,T]}
\Wass_2^2\left(\rho_{t,n}^X,\rho_t^X \right) 
\,\le\, e^{2(1+L)T} \cdot \lim_{n \to \infty}  \Wass_2^2\left(\rho_{0,n}^X,\rho_0^X \right) = 0 \,.$$
\end{proof}

%%%%%%%

\subsection{Lower semicontinuity of the integral functional}
\label{Sec:integral-functional-lscProof}

\begin{lemma}
\label{lem:integral-functional-lsc}
Assume the same set-up and definitions as in~\Cref{lem:W2-stability}.
For $T \in (0, \infty)$ and $n \ge 1$, define:
\begin{align*}
    \F_T 
    &\deq \frac{1}{2} \Wass_2^2\left(\rho_T^X, \nu^X \right) + 3 \int_0^T (T-t) \, \KL\left(\rho_t^X \dvert \nu^X \right)\dt - \frac{M}{2} \int_0^T (T-t) \, \Wass_2^2\left(\rho_t^X, \nu^X \right) \dt \,, \\
    \F_{T,n} &\deq \frac{1}{2} \Wass_2^2\left(\rho_{T,n}^X, \nu^X \right) + 3 \, \int_0^T (T-t) \, \KL\left(\rho_{t,n}^X \dvert \nu^X \right) \dt - \frac{M}{2}\int_0^T (T-t) \, \Wass_2^2\left(\rho_{t,n}^X, \nu^X \right) \dt \,.
\end{align*}
Then $\F_T < \infty$, $\F_{T,n} < \infty$, and $\F_T \leq \liminf_{n \to \infty}\F_{T,n}$.
\end{lemma}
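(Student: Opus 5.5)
The plan is to treat the three pieces of $\F_T$ separately: the terminal $\Wass_2^2$ term, the weighted KL integral, and the negative $\Wass_2^2$ integral. For each we establish finiteness and the appropriate limiting behaviour along $n\to\infty$.

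\textbf{Finiteness.} By \Cref{lem:second-moment-finiteness}, $\rho_t^X$ and $\rho_{t,n}^X$ lie in $\PtwoacfsRd$. Their second moments are bounded uniformly on $[0,T]$ by $(2T^2+1)(2L+2)$ times the initial second moment plus a constant. Hence $\Wass_2^2(\cdot,\nu^X)$ is bounded on $[0,T]$, and the terminal term and the $M$-term are finite. By \Cref{Lem:DescentKL}, $\KL(\rho_t^X\dvert\nu^X)\le \KL(\rho_0^X\dvert\nu^X)<\infty$, and likewise $\KL(\rho_{t,n}^X\dvert\nu^X)\le \KL(\rho_{0,n}^X\dvert\nu^X)<\infty$. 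The KL integrals are therefore at most $\frac{T^2}{2}$ times these constants. Measurability in $t$ follows from \Cref{Lem:KLContinuous} and from continuity of $t\mapsto\Wass_2^2(\rho_t^X,\nu^X)$, which holds because the flow is continuous and second moments are locally bounded.

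\textbf{Wasserstein terms.} By \Cref{lem:W2-stability}, $\sup_{t\in[0,T]}\Wass_2(\rho_{t,n}^X,\rho_t^X)\to0$. The triangle inequality gives
\[
\bigl|\Wass_2(\rho_{t,n}^X,\nu^X)-\Wass_2(\rho_t^X,\nu^X)\bigr|\le \Wass_2(\rho_{t,n}^X,\rho_t^X).
\]
Together with the uniform bound on $\Wass_2(\rho_t^X,\nu^X)$, this yields $\Wass_2^2(\rho_{t,n}^X,\nu^X)\to\Wass_2^2(\rho_t^X,\nu^X)$ uniformly in $t\in[0,T]$. Consequently the terminal term converges, and $\frac M2\int_0^T(T-t)\Wass_2^2(\rho_{t,n}^X,\nu^X)\dt$ converges to its counterpart. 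This convergence matters: that term enters $\F_T$ with a negative sign, so we need actual convergence there, not merely a lower bound.

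\textbf{KL term.} For each fixed $t$, $\rho_{t,n}^X\to\rho_t^X$ in $\Wass_2$ and hence weakly. Lower semicontinuity \eqref{eq:kl-lsc}, with $Q_n=Q=\nu^X$, gives $\KL(\rho_t^X\dvert\nu^X)\le\liminf_n\KL(\rho_{t,n}^X\dvert\nu^X)$. The integrands are nonnegative and the weight $(T-t)\ge0$, so Fatou's lemma gives
\[
\int_0^T(T-t)\KL(\rho_t^X\dvert\nu^X)\dt\le\liminf_{n\to\infty}\int_0^T(T-t)\KL(\rho_{t,n}^X\dvert\nu^X)\dt.
\]
Measurability of $t\mapsto \KL(\rho_{t,n}^X\dvert\nu^X)$ again comes from \Cref{Lem:KLContinuous}. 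Since the $\liminf$ of a sum is at least the sum of a $\liminf$ and the limits of the convergent parts, combining the three pieces yields $\F_T\le\liminf_n\F_{T,n}$.

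The main point needing care is the sign of the $M$-term. It is handled by the uniform-in-$t$ convergence from \Cref{lem:W2-stability}, together with a uniform bound on $\Wass_2(\rho_t^X,\nu^X)$ over $[0,T]$ to control the difference of squares. The remaining steps are routine applications of Fatou's lemma and lower semicontinuity.
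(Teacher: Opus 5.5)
Your proposal is correct and follows essentially the same route as the paper: finiteness via the descent property and bounded second moments, uniform-in-$t$ convergence of the $\Wass_2^2$ terms via \Cref{lem:W2-stability} and the triangle inequality (which handles the negatively signed $M$-term), and pointwise lower semicontinuity of KL combined with Fatou's lemma for the weighted KL integral. Your added attention to measurability in $t$ and to the uniform bound on $\Wass_2(\rho_t^X,\nu^X)$ over $[0,T]$ fills in details that the paper leaves implicit.
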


\begin{proof}
We first check the functionals are well-defined. 
By~\Cref{Lem:KLContinuous}, the map $t\mapsto \KL(\rho_t^X\dvert\nu^X)$ is continuous on $[0,T]$, and $\KL(\rho_t^X\dvert\nu^X)\le\KL(\rho_0^X\dvert\nu^X)<\infty$ for all $t\in[0,T]$. Therefore, the integral $\int_0^T (T-t) \, \KL(\rho_t^X \dvert\nu^X)\dt$ is finite. 
On the other hand, the sequence of approximating distribution $\rho_{0, n}^X$ we constructed satisfies $0<\zeta_n\leq \rho_{0, n}^X/\nu^X\leq \xi_n$, therefore $\KL(\rho_{t,n}^X\dvert\nu^X)\leq \KL(\rho_{0,n}^X\dvert\nu^X)<\infty$ for every $n$, and $\F_T$ and $\F_{T,n}$  are finite.

Next, we prove the lower semicontinuity. 
By~\Cref{lem:W2-stability}, $\lim_{n \to \infty} \Wass_2^2(\rho_{T,n}^X, \nu^X) = \Wass_2^2(\rho_T^X,\nu^X)$.
For each $t \in [0,T]$, by~\Cref{lem:W2-stability} we know $\rho_{t,n}^X$ converges to $\rho_t^X$ in $\Wass_2$ distance, and hence also weakly. 
By the lower semicontinuity of KL divergence, $\KL(\rho_t^X \dvert \nu^X) \le \liminf_{n\to\infty} \KL\left(\rho_{t,n}^X \dvert \nu^X \right)$.
Therefore,
\begin{align*}
    \int_0^T (T-t)\KL(\rho_t^X \dvert \nu^X) \dt
    &\le
    \int_0^T
    \liminf_{n\to\infty}
    (T-t) \, \KL\left(\rho_{t,n}^X \dvert \nu^X \right) \dt \\
    &\le \liminf_{n\to\infty}
    \int_0^T (T-t) \, \KL\left(\rho_{t,n}^X \dvert \nu^X \right) \dt \,,
\end{align*}
where the second inequality follows from Fatou's Lemma.

Using the bound $|a^2 - b^2| = |a-b| \cdot |a+b| \le 2|a| \cdot |a-b| + (a-b)^2$, and using triangle inequality to bound $\left|\Wass_2(\rho_{t, n}^X, \nu^X)-\Wass_2(\rho_t^X, \nu^X) \right| \le \Wass_2(\rho_{t, n}^X, \rho_t^X)$, we have:
\begin{align*}
    &\lim_{n\to \infty} \sup_{t\in [0, T]} \left|\Wass_2^2(\rho_{t, n}^X, \nu^X) - \Wass_2^2(\rho_t^X, \nu^X) \right| \\
    &\leq 2\lim_{n\to \infty}\sup_{t\in [0, T]} \Wass_2(\rho_t^X, \nu^X) \cdot \left|\Wass_2(\rho_{t, n}^X, \nu^X)-\Wass_2(\rho_t^X, \nu^X) \right| + \lim_{n\to \infty} \sup_{t\in [0, T]} \left(\Wass_2(\rho_{t, n}^X, \nu^X)-\Wass_2(\rho_t^X, \nu^X) \right)^2 \\
    &\leq 2\lim_{n\to \infty}\sup_{t\in [0, T]} \Wass_2(\rho_t^X, \nu^X) \cdot \Wass_2(\rho_{t, n}^X, \rho_t^X) + \lim_{n\to \infty} \sup_{t\in [0, T]} \Wass_2^2(\rho_{t, n}^X, \rho_t^X) \\
    &= 0 \,,   
\end{align*}
where the last step again follows from~\Cref{lem:W2-stability}.
Therefore,
$$\lim_{n\to \infty} \frac{M}{2} \int_0^T(T-t) \, \Wass_2^2(\rho_{t, n}^X, \nu^X) \dt = \frac{M}{2} \int_0^T(T-t) \, \Wass_2^2(\rho_{t}^X, \nu^X)\dt \,.$$

Combining the arguments above, we obtain the desired claim:
\begin{align*}
    \F_T
    &=
    \frac{1}{2} \Wass_2^2(\rho_T^X,\nu^X)
    +
    3 \int_0^T (T-t) \, \KL(\rho_t^X \dvert \nu^X) \dt
    - \frac{M}{2} \int_0^T(T-t) \, \Wass_2^2(\rho_{t}^X, \nu^X)\dt \\
    &\le
    \lim_{n\to\infty} \frac{1}{2} \Wass_2^2(\rho_{T,n}^X, \nu^X)
    +
    3 \liminf_{n\to\infty}
    \int_0^T (T-t) \, \KL\left(\rho_{t,n}^X \dvert \nu^X \right) \dt
    - \lim_{n\to \infty} \frac{M}{2} \int_0^T(T-t) \, \Wass_2^2(\rho_{t, n}^X, \nu^X) \dt
    \\
    &\le
    \liminf_{n\to\infty}
    \left( \frac{1}{2} 
    \Wass_2^2(\rho_{T,n}^X,\nu^X)
    +
    3 \int_0^T (T-t) \, \KL\left(\rho_{t,n}^X \dvert \nu^X \right) \dt
    - \frac{M}{2} \int_0^T(T-t) \, \Wass_2^2(\rho_{t, n}^X, \nu^X) \dt
    \right) \\
    &=
    \liminf_{n\to\infty} \, \F_{T,n} \,.
\end{align*}
\end{proof}

%%%%%%%
\subsection{Convergence of the initial functional}
\label{Sec:initial-functional-convergenceProof}

\begin{lemma}
\label{lem:initial-functional-convergence}
Assume the same set-up and definitions as in~\Cref{lem:W2-stability}.
For $T \in (0, \infty)$, define
\begin{align*}
    \G_T &\deq \frac{1}{2} \Wass_2^2(\rho_0^X,\nu^X) + T^2 \, \KL(\rho_0^X \dvert \nu^X) \,,\\
    \G_{T,n} &\deq \frac{1}{2} \Wass_2^2(\rho_{0,n}^X, \nu^X) + T^2 \, \KL\left(\rho_{0,n}^X \dvert \nu^X \right) \,.
\end{align*}
Then $\G_T < \infty$, $\G_{T,n} < \infty$, and $\lim_{n \to \infty} \G_{T,n} = \G_T$.
\end{lemma}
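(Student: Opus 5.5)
The claim splits into two parts: finiteness of $\G_T$ and $\G_{T,n}$, and convergence of each term of $\G_{T,n}$ to the corresponding term of $\G_T$. I will treat the Wasserstein term and the KL term separately, since both convergences are already supplied by \Cref{lem:approximation}.

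\emph{Finiteness.} Since $\rho_0^X,\nu^X\in\PtwoacfsRd$, both have finite second moments. Bounding with the product coupling,
\[
\Wass_2^2(\rho_0^X,\nu^X)\le 2\E_{\rho_0^X}\|X\|^2+2\E_{\nu^X}\|X\|^2<\infty .
\]
Together with the hypothesis $\KL(\rho_0^X\dvert\nu^X)<\infty$, this gives $\G_T<\infty$. For $\G_{T,n}$, the approximant $\rho_{0,n}^X=q_n\nu^X$ has $q_n\le\xi_n$, so $\E_{\rho_{0,n}^X}\|X\|^2\le\xi_n\E_{\nu^X}\|X\|^2<\infty$. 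It also gives $\KL(\rho_{0,n}^X\dvert\nu^X)\le\log\xi_n<\infty$, because $\log q_n\le\log\xi_n$ pointwise. Alternatively, both finiteness facts follow from the convergence statements for $n$ large.

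\emph{Convergence of the Wasserstein term.} By the triangle inequality for $\Wass_2$,
\[
\bigl|\Wass_2(\rho_{0,n}^X,\nu^X)-\Wass_2(\rho_0^X,\nu^X)\bigr|\le\Wass_2(\rho_{0,n}^X,\rho_0^X),
\]
and the right-hand side tends to $0$ by \Cref{lem:approximation}. Using $|a^2-b^2|\le 2|b||a-b|+(a-b)^2$ with $b=\Wass_2(\rho_0^X,\nu^X)<\infty$, exactly as in the proof of \Cref{lem:integral-functional-lsc}, gives $\Wass_2^2(\rho_{0,n}^X,\nu^X)\to\Wass_2^2(\rho_0^X,\nu^X)$.

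\emph{Convergence of the KL term.} \Cref{lem:approximation} states directly that $\KL(\rho_{0,n}^X\dvert\nu^X)\to\KL(\rho_0^X\dvert\nu^X)$, and multiplying by the fixed constant $T^2$ preserves the limit. Adding the two limits yields $\G_{T,n}\to\G_T$.

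There is no real obstacle here. The only point requiring care is that the $\Wass_2$ and KL convergence must both be available at time $0$, and \Cref{lem:approximation} was built precisely to deliver both simultaneously.
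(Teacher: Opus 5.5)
Your proposal is correct and follows essentially the same route as the paper. You establish finiteness from finite second moments and the KL hypothesis, then get convergence of the $\Wass_2^2$ term via the triangle inequality and of the KL term directly from \Cref{lem:approximation}. Your explicit bounds $\E_{\rho_{0,n}^X}\|X\|^2\le\xi_n\E_{\nu^X}\|X\|^2$ and $\KL(\rho_{0,n}^X\dvert\nu^X)\le\log\xi_n$ are a slightly more detailed justification of finiteness than the paper gives.
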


\begin{proof}
Clearly $G_T < \infty$ since we assume $\KL(\rho_0^X \dvert \nu^X) < \infty$ and both $\rho_0^X$ and $\nu^X$ have finite second moments.
Similarly, $G_{T,n} < \infty$ since $\KL(\rho_{0,n}^X \dvert \nu^X) < \infty$ by the construction in~\Cref{lem:approximation}, and $\rho_{0,n}^X$ has a finite second moment.
Next, by~\Cref{lem:approximation} and the triangle inequality,
$$\lim_{n \to \infty} \left|\Wass_2(\rho_{0,n}^X,\nu^X) - \Wass_2(\rho_0^X,\nu^X) \right|
\le \lim_{n \to \infty} \Wass_2(\rho_{0,n}^X, \rho_0^X) = 0 \,.$$
Therefore, $\lim_{n \to \infty} \Wass_2^2(\rho_{0,n}^X,\nu^X) = \Wass_2^2(\rho_0^X,\nu^X)$.
Again by~\Cref{lem:approximation}, $\lim_{n \to \infty} \KL\left(\rho_{0,n}^X \dvert \nu^X \right) = \KL(\rho_0^X \dvert \nu^X)$. 
Therefore,
\begin{align*}
    \lim_{n \to \infty} \G_{T,n}
    = \lim_{n \to \infty} \left(
    \frac{1}{2}\Wass_2^2(\rho_{0,n}^X,\nu^X) + T^2 \, \KL \left(\rho_{0,n}^X \dvert \nu^X \right) \right) 
    = \frac{1}{2}\Wass_2^2(\rho_0^X,\nu^X) + T^2 \, \KL(\rho_0^X \dvert \nu^X)
    = \G_T \,.
\end{align*}
\end{proof}

%%%%%%%
\subsection{Proof of the bound on average KL divergence along Hamiltonian flow}
\label{Sec:Wass-KL-integratedKeyProof}

With the preparation above, we finally prove~\Cref{Lem:Wass-KL-integratedKey}.

%%%%%%%%%%%
\begin{proof}[Proof of~\Cref{Lem:Wass-KL-integratedKey}]
We recall the definitions of $\F_T$ and $\F_{T,n}$ from~\Cref{lem:integral-functional-lsc}, and the definitions of $\G_T$ and $\G_{T,n}$ from~\Cref{lem:initial-functional-convergence}.

By~\Cref{lem:integral-functional-lsc}, $\F_T \leq \liminf_{n\to\infty} \F_{T,n}.$
By applying~\Cref{Lem:AvgKLRegularity} to the approximating sequence of distributions (which satisfy the regularity~\Cref{asmp:init_regularity}), we have $\F_{T,n} \leq \G_{T,n}$, so $\liminf_{n\to\infty} \F_{T,n} \leq \lim_{n\to\infty} \G_{T,n}$.
Furthermore, by~\Cref{lem:initial-functional-convergence}, $\lim_{n \to \infty} \G_{T,n} = \G_T$.
Combining the three inequalities above, we obtain
$$\F_T \leq \liminf_{n\to\infty} \F_{T,n} \leq \lim_{n\to\infty} \G_{T,n}
= \G_T \,.$$
Expanding the definitions of $\F_T$ and $\G_T$ completes the proof.
\end{proof}

\newpage

%%%%%%%%%%%%
\bibliographystyle{alpha}
\bibliography{refs}
\end{document}